\documentclass[twoside,11pt]{article}

\usepackage{blindtext}

\usepackage{jmlr2e}

\usepackage{mathtools,thmtools}                
\usepackage{bm}                       
\usepackage{hyperref}       
\hypersetup{
    colorlinks=true,
    linkcolor=blue,
    citecolor=blue,
    urlcolor=blue
}
\usepackage{xcolor}         
\usepackage{comment}        

\usepackage{graphicx}
\usepackage{caption}
\usepackage{subcaption}
\usepackage{booktabs}       
\usepackage{float}          

\newcommand{\M}{\mathcal{M}}
\newcommand{\R}{\mathbb{R}}

\newcommand{\var}{\text{Var}}

\usepackage{enumitem}         

\usepackage{lastpage}
\jmlrheading{23}{2022}{1-\pageref{LastPage}}{10/26; Revised -}{-}{21-0000}{Haoyu Chen and Chenghui Li}

\ShortHeadings{Minimax Rates and Improved Spectral Convergence for MMC}{Haoyu Chen and Chenghui Li}
\firstpageno{1}

\begin{document}

\title{Minimax Rates and Improved Spectral Convergence for Multi-Manifold Clustering}

\author{\name Haoyu Chen\thanks{Equal contribution.} 
        \email Haoyu.Chen@utah.edu \\
       \addr Department of Mathematics\\
       University of Utah\\
       Salt Lake City, UT 84112, USA
        \AND
       \name Chenghui Li\footnotemark[1]
       \email cli539@wisc.edu \\
       \addr Department of Statistics\\
       University of Wisconsin Madison\\
       Madison, WI 53706, USA
       }

\editor{My editor}

\maketitle

\begin{abstract}
We study eigenpair estimation for the tensorized, density-weighted Laplace--Beltrami operator from $n$ independent data points sampled from a multi-manifold mixture distribution where manifolds may intersect. Let \(d\) denote the largest intrinsic dimension of the manifold components and \(s\geq2\) the smoothness of the sampling densities. For simple, nontrivial eigenpairs, we establish the minimax rate \(n^{-s/(d+2s)}\) for approximating the eigenvalue and eigenfunction of the tensorized Laplace--Beltrami operator, and show that a continuum plug-in estimator based on kernel density estimation attains the same rate when the supporting manifolds are known. We further analyze a manifold-agnostic graph Laplacian estimator and prove that its eigenpairs converge to those of the tensorized Laplace--Beltrami operator at rate \(n^{-1/(2.5d-1)}\), up to logarithmic factors, improving upon previously known spectral convergence rates for multi-manifold learning. Numerical experiments on synthetic multi-manifold data corroborate the rates guaranteed by our theorems. These results characterize the information-theoretic difficulty of multi-manifold spectral estimation and quantify the additional cost of learning without oracle component information.
\end{abstract}

\begin{keywords}
  manifold learning, multi-manifold clustering, harmonic analysis
\end{keywords}

\section{Introduction}

Multi-manifold clustering (MMC) is the unsupervised learning problem of partitioning observations generated from a union of low-dimensional manifolds according to their latent manifold membership. A central challenge is that the manifold components need not be convex or well separated in the ambient space: individual components may exhibit nonlinear intrinsic geometries, and distinct manifolds may approach or intersect one another. This problem captures heterogeneous geometric structure that arises naturally in motion segmentation and optical flow \citep{ costeira1998, vidal2011, elhamifar2013}, face and image analysis \citep{ho2003, wright2009, liu2013}, dynamic pattern recognition \citep{tomasi1992, vidal2005}, and document classification \citep{belkin2001, ji2017}. Identifying the manifold components is often a prerequisite for downstream tasks such as dimension reduction, geometric representation, classification, and componentwise inference.

Most algorithms for MMC, such as that of \citet{Goh2007}, assume that manifold components do not intersect. In contrast, methods that address the setting considered here \citep{Wang2011, ariascastro2013, wang2015, babaeian2018, Li2022, chen2025} exploit local tangent information, constrained paths, or higher-order simplicial geometry to suppress spurious cross-manifold connections while preserving connectivity within each component. Nevertheless, designing computationally practical algorithms that provably remove spurious cross-manifold connections without distorting the within-manifold geometry therefore remains challenging.

Quantitative finite-sample guarantees are scarce among the existing methods tailored to intersecting manifolds. Despite their substantially different constructions, \citet{ariascastro2013} and \citet{chen2025} identified the same critical local scale of order $(\log n/n)^{1/d}$ for successfully discriminating intersecting manifold components with high probability under small noise perturbations; here \(d\) denotes the largest intrinsic dimension among the manifold components. By contrast, the probability bound in Theorem~3.1 of \citet{wang2015} leads to the slower-shrinking neighborhood scale $(\log n/n)^{1/(d+2)}$. The curvature-constrained path method of \citet{babaeian2018} provides a qualitative asymptotic guarantee for selecting landmarks away from manifold intersections, but no explicit sample-size rate for the final clustering error. \citet{bungert2026convergence} establishes $\Gamma$-convergence result for discrete normalized Dirichlet energy without providing an explicit convergence rate for spectral convergence, in which work the normalized graph Laplacian is considered. The most directly comparable result is \citet{Li2022}, which establishes spectral convergence of an angle-constrained graph Laplacian toward the tensorized Laplace--Beltrami operator, with eigenvalue and eigenfunction errors decaying at rates $n^{-1/(3d-1)}$ and $n^{-1/(6d-2)}$, respectively, up to logarithmic factors. Nevertheless, the information-theoretic limits of multi-manifold spectral estimation remain uncharacterized, and the existing spectral convergence rates remain substantial room for improvement. 

To formalize the problem, we assume that the data are sampled from a mixture distribution supported on $\mathcal{M}:=\cup_{j=1}^N\mathcal{M}_j$, where $\M_j$ is a $d_j$-dimensional smooth, compact, and connected Riemannian manifold without boundary embedded in $\R^D$; the distinct components \(\M_j\) may intersect. We set \(d:=\max_{j\in[N]} d_j\ll D\). The goal of MMC is to recover the latent assignment of each sample to its generating manifold component.

Following \citet{Li2022,bungert2026convergence}, we take the tensorized Laplace-Beltrami operator $\Delta_\M$, which acts independently on individual manifold components, as a natural spectral target for MMC. Since each $\M_j$ is connected, the nullspace of \(\Delta_{\mathcal M}\) is spanned by the component indicator functions and thus encodes the latent membership information of $\M_j$. We formally define this operator and describe its spectral properties in Section~\ref{sec:notation}. In this work, we characterize the convergence rates for the eigenvalues and eigenfunctions of $\Delta_\M$.

Our first contribution establishes, to our knowledge, the first minimax lower bound for spectral estimation in MMC; see Theorem~\ref{thm:spectral convergence lower bound}. For a simple, nontrivial eigenpair of $\Delta_{\M}$, this lower bound scales as $n^{-s/{(d+2s)}}$ where $s$ denotes the smoothness of the sampling densities. In particular, when $s=2$, this rate coincides with the minimax rate established by \citet{trillos2025} for a single $d$-dimensional flat torus. In a related setting, \citet{chaubet2025} consider a different partial differential operator on a given single manifold and establish the minimax rates for eigenvalue and eigenfunction estimation in the $L^2$ norm. 

Our second contribution establishes that this lower bound is sharp when the manifold is known; see Theorem~\ref{thm:spectral convergence (unrealistic) upper bound}. In the oracle setting, a continuum plug-in estimator based on kernel density estimation (KDE) attains the matching rate. Knowledge of the manifold components enables componentwise estimation even when the manifolds intersect. However, since the manifold components are unknown in the original MMC problem, the result serves as a statistical benchmark rather than a guarantee for a practically implementable estimator.

Our third contribution sharpens the spectral convergence guarantees of \citet{Li2022} for the annular proximity graph with angle constraints; see Theorem~\ref{thm:spectral convergence upper bound} and see Section \ref{ssec:annular_proximity_graph} for a description of the graph construction.
This practical graph-based estimator requires knowledge of the manifolds, and its construction preserves \textit{within-manifold connectivity} and suppresses \textit{spurious cross-manifold connections}---the two sufficient conditions for spectral convergence. Our key refinement is a direct residual comparison: using self-adjointness, we test the graph eigenvalue equation against the target population eigenfunction and decompose the resulting operator error into within- and between-manifold contributions. The analysis accommodates intersecting components of possibly different intrinsic dimensions, with the rate governed by the largest dimension $d$. For $d\geq2$, the eigenpair errors both decay at the rate $n^{-1/(2.5d-1)}$, up to logarithmic factors. Although slower than the oracle rate, this guarantee holds without oracle component information. 

Our fourth contribution is numerical: we corroborate the theoretical predictions in Section~\ref{sec:experiments}. We examine the minimax lower bound through the oracle KDE estimator and the practical upper bound through the graph-based estimator. Samples are drawn from intersecting flat tori with nonuniform sampling densities in dimensions \(d\in\{1,2,3\}\), and errors are reported across multiple eigenpairs. The observed oracle convergence rates closely match the minimax rates predicted by the lower bound, while the graph-based experiments corroborate the practical upper bound and suggest room for further improvement. Together, our results establish a sharp oracle benchmark for MMC estimation, characterize the intrinsic statistical difficulty of multi-manifold spectral estimation, and quantify the gap between this benchmark and what the state-of-the-art MMC algorithm achieves.

\subsection{Notation and Assumptions}\label{sec:notation}
For $N\in\mathbb N$, write $[N]:=\{1,\dots, N\}$. Throughout, \(\ell\) indexes eigenpairs and \(j\in[N]\) indexes the manifold components. Set $d_j:=\operatorname{dim}(\M_j)$ and $d:=\max_{j\in[N]}d_j$. While the components may intersect in the ambient space, functions and differential operators on $\M$ are understood componentwise, or equivalently on the abstract disjoint union $\sqcup_{j=1}^N\M_j$. Thus, a function \(f_\ell\) on $\M$ is identified with a tuple $f_\ell=(f_{\ell,1},\dots, f_{\ell,N})$, where $f_{\ell,j}:\M_j\rightarrow\mathbb R$ denotes its $j$-th component. The following geometric assumption is imposed throughout. 
\begin{assumption}[Manifold geometry] \label{assump:geometric assumptions} There exist uniform positive constants such that every component $\mathcal{M}_j$ has injectivity radius bounded below, sectional curvature bounded in absolute value, and Riemannian volume bounded both above and below. We denote by $\mathbf{M}$ the collection of all multi-manifold configurations satisfying these bounds.\end{assumption}
If the manifold components ever intersect, we assume the intersections are nontrivial. 
\begin{assumption}[Component intersections]\label{assump:component intersections}
For every $j\neq k$, the intersection $\M_{jk}:=\M_j\cap\M_k$ is either empty or a smooth submanifold of dimension $d_{jk}<\min\{d_j, d_k\}$. For $x\in\M_{jk}$, let $T_x\M_{jk}^{\perp,j}$ and $T_x\M_{jk}^{\perp,k}$ denote the orthogonal complements of $T_x\M_{jk}$ in $T_x\M_j$ and $T_x\M_k$. There exists a fixed $\beta<\pi/2$ such that for every $\M_{jk}$ and every $x\in\M_{jk}$
\[\sup_{\substack{v\in T_x\M_{jk}^{\perp,j}, \tilde v\in T_x\M_{jk}^{\perp,k}}, \|v\|_g=\|\tilde v\|_g=1}\left|\angle(v,\tilde v)-\frac{\pi}{2}\right|\leq\beta.\]
\end{assumption}

For each component, let $d\mathrm{vol}_g$ denote the associated Riemannian volume measure, and define the probability measure $d\mu_j:=\rho_j\,d\mathrm{vol}_g$. The global sampling measure is the mixture
\[\mu:=\sum_{j=1}^Nw_j\mu_j, \qquad w_j>0, \qquad \sum_{j=1}^Nw_j=1.\] We write $w_{\min}:=\min_{j}w_j$. The following regularity condition on $\rho_j$ is imposed throughout. 
\begin{assumption}[Density class]\label{assump:density class} For an integer $s\geq2$, let $\mathcal P_{\M_j}^{(s)}$ be the collection of strictly positive probability densities \(\rho_j\in C^s(\mathcal M_j)\) satisfying  \[0<\rho_{\min} \le \rho_j(x) \le \rho_{\max}, \qquad \int_{\mathcal{M}_j}\rho_j d\mathrm{vol}_g=1, \qquad \|\nabla_g^k\rho_j\|_{L^\infty(\mathcal M_j)}\leq C_{\rho}^{(k)},\]
for $k=0,\dots,s$. The corresponding multi-manifold density class is $\mathcal P^{(s)}_{\mathcal M}:=\prod_{j=1}^N \mathcal P_{\M_j}^{(s)}$. 
\end{assumption}
Let $\mathcal X_n:=\{X_1, \dots, X_n\}$ be an independent sample from $\mu$, and we write 
\[\mathcal X_{n,j}:=\mathcal X_n\cap\mathcal M_j, \qquad \widehat n_j:=|\mathcal X_{n,j}|, \qquad n_j:=nw_j.\]
Unless otherwise specified, probability $\mathbb P$ and expectation $\mathbb E$ are taken with respect to the joint distribution of $\mathcal X_n$. 

For $x,y\in\M_j$, let $d_g(x,y)$ denote their geodesic distance and let $\|x-y\|$ denote their ambient Euclidean distance. We write $T_x\M_j$ for the tangent space at $x$, and $\langle\cdot,\cdot\rangle_g$ and $\|\cdot\|_g$ for the Riemannian inner product and norm. The Euclidean and geodesic balls of radius $r$ centered at $x$ are denoted by $B(x,r)$ and $B_g(x,r)$, respectively. We use $\exp_x$ for the Riemannian exponential map and write $\log_x(y):=\exp_x^{-1}(y)$. The symbols $\nabla_g, \nabla_g^k$, and $\operatorname{div}_g$ denote the Riemannian gradient, the $k$-th covariant derivative, and the Riemannian divergence, respectively. 

For componentwise functions \(f_\ell, h_{\ell}\) with components \(f_{\ell,j}, h_{\ell,j}:\mathcal M_j\to\mathbb R\), define the standard Riemannian \(L^2\) and \(H^1\) norm by 
\[\|f_{\ell,j}\|_{L^2(\mathcal M_j)}^2:=\langle f_{\ell,j}, f_{\ell,j} \rangle_{L^2(\mathcal M_j)}, \qquad
\|f_{\ell,j}\|_{H^1(\M_j)}^2:=\|f_{\ell,j}\|_{L^2(\M_j)}^2+\|\nabla_g f_{\ell,j}\|_{L^2(\M_j)}^2\]
with \(\langle f_{\ell,j}, h_{\ell,j}\rangle_{L^2(\mathcal M_j)}:=\int_{\mathcal M_j}f_{\ell,j}h_{\ell,j}\,d\operatorname{vol}_g\). The unweighted global counterparts are defined by
\[\|f_{\ell,j}\|_{L^2(\mathcal M)}^2:=\sum_{j=1}^N\|f_{\ell,j}\|_{L^2(\mathcal M_j)}^2, \qquad \|f_\ell\|_{H^1(\mathcal M)}^2:=\sum_{j=1}^N\|f_{\ell,j}\|_{H^1(\mathcal M_j)}^2.\]
The density-weighted componentwise and global \(L^2\)-inner products are defined by \[\langle f_{\ell,j}, h_{\ell,j}\rangle_{L^2(\mu_j)}=\int_{\M_j}f_{\ell,j}h_{\ell,j}d\mu_j, \qquad \langle f_\ell, h_\ell\rangle_{L^2(\mu)}:=\sum_{j=1}^N w_j\langle f_{\ell,j}, h_{\ell,j}\rangle_{L^2(\mu_j)}.\] The corresponding norms are defined analogously. 
Unless a measure is displayed explicitly, norms default to the Riemannian volume measure $d\mathrm{vol}_g$. 

For functions $u,v:\mathcal X_{n,j}\rightarrow\mathbb R$, define the empirical inner product and norm by
\[\langle u, v\rangle_{L^2(\mathcal X_{n,j})}:=\frac{1}{\widehat n_j}\sum_{x\in\mathcal X_{n,j}}u(x)v(x), \qquad \|u\|_{L^2(\mathcal X_{n,j})}^2:=\langle u, u\rangle_{L^2(\mathcal X_{n,j})}.\]
The global empirical quantities on $\mathcal X_{n}$ are defined analogously, normalized by $n$ instead.

For each component, define the density-weighted Laplace--Beltrami operator
\begin{equation}\label{eq:laplace-beltrami dfn}\Delta_{\mathcal M_j} f_{\ell,j}\;:=\;-\frac{1}{\rho_j} \operatorname{div}_g\! \bigl(\rho_j^2 \nabla_g f_{\ell,j}\bigr),\qquad f_{\ell,j}:\mathcal M_j\to\mathbb R.\end{equation}
The population operator is the weighted direct sum $\Delta_\M:=\bigoplus w_j\Delta_{\mathcal M_j}$, or equivalently, 
\[\Delta_{\mathcal{M}}f_\ell:=(w_1\Delta_{\mathcal{M}_1}f_{\ell,1}, w_2\Delta_{\mathcal{M}_2}f_{\ell,2},\dots, w_N\Delta_{\mathcal{M}_N}f_{\ell,N}).\] 
Throughout, $(\lambda_\ell, f_\ell)$ denotes the $\ell$-th population eigenpair of $\Delta_\M$, so an equivalence between the global eigen-equation and the system of componentwise equations is characterized by 
\[\Delta_{\M}f_\ell=\lambda_\ell f_\ell, \qquad \Longleftrightarrow \qquad w_j\Delta_{\M_j}f_{\ell,j}
=\lambda_\ell f_{\ell, j},\quad j\in[N].\]
For each \(j\in[N]\), set
\(\lambda_{\ell,j}:=\lambda_\ell/w_j\) so that whenever
\(f_{\ell,j}\not\equiv0\), the pair \((\lambda_{\ell,j},f_{\ell,j})\) is a component eigenpair of \(\Delta_{\M_j}\). For the eigenvalues, the spectrum of $\Delta_\M$ is the multiset union of the weighted component spectra, counting multiplicity and arranged in nondecreasing order. Since each $\M_j$ is connected, each $\Delta_{\M_j}$ has a one-dimensional nullspace of constant functions associated with eigenvalue zero. Consequently
\[0=\lambda_1=\cdots=\lambda_N<\lambda_{N+1}\leq\lambda_{N+2}\leq\cdots.\]
For the eigenfunctions, we normalize \(f_\ell\) according to
\[\|f_\ell\|_{L^2(\mu)}^2=\sum_{j=1}^N w_j
\|f_{\ell,j}\|_{L^2(\mu_j)}^2=1.\]
Elliptic regularity for the componentwise uniformly elliptic equations on compact manifolds guarantees that $f_{\ell,j}\in C^2(\M_j)$ for all \(j\in[N]\). The energy identity
\begin{equation}\label{eq:energy-identity}
    \lambda_{\ell,j}\|f_{\ell,j}\|_{L^2(\mu_j)}^2=
\left\langle f_{\ell,j},\Delta_{\M_j}f_{\ell,j}
\right\rangle_{L^2(\mu_j)}=\int_{\M_j}
\rho_j^2\|\nabla_g f_{\ell,j}\|_g^2\,d\operatorname{vol}_g\end{equation}
together with the density bounds, yields
\begin{equation}\label{eq:first-order-elliptic-estimate}
\|\nabla_g f_{\ell,j}\|_{L^2(\mathcal M_j)}
\lesssim\lambda_{\ell,j}^{1/2}
\|f_{\ell,j}\|_{L^2(\mathcal M_j)}\lesssim
w_j^{-1/2}\lambda_{\ell,j}^{1/2},
\end{equation} while the corresponding second-order elliptic estimate gives
\begin{equation}\label{eq:second-order-elliptic-estimate}
\|\Delta_g f_{\ell,j}\|_{L^2(\mathcal M_j)}+\|\nabla_g^2 f_{\ell,j}\|_{L^2(\mathcal M_j)}\lesssim(1+\lambda_{\ell,j})
\|f_{\ell,j}\|_{L^2(\mathcal M_j)}\lesssim
w_j^{-1/2}(1+\lambda_{\ell,j}).
\end{equation}
Here, the implicit constants depend only on the uniform geometric and density-class parameters. Since \(w_{\min}>0\), these weight factors are uniformly bounded and may be absorbed into the implicit constants. By the norm equivalence stated above, the same estimates hold with the corresponding weighted \(L^2(\mu_j)\) norms. Finally, since the components are compact and \(N<\infty\), the preceding \(C^2\)-regularity implies
\begin{equation}\label{eq:L-infty-regularity}
C_f^{(\nu)}:=\max_{j\in[N]}\|\nabla_g^\nu f_{\ell,j}\|_{L^\infty(\mathcal M_j)}<\infty, \qquad \text{for } \nu=0,1,2.
\end{equation}
For quantitative \(L^\infty\) estimates for Laplace--Beltrami eigenfunctions, see \cite{sogge2001}.



Define the index set associated with
\(\lambda_\ell\) and the corresponding population eigenspace by 
\[[\ell]_{\lambda}:=\{k:\lambda_k=\lambda_\ell\},\qquad E_\ell:=\ker(\Delta_{\mathcal M}-\lambda_\ell I)
=\operatorname{span}\{f_k:k\in\mathcal I_\ell\}.\]
The eigengap separating \(E_\ell\) from the remainder of the spectrum is defined by
\begin{equation}\label{eq:eigengap}
\gamma_\ell:=\operatorname{dist}\!\left(\lambda_\ell,\operatorname{spec}(\Delta_{\mathcal M})\setminus\{\lambda_\ell\}
\right)=\min_{k\notin [\ell]_{\lambda}}|\lambda_k-\lambda_\ell|.\end{equation}
Simple eigenfunctions are compared modulo sign, whereas for a repeated eigenvalue, comparison with an estimated eigenfunction is understood through the optimal alignment
\[\operatorname{dist}(\widehat f_\ell,E_\ell):=\inf_{f\in E_\ell,\,\|f\|_{L^2(\mu)}=1}\|\widehat f_\ell-f\|.\]
Here and below, we use $(\widehat\lambda_\ell, \widehat f_\ell)$ to denote the empirical estimate of \((\lambda_\ell, f_\ell)\).

For a symmetric weighted graph on $\mathcal X_n$, let $W=(W_{ik})_{i,k=1}^n$ denote its weight matrix and let $D$ be the diagonal matrix with $D_{ii}:=\sum_{k=1}^n W_{ik}$. We write \(\mathcal L:=D-W\) for its graph Laplacian. The graph Laplacian operator $\mathcal L_{n,\varepsilon}:L^2(\mathcal X_n) \to L^2(\mathcal X_n)$ can be defined as 
\[
\mathcal L_{n,\varepsilon}f(x) := \frac{2}{C_d n \varepsilon^{d+2}} \sum_{y\in\mathcal X_n} W_{x\sim y}(f(x)-f(y)).
\]
In the practical upper bound, $W$ is constructed from the annular proximity graph with angle constraints; its precise definition is given in Appendix~\ref{ssec:annular_proximity_graph}.

For the graph construction, let $\eta:[0,\infty)\rightarrow[0,\infty)$ be a bounded, non-increasing, Lipschitz kernel profile supported on $[0,1]$, and define $\eta_{h}(t):=\eta\left(t/h\right)$ for bandwidth $h>0$. For an intrinsic dimension $d_j$, define 
\begin{equation}\label{eq:sigma eta}
\sigma_{\eta, d_j}:=\int_{\mathbb R^{d_j}}z_1^2\eta(\|z\|)dz=\frac{1}{d_j}\int_{\mathbb{R}^{d_j}}\|z\|^2\eta\left(\|z\|\right)dz. \end{equation}
For an eigenpair supported on $\M_j$, we abbreviate $\sigma_{\eta,d_j}$ as $\sigma_\eta$ when the dimension is clear. In this work, we consider $\eta(t)=1_{t\leq 1}.$

For two nonnegative quantities $a$ and $b$, we write $a\lesssim b$ if $a\leq Cb$ for a constant $C>0$ independent of the sample size and all asymptotically vanishing parameters. We write $a\gtrsim b$ if $b\lesssim a$, and $a\asymp b$ if both inequalities hold. The symbols $C,c>0$ denote generic constants whose values may change from line to line and may depend only on the fixed geometric and density-class parameters, the kernel profile, and the eigenvalue index under consideration.

\subsection{Main Results}
We first study the minimax estimation of a nontrivial eigenpair of $\Delta_\M$. The analysis consists of an information-theoretic lower bound and a matching oracle upper bound. To establish the lower bound over $\mathcal P_{\mathcal M}^{(s)}$ for an integer $s\geq2$, we restrict attention to a \emph{simple, nontrivial} global eigenpair
\((\lambda_\ell,f_\ell)\) with \(\ell\geq N+1\). Simplicity of \(\lambda_\ell\) ensures a strictly positive eigengap \(\gamma_\ell>0\). The direct-sum structure implies that there exists a unique \(j\in[N]\) such that 
\[f_{\ell, j}\not\equiv0, \qquad \text{and } \quad f_{\ell,k}=0 
\quad\text{for every }k\neq j.\]
In this case, we say \(f_\ell\) is componentwise supported on \(\M_j\). Our first theorem gives a lower bound over the full model class $\mathbf M$ and applies to every measurable estimator based on $\mathcal X_n$.


\begin{restatable}[Spectral convergence lower bound]{theorem}{first}\label{thm:spectral convergence lower bound}
Suppose Assumptions~\ref{assump:geometric assumptions} and \ref{assump:density class} hold with $\rho\in\mathcal P_{\mathcal M}^{(s)}$ for an integer $s\geq2$. For $\ell\geq N+1$, let $(\lambda_\ell,f_\ell)$ be a simple, nontrivial eigenpair of $\Delta_{\M}$, componentwise supported on $\M_j$. Then there exists $c_{\ell}>0$, depending on the geometric and density-class parameters, the index $\ell$, and the eigengap $\gamma_{\ell}$, such that 
\begin{equation}\label{eq:spectral-lower-bound}
\inf_{(\widehat\lambda_\ell,\widehat f_\ell)}\sup_{\mathcal M\in\mathbf M,\,\rho\in\mathcal P_{\mathcal M}^{(s)}}\mathbb E\left[|\lambda_\ell-\widehat\lambda_\ell|+\|f_\ell-\widehat f_\ell\|_{H^1(\mathcal M)}\right]\ge c_{\ell} \max_j \sqrt{\lambda_\ell\,w_j}\,n_j^{-s/(d_j+2s)},
\end{equation} where the infimum on the left is taken over all measurable estimators based on $\mathcal X_n$ and the maximum on the right is taken over all admissible supporting components. \end{restatable}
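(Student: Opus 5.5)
We will prove \eqref{eq:spectral-lower-bound} with Assouad's lemma. The construction perturbs the density $\rho_j$ of one component by many small localized bumps. The bumps are chosen so that the data cannot reliably detect any single bump, yet each bump moves $f_\ell$ in $H^1$ by an amount that adds up across bumps. It suffices to work with one fixed admissible configuration $\mathcal M\in\mathbf M$ on which $(\lambda_\ell,f_\ell)$ is simple and supported on $\M_j$, and with one fixed $j$. The maximum over $j$ then follows by taking the largest of the resulting bounds, because the supremum in the statement ranges over all such configurations. Only $\rho_j$ is perturbed; the other $\rho_k$ and all weights are held fixed. The likelihood ratio therefore involves only the $\widehat n_j\approx n_j=nw_j$ points on $\M_j$, which is why $n_j$ appears in the rate.

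\emph{Construction.} Since $f_{\ell,j}$ is a nonconstant $C^2$ function, there is a geodesic ball $U\subset\M_j$, away from the intersections $\M_{jk}$, on which $\|\nabla_g f_{\ell,j}\|_g\ge c\sqrt{\lambda_{\ell,j}}$. The normalization $\|f_\ell\|_{L^2(\mu)}=1$ together with \eqref{eq:energy-identity} makes $\nabla f_{\ell,j}$ of size $\sqrt{\lambda_{\ell,j}}$ in $L^2$, so such a ball can be taken of size $\gtrsim 1$. Fix $h=n_j^{-1/(d_j+2s)}$ and place $m\asymp h^{-d_j}$ disjoint geodesic balls $B_i$ of radius $h$ inside $U$. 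For $\theta\in\{\pm1\}^m$ set
\[\rho_j^\theta=\rho_j+\delta h^s\sum_{i=1}^m\theta_i\,\psi_i,\qquad \psi_i(x)=\psi\bigl(h^{-1}\log_{x_i}x\bigr),\]
where $\psi$ is smooth and compactly supported with $\int\psi=0$. We will in fact choose $\psi$ with $\int\psi=0$ in the density-weighted sense needed to keep total mass $1$. Then $\|\nabla_g^k(h^s\psi_i)\|_\infty\lesssim h^{s-k}\le1$ for $k\le s$, so for small $\delta$ every $\rho^\theta$ lies in $\mathcal P^{(s)}_{\mathcal M}$.

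\emph{Indistinguishability.} Perturbing a single coordinate $\theta_i$ changes the $\chi^2$ divergence per sample by $\lesssim w_j\delta^2h^{2s}h^{d_j}$. Over $n$ samples this gives $\lesssim\delta^2 n_jh^{2s+d_j}=\delta^2$, which is bounded by a small constant. Assouad's lemma then yields
\[\inf\max_\theta\mathbb E\bigl\|\widehat f-f^\theta\bigr\|_{H^1}\gtrsim \min_{\theta\sim\theta'}\frac{\|f^\theta-f^{\theta'}\|_{H^1}^2}{\|f^\theta-f^{\theta'}\|\text{-type separation}}\cdot m,\]
in the usual form for a loss that is additive over the blocks. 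Concretely, we will show that $\|f^\theta-f^{\theta'}\|_{H^1}^2\gtrsim \sum_i 1\{\theta_i\ne\theta'_i\}\,\lambda_{\ell,j}\delta^2h^{2s}h^{d_j}$. This is the $H^1$ version of the Hamming separation condition. Since $m h^{d_j}\asymp1$, it gives the bound $\sqrt{\lambda_{\ell,j}}\,h^s$ up to the weight factor. Multiplying by the factor $w_j$ coming from the normalization in $L^2(\mu)$ produces $\sqrt{\lambda_\ell w_j}\,n_j^{-s/(d_j+2s)}$.

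\emph{Separation via perturbation theory (main obstacle).} The eigenvalue is simple and $\gamma_\ell>0$. Kato perturbation theory applied to the form $\int\rho^2|\nabla f|^2$ relative to $L^2(\rho\,d\mathrm{vol}_g)$ therefore gives
\[f^\theta-f=-R(\lambda_\ell)\,(\delta\Delta^\theta) f_\ell+O(\delta^2h^{2s}),\]
where $R(\lambda_\ell)$ is the reduced resolvent. Its norm is controlled by $\gamma_\ell^{-1}$, and this is where $c_\ell$ picks up its dependence on the gap. The leading term of $\delta\Delta^\theta f_{\ell,j}$ is
\[-\rho_j^{-1}\operatorname{div}_g\Bigl(2\rho_j\,\delta h^s\sum_i\theta_i\psi_i\,\nabla_gf_{\ell,j}\Bigr)+\lambda\text{-type zeroth-order terms}.\]
This is a divergence of functions that oscillate at scale $h$ and have disjoint supports. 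The step we expect to be hardest is lower-bounding the $H^1$ norm of $R(\lambda_\ell)$ applied to this divergence by the sum of its local pieces. The plan is as follows. First, on each $B_i$, compare $R(\lambda_\ell)$ with the local inverse of $\Delta_{\M_j}$. At scale $h$ the operator $\Delta_{\M_j}$ dominates $\lambda_\ell$, and the projection onto $E_\ell$ of a zero-mean bump is negligible, of size $O(h^{s+d_j+1})$. Second, the local solution $g_i$ of $\operatorname{div}(\rho^2\nabla g_i)=\operatorname{div}(2\rho\psi_i\nabla f)$ satisfies $\|\nabla g_i\|_{L^2}\gtrsim\|\psi_i\nabla f\|_{L^2}\gtrsim\sqrt{\lambda_{\ell,j}}h^{d_j/2}$. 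This uses the lower bound on $|\nabla f|$ on $U$ and a choice of $\psi$ that is not a gradient-orthogonal profile, for instance $\psi$ odd in the direction of $\nabla f(x_i)$. Third, show that the cross terms $\langle\nabla g_i,\nabla g_{i'}\rangle$ are small, using the decay of the Green's function away from each supported source together with $\int\psi=0$; this gives near-orthogonality of the pieces. The quadratic remainder $O(\delta^2h^{2s})$ is absorbed by taking $\delta$ small.

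The eigenvalue term in the loss is nonnegative, so dropping it preserves a lower bound, and the $H^1$ part alone yields \eqref{eq:spectral-lower-bound}. The sign ambiguity of simple eigenfunctions is handled by fixing the sign of $f^\theta$ through $\langle f^\theta,f_\ell\rangle>0$, which is legitimate because the perturbation is small.
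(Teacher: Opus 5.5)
\textbf{Gap: Assouad needs single-flip separation that your remainder bound cannot deliver.} Your architecture matches the paper's: signed bumps of amplitude $\delta h^s$ at scale $h=n_j^{-1/(d_j+2s)}$, per-flip divergence of order $\delta^2$, and a profile aligned with $\nabla_g f_{\ell,j}$. Replacing Fano by Assouad, however, breaks the step where you absorb the nonlinear remainder. Assouad needs $\|f^\theta-f^{\theta'}\|_{H^1}^2\gtrsim d_H(\theta,\theta')\,\lambda_{\ell,j}\delta^2h^{2s+d_j}$ for \emph{every} pair, including single flips, where the target is $\sqrt{\lambda_{\ell,j}}\,\delta h^{s+d_j/2}$. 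Expanding $f^\theta$ and $f^{\theta'}$ separately around $f_\ell$ leaves a remainder $O(\delta^2h^{2s})$ in $f^\theta-f^{\theta'}$. This remainder is spread over $\mathcal M_j$, not localized at the flipped site. Its ratio to the target is $\delta h^{s-d_j/2}/\sqrt{\lambda_{\ell,j}}$, which diverges when $d_j>2s$ (already for $s=2$, $d_j\ge5$), so no fixed small $\delta$ absorbs it.

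The paper avoids this by using Fano with a Varshamov--Gilbert codebook. Only pairs differing on $|\mathcal D|\gtrsim h^{-d_j}$ active patches must be separated, so the target becomes $\sqrt{\lambda_{\ell,j}}\,\delta h^{s}$, against which $\delta^2h^{2s}$ is negligible. The crude global stability bounds \eqref{eq:finite-eigenpair-stability}, which make $\mathcal R_m$ one order smaller than the aligned forcing, then suffice. If you want to keep Assouad, you need a genuinely localized second-order estimate. For example, expand along the segment from $\rho^{\theta'}$ to $\rho^\theta$, and bound the variation of the derivative map by $\|\rho^\theta-\rho^{\theta'}\|_{L^2}$ times uniform large-$p$ bounds on $\nabla_g(f^{\theta'}-f_{\ell,j})$. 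Your proposal does not supply this.

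\textbf{Near-orthogonality step.} This fails as stated. With balls of radius $h$ packed at spacing $\asymp h$, the condition $\int\psi=0$ only removes the far-field dipole. At distance $\asymp h$ from $B_{i'}$, $\nabla g_{i'}$ is only smaller than on $B_{i'}$ by a fixed dimensional factor. Hence $\sum_{i\ne i'}\theta_i\theta_{i'}\langle\nabla g_i,\nabla g_{i'}\rangle$ is comparable to the diagonal sum with no sign control, unless you space the centers at $Lh$ with $L$ large.

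The paper needs none of the resolvent comparison, $E_\ell$-projection, or Green's-function machinery. In its notation, $m=r^{-1}\asymp h^{-1}$. It subtracts the two perturbed eigen-equations and isolates the forcing $-4m^{-s}\langle\nabla_g(\rho_j^2A_\Delta),\nabla_gf_{\ell,j}\rangle_g$. Lemma~\ref{lem:spectral visibility and alignment} makes this forcing sign-definite of size $m^{1-s}$ on the witness sets. The paper then tests against cutoffs supported there (Lemma~\ref{lem:local L1 coercivity}). Because the cutoffs have gradient $\lesssim m$, this yields $\|\nabla_g(f^{(1)}_{\ell,j}-f^{(2)}_{\ell,j})\|_{L^1}\gtrsim m^{-s}$ directly. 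This duality argument could replace your three-step plan.

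\textbf{Loss conversion.} Assouad with squared separation lower-bounds $\mathbb E\|\widehat f-f^\theta\|_{H^1}^2$, not the first moment in \eqref{eq:spectral-lower-bound}, and Jensen goes the wrong way. You need an extra step. For example, project $\widehat f$ onto the closed convex hull of $\{f^\theta\}$, whose $H^1$-diameter is $\lesssim\delta h^s$, and then divide by that diameter.
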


The \(H^1\)-based eigenfunction loss arises naturally from our lower-bound construction and is consistent with the single-manifold minimax formulation. Replacing it with the $L^2$-loss gives a slightly weaker eigenfunction-separation rate.

\begin{restatable}[\(L^2\) spectral lower bound]{corollary}{second}\label{cor:pure L2 spectral lower bound} Under the assumptions of Theorem~\ref{thm:spectral convergence lower bound}, one has \[\inf_{(\widehat \lambda_\ell,\widehat f_\ell)}\sup_{\mathcal M\in\mathbf M,\,\rho\in\mathcal P_{\mathcal M}^{(s)}}\mathbb E\left[|\lambda_\ell-\widehat\lambda_\ell|+\|f_\ell-\widehat f_\ell\|_{L^2(\mathcal M)}\right]\ge c_{\ell} \max_j\sqrt{\lambda_\ell\,w_j}\,n_j^{-(s+1)/(d_j+2s)}.\]  \end{restatable}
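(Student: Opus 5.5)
The statement to prove is Corollary~\ref{cor:pure L2 spectral lower bound}. The plan is to rerun the two-point (Le Cam) construction behind Theorem~\ref{thm:spectral convergence lower bound}, measuring eigenfunction separation in $L^2$ instead of $H^1$. The construction will also be used through a generic norm comparison.

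Fix the supporting component $\M_j$ attaining the maximum. On $\M_j$ take a base density $\rho_j^{(0)}$ and a perturbation
\[\rho_j^{(1)}=\rho_j^{(0)}\bigl(1+\delta\, h^{s}\,\psi((\cdot-x_0)/h)\bigr).\]
Here $\psi$ is a smooth bump with mean zero, written in normal coordinates at a point $x_0$ where $\nabla_g f_{\ell,j}(x_0)\neq0$, and $h\asymp n_j^{-1/(d_j+2s)}$. Keep the other components and their densities fixed, so the eigenpair remains supported on $\M_j$. The $C^s$ bounds of Assumption~\ref{assump:density class} hold because $\|\nabla^k_g(h^s\psi(\cdot/h))\|_\infty\lesssim h^{s-k}$. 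The chi-square (or KL) divergence between the $n$-sample laws is $\lesssim n w_j h^{2s}h^{d_j}=O(1)$, so the two hypotheses are indistinguishable with constant probability. Le Cam's lemma then yields a lower bound of a constant times half the separation, for each of the two loss terms.

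Separation step. Simplicity and the gap $\gamma_\ell$ make first-order perturbation theory valid for the weighted operator $\Delta_{\M_j}$. The eigenvalue shift is
\[\delta\lambda\asymp\int \delta(\rho^2)\|\nabla_g f_{\ell,j}\|^2-\lambda_{\ell,j}\!\int\delta\rho\, f_{\ell,j}^2.\]
With a mean-zero bump this is of order $h^{s}\cdot h^{d_j}\cdot(\text{local gradient variation})$. The theorem already uses this construction, so its proof yields that the $H^1$ separation is $\gtrsim \sqrt{\lambda_\ell w_j}\, n_j^{-s/(d_j+2s)}$. For the $L^2$ part, write the eigenfunction perturbation as
\[\delta f=R(\lambda_\ell)\bigl[(\delta\Delta)f_{\ell,j}\bigr],\]
where $R$ is the reduced resolvent, bounded by $\gamma_\ell^{-1}$. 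The source $(\delta\Delta) f_{\ell,j}$ is a divergence-form term $\rho^{-1}\operatorname{div}_g(\delta(\rho^2)\nabla_g f_{\ell,j})$ localized at scale $h$ with amplitude $h^{s-1}$. Applying the inverse of a second-order operator to a localized divergence of size $h^{s}$ gains one extra factor of $h$ in $L^2$ compared with $H^1$. Equivalently, by an inverse-inequality/duality argument on the scale-$h$ profile, $\|\delta f\|_{L^2}\gtrsim h\,\|\nabla\delta f\|_{L^2}$, up to a smooth global correction whose size is controlled. This gives $L^2$ separation $\gtrsim \sqrt{\lambda_\ell w_j}\,h^{s+1}\asymp\sqrt{\lambda_\ell w_j}\,n_j^{-(s+1)/(d_j+2s)}$. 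The eigenvalue separation from the theorem is of the larger order $n_j^{-s/(d_j+2s)}$ (or at least as large), so the combined loss is bounded below by the $L^2$ rate. Taking the maximum over admissible $j$ finishes the proof.

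The main obstacle is the lower bound on $\|\delta f\|_{L^2}$, since the upper bound is easy. To see that the localized response does not cancel, I would write $\delta f$ near $x_0$ in rescaled coordinates $z=(x-x_0)/h$. To leading order, $\delta f(x_0+hz)\approx h^{s+1}\,u(z)$, where $u$ solves the flat equation $-\operatorname{div}(\rho_0^2\nabla u)=\operatorname{div}(2\rho_0^2\psi\,\nabla f_{\ell,j}(x_0))$ on $\R^{d_j}$ modulo constants. This $u$ is nonzero because $\nabla f_{\ell,j}(x_0)\neq0$ and $\psi\not\equiv0$. Its $L^2$ mass on a ball of radius $O(h)$ is then $\asymp h^{s+1}h^{d_j/2}$ after normalization. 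The $h^{d_j/2}$ factor must be tracked consistently with the Kullback--Leibler budget, which is where the rates given in the theorem were fixed. Alternatively, I would show the $L^2$ bound via duality: test $\delta f$ against $\Delta_{\M_j}$ applied to a bump at scale $h$.
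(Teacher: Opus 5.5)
Your proposal has a genuine gap. A two-point (Le Cam) argument cannot reach the claimed rate, and the \(h^{d_j/2}\) factor you flag at the end is exactly where it breaks.

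With a single bump, \(\|\rho_j^{(1)}-\rho_j^{(0)}\|_{L^2(\M_j)}\asymp h^{s+d_j/2}\). Integrating the stability bounds of Proposition~\ref{prop:eigenpair under perturbation} along the linear density path gives \(|\delta\lambda|+\|\delta f\|_{H^1(\M_j)}\lesssim h^{s+d_j/2}\), which equals \(n_j^{-1/2}\) for \(h\asymp n_j^{-1/(d_j+2s)}\). Your own blow-up computation then gives \(\|\delta f\|_{L^2}\asymp h^{s+1+d_j/2}=h\,n_j^{-1/2}\), not \(h^{s+1}\). That is smaller than the target \(n_j^{-(s+1)/(d_j+2s)}\) by exactly the factor \(h^{d_j/2}\), for every \(d_j\).

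The eigenvalue shift from one bump is smaller still, \(O(h^{s+d_j+1})\), because a mean-zero bump is integrated against the smooth weights \(\rho_j^2\|\nabla_g f_{\ell,j}\|_g^2\) and \(\rho_j f_{\ell,j}^2\). So your claim that the eigenvalue separation is of order \(n_j^{-s/(d_j+2s)}\) is false. So is the claim that the proof of Theorem~\ref{thm:spectral convergence lower bound} ``already uses this construction'': it does not use a single bump, and a single bump only gives \(H^1\) separation of order \(n_j^{-1/2}\).

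More fundamentally, no two-point argument can work here. Keeping the \(n\)-sample laws from separating forces \(n_j\|\delta\rho\|_{L^2}^2\lesssim1\). The eigenpair moves by at most \(O(\|\delta\rho\|_{L^2})\). Hence any two-point bound is capped at order \(n_j^{-1/2}\), which is below \(n_j^{-(s+1)/(d_j+2s)}\) whenever \(d_j>2\).

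The missing idea is the multi-hypothesis construction used for the theorem. It takes \(\asymp m^{d_j}\) disjointly supported templates on the active patches of Lemma~\ref{lem:spectral visibility and alignment}. It uses a Varshamov--Gilbert codebook with \(\log|\mathcal T|\gtrsim m^{d_j}\) and pairwise Hamming distance \(\gtrsim m^{d_j}\), and then Fano's method. Local separations accumulate over \(\gtrsim m^{d_j}\) differing patches: Lemma~\ref{lem:local L1 coercivity} gives \(m^{-(d_j+s)}\) per patch in \(L^1\), hence \(m^{-s}\) in total. The KL bound \(n_jm^{-2s}\) of Lemma~\ref{lem:kl_bound} is balanced against \(\log|\mathcal T|\) rather than against a constant, which is what permits \(m\asymp n_j^{1/(d_j+2s)}\).

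Given that family, the \(L^2\) upgrade needs no rescaled-profile analysis. By Lemma~\ref{lem:spectral separation}, there are three cases:
\begin{enumerate}
\item the eigenvalues are already separated at scale at least \(m^{-(s+1)}\);
\item \(\|h\|_{L^2}\) is already separated at scale at least \(m^{-(s+1)}\);
\item \(\|\nabla_g h\|_{L^2}\gtrsim m^{-s}\).
\end{enumerate}
In the last case, the interpolation inequality \(\|\nabla_g h\|_{L^2}^2\le\|h\|_{L^2}\|\Delta_g h\|_{L^2}\) gives \(\|h\|_{L^2}\gtrsim m^{-(s+1)}\). This uses \(\|\Delta_g h\|_{L^2}\lesssim m^{1-s}\), which follows from the expanded eigen-equations and \(\|\nabla_g\log\rho_j^{(1)}-\nabla_g\log\rho_j^{(2)}\|_{L^\infty}\lesssim m^{1-s}\).

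Your local blow-up could in principle replace the interpolation step patch by patch. You would then also have to control the nonlocal coupling between the resolvent responses of different patches, which the interpolation inequality avoids entirely.
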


The lower-bound rate is attained by an oracle plug-in eigenpair estimator based on kernel density estimation (KDE), which is briefly reviewed in Section~\ref{sec:KDE on manifolds}. 

\begin{restatable}[Oracle plug-in upper bound]{theorem}{third}\label{thm:spectral convergence (unrealistic) upper bound} 
Suppose Assumptions~\ref{assump:geometric assumptions} and \ref{assump:density class} hold with $\rho\in\mathcal P_{\mathcal M}^{(s)}$ for an integer $s\geq2$.  
Let $(\lambda_\ell,f_\ell)$ be a simple, nontrivial global eigenpair of $\Delta_{\M}$ componentwise supported on $\M_j$. Assume that $\mathcal M_j$ is known and that the samples in $\mathcal X_{n,j}$ can be identified. Let $\widehat\rho_j$ be the KDE from the $\widehat n_j$ observed samples on $\mathcal M_j$, with bandwidth \[h_j\asymp ({n}_j\vee1)^{-1/(2s+d_j)}.\]  Let $(\widehat\lambda_\ell,\widehat f_\ell)$ be the plug-in eigenpair obtained by replacing $\rho_j$ with $\widehat\rho_j$ in the continuum eigenvalue problem. Then there exists $C_{\ell}>0$, depending only on the geometric and density-class parameters, the index $\ell$, and the eigengap $\gamma_\ell$, such that 
\[\mathbb E\left[|\lambda_\ell-\widehat\lambda_\ell| +\|f_\ell-\widehat f_\ell\|_{H^1(\mathcal M)}\right]\le C_{\ell}\max_{j}n_j^{-s/(d_j+2s)}.\]
\end{restatable}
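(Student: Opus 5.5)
Since the eigenpair is componentwise supported on $\M_j$ and $\M_j$ is known, the plan is to reduce the statement to a single-manifold perturbation problem. The estimator only modifies the $j$-th block, $w_j\Delta_{\M_j}$ with $\rho_j$ replaced by $\widehat\rho_j$; the other blocks are left unchanged (and $w_j$ is replaced by $\widehat n_j/n$ if needed). The argument combines three ingredients: a KDE risk bound on $\M_j$, Lipschitz stability of a simple eigenpair of the weighted Laplacian with respect to the density in a suitable norm, and a good-event/bad-event split.

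\textbf{Step 1: KDE bounds.} With bandwidth $h_j\asymp n_j^{-1/(2s+d_j)}$, the standard manifold KDE analysis of Section~\ref{sec:KDE on manifolds} gives the following.
\begin{itemize}
\item A bias of order $h_j^s$, using $\rho_j\in C^s$ and a kernel of order $s$; curvature corrections are of higher order.
\item A variance of order $(\widehat n_j h_j^{d_j})^{-1/2}$.
\end{itemize}
Together these yield $\mathbb E\|\widehat\rho_j-\rho_j\|_{L^2(\M_j)}\lesssim n_j^{-s/(d_j+2s)}$, conditionally on $\widehat n_j\gtrsim n_j$; this holds with probability $1-e^{-cn_j}$ by a Chernoff bound. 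Derivatives cost a factor $h_j^{-1}$, so $\|\nabla_g(\widehat\rho_j-\rho_j)\|$ is of order $n_j^{-(s-1)/(d_j+2s)}$, which is worse. I will therefore avoid needing derivative control of the density error in the final rate. I will also need $\widehat\rho_j\in[\rho_{\min}/2,2\rho_{\max}]$ uniformly, which follows on a high-probability event from a uniform $L^\infty$ KDE bound of order $h_j^s+\sqrt{\log n_j/(n_jh_j^{d_j})}\to0$. On the complement, I will truncate or clip $\widehat\rho_j$ into this range; clipping does not increase the error. Since the eigenpairs are then uniformly bounded in $H^1$ (with a fixed choice of sign), the bad event contributes an exponentially small amount.

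\textbf{Step 2: eigenvalue stability.} Write the problem weakly in the symmetric form $\int\rho^2\nabla f\cdot\nabla\phi=\lambda\int\rho f\phi$, i.e.\ a generalized eigenproblem in the pair of quadratic forms $(a_\rho,b_\rho)$. Both forms depend on $\rho$ only through zero-order multipliers, with no derivatives of $\rho$. Hence
\[
|a_{\widehat\rho}(u,v)-a_\rho(u,v)|\lesssim\int|\widehat\rho-\rho|\,|\nabla u||\nabla v|,\qquad
|b_{\widehat\rho}(u,v)-b_\rho(u,v)|\lesssim\int|\widehat\rho-\rho|\,|u||v|.
\]
Testing with $u=v=f_\ell$, and using $f_\ell\in C^1$ by \eqref{eq:L-infty-regularity}, gives a first-order eigenvalue variation of size $\lesssim\|\widehat\rho-\rho\|_{L^1}\lesssim\|\widehat\rho-\rho\|_{L^2}$. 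By the min--max principle, the simple eigenvalue $\lambda_\ell$ moves by at most $C\|\widehat\rho-\rho\|_{L^2}$ plus second-order terms. The gap $\gamma_\ell$ persists for small perturbations; if the perturbation is large, we are on a bad event and the crude bound applies.

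\textbf{Step 3: eigenfunction stability (main obstacle).} This is the key difficulty: bounding the $H^1$ error using only $L^2$ control of $\widehat\rho-\rho$. Set $e=\widehat f-f$, suitably aligned and normalized. The plan is as follows.
\begin{itemize}
\item Subtract the two weak eigen-equations. This shows that $e$ satisfies $(A_{\widehat\rho}-\widehat\lambda B_{\widehat\rho})e=-(A_{\widehat\rho}-A_\rho)f+\lambda(B_{\widehat\rho}-B_\rho)f+(\widehat\lambda-\lambda)B_\rho f$.
\item Project off the eigendirection and invert on the complement using the gap $\gamma_\ell$. This gives $\|e\|_{H^1}\lesssim\gamma_\ell^{-1}$ times the $H^{-1}$ norm of the right-hand side.
\item Bound the right-hand side. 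The term $(A_{\widehat\rho}-A_\rho)f$ acts as $\phi\mapsto\int(\widehat\rho^2-\rho^2)\nabla f\cdot\nabla\phi$. Its $H^{-1}$ norm is at most $\|\widehat\rho^2-\rho^2\|_{L^2}\|\nabla f\|_{L^\infty}\lesssim\|\widehat\rho-\rho\|_{L^2}$, again using \eqref{eq:L-infty-regularity} and the uniform bounds on $\widehat\rho$ and $\rho$. The $B$-term is handled similarly.
\end{itemize}
Hence $\|e\|_{H^1}\lesssim\|\widehat\rho_j-\rho_j\|_{L^2}$ on the good event.

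\textbf{Conclusion.} Taking expectations and adding the bad-event contribution, then maximizing over the supporting component $j$, gives the claimed bound $C_\ell\max_j n_j^{-s/(d_j+2s)}$. The constant depends on $\gamma_\ell$, $\ell$, $C_f^{(1)}$, and the density bounds.
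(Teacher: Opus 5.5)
Your proposal is correct and follows essentially the paper's route: the $L^2$ rate for the KDE (Lemma~\ref{lem:KDE manifold}) combined with $L^2$-Lipschitz stability of the simple eigenpair (Proposition~\ref{prop:eigenpair under perturbation}). That stability rests on the same observation you make, namely that $\rho$ enters the weak form only at zeroth order, so the forcing is bounded in $H^{-1}$ by $\|\nabla_g f_\ell\|_{L^\infty}\|\widehat\rho_j-\rho_j\|_{L^2}$ and inverted on the complement of the eigendirection via the gap. You subtract the two weak equations directly instead of differentiating along a density path, and your explicit good/bad-event split with clipping supplies the gap persistence and admissibility that the paper's three-line proof leaves implicit; one small fix is that in your error equation the middle term should carry $\widehat\lambda$, or else the last term should read $(\widehat\lambda-\lambda)B_{\widehat\rho}f$, though the discrepancy $(\widehat\lambda-\lambda)(B_{\widehat\rho}-B_\rho)f$ is second order and harmless.
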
 

We next turn to the practical setting in which neither the component manifolds nor the latent component labels are known. The main difficulty arises near manifold intersections, where ambient proximity alone may create spurious edges between points sampled from different
components. To this end, we apply the annular proximity graph with angle constraints (Section~\ref{ssec:annular_proximity_graph}) for graph construction. We establish a sharper spectral convergence rate for the resulting graph Laplacian.

Unlike the population operator $\Delta_\M$, the finite-sample graph Laplacian need not have $N$ exact zero eigenvalues when between-manifold connections are present. Accordingly, rather than restricting to the nontrivial range $\ell\geq N+1$ considered in the lower-bound analysis, we state the practical upper-bound result for any fixed spectral index $\ell\geq1$. We impose neither a simplicity assumption on $\lambda_\ell$ nor the requirement that $f_\ell$ be componentwise supported on a single manifold. We only assume, in addition to Assumption~\ref{assump:density class}, that \(\rho\in C^{2,\alpha}\) for some Hölder exponent \(\alpha>0\) so that
\begin{equation}\label{eq:B_definition}
B_\ell:=\max_{j\in[N]}\left(\sum_{\nu=0}^3\|\nabla_g^\nu f_\ell\|_{L^\infty(\M_j)}^2\right)^{1/2}<\infty.
\end{equation}
\begin{restatable}[Graph-Laplacian spectral convergence]{theorem}{forth}\label{thm:spectral convergence upper bound} Suppose Assumptions~\ref{assump:geometric assumptions} and \ref{assump:density class} hold with $\rho\in C^{2,\alpha}$, and recall that $d:=\max_{j\in[N]}d_j\geq2$. Fix $\ell\geq1$ and let $(\widehat \lambda_{\ell}, \widehat f_{\ell})$ be the corresponding normalized eigenpair of the graph Laplacian $\mathcal L$ based on the sample $\mathcal X_n$. Then there exists a normalized population eigenpair $(\lambda_{\ell}, f_{\ell})$ such that, provided $B_\ell<\infty$, \[|\widehat\lambda_{\ell}-\sigma_{\eta}\lambda_{\ell}|\leq C_{\ell}B_\ell \Big(\frac{\log n}{n}\Big)^{\frac{1}{2.5d-1}}, \qquad \|\widehat f_{\ell}-f_{\ell}\|_{H^1(\mathcal X_n)}\leq C_{\ell} B_\ell {\Big(\frac{\log n}{n}\Big)}^{-\frac{1}{2.5d-1}}\log(n),\] with probability at least $1-Cn^{-2}$. Here the constant $C_{\ell}>0$ depends only on the geometry class \(\mathbf{M}\), the intersection parameters, and the index $\ell$. \end{restatable}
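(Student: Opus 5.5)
The argument splits the graph Laplacian $\mathcal L_{n,\varepsilon}$ built on the annular proximity graph with angle constraints (Appendix~\ref{ssec:annular_proximity_graph}) into a within-manifold part and a between-manifold part, $\mathcal L_{n,\varepsilon}=\mathcal L^{\mathrm{in}}+\mathcal L^{\mathrm{x}}$. The first piece $\mathcal L^{\mathrm{in}}$ keeps only edges joining points of the same $\mathcal X_{n,j}$. The second piece $\mathcal L^{\mathrm{x}}$ collects the spurious edges. The analysis then treats each piece separately and recombines them through a self-adjoint residual comparison. Step one: on a high-probability event of mass $1-Cn^{-2}$, the angle constraint removes every cross-component edge except those between points within distance $O(\varepsilon)$ of an intersection $\M_{jk}$. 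This uses Assumption~\ref{assump:component intersections}, the lower angle bound $\pi/2-\beta$, and the local-scale analysis of \citet{Li2022,chen2025} at scale $(\log n/n)^{1/d}$. Since $d_{jk}<\min\{d_j,d_k\}$, the $\mu$-mass of such a tubular neighbourhood is $O(\varepsilon)$. Bernstein's inequality then bounds the fraction of affected points by $O(\varepsilon+\sqrt{\log n/n})$. The same step must also check that the annulus does not destroy within-manifold connectivity. Otherwise the within-manifold operator would not approximate $\sigma_\eta\Delta_{\M_j}$.

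Step two is the pointwise consistency of the within-manifold operator. For $f_\ell$ with the $C^3$ bound $B_\ell$ from \eqref{eq:B_definition}, I use Taylor expansion in normal coordinates together with Bernstein's inequality and a union bound over the sample. Together with the kernel normalization $\sigma_\eta$ in \eqref{eq:sigma eta}, these should give
\[\max_{x\in\mathcal X_{n,j}}\bigl|\mathcal L^{\mathrm{in}}f_\ell(x)-\sigma_\eta w_j\Delta_{\M_j}f_{\ell,j}(x)\bigr|\lesssim B_\ell\Bigl(\varepsilon+\sqrt{\tfrac{\log n}{n\varepsilon^{d+2}}}\Bigr).\]
Here $\rho\in C^{2,\alpha}$ controls the density correction terms. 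Points in the annular excluded region near intersections contribute only an extra $O(\varepsilon)$ bias.

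Step three is the residual comparison, which yields the eigenvalue bound. Since $\mathcal L_{n,\varepsilon}$ is self-adjoint in $L^2(\mathcal X_n)$, we have
\[(\widehat\lambda_\ell-\sigma_\eta\lambda_\ell)\langle\widehat f_\ell,f_\ell\rangle=\langle\widehat f_\ell,(\mathcal L_{n,\varepsilon}-\sigma_\eta\Delta_\M)f_\ell\rangle.\]
I bound the within part by step two. For the between part I use Cauchy--Schwarz: only an $O(\varepsilon)$-fraction of points carry spurious edges, and each contributes at most $O(B_\ell\varepsilon^{-1})$, because $|f(x)-f(y)|\le B_\ell\varepsilon$ fails across components and one must use $\|f\|_\infty$ against the $\varepsilon^{-2}$ normalization. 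This gives a contribution of order $B_\ell\varepsilon^{-1}\cdot\varepsilon^{1/2}=B_\ell\varepsilon^{-1/2}$ times the relevant density factor. A Rayleigh-quotient argument, with min-max applied to the span of the first $\ell$ population eigenfunctions restricted to the sample and conversely, keeps $\langle\widehat f_\ell,f_\ell\rangle$ bounded below. Balancing the three terms $\varepsilon$, $\sqrt{\log n/(n\varepsilon^{d+2})}$ and the cross-term (refined to order $\varepsilon^{1/2}$ after counting that the spurious region has width $\varepsilon$ and edges per point scale like $n\varepsilon^{d}$) gives the choice $\varepsilon\asymp(\log n/n)^{1/(2.5d-1)}$ in the exponent bookkeeping. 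That choice yields the stated eigenvalue rate. I expect this balancing to be the main obstacle: the precise size of the cross-edge term has to be checked carefully so that it produces exactly the $2.5d-1$ exponent and not the $3d-1$ exponent of \citet{Li2022}.

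Step four handles the eigenvector. I write $\widehat f_\ell=P\widehat f_\ell+(I-P)\widehat f_\ell$, where $P$ projects onto the empirical span of $E_\ell$. A Davis--Kahan-type argument with the eigengap $\gamma_\ell$ bounds $\|(I-P)\widehat f_\ell\|_{L^2(\mathcal X_n)}$ by the residual $\|(\mathcal L_{n,\varepsilon}-\sigma_\eta\lambda_\ell)f_\ell\|/\gamma_\ell$. Since the statement allows non-simple eigenvalues, one should choose $f_\ell\in E_\ell$ as the aligned element. For the discrete $H^1$ part I use the graph Dirichlet energy identity $\langle\widehat f_\ell-f_\ell,\mathcal L(\widehat f_\ell-f_\ell)\rangle=\widehat\lambda_\ell\|\widehat f_\ell\|^2-2\langle\widehat f_\ell,\mathcal Lf_\ell\rangle+\langle f_\ell,\mathcal Lf_\ell\rangle$. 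Each term is controlled by steps two and three. This produces the stated eigenfunction bound with the extra $\log n$ factor, which comes from the union bound.
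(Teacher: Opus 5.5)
Your skeleton (within/between split, the self-adjoint residual identity, a gap argument for the eigenvector) matches the paper's. However, the quantitative core is missing, and the bookkeeping in Step~3 cannot produce the exponent $1/(2.5d-1)$.

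\textbf{The cross term.} Your own estimate $B_\ell\varepsilon^{-1/2}$ diverges. The counting you invoke to ``refine'' it actually confirms the divergence. Take an $O(\varepsilon)$-wide spurious tube, about $n\varepsilon^{d}$ cross neighbours per point, and jumps of size $\|f_\ell\|_{L^\infty}$. Then $\langle f_\ell,\mathcal L^{\mathrm{x}}f_\ell\rangle\sim\varepsilon^{-(d+2)}\cdot\varepsilon\cdot\varepsilon^{d}=\varepsilon^{-1}$. Even granting an $\varepsilon^{1/2}$ cross term, balancing it against $\sqrt{\log n/(n\varepsilon^{d+2})}$ gives $\varepsilon\asymp(\log n/n)^{1/(d+3)}$, not your choice.

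\textbf{Your chosen scale.} At $\varepsilon\asymp(\log n/n)^{1/(2.5d-1)}$ the argument breaks outright.
For $d=2$ one has $n\varepsilon^{d+2}=\log n$, so your variance term is of order one.
Suppose $r$ is small enough to sparsify cross edges ($r=\varepsilon^{3}$, $\sin\alpha\lesssim r/\varepsilon$). Then the connectivity guarantee of Theorem~\ref{thm:intense_within_manifold} needs $nr^{d}\tan^{d-1}\alpha\asymp n\varepsilon^{5d-2}\gtrsim\log n$. Here, however, $n\varepsilon^{5d-2}=(\log n)^2/n\to0$.
If instead $r\asymp\varepsilon$, then $q\asymp\varepsilon$ and you are back in the $O(\varepsilon)$-tube regime with a divergent cross term.
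Your Step~2 bound is also only first order in $\varepsilon$. So even at the smallest admissible scale, your route gives at best $(\log n/n)^{1/(5d-2)}$.

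\textbf{Ingredient one: sparsification.} The paper's rate rests on two ingredients your plan lacks. First, take $r=\epsilon_+^{3}$, so $q\asymp\epsilon_+^4$. By Lemma~\ref{lem:four_regions}, every retained $j$--$k$ edge has $\operatorname{dist}(x,\mathcal M_k)\lesssim q$ or $\operatorname{dist}(y,\mathcal M_j)\le r$. This cuts the cross-edge count to $\lesssim n^2\epsilon_+^{d+4}$ instead of $n^2\epsilon_+^{d+1}$ (Proposition~\ref{prop:between connections counts}). Your worry about jumps across components is not what drives the size: with jumps of order $\|f_\ell\|_{L^\infty}$, this count still gives energy $O(\epsilon_+^2)$.
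The paper also does not use $L^2$ Cauchy--Schwarz on the cross term. It splits it as $\langle\widehat f_\ell-f_\ell,\mathcal L^{\mathrm{btw}}f_\ell\rangle+\langle f_\ell,\mathcal L^{\mathrm{btw}}f_\ell\rangle$. The first piece is bounded via $\|\widehat f_\ell-f_\ell\|_{H^1(\mathcal X_n)}\|\mathcal L^{\mathrm{btw}}f_\ell\|_{H^{-1}(\mathcal X_n)}$, with $\|\mathcal L^{\mathrm{btw}}f_\ell\|_{H^{-1}(\mathcal X_n)}\lesssim\epsilon_+^2$. The second is a small energy.

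\textbf{Ingredient two: second-order control of the within part.} The quadratic form $\langle f_\ell,(\mathcal L^{\mathrm{wth}}-\sigma_\eta\Delta_{\mathcal M})f_\ell\rangle$ has bias $O(\epsilon_+^2)$. This comes from odd-symmetry cancellations and the geodesic symmetrization of Section~\ref{sec:symmetrization argument}. Its fluctuations are $O(\epsilon_+^2)$ by a U-statistic Bernstein bound with exponent $n\epsilon_+^{d+4}$ (Proposition~\ref{prop:concentration of the graph laplacian}). The pairing with $\widehat f_\ell-f_\ell$ again goes through $H^1$--$H^{-1}$ duality. The multiscale Poincar\'e inequality gives $\|(\mathcal L^{\mathrm{wth}}-\sigma_\eta\Delta_{\mathcal M})f_\ell\|_{H^{-1}(\mathcal X_n)}\lesssim\epsilon_+^2\log\epsilon_+^{-1}$ (Proposition~\ref{prop:consistency of the within-manifold operator}).

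\textbf{Where the rate comes from.} Together these give $|\widehat\lambda_\ell-\sigma_\eta\lambda_\ell|\lesssim\epsilon_+^2$. Then $\epsilon_+$ is pushed down to the connectivity threshold $(\log n/n)^{1/(5d-2)}$. There $n\epsilon_+^{d+4}\gg\log n$ holds precisely because $d\ge2$. Hence $\epsilon_+^2=(\log n/n)^{1/(2.5d-1)}$.

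\textbf{The eigenvector step.} An $L^2$ Davis--Kahan step is too weak for the same reason, since $\|\mathcal L^{\mathrm{btw}}f_\ell\|_{L^2(\mathcal X_n)}$ is only $O(\epsilon_+)$. The paper instead bounds $\|\widehat f_\ell-f_\ell\|_{H^1(\mathcal X_n)}$ by the $H^{-1}$ residual through the gap (Lemma~\ref{lem:eigenfunction_convergence}). The logarithm comes from the $m_j\asymp\log\epsilon_+^{-1}$ scales of the multiscale Poincar\'e inequality, not from a union bound. (The negative exponent in the statement's eigenfunction bound is a typo for $+1/(2.5d-1)$.)
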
 \begin{proof} See Section~\ref{sec:spectral upper bound}.\end{proof}

The remainder of the paper is organized as follows. Section~\ref{sec:minimax lower bound} establishes the minimax lower bound via Fano’s method and concludes with the oracle plug-in KDE upper bound. Section~\ref{sec:spectral upper bound} analyzes the graph Laplacian by controlling its within- and between-manifold contributions. Section~\ref{sec:experiments} numerically tests both the minimax lower bound and the practical upper bound on synthetic data sampled from two intersecting flat tori with prescribed smooth densities.
Section~\ref{sec:conclusion} concludes with directions for future work.

\section{Minimax Lower Bound and an Oracle Upper Bound}
\label{sec:minimax lower bound}

This section establishes the minimax lower bound in Theorem~\ref{thm:spectral convergence lower bound} and the matching oracle upper bound in Theorem~\ref{thm:spectral convergence (unrealistic) upper bound}. We apply Fano's method to construct the lower bound where a brief review of Fano's method is given in Appendix~\ref{sec:fano's method}; see \cite{Wainwright2019} for a comprehensive treatment of minimax lower bounds. The oracle upper bound is achieved by the KDE estimator, which is presented in Section~\ref{sec:KDE on manifolds}. 

For each $\rho\in\mathcal P_{\mathcal M}^{(s)}$ with \(s\geq2\), our parameter of interest is the selected eigenpair 
\[\theta(\rho):=(\lambda_{\ell}(\rho), f_{\ell}(\rho))\in\mathbb{R}\times H^1(\mathcal{M}), \qquad \ell \ge N+1.\]  
Given $\mathcal X_n\sim\mu_\rho^{\otimes n}$, we consider an arbitrary measurable estimator
\[\widehat\theta(\mathcal{X}_n):=(\widehat{\lambda}_{\ell}(\mathcal{X}_n), \widehat f_{\ell}(\mathcal{X}_n))\in \mathbb{R}\times H^1(\mathcal{M}).\] We measure the estimation error using \[d_{\mathrm{eig}}\bigl(\theta(\rho),\widehat\theta(\mathcal X_n)\bigr):=|\lambda_\ell - \widehat\lambda_\ell|
+ \|f_\ell - \widehat f_\ell\|_{H^1(\mathcal M)},
\qquad \ell \ge N+1.\]
Dependence on $\rho$ and $\mathcal X_n$ is suppressed when clear.
Since the target eigenfunction is componentwise supported on $\M_j$,
setting the remaining components of any estimated eigenfunction to
zero cannot increase the global $H^1$-loss. It therefore suffices to
consider estimators $\widehat f_\ell$ satisfying
$\widehat f_{\ell,k}\equiv0$ for every $k\neq j$, in which case
\[d_{\mathrm{eig}}(\theta(\rho), \widehat \theta(\mathcal{X}_n)) = |\lambda_{\ell}-\widehat \lambda_{\ell}| + \|f_\ell - \widehat f_\ell\|_{H^1(\mathcal{M}_{j})}.\]
For any subclass $\mathcal Q\subseteq\mathcal P_{\mathcal M}^{(s)}$, define the minimax risk
\[\mathfrak M_n\bigl(\theta(\mathcal Q); d_{\mathrm{eig}}\bigr)
:=\inf_{\widehat\theta}\sup_{\rho\in\mathcal Q}
\mathbb E_{\mathcal X_n\sim\mu_\rho^{\otimes n}}
\bigl[d_{\mathrm{eig}}\bigl(\theta(\rho),\widehat\theta(\mathcal X_n)\bigr)\bigr],\] 
where the infimum is taken over all measurable estimators. This quantity is the smallest worst-case eigenpair estimation error achievable from $n$ independent observations. Fano's method lower bounds this risk by reducing estimation to a finite multi-hypothesis testing problem. Specifically, we construct, via perturbations of the density on $\M_j$, a finite family
\[\mathcal P_M:=\{\rho^{(1)},\dots,\rho^{(M)}\} \subset \mathcal P_{\mathcal M}^{(s)}\] while keeping the densities on all remaining components fixed. The construction is designed to satisfy the two conditions \eqref{eq:fano_sep} and \eqref{eq:fano_kl} required by Fano's method. Fano's inequality then gives a minimax lower bound. As $\mathcal P_M\subset \mathcal P_{\mathcal M}^{(s)}$, the resulting lower bound also applies to the full density class. The perturbation amplitudes are additionally chosen small enough that the target eigenpair remains simple and satisfies the uniform eigengap condition \eqref{eq:eigengap}
throughout the constructed family. The remainder of the lower-bound proof constructs these perturbations and verifies the required spectral-separation and KL-divergence bounds. The construction of the distribution family is presented in Section~\ref{sec:distribution construction}, and verification of Fano's conditions is carried out in Section~\ref{sec:fano-verification}. The proof of Theorem~\ref{thm:spectral convergence lower bound}, Corollary~\ref{cor:pure L2 spectral lower bound}, and a matching oracle upper bound specified in Theorem~\ref{thm:spectral convergence (unrealistic) upper bound} are completed in Section~\ref{sec:conclude minimax lower bound}.

\subsection{Distribution Construction}
\label{sec:distribution construction}

Throughout, $m \in \mathbb{N}$ is a sufficiently large scale parameter and $r:= m^{-1}$. 
We fix a reference density tuple $\rho\in\mathcal P_{\mathcal M}^{(s)}$ satisfying the defining bounds with uniform slack throughout. 

\subsubsection{Local Geometric Decomposition}
\label{sec:local geometric decomposition}
For $m$ sufficiently large, $r=m^{-1}$ lies below the injectivity radius of $\M_j$ and geodesic balls of radius $r$ are contained in normal coordinate neighborhoods. Choose a maximal $r$-separated set of points $\{p_i\}_{i=1}^K \subset \mathcal{M}_{j}$ so that \[d_g(p_i,p_{i'})\ge r \quad \text{for all } i\neq i', \qquad\text{and}\qquad \mathcal M_j=
\bigcup_{i=1}^K B_g(p_i,r).\]
Uniform volume comparison and packing estimates give \(K \asymp r^{-d_j} = m^{d_j}\),
where the implicit constants depend only on the geometric bounds of $\mathcal M_j$. Assigning equidistant points to one of the minimizing centers, define the associated Voronoi cells by
\[\Omega_i:=\bigl\{x\in\mathcal M_j:\ d_g(x,p_i)\le d_g(x,p_{i'})\ \text{for all } i'\neq i\bigr\}.\]
The cells $\{\Omega_i\}_{i=1}^K$ form a measurable partition of $\mathcal M_j$ and satisfy, uniformly in $i$,
\[B_g(p_i,r/2)\subset\Omega_i\subset B_g(p_i,r), \qquad \mathrm{vol}_g(\Omega_i)\asymp r^{d_j}, \qquad  \mathrm{diam}_g(\Omega_i)\lesssim r.\]
Perturbation templates constructed below have pairwise disjoint supports within these cells. 

For sufficiently large $m$, the exponential map defines a normal-coordinate chart
\begin{equation}\label{eq:exponential chart}
\exp_{p_i}:B(0,r)\subset T_{p_i}\M_j\simeq\mathbb R^{d_j}
\longrightarrow B_g(p_i,r).
\end{equation}
All local constructions are performed in these charts, allowing Euclidean constructions in the coordinate domain to be transferred to $\mathcal M_j$ with uniformly controlled geometric distortion.

\subsubsection{Construction of Perturbation Templates}\label{sec:construction of the perturbation templates}
We first construct a mean-zero Euclidean profile whose directional derivative has opposite signs on two separated regions. Fix $e\in\mathbb S^{d_j-1}$ and write $u=te+v$, where \(t\in\mathbb R\) and \(v\in e^{\perp}\). Choose an even function $q\in C_c^\infty\left((-a,a)\right)$ such that 
\[\int_{\mathbb R}q(t)dt=0, \qquad \text{and} \qquad q'(t_0)>0\]
for some $t_0\in(0,a)$. Such a function exists: take any nonzero even function \(\varphi\in C_c^\infty((-a,a))\), set \(q=\varphi''\), and, if
necessary, replace \(q\) by \(-q\). Let $\zeta\in C_c^\infty(e^\perp)$ be nonnegative and strictly positive near the origin, with the supports of $q$ and $\zeta$ chosen sufficiently small that
\[\phi(te+v):=q(t)\zeta(v)\in C_c^\infty\left(B(0,1/4)\right).\]
By Fubini's theorem, \(\phi\) has mean zero.
Since $q$ is even, $q'$ is odd and so \(q'(-t_0)<0\). The identity \(\partial_e\phi(te+v)=q'(t)\zeta(v)\), along with continuity, yields constants $b_0,\tau_0>0$ such that, with $u_0:=t_0e$ and $B(\pm u_0,\tau_0)\subset B(0,1/4)$,
\begin{equation}\label{eq:directional-derivative-witness}
\partial_e\phi(u)\ge b_0\quad\text{on }B(u_0,\tau_0),\qquad\partial_e\phi(u)\le-b_0\quad\text{on }B(-u_0,\tau_0).\end{equation}
Thus, the gradient of $\phi$ has uniformly opposite components along $e$ on two separated regions. After lifting to $\M_j$, these regions provide witness sets for the spectral separation argument.


For each $i\in[K]$, let $R_i\in O(d_j)$ and define $\phi_i(u):=\phi(R_i u)$. On the spectrally active patches, the rotations $R_i$ will be chosen in Lemma~\ref{lem:spectral visibility and alignment} to align the distinguished direction of $\phi_i$ with the eigenfunction's gradient $\nabla_g f_\ell$. Define the raw template $\tilde a_i:\mathcal M_j\to\mathbb R$ by 
\[\tilde a_i(x):=\frac{1}{\rho_j(x)}\,\phi_i\bigl(m\,\exp_{p_i}^{-1}(x)\bigr)\cdot \mathbf{1}_{B_g(p_i,r/4)}(x).\]
Let $a_i$ denote the weighted mean-zero correction of $\tilde a_i$ constructed in Appendix~\ref{sec:construction of perturbation templates}. The correction is supported away from the witness regions, and the resulting templates satisfy\begin{equation}\label{eq:a_i properties}
\int_{\mathcal M_j}a_i\,d\mu_j=0, \qquad
\operatorname{supp}(a_i)
\subset B_g(p_i,r/2)\subset\Omega_i,\qquad \|\nabla_g^{\nu}a_i\|_{L^\infty(\mathcal M_j)}\lesssim m^{\nu}, \quad \nu=0,\dots,s. \end{equation} In particular, the templates \(a_i\) have pairwise disjoint supports. The mean-zero property preserves the normalization of the perturbed densities, while the derivative bounds ensure that multiplication by $m^{-s}$ preserves the prescribed $s$-th order regularity. Moreover, 
\begin{equation}\label{eq:a_i norm}
\|a_i\|_{L^2(\M_j)}^2\le\|a_i\|_{L^\infty(\M_j)}^2\operatorname{vol}_g(\operatorname{supp}(a_i))\lesssim \operatorname{vol}_g(\Omega_i)\lesssim m^{-d_j}.\end{equation}
Note that the Euclidean profile $\phi$ and the estimates in \eqref{eq:a_i properties} are independent of $f_\ell$. Only the rotations $R_i$ on the active patches are selected using the gradient direction of the fixed reference eigenfunction. Lemma~\ref{lem:spectral visibility and alignment} shows that this selection yields a positive fraction of patches on which the perturbations interact with $f_\ell$ in a sign-definite manner. 

\subsubsection{The Perturbed Density Family}
\label{sec:the perturbed density family}

For each configuration
$\mathbf c=(c_1,\dots,c_K)\in\mathcal C:= \{-1, 1\}^K$, 
define the component densities
\begin{equation}\label{eq:perturbed densities}
\rho_j^{(\mathbf c)}(x):=\rho_j(x)\left( 1+m^{-s}\sum_{i=1}^K c_i a_i(x)
\right),\qquad\rho_k^{(\mathbf c)}:=\rho_k \quad(k\neq j).\end{equation}
Write $\rho^{(\mathbf c)}:=\left(\rho_1^{(\mathbf c)},\dots,\rho_N^{(\mathbf c)}\right)$ for the resulting density tuple, and let
\[\mu^{(\mathbf c)}:=\sum_{k=1}^Nw_k\mu_k^{(\mathbf c)}, \qquad d\mu_k^{(\mathbf c)}:=\rho_k^{(\mathbf c)}d\mathrm{vol}_g,\]
be its associated mixture distribution. Recall that we fix the mixture weights throughout.

One can verify that $\rho^{(\mathbf c)}$ remains valid in the sense that 
\(\{\rho^{(\mathbf c)}:\mathbf c\in\mathcal C\}\subset\mathcal P_{\mathcal M}^{(s)}\).
Indeed, the mean-zero condition in \eqref{eq:a_i properties} gives
$\int_{\M_j}\rho_j^{(\mathbf c)}d\mathrm{vol}_g=1$. Since the templates have pairwise disjoint supports, at most one $a_i$ contributes at any point. Thus for sufficiently large $m$, $\|\nabla_g^\nu (\rho_j^{(\mathbf c)} - \rho_j)\|_{L^\infty(\M_j)} \lesssim m^{\nu-s}$ for each $\nu=0,\dots,s$.
The perturbed densities are therefore strictly positive and normalized, and satisfy $\rho^{(\mathbf c)}\in\mathcal P_{\mathcal M}^{(s)}$. 
The scaling $m^{-s}$ simultaneously controls the $s$-th order regularity and keeps the corresponding sampling measures close in KL divergence, while the oscillations at scale $m^{-1}$ generate the spectral separation below. After identifying the spectrally active patches, Section~\ref{sec:fano-verification} selects a Hamming-separated codebook $\mathcal T\subset\mathcal C$. Fano's method is then applied to the finite family $\mathcal P_M
=\{\rho^{(\mathbf c)}:\mathbf c\in\mathcal T\}$.

\subsection{Verification of Fano Conditions}
\label{sec:fano-verification}

We verify the two conditions \eqref{eq:fano_sep} and \eqref{eq:fano_kl} for the perturbation family \eqref{eq:perturbed densities}. We first establish pairwise separation of the induced
eigenpairs under $d_{\mathrm{eig}}$. The argument proceeds by identifying a positive fraction of patches on which the reference eigenfunction has nonvanishing gradient,
aligning the local perturbations with that gradient, and then converting the
resulting local forcing into a global eigenpair separation.

\subsubsection{Separation in the eigenpair metric}\label{sec:2alpha separation} 
\begin{lemma}[Spectral visibility and alignment on active patches]
\label{lem:spectral visibility and alignment}
Let $(\lambda_\ell,f_\ell)$ be a simple, nontrivial global eigenpair componentwise supported
on $\M_j$. There exist constants
$m_0\in\mathbb N$ and $c_0,c_1,\epsilon_0>0$, independent of $m$, and an index
set $\mathcal I\subset[K]$ with
\begin{equation}\label{eq:active-patch-cardinality}|\mathcal I|\ge\epsilon_0m^{d_j},\end{equation}
such that, for every $m\ge m_0$ and $i\in\mathcal I$, there are measurable
witness sets $\mathcal N_i^\pm\subset\Omega_i$ satisfying
\begin{equation}\label{eq:raw-template-final-alignment}
\operatorname{vol}_g(\mathcal N_i^\pm)\ge c_0m^{-d_j},
\qquad\pm\bigl\langle\nabla_g(\rho_j^2a_i),\nabla_gf_{\ell,j}\bigr\rangle_g \ge c_1m\sqrt{\lambda_{\ell,j}} \quad\text{on }\mathcal N_i^\pm.
\end{equation}\end{lemma}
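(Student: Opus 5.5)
Our plan has three steps: find a macroscopic set where $\nabla_g f_{\ell,j}$ is bounded below, expand $\nabla_g(\rho_j^2 a_i)$ so that the $O(m)$ term comes from the profile $\phi_i$, and then pick the rotation $R_i$ and the witness sets from the directional-derivative witness \eqref{eq:directional-derivative-witness}.

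\emph{Step 1: a macroscopic active set.} Since $\ell\ge N+1$, $f_{\ell,j}$ is a nonconstant eigenfunction of $\Delta_{\M_j}$ with $\lambda_{\ell,j}>0$. By the energy identity \eqref{eq:energy-identity} and the normalization, $\int_{\M_j}\rho_j^2\|\nabla_g f_{\ell,j}\|_g^2\,d\mathrm{vol}_g=\lambda_{\ell,j}\|f_{\ell,j}\|^2_{L^2(\mu_j)}\asymp\lambda_{\ell,j}$. Combined with the $L^\infty$ bound $\|\nabla_g f_{\ell,j}\|_{L^\infty}\le C_f^{(1)}$ from \eqref{eq:L-infty-regularity}, a reverse Chebyshev argument gives the following. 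The set $A:=\{x:\|\nabla_g f_{\ell,j}(x)\|_g\ge 2\kappa\sqrt{\lambda_{\ell,j}}\}$, for a small constant $\kappa$, satisfies $\mathrm{vol}_g(A)\ge v_0>0$, with $v_0$ depending on $\lambda_{\ell,j}$, $C_f^{(1)}$ and the density bounds. The Lipschitz bound $C_f^{(2)}$ then implies that for $m$ large every cell $\Omega_i$ meeting $A$ satisfies $\|\nabla_g f_{\ell,j}\|_g\ge\kappa\sqrt{\lambda_{\ell,j}}$ on all of $B_g(p_i,r)$. The gradient direction also varies by only $O(r)$ there. Let $\mathcal I$ be the set of such $i$. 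Since $\mathrm{vol}_g(\Omega_i)\lesssim m^{-d_j}$ and the cells cover $A$, we get $|\mathcal I|\ge v_0 m^{d_j}/C=:\epsilon_0m^{d_j}$.

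\emph{Step 2: expansion of $\nabla_g(\rho_j^2a_i)$.} On $B_g(p_i,r/4)$ minus the correction region we have $\rho_j^2 a_i=\rho_j\,\phi_i(m\exp_{p_i}^{-1}x)$, because the appendix correction is supported away from the witness regions. The product rule gives $\nabla_g(\rho_j^2a_i)=m\rho_j\,(d\exp_{p_i}^{-1})^{*}\nabla\phi_i(m\exp_{p_i}^{-1}x)+\phi_i\nabla_g\rho_j$. The second term is $O(1)$ by $\|\phi\|_\infty$ and $C_\rho^{(1)}$. In normal coordinates $d\exp_{p_i}^{-1}$ equals the identity up to $O(r^2)$, uniformly, by the curvature bounds of Assumption~\ref{assump:geometric assumptions}. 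Hence, writing $u=m\exp_{p_i}^{-1}x$ and $g:=\nabla_gf_{\ell,j}$ transported to $T_{p_i}\M_j$,
\[\bigl\langle\nabla_g(\rho_j^2a_i),\nabla_gf_{\ell,j}\bigr\rangle_g=m\rho_j\,\langle\nabla\phi_i(u),g(p_i)\rangle+O(m\,r\,C_f^{(2)})+O(1).\]

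\emph{Step 3: alignment and witness sets.} Choose $R_i\in O(d_j)$ with $R_i^{\top}e=g(p_i)/\|g(p_i)\|$. Then $\langle\nabla\phi_i(u),g(p_i)\rangle=\|g(p_i)\|\,(\partial_e\phi)(R_iu)$. Set $\mathcal N_i^\pm:=\exp_{p_i}\bigl(m^{-1}R_i^{\top}B(\pm u_0,\tau_0)\bigr)$. These sets lie in $B_g(p_i,r/4)\subset\Omega_i$, away from the correction support, and have volume $\gtrsim\tau_0^{d_j}m^{-d_j}=:c_0m^{-d_j}$. On $\mathcal N_i^\pm$, \eqref{eq:directional-derivative-witness} gives a leading term of sign $\pm$ and size at least $m\rho_{\min}b_0\kappa\sqrt{\lambda_{\ell,j}}$. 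The error terms are $O(1)$ and $O(1)$ (since $mr=1$, that error is really $O(C_f^{(2)})$), hence $o(m)$. Absorbing them for $m\ge m_0$ yields \eqref{eq:raw-template-final-alignment} with $c_1=\rho_{\min}b_0\kappa/2$.

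The main obstacle is Step 1, namely quantifying that the gradient is nondegenerate on a positive-volume set uniformly over the construction. Near critical points the gradient is small, so a mere nonvanishing statement is not enough, and the constants $\epsilon_0,c_1$ must be independent of $m$. Reverse Chebyshev using the energy identity together with the $L^\infty$ gradient bound handles this cleanly. A secondary check is that the $O(1)$ remainders are indeed dominated by the $m\sqrt{\lambda_{\ell,j}}$ leading term; this holds because $\lambda_{\ell,j}$ is fixed and positive.
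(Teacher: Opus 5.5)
Your proposal is correct and follows the paper's proof essentially step for step. The shared steps are: a reverse-Chebyshev lower bound on the volume of a high-gradient set, from the energy identity and the $C^1$ bound; Lipschitz propagation of the gradient lower bound across each cell meeting that set; rotation of $\phi$ so that $e$ matches the gradient direction at $p_i$; witness balls $\exp_{p_i}(m^{-1}R_i^{\top}B(\pm u_0,\tau_0))$; and absorption of the $O(1)$ errors (metric distortion, gradient-direction variation over the patch, and the $\phi_i\nabla_g\rho_j$ term) into the $O(m)$ leading term. The differences are cosmetic: you transfer to the manifold via the normal-coordinate expansion of the metric rather than the paper's Jacobi-field comparison with parallel transport, and you threshold $\|\nabla_g f_{\ell,j}\|_g$ rather than $\rho_j\|\nabla_g f_{\ell,j}\|_g$.
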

\begin{proof} 
We first identify a positive-volume region on which the reference eigenfunction has a quantitatively nonvanishing gradient. Given that the eigenpair is simple and nontrivial, the energy identity \eqref{eq:energy-identity} implies
\(\lambda_{\ell,j}>0.\)
Fix $\kappa>0$ and define the high-gradient set with associated active index set
\[S:=\left\{x\in\mathcal M_j:
\rho_j(x)\|\nabla_g f_{\ell,j}(x)\|_g
\ge\kappa\sqrt{\lambda_{\ell,j}}\right\}, \qquad \mathcal I:=
\bigl\{i\in[K]:\Omega_i\cap S\neq\varnothing\bigr\}.\]
Choose $\kappa$ small enough that $\kappa^2\operatorname{vol}_g(\M_j)\le 1/2$. 
The uniform $C^1$ bound 
\eqref{eq:L-infty-regularity} on \(f_{\ell,j}\) implies
\begin{equation}\label{eq:high-gradient-volume}
\operatorname{vol}_g(S)\ge\frac{\lambda_{\ell,j}}
{2\rho_{\max}^2\|\nabla_gf_{\ell,j}\|_{L^\infty(\M_j)}^2}>0.\end{equation}
Since the cells form a partition and
$S\subset\bigcup_{i\in\mathcal I}\Omega_i$, the uniform cell-volume bound gives
\[0<\operatorname{vol}_g(S)
\le \sum_{i\in\mathcal I}\operatorname{vol}_g(\Omega_i)
\le C_\Omega|\mathcal I|m^{-d_j}.\]
Rearranging terms gives \eqref{eq:active-patch-cardinality}. Thus, a positive fraction of the $K\asymp m^{d_j}$ patches intersect the region on which the eigenfunction gradient is nontrivial. Indeed, this eigenfunction gradient lower bound propagates across the entire patch. To see this, fix $i\in\mathcal I$ and choose a reference point $x_i\in\Omega_i\cap S$ where the eigenfunction gradient satisfies the lower bound. Set
\[G(x):=\rho_j(x)\|\nabla_gf_{\ell,j}(x)\|_g, \qquad x\in\Omega_i.\]
The $C^1$ bound on $\rho_j$ and the $C^2$ bound on $f_{\ell,j}$ \eqref{eq:L-infty-regularity} imply that $G$ is uniformly Lipschitz. Along a minimizing unit-speed geodesic, the fundamental theorem of calculus gives
\[\bigl|\|\nabla_gf_{\ell,j}(x)\|_g-\|\nabla_gf_{\ell,j}(y)\|_g
\bigr|\le\|\nabla_g^2f_{\ell,j}\|_{L^\infty(\M_j)}d_g(x,y),\]
and therefore
\[|G(x)-G(y)|\le L d_g(x,y),\qquad L:=
\rho_{\max}\|\nabla_g^2f_{\ell,j}\|_{L^\infty(\M_j)}
+\|\nabla_g\rho_j\|_{L^\infty(\M_j)}
 \|\nabla_gf_{\ell,j}\|_{L^\infty(\M_j)},\]
where $L$ is independent of $m$ and of the patch index. Since
$\Omega_i\subset B_g(p_i,r)$ and $x_i\in\Omega_i$,
\[d_g(x,x_i)\le d_g(x,p_i)+d_g(p_i,x_i)\le 2r=\frac{2}{m},
\qquad x\in\Omega_i.\]
Consequently,
\[G(x)\ge G(x_i)-\frac{2L}{m}
\ge \kappa\sqrt{\lambda_{\ell,j}}-\frac{2L}{m}.\]
After enlarging $m_0$ so that
$\frac{2L}{m_0}\le\frac{\kappa}{2}\sqrt{\lambda_{\ell,j}}$ for all \(m\geq m_0\), we obtain
\begin{equation}\label{eq:patchwise-gradient-lower-bound}
G(x)=\rho_j(x)\|\nabla_gf_{\ell,j}(x)\|_g
\ge\frac{\kappa}{2}\sqrt{\lambda_{\ell,j}},\qquad x\in\Omega_i.
\end{equation}
In particular, the eigenfunction at the patch center \(p_i\) has nontrivial gradient. We may therefore define the unit gradient direction at $p_i$ by
\[e_i(p_i):=\frac{\nabla_gf_{\ell,j}(p_i)}
{\|\nabla_gf_{\ell,j}(p_i)\|_g}\in T_{p_i}\M_j.\]

We rotate the Euclidean profile \(\phi\) so that its distinguished direction \(e\) agrees with $e_i(p_i)$, ensuring that the eigenfunction gradient interacts productively with the perturbation \(a_i\). Identify $T_{p_i}\M_j$ isometrically with $\mathbb R^{d_j}$ and choose $R_i\in O(d_j)$ such that $R_ie_i(p_i)=e$. Set
\[\phi_i(u):=\phi(R_iu),\qquad u_0^{(i)}:=R_i^\top u_0.\]
For notational simplicity, we relabel \(u_0\) for \(u_0^{(i)}\). By the chain rule,
\[\partial_{e_i(p_i)}\phi_i(u)=\bigl\langle\nabla\phi(R_iu),R_ie_i(p_i)\bigr\rangle=\partial_e\phi(R_iu).\]
Therefore, the directional-witness property of $\phi$ from Equation \eqref{eq:exponential chart} becomes
\[\partial_{e_i(p_i)}\phi_i(u)\ge b_0\quad\text{on }B(u_0,\tau_0),
\qquad\partial_{e_i(p_i)}\phi_i(u)\le-b_0\quad\text{on }B(-u_0,\tau_0).\]
To transfer these two regions to the manifold at scale $m^{-1}$, define
\[U_i^\pm:=B\!\left(\pm\frac{u_0}m,\frac{\tau_0}m\right)\subset B(0,r/4),
\qquad \mathcal N_i^\pm:=\exp_{p_i}(U_i^\pm)\subset B_g(p_i,r/4)\subset\Omega_i.\]
Also, the normal-coordinate volume density is uniformly bounded below on $B(0,r)$. Thus
\begin{equation}\label{eq:witness-volume}
\operatorname{vol}_g(\mathcal N_i^\pm)= \int_{U_i^\pm}\sqrt{\det g_i(u)}\,du\ge c\,|U_i^\pm|\ge c_0m^{-d_j}.\end{equation}
Consequently, every active patch indexed by \(i\in\mathcal I\) contains two witness regions \(\mathcal N_i^{\pm}\) of the natural patch-scale volume on which the rotated profile has uniformly opposite directional derivatives. It remains to transfer this directional property to $\nabla_g(\rho_j^2a_i)$.


Fix $x\in\mathcal N_i^\pm$, write
$u:=\exp_{p_i}^{-1}(x)\in U_i^\pm$. Then $mu\in B(\pm u_0,\tau_0)$, so
\begin{equation}\label{eq:phi-derivative-sign}
F_i(x):=\phi_i\bigl(m\exp_{p_i}^{-1}(x)\bigr) \qquad \text{satisfies } \qquad  \pm\,\partial_{e_i(p_i)}\phi_i(mu)\ge b_0.
\end{equation}
For each $x\in\Omega_i$, let \(\gamma:[0,1]\to\M_j\) be the unique constant-speed minimizing geodesic from \(p_i\) to \(x\). Parallel transport $e_i(p_i)$ along $\gamma$ and denote its endpoint value by $e_i(x)$. A standard Jacobi-field estimate in Proposition~\ref{eq:jacobi-field-comparison} along the radial geodesic from \(p_i\) to \(x\) gives
\[\left\|d(\exp_{p_i})_u\left(e_i(p_i)\right)-e_i(x)\right\|_g\leq C|u|^2 \implies \left\|\left(d(\exp_{p_i})_u\right)^{-1}e_i(x)-e_i(p_i)\right\|_g\leq C|u|^2.\]
Here we use the fact that \(d(\exp_{p_i})_u\) is uniformly invertible for \(|u|\leq r/4\). By the chain rule,
\[\left\langle\nabla_gF_i(x),e_i(x)\right\rangle_g =m\left\langle\nabla\phi_i(mu),\left(d(\exp_{p_i})_u\right)^{-1}e_i(x)\right\rangle.\]
This estimate, along with the smoothness of $\phi_i$, and $|u|\le r/4\lesssim m^{-1}$, gives
\[\left|\left\langle\nabla_gF_i(x),e_i(x)\right\rangle_g
-m\,\partial_{e_i(p_i)}\phi_i(mu)\right|\le Cm|u|^2\le Cm^{-1}.\]
Combining with \eqref{eq:phi-derivative-sign} and enlarging $m_0$ if necessary, we obtain
\begin{equation}\label{eq:profile-directional-lower-bound}
\pm\left\langle\nabla_gF_i(x),e_i(x)\right\rangle_g
\ge \frac{b_0}{2}m, \qquad x\in\mathcal N_i^\pm.\end{equation}
The same chain-rule argument and the uniform boundedness of
$\left(d(\exp_{p_i})_u\right)^{-1}$ give
\begin{equation}\label{eq:profile-gradient-upper-bound}
\|\nabla_gF_i(x)\|_g\le C_\phi m,
\qquad x\in\mathcal N_i^\pm.\end{equation}
Combining \eqref{eq:profile-directional-lower-bound} and
\eqref{eq:profile-gradient-upper-bound}, and controlling the discrepancy between the true normalized eigenfunction gradient and its parallel-transported approximation $e_i(x)$, we obtain 
\begin{align*}\pm\left\langle\nabla_gF_i(x),\frac{\nabla_gf_{\ell,j}(x)}{\|\nabla_gf_{\ell,j}(x)\|_g}\right\rangle_g&=\pm\left \langle \nabla_gF_i(x),e_i(x)\right\rangle_g\pm\left\langle
\nabla_gF_i(x),\frac{\nabla_gf_{\ell,j}(x)}{\|\nabla_gf_{\ell,j}(x)\|_g}-e_i(x)\right\rangle_g\\&\ge \frac{b_0}{2}m-C_\phi\,m\,\left\|\frac{\nabla_gf_{\ell,j}(x)}{\|\nabla_gf_{\ell,j}(x)\|_g}-e_i(x)\right\|_g. \end{align*}
Indeed, the parallel-transported approximation \(e_i(x)\) satisfies
\[\begin{aligned}\bigl\|\nabla_gf_{\ell,j}(x)-\|\nabla_gf_{\ell,j}(p_i)\|_g e_i(x)\bigr\|_g&\leq\int_0^1\bigl\|\nabla_{\dot\gamma(t)} \nabla_gf_{\ell,j}\bigr\|_g\,dt\leq \|\nabla_g^2f_{\ell,j}\|_{L^\infty(\M_j)}\int_0^1\|\dot\gamma(t)\|_g\,dt\\&=d_g(p_i,x)\|\nabla_g^2f_{\ell,j}\|_{L^\infty(\M_j)}\leq C_{\ell,j}\,m^{-1}.\end{aligned}\]
Applying the inequality
\[\left\|\frac{a}{\|a\|}-\frac{b}{\|b\|}\right\|\le
\frac{2\|a-b\|}{\min\{\|a\|,\|b\|\}},\]
in \(T_x\M_j\) with \(a=\nabla_gf_{\ell,j}(x)\) and \(b=\|\nabla_gf_{\ell,j}(p_i)\|_g\, e_i(x)\), together with \eqref{eq:patchwise-gradient-lower-bound}, yields
\[\left\|\frac{\nabla_gf_{\ell,j}(x)}{\|\nabla_gf_{\ell,j}(x)\|_g}
-e_i(x)\right\|_g\le C_{\ell,j} m^{-1}\qquad x\in\Omega_i.\]
Consequently, enlarging $m_0$ once more, we obtain, for all \(x\in\mathcal N_i^\pm\),
\[\pm\left\langle\nabla_gF_i(x),\frac{\nabla_gf_{\ell,j}(x)}{\|\nabla_gf_{\ell,j}(x)\|_g}\right\rangle_g\geq\frac{b_0}{2}m-C_\phi C_{\ell,j}\ge \frac{b_0}{4}m.\]
By construction, \(\rho_j^2a_i=\rho_jF_i\), and so
\(\nabla_g(\rho_j^2a_i)= \rho_j\nabla_gF_i+F_i\nabla_g\rho_j\).
Taking the inner product with the normalized eigenfunction gradient and applying the uniform bounds on \(\phi\) and \(\nabla_g\rho_j\) yields, for large enough $m$, there exists \(c_a>0\) independent of \(m\) and \(i\) such that
\[\pm\left\langle\nabla_g(\rho_j^2a_i)(x),\frac{\nabla_gf_{\ell,j}(x)}{\|\nabla_gf_{\ell,j}(x)\|_g}\right\rangle_g\ge\rho_{\min} \frac{b_0}{4}m-\|\phi_i\|_{L^\infty}\|\nabla_g\rho_j \|_{L^\infty(\M_j)}\ge c_a m \]
for all \(x\in\mathcal N_i^\pm\). Multiplying by $\|\nabla_gf_{\ell,j}(x)\|_g$ and applying \eqref{eq:patchwise-gradient-lower-bound} completes the proof.
\end{proof}

Lemma~\ref{lem:spectral visibility and alignment} identifies a  collection of active patches, indexed by $\mathcal I$, on which the local perturbations produce a sign-definite spectral effect. We next compare the eigenpairs induced by any pair of distinct configurations on their differing active set \(\mathcal D:=\{i\in \mathcal I: c_i^{(1)}\neq c_i^{(2)}\}.\) By the Varshamov--Gilbert bound \cite[Lemma~2.9]{Tsybakov2009}, there exists a subset of sign configurations \(\mathcal T\subset\{-1, 1\}^{|\mathcal I|}\) such that \(\log|\mathcal T|\geq c_T|\mathcal I|\gtrsim m^{d_j}\) and for distinct \(\mathbf c^{(1)}, \mathbf c^{(2)} \in \mathcal T\)
\[|\mathcal D|=\sum_{i\in\mathcal I}\mathbf 1_{\{ c_i^{(1)}\neq c_i^{(2)}\}}\geq c_{\mathrm H}|\mathcal I| \gtrsim m^{d_j},\]
for some constants $c_{\mathrm T}, c_{\mathrm H} > 0$ independent of \(m\). 
For $b=1,2$, write the perturbed density
\[\rho_j^{(b)}:=\rho_j^{(\mathbf c^{(b)})}=\rho_j\left(1+m^{-s}\sum_{i=1}^Kc_i^{(b)}a_i\right).\]
Since the supports of $a_i$ are pairwise disjoint, \eqref{eq:a_i norm} gives
\[\left\|\sum_{i=1}^Kc_i^{(b)}a_i\right\|_{L^2(\M_j)}^2=
\sum_{i=1}^K\|a_i\|_{L^2(\M_j)}^2
\lesssim K\,m^{-d_j}\lesssim1.\]
It follows that the perturbation size of densities satisfies
\begin{equation}
\label{eq:aggregate-density-deviation}
\|\rho_j^{(b)}-\rho_j\|_{L^2(\M_j)}
=m^{-s}\left\|\rho_j\sum_{i=1}^K
c_i^{(b)}a_i\right\|_{L^2(\M_j)}\lesssim m^{-s},\qquad b=1,2. \end{equation}
Define the corresponding perturbed measure and component operator by 
\[d\mu_j^{(b)}:=\rho_j^{(b)}\,d\mathrm{vol}_g,\qquad \Delta_j^{(b)}f_{\ell,j}:=-\frac{1}{\rho_j^{(b)}}\operatorname{div}_g
\left(\bigl(\rho_j^{(b)}\bigr)^2\nabla_gf_{\ell,j}
\right).\]
Consider the linear density path
\begin{equation}\label{eq:linear-density-path}
\rho_{j,t}^{(b)}:=\rho_j+t\bigl(\rho_j^{(b)}-\rho_j\bigr),\qquad 0\le t\le1. \end{equation} By the eigengap condition \eqref{eq:eigengap},
the perturbation bound \eqref{eq:aggregate-density-deviation}, and standard perturbation theory for simple isolated eigenvalues \cite{Kato1995}, the reference eigenpair admits a unique simple continuation along this path for all sufficiently large $m$. The
quantitative eigenvalue and eigenfunction estimates along this branch are given by Proposition~\ref{prop:eigenpair under perturbation}.

Let $(\lambda_{\ell,j}^{(b)},f_{\ell,j}^{(b)})$ denote the resulting component eigenpair at $t=1$, normalized by
\[\Delta_j^{(b)}f_{\ell,j}^{(b)}=\lambda_{\ell,j}^{(b)}f_{\ell,j}^{(b)},\qquad\|f_{\ell,j}^{(b)}\|_{L^2(\mu_j^{(b)})}=w_j^{-1},\qquad\lambda_\ell^{(b)}:=w_j\lambda_{\ell,j}^{(b)}.\]
The feasibility of the perturbed family implies the uniform measure equivalence, so
\begin{equation}\label{eq:perturbed-eigenfunction-L2-bound}
\|f_{\ell,j}^{(b)}\|_{L^2(\mu_j)}\le C\|f_{\ell,j}^{(b)}\|_{L^2(\mu_j^{(b)})}=C,
\qquad b=1,2.\end{equation}
We choose the signs of $f_{\ell,j}^{(1)}$ and $f_{\ell,j}^{(2)}$ according to the convention fixed throughout, so that both are aligned with the reference eigenfunction $f_{\ell,j}$. The next lemma derives the governing differential inequality for the difference $h:=f_{\ell,j}^{(1)}-f_{\ell,j}^{(2)}$, establishing a strict lower bound for the forcing term induced by the local density perturbations on the witness regions. 



\begin{lemma}\label{lem:two_codeword_difference_active_patch}There exist $c, C_{\ell,j}>0$ independent of $m$ such that for any distinct \(\mathbf c^{(1)}\), \(\mathbf c^{(2)}\in \mathcal T\) and each differing active patch $i\in\mathcal D$, there exists a witness region $\mathcal N_i^\star \in \{\mathcal N_i^+, \mathcal N_i^-\}$ on which the difference $h = f_{\ell,j}^{(1)} - f_{\ell,j}^{(2)}$ satisfies the pointwise inequality
\begin{equation}\label{eq:spectral separation prep}
\operatorname{div}_g(\rho_j^2\nabla_gh)(x)+
(\lambda_{\ell,j}^{(1)}-\lambda_{\ell,j}^{(2)})
\rho_j(x)f_{\ell,j}^{(1)}(x)+\lambda_{\ell,j}^{(2)}\rho_j(x)h(x)\geq c\sqrt{\lambda_{\ell,j}}\,m^{1-s}-|\mathcal R_m(x)|,
\end{equation} where the remainder satisfies \(\|\mathcal R_m\|_{L^2(\mathcal M_j)} \le C_{\ell,j}\lambda_{\ell,j} m^{-s}.\)
\end{lemma}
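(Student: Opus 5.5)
The plan is to subtract the two perturbed eigen-equations, each rewritten relative to the reference weight $\rho_j$, and to isolate the explicit first-order forcing produced by the density perturbation. For $b=1,2$ the perturbed eigen-equation reads
\[
-\operatorname{div}_g\bigl((\rho_j^{(b)})^2\nabla_g f^{(b)}_{\ell,j}\bigr)=\lambda^{(b)}_{\ell,j}\rho_j^{(b)}f^{(b)}_{\ell,j}.
\]
We write $\rho_j^{(b)}=\rho_j(1+m^{-s}A_b)$ with $A_b:=\sum_i c_i^{(b)}a_i$, so that
\[
(\rho_j^{(b)})^2=\rho_j^2+2m^{-s}\rho_j^2A_b+m^{-2s}\rho_j^2A_b^2 .
\]
Substituting, each equation becomes
\[
\operatorname{div}_g(\rho_j^2\nabla_g f^{(b)})+\lambda^{(b)}\rho_j f^{(b)}
=-2m^{-s}\operatorname{div}_g(\rho_j^2A_b\nabla_g f^{(b)})-\lambda^{(b)}m^{-s}\rho_jA_bf^{(b)}-m^{-2s}\operatorname{div}_g(\rho_j^2A_b^2\nabla_g f^{(b)}).
\]
Subtracting the $b=2$ equation from the $b=1$ equation and writing
\[
\lambda^{(1)}f^{(1)}-\lambda^{(2)}f^{(2)}=(\lambda^{(1)}-\lambda^{(2)})f^{(1)}+\lambda^{(2)}h
\]
produces exactly the left-hand side of \eqref{eq:spectral separation prep}. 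The right-hand side is then a sum of explicit $m^{-s}$-order terms.

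Next, on a differing active patch $i\in\mathcal D$ only the template $a_i$ is present, and $A_1-A_2=(c_i^{(1)}-c_i^{(2)})a_i=\pm2a_i$. I replace $f^{(b)}$ by $f_{\ell,j}$ in the leading term $-2m^{-s}\operatorname{div}_g\bigl(\rho_j^2(A_1\nabla f^{(1)}-A_2\nabla f^{(2)})\bigr)$, with the error moved into $\mathcal R_m$. Expanding, the leading term becomes
\[
\mp4m^{-s}\Bigl(\bigl\langle\nabla_g(\rho_j^2a_i),\nabla_gf_{\ell,j}\bigr\rangle_g+a_i\rho_j^2\Delta_g f_{\ell,j}\Bigr).
\]
The sign $\sigma=\operatorname{sign}(c_i^{(2)}-c_i^{(1)})$ is fixed, so I choose $\mathcal N_i^\star=\mathcal N_i^{\sigma}$. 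Lemma~\ref{lem:spectral visibility and alignment} then makes the gradient part at least $4c_1m^{1-s}\sqrt{\lambda_{\ell,j}}$ pointwise on $\mathcal N_i^\star$. The zeroth-order part $a_i\rho_j^2\Delta_gf_{\ell,j}$ is bounded pointwise by $C(1+\lambda_{\ell,j})m^{-s}$ using \eqref{eq:L-infty-regularity}, so it goes into $\mathcal R_m$. The same holds for the $\lambda^{(b)}m^{-s}\rho_jA_bf^{(b)}$ terms and the $m^{-2s}$ terms; the latter are $O(m^{2-2s})\le O(m^{-s})$ pointwise when $s\ge2$, with $L^2$ norm $\lesssim m^{-s}$ because $\|A_b\|_{L^2}\lesssim1$.

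The main obstacle is controlling the substitution error
\[
m^{-s}\operatorname{div}_g\bigl(\rho_j^2A_b\nabla_g(f^{(b)}-f_{\ell,j})\bigr),
\]
which must be $O(\lambda_{\ell,j}m^{-s})$ in $L^2$. Expanding it gives $m^{-s}\nabla(\rho_j^2A_b)\cdot\nabla(f^{(b)}-f_{\ell,j})$, whose coefficient is of size $m^{1-s}$, plus $m^{-s}\rho_j^2A_b\,\Delta_g(f^{(b)}-f_{\ell,j})$. I would invoke Proposition~\ref{prop:eigenpair under perturbation} along the linear path \eqref{eq:linear-density-path}. It should give $\|f^{(b)}-f_{\ell,j}\|_{H^1}\lesssim m^{-s}\cdot m$, and it needs to give $\|\nabla_g(f^{(b)}-f_{\ell,j})\|_{L^\infty}\lesssim m^{-1}$. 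Given those, the first piece is $O(m^{-s})$ pointwise. For the second piece, second-order elliptic regularity \eqref{eq:second-order-elliptic-estimate} for the perturbed operators gives the uniform bound $\|\nabla^2 f^{(b)}\|_{L^2}\lesssim1+\lambda_{\ell,j}$, so it is $O(m^{-s}(1+\lambda_{\ell,j}))$ in $L^2$. If the $L^\infty$ gradient bound is unavailable, I would fall back on the $C^2$ bounds for $f^{(b)}$, which are uniform in $m$ because $\|\rho^{(b)}\|_{C^s}$ is uniformly bounded. In that case the crude estimate $|\nabla(f^{(b)}-f_{\ell,j})|\lesssim\|f^{(b)}-f_{\ell,j}\|_{C^1}$ should suffice, possibly at the cost of absorbing a factor into the constant $c$. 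Finally, all pointwise bounds on $\mathcal N_i^\star$ are collected into $-|\mathcal R_m|$. The $L^2$ bound on $\mathcal R_m$ follows from disjointness of the supports and $K m^{-d_j}\lesssim1$, with $(1+\lambda_{\ell,j})\lesssim\lambda_{\ell,j}$ because $\lambda_{\ell,j}$ is bounded below for the fixed nontrivial index.
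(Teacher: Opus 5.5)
Your decomposition is the paper's. You subtract the two perturbed eigen-equations after expanding $(\rho_j^{(b)})^2=\rho_j^2(1+2m^{-s}A_b+m^{-2s}A_b^2)$. You isolate the forcing $-2m^{-s}(c_i^{(1)}-c_i^{(2)})\langle\nabla_g(\rho_j^2a_i),\nabla_gf_{\ell,j}\rangle_g$, which is the paper's $-4m^{-s}\langle\nabla_g(\rho_j^2A_\Delta),\nabla_gf_{\ell,j}\rangle_g$. You choose $\mathcal N_i^\star$ by the sign of $c_i^{(2)}-c_i^{(1)}$, which is correct, and send everything else to $\mathcal R_m$.

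Where you go wrong is the ``main obstacle'' paragraph. Proposition~\ref{prop:eigenpair under perturbation} is an $H^1$ estimate and gives no bound on $\|\nabla_g(f^{(b)}_{\ell,j}-f_{\ell,j})\|_{L^\infty}$. Your fallback also fails. Uniform $C^2$ bounds only give $|\nabla_g(f^{(b)}_{\ell,j}-f_{\ell,j})|=O(1)$, so $m^{-s}\langle\nabla_g(\rho_j^2A_b),\nabla_g(f^{(b)}_{\ell,j}-f_{\ell,j})\rangle_g$ is then pointwise of order $m^{1-s}$. That is the same order as the forcing, with an uncontrolled constant, so it cannot be ``absorbed into $c$.''

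No pointwise control is needed, however. The lemma only asks for $\|\mathcal R_m\|_{L^2(\M_j)}\le C_{\ell,j}\lambda_{\ell,j}m^{-s}$. Once $\mathcal R_m$ is defined as the entire remainder, \eqref{eq:spectral separation prep} holds trivially on $\mathcal N_i^\star$.

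Integrating the Proposition along the path \eqref{eq:linear-density-path} gives $\|f^{(b)}_{\ell,j}-f_{\ell,j}\|_{H^1(\M_j)}\lesssim\|\rho_j^{(b)}-\rho_j\|_{L^2(\M_j)}\lesssim m^{-s}$. There is no extra factor $m$, because only the $L^2$ norm of $\dot\rho$ enters. H\"older with $\|\nabla_g(\rho_j^2A_b)\|_{L^\infty}\lesssim m$ then bounds this term by $O(m^{1-2s})$ in $L^2$, which is exactly the paper's $\mathcal R_m^{(3)}$. Even your pessimistic $H^1$ bound of order $m^{1-s}$ would give $O(m^{2-2s})\le O(m^{-s})$ for $s\ge2$.

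The $\Delta_g$ pieces are handled in $L^2$ by the uniform second-order elliptic estimate, as you propose (the paper's $\mathcal R_m^{(2)}$ and $\mathcal R_m^{(5)}$). With that correction the argument closes.
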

\begin{proof}
For brevity, for \(b=1,2\), set
\[
A_b:=\sum_{i=1}^Kc_i^{(b)}a_i, \qquad
A_\Delta:=\frac{A_1-A_2}{2} =\sum_{i\in\mathcal D}\frac{c_i^{(1)}-c_i^{(2)}}{2} a_i=:\sum_{i\in\mathcal D}\varsigma_i a_i.\] Note that \(\varsigma_i\in\{\pm1\}\) for each \(i\in\mathcal D\) and vanishes for \(i\not\in\mathcal D\). Moreover, \(A_\Delta=\varsigma_i a_i\) on \(\Omega_i\). Since the support of \(a_i\) are pairwise disjoint, we write $B:=A_1^2=A_2^2=\sum_i a_i^2$. Thus 
\[\rho_j^{(b)}=\rho_j(1+m^{-s}A_b),\qquad \bigl( \rho_j^{(b)}\bigr)^2=\rho_j^2\bigl(1+2m^{-s}A_b+m^{-2s}B\bigr).\]
For \(b=1,2\), the perturbed eigen-equation is \[ \operatorname{div}_g\!\bigl( (\rho_j^{(b)})^2 \nabla_g f_{\ell,j}^{(b)}\bigr)+\lambda_{\ell,j}^{(b)}\,\rho_j^{(b)} f_{\ell,j}^{(b)}=0.\] 
Subtracting the equation for $b=2$ from that for $b=1$ and separating the terms gives
\begin{align*}
\operatorname{div}_g(\rho_j^2\nabla_gh)
&+(\lambda_{\ell,j}^{(1)}-\lambda_{\ell,j}^{(2)})
\rho_jf_{\ell,j}^{(1)}+\lambda_{\ell,j}^{(2)}\rho_jh \notag\\
&=-2m^{-s}\operatorname{div}_g\left[\rho_j^2\left(
A_1\nabla_gf_{\ell,j}^{(1)}-A_2\nabla_gf_{\ell,j}^{(2)}
\right)\right]-m^{-2s} \operatorname{div}_g (\rho_j^2B\nabla_gh)\notag\\
&\quad-m^{-s}\rho_j
\left(\lambda_{\ell,j}^{(1)}A_1f_{\ell,j}^{(1)}
-\lambda_{\ell,j}^{(2)}A_2f_{\ell,j}^{(2)}\right).
\end{align*}
The first term on the right-hand side contains the principal forcing term. Using
\[\operatorname{div}_g(\rho_j^2A_b\nabla_gf_{\ell,j}^{(b)})=
\left\langle\nabla_g(\rho_j^2A_b),\nabla_gf_{\ell,j}^{(b)}\right\rangle_g+\rho_j^2A_b\Delta_g f_{\ell,j}^{(b)}\] and 
\begin{align*}&\left\langle\nabla_g(\rho_j^2A_1),\nabla_gf_{\ell,j}^{(1)}\right\rangle_g-\left\langle \nabla_g(\rho_j^2A_2), \nabla_gf_{\ell,j}^{(2)}\right\rangle_g\\&\;=2\left\langle
\nabla_g(\rho_j^2A_\Delta),\nabla_gf_{\ell,j}\right\rangle_g+
\left\langle\nabla_g(\rho_j^2A_1),\nabla_g(f_{\ell,j}^{(1)}-f_{\ell,j})\right\rangle_g-\left\langle\nabla_g(\rho_j^2A_2), \nabla_g(f_{\ell,j}^{(2)}-f_{\ell,j})\right\rangle_g,\end{align*}
we obtain 
\[\operatorname{div}_g(\rho_j^2\nabla_gh)+
(\lambda_{\ell,j}^{(1)}-\lambda_{\ell,j}^{(2)})
\rho_jf_{\ell,j}^{(1)}+\lambda_{\ell,j}^{(2)}\rho_jh \notag=
-4m^{-s}\left\langle\nabla_g\left(
\rho_j^2A_{\Delta}\right),
\nabla_gf_{\ell,j}\right\rangle_g+\mathcal R_m,\]
where \(\mathcal R_m=\mathcal R_m^{(1)}+\mathcal R_m^{(2)}
+\mathcal R_m^{(3)}+\mathcal R_m^{(4)}+\mathcal R_m^{(5)}\) with


\begin{align*}
\mathcal R_m^{(1)}
&:=-m^{-s}\rho_j\left(\lambda_{\ell,j}^{(1)}A_1f_{\ell,j}^{(1)}-\lambda_{\ell,j}^{(2)}A_2f_{\ell,j}^{(2)}\right),\qquad \mathcal R_m^{(2)}:=-2m^{-s}\rho_j^2\left(A_1\Delta_gf_{\ell,j}^{(1)}-A_2\Delta_gf_{\ell,j}^{(2)}\right),\\\mathcal R_m^{(3)}
&:=-2m^{-s}\left[\left\langle\nabla_g(\rho_j^2A_1),
\nabla_g(f_{\ell,j}^{(1)}-f_{\ell,j}) \right\rangle_g-\left\langle\nabla_g(\rho_j^2A_2),\nabla_g(f_{\ell,j}^{(2)}-f_{\ell,j})\right\rangle_g\right],\\\mathcal R_m^{(4)}&:=-m^{-2s}\left\langle
\nabla_g(\rho_j^2B),\nabla_gh
\right\rangle_g,\qquad \mathcal R_m^{(5)}:=-m^{-2s}\rho_j^2B\Delta_gh.\end{align*}
We derive uniform $L^1$ bounds for these terms. Disjoint supports of \(a_i\) and \eqref{eq:a_i properties} imply
\begin{equation}
\label{eq:aggregate-template-bounds-I}
\|A_b\|_{L^\infty(\M_j)}
+\|A_\Delta\|_{L^\infty(\M_j)}
+\|B\|_{L^\infty(\M_j)}
\le C
\end{equation}
and 
\begin{equation}
\label{eq:aggregate-template-bounds-II}
\|\nabla_g(\rho_j^2A_b)\|_{L^\infty(\M_j)}
+\|\nabla_g(\rho_j^2B)\|_{L^\infty(\M_j)}
\le Cm.
\end{equation}

Integrating the derivative bounds
\eqref{eq:lambda_dot_bound} and \eqref{eq:fdot_H1_bound} along the linear density paths~\eqref{eq:linear-density-path} gives
\begin{equation}\label{eq:finite-eigenpair-stability}
|\lambda_{\ell,j}^{(b)}-\lambda_{\ell,j}|
\le C_{\ell,j}\lambda_{\ell,j}m^{-s},
\qquad \|f_{\ell,j}^{(b)}-f_{\ell,j}\|_{H^1(\M_j)}
\le C_{\ell,j}\frac{\lambda_{\ell,j}}{\gamma_\ell}m^{-s},\qquad b=1,2.\end{equation}
In particular, $\lambda_{\ell,j}^{(b)}\lesssim\lambda_{\ell,j}$ and the uniform elliptic estimates from~\eqref{eq:second-order-elliptic-estimate} give
\begin{equation}\label{eq:perturbed-eigenfunction-regularity}
\|f_{\ell,j}^{(b)}\|_{L^2(\M_j)}
+\|\Delta_gf_{\ell,j}^{(b)}\|_{L^2(\M_j)}
\le C_{\ell,j},\qquad b=1,2.\end{equation}
Using \eqref{eq:aggregate-template-bounds-I} and
\eqref{eq:perturbed-eigenfunction-regularity}, Hölder's inequality gives
\begin{equation}\label{eq:R12-bound}
\|\mathcal R_m^{(1)}\|_{L^2(\M_j)}\le
C_{\ell,j}\lambda_{\ell,j}m^{-s}, 
\qquad \|\mathcal R_m^{(2)}\|_{L^2(\M_j)}\le C_{\ell,j}\lambda_{\ell,j}m^{-s}.\end{equation}
For the third term, the finite perturbation estimate
\eqref{eq:finite-eigenpair-stability} gives
\begin{align}\label{eq:R3-bound}
\|\mathcal R_m^{(3)}\|_{L^2(\M_j)}
&\le C_{\ell,j} m^{-s} m\sum_{b=1}^2
\|\nabla_g(f_{\ell,j}^{(b)}-f_{\ell,j})\|_{L^2(\M_j)}\le
C_{\ell,j}\lambda_{\ell,j}\gamma_\ell^{-1}
m^{1-2s}.\end{align} Similarly,
\begin{equation}\label{eq:R4-bound}
\|\nabla_gh\|_{L^2(\M_j)}\le
C_{\ell,j}\lambda_{\ell,j}\gamma_\ell^{-1}m^{-s} \implies \|\mathcal R_m^{(4)}\|_{L^2(\M_j)}\le
C_{\ell,j}\lambda_{\ell,j}\gamma_\ell^{-1}
m^{1-3s}.\end{equation}
Finally, by \eqref{eq:perturbed-eigenfunction-regularity},
\begin{equation}\label{eq:R5-bound}
\|\mathcal R_m^{(5)}\|_{L^2(\M_j)}\le
Cm^{-2s}\left(
\|\Delta_gf_{\ell,j}^{(1)}\|_{L^2(\M_j)}+
\|\Delta_gf_{\ell,j}^{(2)}\|_{L^2(\M_j)}\right)\le
C_\ell\lambda_{\ell,j}m^{-2s}.\end{equation}
Combining \eqref{eq:R12-bound}--\eqref{eq:R5-bound} and absorbing the fixed eigengap into $C_{\ell,j}$ gives the remainder's scale. To control the principle forcing term, we fix $i\in\mathcal D$. Since \(A_\Delta=\varsigma_i a_i\) on \(\Omega_i\), Lemma~\ref{lem:spectral visibility and alignment} allows one to choose \(\mathcal N_i^\star\in\{\mathcal N_i^+,\mathcal N_i^-\}\) so that 
\[-4m^{-s}\left\langle\nabla_g(\rho_j^2A_\Delta),\nabla_gf_{\ell,j}\right\rangle_g=-4m^{-s}\varsigma_i\left\langle\nabla_g(\rho_j^2a_i),\nabla_gf_{\ell,j}\right\rangle_g\geq c\sqrt{\lambda_{\ell,j}}\,m^{1-s}\qquad\text{on }\mathcal N_i^\star.\] This proves \eqref{eq:spectral separation prep}. Since \(|\mathcal D|\gtrsim m^{d_j}\), the selected witness regions satisfy
\(\operatorname{vol}_g\left(\bigcup_{i\in\mathcal D}\mathcal N_i^\star\right)\gtrsim|\mathcal D|m^{-d_j}\gtrsim1.\) Thus, the local principal forcing accumulates globally at scale \(m^{1-s}\), dominating the remainder \(\mathcal R_m= \mathcal O(m^{-s})\). We therefore complete the proof.

\end{proof}

\begin{lemma}[Sufficient spectral separation]
\label{lem:spectral separation}
Fix a differing active patch $i\in\mathcal D$. Then either
\begin{equation}\label{eq:eigenvalue separation}
|\lambda_\ell^{(1)}-\lambda_\ell^{(2)}|
\ge c\,\sqrt{\lambda_\ell w_j}\,m^{1-s}, \end{equation} or
\begin{equation}\label{eq:eigenfunc separation}
\frac{\lambda_\ell\rho_{\max}}{m}
\|f_\ell^{(1)}-f_\ell^{(2)}\|_{L^2(\M_j)}+
w_j\|\nabla_g(f_\ell^{(1)}-f_\ell^{(2)})\|_{L^2(\M_j)}
\ge c\,\sqrt{\lambda_\ell w_j}\,m^{-s}.
\end{equation}\end{lemma}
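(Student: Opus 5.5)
The plan is to test the pointwise inequality of Lemma~\ref{lem:two_codeword_difference_active_patch} against a nonnegative bump localized on the witness region $\mathcal N_i^\star$, and then move all derivatives off $h$ by integrating by parts. Choose $\psi_i\in C_c^\infty(\mathcal N_i^\star)$, built in the normal chart at $p_i$ as $\psi_i(x)=\chi(m\exp_{p_i}^{-1}(x))$ with $\chi\ge0$ a fixed bump supported in $B(\pm u_0,\tau_0)$. It should satisfy
\[
\psi_i\ge0,\qquad \int\psi_i\,d\mathrm{vol}_g\gtrsim m^{-d_j},\qquad \|\nabla_g\psi_i\|_{L^\infty}\lesssim m,\qquad \|\psi_i\|_{L^2}\lesssim m^{-d_j/2}.
\]
We assume the remainder term $-|\mathcal R_m|$ has already been moved across, so that the displayed left-hand side of \eqref{eq:spectral separation prep} is $\ge c\sqrt{\lambda_{\ell,j}}m^{1-s}-|\mathcal R_m|$ on $\mathcal N_i^\star$.

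Multiply by $\psi_i$ and integrate. On the left, integration by parts gives $-\int\rho_j^2\langle\nabla_gh,\nabla_g\psi_i\rangle_g$, and this is bounded by $Cm\|\nabla_gh\|_{L^2(\operatorname{supp}\psi_i)}m^{-d_j/2}$. The eigenvalue term is bounded by $|\lambda^{(1)}_{\ell,j}-\lambda^{(2)}_{\ell,j}|\,\|f^{(1)}_{\ell,j}\|_{L^\infty}m^{-d_j}$, using the uniform $C^0$ bound on perturbed eigenfunctions that follows from \eqref{eq:perturbed-eigenfunction-regularity} and elliptic regularity. The zeroth-order term is bounded by $\lambda^{(2)}_{\ell,j}\rho_{\max}\|h\|_{L^2(\operatorname{supp}\psi_i)}m^{-d_j/2}$. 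On the right we get $c\sqrt{\lambda_{\ell,j}}m^{1-s}m^{-d_j}-\|\mathcal R_m\|_{L^2(\operatorname{supp}\psi_i)}m^{-d_j/2}$.

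The localized $L^2$ remainder does not obviously beat $m^{1-s-d_j/2}$ on one patch. The remedy is to do this for all $i\in\mathcal D$ simultaneously, testing against $\Psi=\sum_{i\in\mathcal D}\psi_i$ with disjoint supports. Then $\int\Psi\gtrsim|\mathcal D|m^{-d_j}\gtrsim1$, $\|\Psi\|_{L^2}\lesssim1$ and $\|\nabla_g\Psi\|_{L^2}\lesssim m$. The resulting inequality reads
\[
c\sqrt{\lambda_{\ell,j}}\,m^{1-s}-C\lambda_{\ell,j}m^{-s}\le Cm\|\nabla_gh\|_{L^2}+C|\lambda^{(1)}_{\ell,j}-\lambda^{(2)}_{\ell,j}|+C\lambda_{\ell,j}\rho_{\max}\|h\|_{L^2}.
\]
For $m\ge m_0$ the remainder is absorbed, leaving a right-hand side $\gtrsim\sqrt{\lambda_{\ell,j}}m^{1-s}$. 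So one of two things holds: either the eigenvalue term carries half of this, which after multiplying by $w_j$ and using $\lambda_{\ell,j}=\lambda_\ell/w_j$ gives \eqref{eq:eigenvalue separation}; or $m\|\nabla_gh\|+\lambda_{\ell,j}\rho_{\max}\|h\|\gtrsim\sqrt{\lambda_{\ell,j}}m^{1-s}$. In the second case, dividing by $m$ and multiplying by $w_j$ yields $w_j\|\nabla_gh\|+\lambda_\ell\rho_{\max}m^{-1}\|h\|\gtrsim w_j\sqrt{\lambda_\ell/w_j}\,m^{-s}=\sqrt{\lambda_\ell w_j}\,m^{-s}$, which is \eqref{eq:eigenfunc separation}. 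Here $h=f^{(1)}_\ell-f^{(2)}_\ell$ on $\mathcal M_j$. This dichotomy is the claim, and it is independent of the fixed $i$ once the global test function is used.

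The main obstacle is that the pointwise lower bound holds only on the witness sets $\mathcal N_i^\star$, while $\mathcal R_m$ is controlled only in $L^2$. This is why the test function must be supported inside $\bigcup_{i\in\mathcal D}\mathcal N_i^\star$ and be nonnegative: then the inequality can be integrated without sign issues, and Cauchy--Schwarz with $\|\Psi\|_{L^2}\lesssim1$ converts the remainder into $O(\lambda_{\ell,j}m^{-s})$, which is lower order than $m^{1-s}$. A secondary point is the uniform $L^\infty$ bound on $f^{(1)}_{\ell,j}$. If it is unavailable, I would instead bound that term by $|\Delta\lambda|\,\|f^{(1)}_{\ell,j}\|_{L^2}\|\Psi\|_{L^2}$, which loses nothing since $\|\Psi\|_{L^2}\lesssim1$.
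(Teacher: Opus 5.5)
Your argument is correct and is essentially the paper's proof. The paper also tests the pointwise inequality of Lemma~\ref{lem:two_codeword_difference_active_patch} against nonnegative cutoffs supported on the witness sets $\mathcal N_i^\star$, integrates by parts, and sums over the disjoint patches in $\mathcal D$ so that the $O(m^{-s})$ remainder is absorbed. The differences are cosmetic: the paper works patch by patch through Lemma~\ref{lem:local L1 coercivity} in $L^1$ and passes to $L^2$ by H\"older only at the end, and it handles the small-eigenvalue-gap case at the outset rather than splitting the final right-hand side as you do.
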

\begin{proof}
Let $c_1>0$. If $|\lambda_{\ell,j}^{(1)}-\lambda_{\ell,j}^{(2)}|\ge
c_1\sqrt{\lambda_{\ell,j}}\,m^{1-s}$,
then multiplication by $w_j$ gives the desired eigenvalue separation \eqref{eq:eigenvalue separation}. It therefore remains to consider \begin{equation}\label{eq:small-eigenvalue-gap-assumption}
|\lambda_{\ell,j}^{(1)}-\lambda_{\ell,j}^{(2)}|<c_1\,\,\sqrt{\lambda_{\ell,j}}\,m^{1-s}.\end{equation} 
For $i\in\mathcal D$, choose \(\mathcal N_i^\star
\in\{\mathcal N_i^+,\mathcal N_i^-\}\)
so that Equation~\eqref{eq:spectral separation prep} in  Lemma~\ref{lem:two_codeword_difference_active_patch} is satisfied. Define 
\[\mathfrak r:=|\mathcal R_m|+|\lambda_{\ell,j}^{(1)}-\lambda_{\ell,j}^{(2)}|\,\rho_j|f_{\ell,j}^{(1)}|.\]
Then Equation~\eqref{eq:spectral separation prep} implies
\begin{equation}
\label{eq:local-difference-inequality}
\operatorname{div}_g(\rho_j^2\nabla_gh)+\lambda_{\ell,j}^{(2)}\rho_jh+\mathfrak r\ge c_0\,\sqrt{\lambda_{\ell,j}}\,m^{1-s}
\qquad\text{on }\mathcal N_i^\star.\end{equation}
Applying Lemma~\ref{lem:local L1 coercivity} with
\(V(x):=\lambda_{\ell,j}^{(2)}\rho_j(x)\), \(
\Lambda:=\lambda_{\ell,j}^{(2)}\rho_{\max}\), and \(A:=c_0\sqrt{\lambda_{\ell,j}}\) yields
\begin{align*}
\frac{\lambda_{\ell,j}^{(2)}\rho_{\max}}{m}\int_{\mathcal N_i^\star}|h|\,d\mathrm{vol}_g+
\int_{\mathcal N_i^\star}\|\nabla_gh\|_g\,d\mathrm{vol}_g
&\ge c\sqrt{\lambda_{\ell,j}}\,m^{-(d_j+s)}
\nonumber\\ &\quad-\frac{C}{m}\int_{\mathcal N_i^\star}
\left(|\mathcal R_m|+|\lambda_{\ell,j}^{(1)}-\lambda_{\ell,j}^{(2)}|\rho_j|f_{\ell,j}^{(1)}|
\right)d\mathrm{vol}_g.
\end{align*}
Since \(\mathcal N_i^\star\) are pairwise disjoint, summing over $i\in\mathcal D$ then yields
\begin{align}
\frac{\lambda_{\ell,j}^{(2)}\rho_{\max}}{m}
&\int_{\M_j}|h|\,d\mathrm{vol}_g
+\int_{\M_j}\|\nabla_gh\|_g\,d\mathrm{vol}_g
\nonumber\\&\ge c\sqrt{\lambda_{\ell,j}}\,m^{-s}
-\frac{C}{m}\int_{\M_j}|\mathcal R_m|\,d\mathrm{vol}_g
-\frac{C}{m}|\lambda_{\ell,j}^{(1)}-\lambda_{\ell,j}^{(2)}|\int_{\M_j}|f_{\ell,j}^{(1)}|\,d\mu_j.
\label{eq:summed-local-separation}\end{align}
The remainder's scale from Lemma~\ref{lem:two_codeword_difference_active_patch} and Hölder's inequality gives
\(\|\mathcal R_m\|_{L^1(\M_j)}\leq C_{\ell,j}\lambda_{\ell,j}m^{-s}\)
which allows us to drop the second term on the right that is of strictly lower order. Meanwhile, the integral in the third term is of constant order according to  \eqref{eq:perturbed-eigenfunction-L2-bound} and so \eqref{eq:small-eigenvalue-gap-assumption} implies that it is at most of the same order as the leading term. Choosing the constants sufficiently small allows this term to be absorbed, yielding
\begin{equation}\label{eq:L1-spectral-separation}
\frac{\lambda_{\ell,j}^{(2)}\rho_{\max}}{m}
\|h\|_{L^1(\M_j)}+\|\nabla_gh\|_{L^1(\M_j)}
\ge c\sqrt{\lambda_{\ell,j}}\,m^{-s}.\end{equation}
The eigenvalue perturbation estimate \eqref{eq:lambda_dot_bound} in Proposition~\ref{prop:eigenpair under perturbation} ensures that
$\lambda_{\ell,j}^{(2)}\le C\lambda_{\ell,j}$ for all sufficiently large
$m$. Moreover, since \(\mathcal M_j\) has finite volume, Hölder's inequality yields 
\[\|h\|_{L^1(\M_j)}\le C\|h\|_{L^2(\M_j)},\qquad
\|\nabla_gh\|_{L^1(\M_j)}\le C\|\nabla_gh\|_{L^2(\M_j)}. \]
By definition, \(f_{\ell,j}^{(b)}\) and \(f_{\ell}^{(b)}\) coincide on \(\mathcal M_j\), while \(f_{\ell}^{(b)}\) vanishes on all the other components. Therefore,
multiplying by $w_j$ and applying the relation $w_j\lambda_{\ell,j}=\lambda_\ell$ proves \eqref{eq:eigenfunc separation}.\end{proof} 

\subsubsection{Upper bound on the KL divergence}\label{sec:kl upper bound}
We verify the KL divergence requirement \eqref{eq:fano_kl} of Fano's method. We show that although the perturbation templates oscillate at spatial scale $m^{-1}$, their amplitude is only of order $m^{-s}$. As a consequence, the resulting densities remain close in KL divergence.
\begin{lemma}[KL divergence upper bound]
\label{lem:kl_bound}
For all sufficiently large $m$ and \(n_j=nw_j\)
\begin{equation}\label{eq:n-sample-kl-bound}
\operatorname{KL}\!\left((\mu^{(1)})^{\otimes n}\,\middle\|\,(\mu^{(2)})^{\otimes n}\right)\le Cn_j\,m^{-2s}.\end{equation}\end{lemma}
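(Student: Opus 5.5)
We prove the bound by tensorizing the KL divergence, reducing to the single perturbed component $\M_j$, and then comparing KL with $\chi^2$.

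\emph{Tensorization.} For product measures, KL is additive, so
\[\operatorname{KL}\bigl((\mu^{(1)})^{\otimes n}\,\|\,(\mu^{(2)})^{\otimes n}\bigr)=n\,\operatorname{KL}(\mu^{(1)}\|\mu^{(2)}).\]
It therefore suffices to show $\operatorname{KL}(\mu^{(1)}\|\mu^{(2)})\le Cw_jm^{-2s}$.

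\emph{Reduction to $\M_j$.} Both mixtures have the same weights and coincide on all components $k\neq j$. View each observation as the pair (latent label, point), i.e.\ work on the abstract disjoint union $\sqcup_k\M_k$. By the chain rule for KL,
\[\operatorname{KL}(\mu^{(1)}\|\mu^{(2)})\le\sum_k w_k\operatorname{KL}(\mu_k^{(1)}\|\mu_k^{(2)})=w_j\operatorname{KL}(\mu_j^{(1)}\|\mu_j^{(2)}),\]
because only the $j$-th summand is nonzero. The observed sample is a measurable function of this augmented sample, so the data-processing inequality shows its KL is no larger. This also removes any concern about how the components overlap in $\R^D$, since the intersections have lower dimension.

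\emph{Comparison with $\chi^2$.} Use $\operatorname{KL}(P\|Q)\le\chi^2(P\|Q)=\int(p-q)^2/q$. By construction,
\[\rho_j^{(1)}-\rho_j^{(2)}=m^{-s}\rho_j(A_1-A_2),\qquad \rho_j^{(2)}\ge\rho_j\bigl(1-Cm^{-s}\bigr)\ge\rho_{\min}/2\]
for large $m$, using \eqref{eq:aggregate-template-bounds-I}. Hence
\[\chi^2(\mu_j^{(1)}\|\mu_j^{(2)})\le \frac{2\rho_{\max}^2}{\rho_{\min}}\,m^{-2s}\,\|A_1-A_2\|_{L^2(\M_j)}^2 .\]
The templates $a_i$ have disjoint supports, so \eqref{eq:a_i norm} gives
\[\|A_1-A_2\|_{L^2(\M_j)}^2\le 4\sum_i\|a_i\|_{L^2(\M_j)}^2\lesssim Km^{-d_j}\lesssim 1.\]
Combining this with the first two steps gives $\operatorname{KL}\le Cnw_jm^{-2s}=Cn_jm^{-2s}$.

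\emph{Main difficulty.} The only delicate point is the reduction in the second step. The observed law is a mixture whose components may intersect, so one cannot simply write a density ratio supported on $\M_j$. The latent-label augmentation together with the data-processing inequality handles this cleanly. The uniform lower bound on $\rho_j^{(2)}$ needs only $m$ large, which the statement already assumes.
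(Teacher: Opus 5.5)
Your proof is correct and follows essentially the same route as the paper: you tensorize, reduce to the $j$-th component by joint convexity of KL over the mixture, and bound KL by $\chi^2$ using $\rho_j^{(2)}\gtrsim\rho_j$ and the disjoint supports of the templates $a_i$. Your latent-label chain-rule plus data-processing argument is just a proof of that joint-convexity step. The other differences are cosmetic: you bound the unweighted $L^2$ norm via \eqref{eq:a_i norm} rather than the $\mu_j$-weighted integral via $\mu_j(\Omega_i)$, and your remark about intersection dimension is unnecessary, since data processing handles overlapping components regardless of their dimension.
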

\begin{proof}
The feasibility of the perturbed family ensures that \(\rho_j^{(2)}\geq(1/2)\cdot \rho_j\). Using the standard inequality \(\operatorname{KL}(p\|q)\le\int(p-q)^2/q\,d\mathrm{vol}_g,\) we obtain
\[\operatorname{KL}\!\left(\mu_j^{(1)}\,\middle\|\,\mu_j^{(2)}\right)\le\int_{\M_j}\frac{ \bigl(\rho_j^{(1)}-\rho_j^{(2)}\bigr)^2
}{\rho_j^{(2)}}\,d\mathrm{vol}_g\le C_1m^{-2s}\int_{\M_j}\left(\sum_{i=1}^K(c_i^{(1)}-c_i^{(2)})a_i\right)^2d\mu_j.\]
Since the templates \(a_i\) have pairwise disjoint supports,
\begin{align*}
\int_{\mathcal M_j}\left(\sum_{i=1}^K(c_i^{(1)}-c_i^{(2)})a_i\right)^2\,d\mu_j&=\int_{\mathcal M_j}\sum_{i=1}^K(c_i^{(1)}-c_i^{(2)})^2a_i^2\,d\mu_j\leq 4\sum_{i=1}^K
\int_{\operatorname{supp}(a_i)}a_i^2\,d\mu_j\\&\le C_2\sum_{i=1}^K\mu_j(\Omega_i)\le C_2\mu_j(\mathcal M_j)=C_2.
\end{align*} 
Combining this estimate with the additivity and joint convexity of KL divergence implies 
\[\operatorname{KL}\!\left((\mu^{(1)})^{\otimes n}\,\middle\|\,(\mu^{(2)})^{\otimes n}\right)=n\operatorname{KL}\!\left((\mu^{(1)})\,\middle\|\,\mu^{(2)}\right)\le n\sum_{k=1}^Nw_k\operatorname{KL}\!\left(\mu_k^{(1)}\,\middle\|\,\mu_k^{(2)}\right)=Cnw_jm^{-2s}.\]
\end{proof}

\subsection{Minimax Lower Bound via Fano's Method}
\label{sec:conclude minimax lower bound}

We are now in a position to establish Theorem~\ref{thm:spectral convergence lower bound} and its accompanying Corollary~\ref{cor:pure L2 spectral lower bound}. 

\subsubsection{Proof of Theorem~\ref{thm:spectral convergence lower bound}}
It suffices to establish the lower bound over the restricted family constructed above, in which the manifold geometry, mixture weights, and component densities outside $\mathcal M_j$ are held fixed. Since this family is contained in the full parameter space $\{(\mathcal M, \rho) :\M\in\mathbf M,\rho\in\mathcal P^{(s)}\}$, any minimax lower bound for the restricted problem applies directly to the full parameter space.

Recall that the codebook \(\mathcal T\) satisfies \(\log|\mathcal T|\gtrsim m^{d_j}\). For two distinct sign configurations \(\mathbf c^{(1)}, \mathbf c^{(2)} \in \mathcal T\) and for sufficiently large \(n_j\), Lemma~\ref{lem:kl_bound} gives the KL divergence upper bound
\[\frac{\displaystyle\max_{\mathbf c\neq\mathbf c'}
\operatorname{KL}\!\left(
\bigl(\mu^{(1)}\bigr)^{\otimes n}\,\middle\|\,\bigl(\mu^{(2)}\bigr)^{\otimes n}\right)+\log2
}{\log|\mathcal T|}\le C_1n_j\,m^{-(d_j+2s)}+
C_2m^{-d_j}\leq\frac12.\]
Lemma~\ref{lem:spectral separation} implies that corresponding perturbed eigenpairs are separated at scale \(m^{-s}\). In particular, the left side of \eqref{eq:eigenfunc separation} is bounded above by a constant multiple of the $H^1(\M_j)$ distance for $m\ge1$. Consequently
\[d_{\mathrm{eig}}\!\left(
\theta(\rho^{(1)}),\theta(\rho^{(2)})\right)\ge c_{\ell,j}\sqrt{\lambda_\ell w_j}\,m^{-s}.\]
Choose $m=\left\lceil A n_j^{1/(d_j+2s)}\right\rceil$ with $A>0$ sufficiently large. Then Fano's method yields 
\[\inf_{\widehat\theta}\max_{\mathbf c\in\mathcal T}\mathbb E_{\bigl(\mu^{(\mathbf c)}\bigr)^{\otimes n}}
\left[d_{\mathrm{eig}}\!\left(\theta(\rho^{(\mathbf c)}),\widehat\theta(\mathcal{X}_n) \right)\right]\ge c_{\ell,j} \sqrt{\lambda_\ell w_j}\,n_j^{-s/(d_j+2s)}.\]
Because the restricted family is contained in the full parameter space, this proves
\[\inf_{(\widehat\lambda_{\ell},\widehat f_{\ell})}\sup_{\M\in\mathbf M,\,\rho\in\mathcal P^{(s)}}
\mathbb E_{\mathcal X_n\sim\mu_\rho^{\otimes n}}\left[|\lambda_{\ell}-\widehat\lambda_{\ell}|+\|f_{\ell}-\widehat f_{\ell}\|_{H^1(\M_j)}\right]\ge c_{\ell,j}
\sqrt{\lambda_{\ell} w_j}\,n_j^{-s/(d_j+2s)}.\]
Repeating the argument for every admissible supporting component establishes \eqref{eq:spectral-lower-bound}.

\subsubsection{Proof of Corollary~\ref{cor:pure L2 spectral lower bound}}
We use the same finite family, Hamming-separated codebook \(\mathcal T\), and KL bound as
in the proof of Theorem~\ref{thm:spectral convergence lower bound}. It
remains only to strengthen the pairwise eigenfunction separation from the
$H^1$ scale $m^{-s}$ to the $L^2$ scale $m^{-(s+1)}$. Since $h=f_{\ell,j}^{(1)}-f_{\ell,j}^{(2)}$ captures the full difference \(f_{\ell}^{(1)}-f_{\ell}^{(2)}\) by the assumption, we use \(h\) to denote this difference interchangeably.  

If the eigenvalue alternative~\eqref{eq:eigenvalue separation} from Lemma~\ref{lem:spectral separation} holds, then
\[|\lambda_\ell^{(1)}-\lambda_\ell^{(2)}|
\ge c_{\ell,j}\sqrt{\lambda_\ell w_j}\,m^{1-s}
\ge c_{\ell,j}\sqrt{\lambda_\ell w_j}\,m^{-(s+1)}.\]
Otherwise, if the first term on the left side of \eqref{eq:eigenfunc separation} is at least half of its right side, then
\[\|h\|_{L^2(\M_j)}\ge c_{\ell,j} \sqrt{\lambda_\ell w_j}\,m^{1-s}\ge
c_{\ell,j}\sqrt{\lambda_\ell w_j}\,m^{-(s+1)},\]
with all the fixed factors absorbed into
$c_{\ell,j}$. Otherwise, \(\|\nabla_gh\|_{L^2(\M_j)}
\ge c_{\ell,j}\sqrt{\lambda_{\ell}w_j}\,m^{-s}.\)
Since \(\mathcal M_j\) is closed, integration by parts and Cauchy--Schwarz yield
\[\|\nabla_gh\|_{L^2(\M_j)}^2=-\int_{\M_j}h\, \Delta_gh\,d\mathrm{vol}_g\le\|h\|_{L^2(\M_j)}
\|\Delta_gh\|_{L^2(\M_j)}.\]
It remains to upper bound \(\|\Delta_gh\|_{L^2}\). 
Expanding the component eigen-equations yields
\[\Delta_g f_{\ell,j}^{(b)}= \frac{\lambda_{\ell,j}^{(b)}}{\rho_j^{(b)}}f_{\ell,j}^{(b)}-2\left\langle
\nabla_g\log\rho_j^{(b)},\nabla_gf_{\ell,j}^{(b)}\right\rangle_g, \qquad b=1,2.\] 
Subtracting \(b=1\) from \(b=2\) and by the triangle inequality, \(\|\Delta_gh\|_{L^2(\mathcal M_j)}\) is controlled by
\begin{align*}&|\lambda_{\ell,j}^{(1)}-\lambda_{\ell,j}^{(2)}|\|f_{\ell,j}^{(1)}\|_{L^2(\M_j)}
+\lambda_{\ell,j}^{(2)}\|h\|_{L^2(\M_j)}+
\lambda_{\ell,j}^{(2)}\left\|
\frac{1}{\rho_j^{(1)}}-\frac{1}{\rho_j^{(2)}}
\right\|_{L^\infty(\M_j)}\|f_{\ell,j}^{(2)}\|_{L^2(\M_j)}\\&+\|\nabla_g\log\rho_j^{(1)}\|_{L^\infty(\M_j)}
\|\nabla_gh\|_{L^2(\M_j)}+\|\nabla_g\log\rho_j^{(1)}
-\nabla_g\log\rho_j^{(2)}\|_{L^\infty(\M_j)}
\|\nabla_gf_{\ell,j}^{(2)}\|_{L^2(\M_j)}.\end{align*}
The perturbed densities \eqref{eq:perturbed densities}, with the templates bounds \eqref{eq:a_i properties} and disjoint supports, imply
\[\|\rho_j^{(1)}-\rho_j^{(2)}\|_{L^\infty(\M_j)}\le
Cm^{-s},\qquad\|\nabla_g\rho_j^{(1)}-\nabla_g\rho_j^{(2)}\|_{L^\infty(\M_j)}\le
Cm^{1-s}.\]
Since the perturbed densities are uniformly bounded away from zero,
\begin{equation}
\label{eq:log-density-pairwise-difference}
\left\|\frac{1}{\rho_j^{(1)}}-\frac{1}{\rho_j^{(2)}}
\right\|_{L^\infty(\M_j)}\le C m^{-s},
\qquad \|\nabla_g\log\rho_j^{(1)}-\nabla_g \log\rho_j^{(2)}\|_{L^\infty(\M_j)}\le
Cm^{1-s}.\end{equation}
Moreover, integrating the eigenpair perturbation bounds \eqref{eq:lambda_dot_bound} and \eqref{eq:fdot_H1_bound} along the
corresponding density paths gives
\begin{equation} \label{eq:eigenpair-pairwise-stability}
|\lambda_{\ell,j}^{(1)}-\lambda_{\ell,j}^{(2)}|\le C_{\ell,j}m^{-s}, \qquad \|h\|_{H^1(\M_j)}\le C_{\ell,j}m^{-s}.
\end{equation}
Equations~\eqref{eq:log-density-pairwise-difference},
\eqref{eq:eigenpair-pairwise-stability}, and the uniform $H^1$ bounds on
$f_{\ell,j}^{(1)}, f_{\ell,j}^{(2)}$ give
\(\|\Delta_gh\|_{L^2(\M_j)}\le\; C_{\ell,j}m^{1-s}.\) Consequently, 
\[\|h\|_{L^2(\M_j)}\ge\frac{ \|\nabla_gh\|_{L^2(\M_j)}^2}{\|\Delta_gh\|_{L^2(\M_j)}}\ge c_{\ell,j}
\sqrt{\lambda_\ell w_j}\, m^{-(s+1)},\]
where any fixed factors are absorbed into
$c_{\ell,j}$. We therefore have established
\begin{equation}\label{eq:fano-L2-pairwise-separation}|\lambda_\ell^{(1)}-\lambda_\ell^{(2)}|
+\|f_\ell^{(1)}-f_\ell^{(2)}\|_{L^2(\M_j)}\ge c_{\ell,j}\sqrt{\lambda_\ell w_j}\,m^{-(s+1)}.\end{equation}
Choosing \(m\) as in the proof of Theorem~\ref{thm:spectral convergence lower bound}, then Fano's method gives
\[\inf_{(\widehat\lambda_\ell,\widehat f_\ell)}
\sup_{\M\in\mathbf M,\,\rho\in\mathcal P^{(s)}}
\mathbb E_{\mathcal X_n\sim\mu_\rho^{\otimes n}}
\left[|\lambda_\ell-\widehat\lambda_\ell|
+\|f_\ell-\widehat f_\ell\|_{L^2(\M_j)}\right]
\ge c_{\ell,j}\sqrt{\lambda_\ell w_j}\,
n_j^{-(s+1)/(d_j+2s)}.\]
Taking the maximum over all admissible supporting components completes the proof.
\subsubsection{Proof of Theorem~\ref{thm:spectral convergence (unrealistic) upper bound}}
It suffices to estimate the density on $\mathcal M_j$. Choosing $h_j\asymp n_j^{-1/(2s+d_j)}$ in Lemma~\ref{lem:KDE manifold} gives \[\mathbb E\|\widehat\rho_j-\rho_j\|_{L^2(\mathcal M_j)}\leq Cn_j^{-s/(d_j+2s)}.\] By Proposition~\ref{prop:eigenpair under perturbation}, the simple eigenpair is stable under $L^2$-perturbations of the density: \[|\lambda_{\ell}-\widehat\lambda_{\ell}|+\|f_{\ell}-\widehat f_{\ell}\|_{H^1(\mathcal M)}\leq C_{\ell,j}\|\widehat\rho_j-\rho_j\|_{L^2(\mathcal M_j)}.\] Taking expectations and combining the two estimates yields \[\mathbb E_{\mathcal X_n\sim\rho}\left[|\lambda_{\ell}-\widehat\lambda_{\ell}|+\|f_{\ell}-\widehat f_{\ell}\|_{H^1(\mathcal M)}\right]\leq C_{\ell,j}n_j^{-s/(d_j+2s)}.\] Taking the worst case over all admissible supporting components proves the claim.


\section{Spectral Convergence of the Graph Laplacian}
\label{sec:spectral upper bound}
We establish an upper bound for the convergence of the graph Laplacian $\mathcal{L}$ to the Laplace-Beltrami operator $\Delta_{\mathcal{M}}$, thereby proving Theorem~\ref{thm:spectral convergence upper bound}. In contrast to the oracle setting in the construction of Theorem~\ref{thm:spectral convergence (unrealistic) upper bound}, manifold components are not known a priori nor required to be separated in the ambient space. The graph Laplacian then must recover the intrinsic geometry of each manifold component while suppressing interactions across distinct components. To address this challenge, we employ the annular proximity graph with angle constraints introduced in \cite{Li2022}; see Appendix~\ref{ssec:annular_proximity_graph} for a high-level summary of this construction. Our analysis improves upon the best known spectral convergence result in that work \[|\hat\lambda_{\ell}-\sigma_{\eta}\lambda_{\ell}|\lesssim \mathcal O(n^{-\frac{1}{3d-1}}), \qquad \|\hat f_{\ell}-f_{\ell}\|_{L^2(\mathcal M)}\lesssim\mathcal O(n^{-\frac{1}{6d-2}})\] and achieves the sharper convergence rate stated in Theorem \ref{thm:spectral convergence upper bound}. The analytical framework developed here is conditionally based on the high-probability events in Theorems~\ref{thm:intense_within_manifold} and \ref{thm:sparse_between_manifolds}, which simultaneously promote strong within-manifold connectivity while maintaining sparse between-manifold connections. These underlying probabilities are then made explicit and rigorously aggregated in the final proof in Section~\ref{ssec:proof_theorem_7}. 

We focus on the manifold components with maximal intrinsic dimension \(d:=\max_j d_j\) which governs the bottleneck of the overall convergence rate. For some inner and outer annulus radii \(0<\epsilon_-<\epsilon_+<1\) to be chosen appropriately, define the Euclidean and geodesic annulus kernels respectively by
\begin{align*}K_{\epsilon}^E(x,y):=\eta_{\epsilon_+}(\|x-y\|)-\eta_{\epsilon_-}(\|x-y\|),\qquad K_{\epsilon}^g(x,y):=\eta_{\epsilon_+}(d_g(x,y))-\eta_{\epsilon_-}(d_g(x,y)).\end{align*} When working in normal coordinates via the change of variables $y=\exp_x(\epsilon_+v)$, with $\|v\|\leq1$, we further define the kernel in normal coordinates by \[K_{\epsilon}^{\mathcal N}(v):=\eta_1(\|v\|)-\eta_{\epsilon_-/\epsilon_+}(\|v\|).\] Setting the normalization factor $\vartheta_\varepsilon:=(\epsilon_+^{d+2}-\epsilon_-^{d+2})^{-1}$, the graph Laplacian is given by
\[\mathcal Lf_\ell(x):=\frac{\vartheta_\varepsilon}{n}\sum_{y\in\mathcal X_n}K_{\epsilon}^E(x,y)\left(f_{\ell}(x)-f_{\ell}(y)\right).\]
For \(x\in\mathcal X_{n,j}\), the graph Laplacian decomposes into the within-manifold operator 
\begin{equation}\label{eq:within-laplacian}\begin{aligned}
\mathcal{L}^{\mathrm{wth}}f_{\ell}(x)&= \sum_{y\in\mathcal{X}_{n,j}} \frac{w_j}{n_j (\epsilon_+^{d+2}-\epsilon_-^{d+2})}\left[\eta_{\epsilon_+}\left(\|x-y\|\right)-\eta_{\epsilon_-}\left(\|x-y\|\right)\right]\left(f_{\ell}(x)-f_{\ell}(y)\right)\\&= \frac{\vartheta_\varepsilon}{n} \sum_{y\in\mathcal{X}_{n,j}}K_{\epsilon}^E(x,y)\left(f_{\ell}(x)-f_{\ell}(y)\right),\end{aligned}\end{equation}
and the between-manifold operator
\begin{equation}\label{eq:between-laplacian}\begin{aligned}
\mathcal{L}^{\mathrm{btw}}f_{\ell}(x)= \frac{\vartheta_\varepsilon}{n}\sum_{k\neq j,\,y\in\mathcal{X}_{n,k}}K_{\epsilon}^E(x,y)\left(f_{\ell}(x)-f_{\ell}(y)\right).\end{aligned} \end{equation}
Symmetry of \(K_{\epsilon}^E\) identifies 
the quadratic energy \(U:=\langle f_{\ell}, \mathcal L f_{\ell} \rangle_{L^2(\mathcal X_n)} =U_{\mathrm{wth}}+U_{\mathrm{btw}}\) with
\begin{equation}\label{eq:within dirichlet energy}
\begin{aligned}U_{\mathrm{wth}}:=\langle f_{\ell}, \mathcal{L}^{\textrm{wth}} f_{\ell}\rangle_{L^2(\mathcal{X}_n)} =\frac{\vartheta_\varepsilon}{2n^2}\sum_{j=1}^N \sum_{x,y\in\mathcal{X}_{n,j}}K_{\epsilon}^{E}(x,y)(f_{\ell}(x)-f_{\ell}(y))^2,\end{aligned}\end{equation} and
\begin{equation}\label{eq:between dirichlet energy}
\begin{aligned}U_{\mathrm{btw}}:=\langle
f_{\ell},\mathcal{L}^{\textrm{btw}}f_{\ell} \rangle_{L^2(\mathcal{X}_n)}= \frac{\vartheta_\varepsilon}{2n^2}\sum_{j=1}^N\sum_{x\in\mathcal{X}_{n,j}}\sum_{k\neq j,\,y\in\mathcal{X}_{n,k}}K_{\epsilon}^E(x,y)(f_{\ell}(x)-f_{\ell}(y))^2.\end{aligned}\end{equation}
We use \((\widehat \lambda_\ell, \widehat f_\ell)\) to denote an eigenpair of \(\mathcal L\) with normalization \(\|\widehat f_\ell\|_{L^2(\mathcal X_n)}^2=\frac{1}{n}\sum\widehat f_\ell(x)^2=1\).
The graph-induced discrete $H^1$-norm, along with the corresponding seminorm, is given by
\[\|g\|_{H^1(\mathcal X_n)}^2:=\|g\|_{L^2(\mathcal X_n)}^2+|g|_{H^1(\mathcal X_n)}^2,\qquad |g|_{H^1(\mathcal X_n)}^2:=
\frac{\vartheta_\varepsilon}{n^2}\sum_{x,y\in\mathcal X_n}K_\epsilon^E(x,y)\bigl(g(x)-g(y)\bigr)^2.\]




\subsection{Preliminary Eigenpair Convergence}\label{ssec:preliminary_eigenpari_convergence}

Our first step is to derive an error decomposition that separates the within-manifold approximation error from the between-manifold interaction. By the self-adjointness of \(\mathcal L\), \begin{align*}\widehat{\lambda}_\ell\langle \widehat f_{\ell}, f_{\ell}\rangle_{L^2(\mathcal{X}_n)}&=\langle\widehat f_{\ell}, \mathcal{L}f_{\ell}\rangle_{\mathcal{L}^2(\mathcal{X}_n)}=\langle\widehat f_{\ell}, \sigma_{\eta}\Delta_{\mathcal{M}}f_{\ell}\rangle_{L^2(\mathcal{X}_n)}+ \langle\widehat f_{\ell}, (\mathcal{L}-\sigma_{\eta}\Delta_{\mathcal{M}})f_{\ell}\rangle_{L^2(\mathcal{X}_n)} \\ & = \sigma_{\eta}\lambda_{\ell}\langle\widehat f_{\ell}, f_{\ell}\rangle_{L^2(\mathcal{X}_n)}+ \langle\widehat f_{\ell}, (\mathcal{L}^{\mathrm{wth}}-\sigma_{\eta}\Delta_{\mathcal{M}})f_{\ell}\rangle_{L^2(\mathcal{X}_n)}+\langle\widehat f_{\ell}, \mathcal{L}^{\mathrm{btw}}f_{\ell}\rangle_{L^2(\mathcal{X}_n)}.\end{align*} Adding and subtracting $f_{\ell}$ and moving the first term on the second line to the left gives
{\small\[(\widehat\lambda_{\ell}-\sigma_{\eta}\lambda_{\ell})\langle\widehat f_{\ell}, f_{\ell}\rangle_{L^2(\mathcal{X}_n)}=\langle\widehat f_{\ell}, (\mathcal{L}^{\mathrm{wth}}-\sigma_{\eta}\Delta_{\mathcal{M}})f_{\ell}\rangle_{L^2(\mathcal{X}_n)}+\langle\widehat f_{\ell}-f_{\ell},\mathcal{L}^{\mathrm{btw}} f_{\ell}\rangle_{L^2(\mathcal{X}_n)}+\langle f_{\ell}, \mathcal{L}^{\mathrm{btw}}f_{\ell} \rangle_{L^2(\mathcal{X}_n)}.\]}
By Theorem~\ref{thm:early_eigenpair_convergence}, \(\langle\widehat{f_{\ell}},f_{\ell}\rangle_{L^2(\mathcal X_n)}\geq c_0\) with probability at least $1-Cn\exp(-cn\epsilon_+^d)$ for some $0<c_0<1$ independent of $n$. We can then isolate the eigenvalue discrepancy 
{\small\begin{align*}|\widehat\lambda_{\ell}-\sigma_{\eta}\lambda_{\ell}|&\leq C_\ell\big(\big|\langle\widehat f_{\ell}, (\mathcal{L}^{\mathrm{wth}}-\sigma_{\eta}\Delta_{\mathcal{M}})f_{\ell}\rangle_{L^2(\mathcal{X}_n)}\big|+\big|\langle\widehat f_{\ell}-f_{\ell}, \mathcal{L}^{\mathrm{btw}}f_{\ell}\rangle_{L^2(\mathcal{X}_n)}\big|+\big|\langle f_{\ell}, \mathcal{L}^{\mathrm{btw}}f_{\ell} \rangle_{L^2(\mathcal{X}_n)}\big|\big)\end{align*}} for some constant \(C_\ell>1\). The following Lemma~\ref{lem:preliminary_eigenvalue_convergence} controls each term on the right-hand side and gives a preliminary high probability upper bound on the eigenvalue convergence rate. We will improve this rate in Section~\ref{ssec:imporved_convergence}.   
\begin{lemma}[Preliminary eigenvalue convergence]
\label{lem:preliminary_eigenvalue_convergence}
There exist some constant \(C_\ell>0\), depending only on the geometric, the density-class, the intersection parameters and the index \(\ell\) such that with probability at least \(1-CNn\epsilon_+^{-d}\exp(-cn\epsilon_+^{d+4})\), one has
\begin{equation}\label{eq:prelimnimary-eigenvalue-convergence}
|\widehat \lambda_\ell-\sigma_\eta\lambda_\ell|\leq C_\ell \sqrt{\lambda_\ell} B_\ell\epsilon_+^2\log\epsilon_+^{-1}.
\end{equation}
\end{lemma}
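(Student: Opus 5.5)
We start from the three-term decomposition derived just above the statement. On the event of Theorem~\ref{thm:early_eigenpair_convergence} we have $\langle \widehat f_\ell, f_\ell\rangle_{L^2(\mathcal X_n)}\ge c_0$, so it suffices to bound the three terms
\[
T_1:=\big|\langle\widehat f_{\ell}, (\mathcal{L}^{\mathrm{wth}}-\sigma_{\eta}\Delta_{\mathcal{M}})f_{\ell}\rangle_{L^2(\mathcal{X}_n)}\big|,\qquad T_2:=\big|\langle\widehat f_{\ell}-f_{\ell}, \mathcal{L}^{\mathrm{btw}}f_{\ell}\rangle_{L^2(\mathcal{X}_n)}\big|,
\]
and $T_3:=U_{\mathrm{btw}}$, each by $C_\ell\sqrt{\lambda_\ell}B_\ell\epsilon_+^2\log\epsilon_+^{-1}$.

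\textbf{Within-manifold term $T_1$.} By Cauchy--Schwarz and $\|\widehat f_\ell\|_{L^2(\mathcal X_n)}=1$, it suffices to show $\max_{x\in\mathcal X_{n,j}}|(\mathcal L^{\mathrm{wth}}-\sigma_\eta\Delta_\M)f_\ell(x)|\lesssim B_\ell\epsilon_+^2$ up to logarithms. We split this into a bias term and a fluctuation term.

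For the bias, fix $x$ and compute $\mathbb E_y[K^E_\epsilon(x,y)(f_\ell(x)-f_\ell(y))]$. We replace the Euclidean annulus by the geodesic one, using $\|x-y\|=d_g(x,y)(1+O(d_g^2))$; this only perturbs the annulus boundaries at relative order $\epsilon_+^2$. We then change variables $y=\exp_x(\epsilon_+v)$ and Taylor expand $f_\ell$ to third order and $\rho_j$ and the volume density to second order, using $B_\ell$ and the $C^{2,\alpha}$ regularity. The odd moments cancel by symmetry of $K^{\mathcal N}_\epsilon$. The normalization $\vartheta_\varepsilon$ exactly produces $\sigma_\eta w_j\Delta_{\M_j}f_{\ell,j}(x)$, with remainder $O(B_\ell\epsilon_+)$ coming from third derivatives. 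To upgrade this remainder to $\epsilon_+^2$, we note that the third-order term is odd and vanishes; the next term is fourth order, which would need $B_\ell$ to include fourth derivatives. Instead we use the $C^{2,\alpha}$ Hölder remainder, accepting the $\log\epsilon_+^{-1}$ loss that arises from integrating the annulus kernel against $|v|^{-d}$-type weights.

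For the fluctuation, Bernstein's inequality applied to the centered summands (bounded by $\vartheta_\varepsilon\epsilon_+B_\ell$, with variance $\vartheta_\varepsilon^2\epsilon_+^{d+2}B_\ell^2$), followed by a union bound over the $n$ points, gives deviation $\epsilon_+^2B_\ell$ with failure probability $n\exp(-cn\epsilon_+^{d+4})$. This produces the stated probability factor; the $\epsilon_+^{-d}$ factor comes from a covering of each $\M_j$ used for the union bound over $N$ components.

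\textbf{Between-manifold terms $T_2$ and $T_3$.} Theorem~\ref{thm:sparse_between_manifolds} guarantees that cross edges occur only for $x$ within distance $O(\epsilon_+)$ of some intersection $\M_{jk}$. By Assumption~\ref{assump:component intersections} and the angle constraint, the number of such edges per point is $O(n\epsilon_+^{d})$, and they live in a tube of $\mu$-mass $O(\epsilon_+^{d-d_{jk}})\le O(\epsilon_+)$.

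For $T_3$, we bound $(f_\ell(x)-f_\ell(y))^2\le 4B_\ell^2$, giving $U_{\mathrm{btw}}\lesssim\vartheta_\varepsilon\epsilon_+^{d}\cdot\epsilon_+B_\ell^2$. This is too crude, so we instead use $\sqrt{U_{\mathrm{btw}}}$-type control. We write $T_3\le\sqrt{\lambda_\ell}B_\ell\cdot(\text{local energy})^{1/2}$, where the energy identity \eqref{eq:energy-identity} controls $f_\ell$'s gradient energy in the tube.

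For $T_2$, we apply Cauchy--Schwarz in the graph energy: $T_2\le |\widehat f_\ell-f_\ell|_{\mathrm{btw}}\,U_{\mathrm{btw}}^{1/2}$. The discrete Dirichlet energy of $\widehat f_\ell$ is bounded by $\widehat\lambda_\ell\lesssim\lambda_\ell$, which Theorem~\ref{thm:early_eigenpair_convergence} provides.

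\textbf{Main obstacle.} The hard step is making the between-manifold contribution as small as $\epsilon_+^2\log\epsilon_+^{-1}$ rather than $O(\epsilon_+)$ or worse. The crude tube-mass estimate times $\vartheta_\varepsilon\epsilon_+^{d}\asymp\epsilon_+^{-2}$ blows up. I would try to exploit the annulus kernel: the inner radius $\epsilon_-$ removes near-diagonal pairs, and the angle constraint makes surviving cross edges have density decaying like a dyadic sum over scales between $\epsilon_-$ and $\epsilon_+$. Summing these dyadic shells should yield precisely a $\log(\epsilon_+/\epsilon_-)\asymp\log\epsilon_+^{-1}$ factor, with each shell contributing $O(\epsilon_+^2)$ once $f_\ell(x)-f_\ell(y)$ is expanded around the common intersection point, where cross-component values are compared via bounded $B_\ell$ gradients. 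If this fails, the fallback is to accept a weaker preliminary rate, since the improved analysis in Section~\ref{ssec:imporved_convergence} refines it anyway.
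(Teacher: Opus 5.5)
\textbf{Within-manifold term.} This step does not work as you set it up. Pairing with $\|\widehat f_\ell\|_{L^2(\mathcal X_n)}=1$ forces you to prove \emph{pointwise} consistency $|(\mathcal L^{\mathrm{wth}}-\sigma_\eta\Delta_{\M})f_\ell(x)|\lesssim B_\ell\epsilon_+^2$, and both halves of that fail.
\begin{itemize}
\item \emph{Fluctuation.} With your own summand bounds ($M\asymp\vartheta_\varepsilon\epsilon_+B_\ell$, $\sigma^2\asymp\vartheta_\varepsilon^2\epsilon_+^{d+2}B_\ell^2\asymp\epsilon_+^{-d-2}B_\ell^2$), Bernstein at $t=B_\ell\epsilon_+^2$ has exponent $nt^2/\sigma^2\asymp n\epsilon_+^{d+6}$, not $n\epsilon_+^{d+4}$. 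So the stated probability is out of reach.
\item \emph{Bias.} Once the odd orders cancel, what remains involves $\nabla_g^3 f_\ell$ and $\nabla_g^2\rho_j$ evaluated away from $x$. With only $B_\ell<\infty$ and $\rho\in C^{2,\alpha}$ this gives $O(\epsilon_+^{1+\alpha})$. Since $K_\epsilon^{\mathcal N}$ is bounded and vanishes for $\|v\|\le\epsilon_-/\epsilon_+$, there is no $|v|^{-d}$ weight that could trade regularity for a logarithm.
\end{itemize}
The missing idea is the dual pairing the paper uses: $|\langle\widehat f_\ell,h\rangle_{L^2(\mathcal X_n)}|\le\|\widehat f_\ell\|_{H^1(\mathcal X_n)}\|h\|_{H^{-1}(\mathcal X_n)}$. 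The eigenvector identity $\|\widehat f_\ell\|_{H^1(\mathcal X_n)}^2=1+2\widehat\lambda_\ell$ is where the factor $\sqrt{\lambda_\ell}$ comes from. The norm $\|h\|_{H^{-1}}$ is then bounded through the multiscale Poincar\'e inequality (Proposition~\ref{prop:consistency of the within-manifold operator}). There the $L^2$ part of $h$ is multiplied by $\epsilon_+$, so pointwise accuracy $\epsilon_+$ suffices, which is exactly what costs only $n\epsilon_+^{d+4}$. The sharper accuracy is needed only for cell averages over $\asymp n s_p^{d}$ points. The $\log\epsilon_+^{-1}$ is simply the number $m_j\asymp\log\epsilon_+^{-1}$ of scales that are summed.

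\textbf{Between-manifold terms.} Your Cauchy--Schwarz bound for $T_2$ in the positive semidefinite form of $\mathcal L^{\mathrm{btw}}$ is a legitimate alternative to the paper's $\|\widehat f_\ell-f_\ell\|_{H^1}\|\mathcal L^{\mathrm{btw}}f_\ell\|_{H^{-1}}$. It even avoids the paper's provisional assumption on $\|\widehat f_\ell-f_\ell\|_{H^1}$. But it gives $T_2\lesssim(\widehat\lambda_\ell+U)^{1/2}U_{\mathrm{btw}}^{1/2}$, so everything hinges on $U_{\mathrm{btw}}\lesssim\epsilon_+^4$, which you never establish.

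Your edge count (points in an $\epsilon_+$-tube, each with $O(n\epsilon_+^d)$ cross neighbours) is the count for the plain annulus graph. It discards exactly what the angle constraint buys. The needed input is Theorem~\ref{thm:sparse_between_manifolds} with $r=\epsilon_+^3$. Then $q=r^2/\epsilon_++r\epsilon_+\asymp\epsilon_+^4$, and the retained $j$--$k$ edges number at most $Cn^2q^{c_{jk}}\epsilon_+^{d}\le Cn^2\epsilon_+^{d+4}$.
\begin{itemize}
\item With $\vartheta_\varepsilon\asymp\epsilon_+^{-d-2}$, bounded differences alone already give $U_{\mathrm{btw}}\lesssim B_\ell^2\epsilon_+^2$, which is enough for $T_3$.
\item The $\epsilon_+^4$ bound in Proposition~\ref{prop:between-manifold-energy-bound}, which your $T_2$ route needs, uses in addition a cross-component bound $|f_{\ell,j}(x)-f_{\ell,k}(y)|\le C_\ell B_\ell\epsilon_+$.
\end{itemize}
Your dyadic-shell fallback cannot supply any of this. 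The construction fixes $\epsilon_-\ge c\epsilon_+$, so $\log(\epsilon_+/\epsilon_-)=O(1)$. Moreover, the energy identity \eqref{eq:energy-identity} and $B_\ell$ control gradients within each component, not the jumps $f_{\ell,j}(x)-f_{\ell,k}(y)$ across components.
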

\begin{proof}
The proof simply follows by combining Propositions~\ref{prop:consistency of the within-manifold operator} through~\ref{prop:between-manifold-energy-bound} in Appendix~\ref{sec:spectral upper bound}. 
Given $h:\mathcal X_n\to\mathbb R$, write
$h_j:=h|_{\mathcal X_{n,j}}$ for its restriction to the sample from \(\mathcal M_j\). For
$\varphi:\mathcal X_{n,j}\to\mathbb R$, define the componentwise
discrete $H^1$-seminorm by
\[|\varphi|_{H^1(\mathcal X_{n,j})}^2:= \frac{\vartheta_\varepsilon}{n_j^2}\sum_{x,y\in\mathcal X_{n,j}}
K_\epsilon^g(x,y)\bigl(\varphi(x)-\varphi(y)\bigr)^2,\]
and its dual $H^{-1}$-seminorm by
\[\|h_j\|_{H^{-1}(\mathcal X_{n,j})}:=
\sup\Bigg\{\frac{1}{n_j}\sum_{x\in\mathcal X_{n,j}}h_j(x)\varphi(x):\frac{1}{n_j}\sum_{x\in\mathcal X_{n,j}}\varphi(x)=0,\;|\varphi|_{H^1(\mathcal X_{n,j})}\leq1\Bigg\}.\]
It follows that the first contribution can be controlled as 
\[\big|\langle\widehat f_{\ell}, (\mathcal{L}^{\mathrm{wth}}-\sigma_{\eta}\Delta_{\mathcal{M}})f_{\ell}\rangle_{L^2(\mathcal{X}_n)}\big|\leq\|\widehat f_\ell\|_{H^1(\mathcal X_n)}\|(\mathcal{L}^{\mathrm{wth}}-\sigma_{\eta}\Delta_{\mathcal{M}})f_{\ell}\|_{{H^{-1}(\mathcal{X}_n)}}.\] Note that the right side of the quadratic energy is exactly \(\frac12
|g|_{H^1(\mathcal X_n)}^2\) for any \(g:\mathcal X_n\rightarrow\mathbb R\):
\[\langle\mathcal Lg,g\rangle_{L^2(\mathcal X_n)}=\frac{\vartheta_\varepsilon}{2n^2}\sum_{x,y\in\mathcal X_n}K_\epsilon^E(x,y)\bigl(g(x)-g(y)\bigr)^2=\frac{1}{2}|g|_{H^1(\mathcal X_n)}^2.\]
Applying to \(\widehat f_\ell\) and using \(\mathcal L\widehat f_\ell
=\widehat\lambda_\ell\widehat f_\ell\) yields \(|\widehat f_\ell|_{H^1(\mathcal X_n)}^2=2\widehat\lambda_\ell
\|\widehat f_\ell\|_{L^2(\mathcal X_n)}^2=2\widehat\lambda_\ell\). Therefore
\[\|\widehat f_\ell\|_{H^1(\mathcal X_n)}^2=1+2\widehat\lambda_\ell\quad \implies \quad \|\widehat f_\ell\|_{H^1(\mathcal X_n)}\leq C_\ell\Big(1+\sqrt{\widehat \lambda_\ell}\Big).\]
Combining with Proposition~\ref{prop:consistency of the within-manifold operator} establishes, with probability at least $1-CNn\epsilon_+^{-d}\exp(-cn\epsilon_+^{d+4})$,
\[\big|\langle\widehat f_{\ell}, (\mathcal{L}^{\mathrm{wth}}-\sigma_{\eta}\Delta_{\mathcal{M}})f_{\ell}\rangle_{L^2(\mathcal{X}_n)}\big|\leq C_\ell\sqrt{\lambda_\ell}B_\ell\epsilon_+^2\log\epsilon_+^{-1}.\]
For the second contribution, assume for the moment that \(\|\widehat f_\ell-f_\ell\|_{H^{1}(\mathcal X_n)}<\infty\). We establish the exact convergence rate in Lemma~\ref{lem:eigenfunction_convergence} below. Combining with Proposition~\ref{prop:between connections counts} gives
\[\|\widehat f_\ell-f_\ell\|_{H^1(\mathcal X_n)}\|\mathcal L^{\mathrm{btw}}f_\ell\|_{H^{-1}(\mathcal X_n)}\leq C_\ell B_\ell\epsilon_+^2,\] with probability at least $1-Cn\epsilon_+^{-d}\exp(-cn\epsilon_+^{d+4})$. For the last term, Proposition~\ref{prop:between-manifold-energy-bound} gives 
\[\big|\langle f_l,\mathcal{L}^\mathrm{btw}f_l\rangle_{L^2(\mathcal{X}_n)}\big|\leq C_\ell B_{\ell}^2 \epsilon_+^4.\]  
with probability at least \(1-Cn\exp\bigl(-cn\epsilon_+^{d+4}\bigr).\) All pieces together complete the proof.\end{proof}

The high-probability upper bound on the eigenvalues' convergence allows us to bound the eigenfunctions convergence. This is established in the next Lemma~\ref{lem:eigenfunction_convergence}. 
\begin{lemma}[Eigenfunction convergence]\label{lem:eigenfunction_convergence}  Let $(\widehat\lambda_\ell,\widehat f_\ell)$ be an \(L^2(\mathcal X_n)\)-normalized eigenpair of the graph Laplacian $\mathcal L$. Given the preliminary eigenvalue convergence established in Lemma~\ref{lem:preliminary_eigenvalue_convergence}, there exists some constant \(C_\ell>0\), depending only on the geometric, the density-class, the intersection parameters, the index \(\ell\), and the eigengap \(\gamma_\ell\), such that with probability at least \(1-CNn\epsilon_+^{-d}\exp(-cn\epsilon_+^{d+4})\), one has
\[\|\widehat f_\ell-f_\ell\|_{H^1(\mathcal X_n)}\leq C_\ell B_\ell
\epsilon_+^2\log\epsilon_+^{-1}.\]
Here \(B_\ell\) denotes the regularity constant defined in \eqref{eq:B_definition}.
\end{lemma}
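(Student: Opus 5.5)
The argument is a standard residual-plus-eigengap argument for the discrete operator, followed by an energy identity to upgrade the $L^2$ bound to $H^1$. Let $f_\ell$ be restricted to $\mathcal X_n$ and let $P$ be the $L^2(\mathcal X_n)$-orthogonal projection onto the span of the graph eigenvectors whose eigenvalues lie within $\gamma_\ell\sigma_\eta/2$ of $\sigma_\eta\lambda_\ell$. Write $g:=f_\ell-\langle f_\ell,\widehat f_\ell\rangle\widehat f_\ell$. After aligning the sign (or choosing the population eigenpair in $E_\ell$, which the statement allows), we have $\|\widehat f_\ell-f_\ell\|_{L^2(\mathcal X_n)}\lesssim\|g\|_{L^2(\mathcal X_n)}+\bigl|1-\|f_\ell\|_{L^2(\mathcal X_n)}\bigr|$. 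The second term is an empirical-norm fluctuation of order $(\log n/n)^{1/2}\ll\epsilon_+^2$, by Bernstein's inequality using $\|f_\ell\|_\infty\le B_\ell$.

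\textbf{Step 1 (residual).} Form the residual
\[
r:=(\mathcal L-\sigma_\eta\lambda_\ell)f_\ell=(\mathcal L^{\mathrm{wth}}-\sigma_\eta\Delta_{\mathcal M})f_\ell+\mathcal L^{\mathrm{btw}}f_\ell .
\]
Propositions~\ref{prop:consistency of the within-manifold operator} and \ref{prop:between connections counts} bound both pieces in the dual norm $H^{-1}(\mathcal X_n)$ by $C_\ell B_\ell\epsilon_+^2\log\epsilon_+^{-1}$ on the stated event. Here we use the $H^{-1}$ norm rather than $L^2$, because the between-manifold piece is large pointwise near intersections and only small in the dual norm.

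\textbf{Step 2 (eigengap).} Lemma~\ref{lem:preliminary_eigenvalue_convergence}, combined with the same argument applied to neighbouring indices, gives $|\widehat\lambda_k-\sigma_\eta\lambda_k|\ll\gamma_\ell$ for all $k$ up to $\ell+$multiplicity. Hence the graph spectrum has a gap of at least $\sigma_\eta\gamma_\ell/2$ around $\sigma_\eta\lambda_\ell$ outside the cluster, and we can expand $(I-P)f_\ell$ in the graph eigenbasis. Testing $r$ against $\psi:=(\mathcal L-\sigma_\eta\lambda_\ell)^{-1}(I-P)f_\ell$ gives
\[
\|(I-P)f_\ell\|_{L^2}^2=\langle r,\psi\rangle\le\|r\|_{H^{-1}}|\psi|_{H^1}.
\]
Spectrally, $|\psi|_{H^1}^2=2\sum_k\widehat\lambda_k c_k^2/(\widehat\lambda_k-\sigma_\eta\lambda_\ell)^2\lesssim\gamma_\ell^{-1}\|(I-P)f_\ell\|_{H^1}$-type quantities, which are controlled by $\gamma_\ell^{-1}\|(I-P)f_\ell\|_{L^2}$ up to the bounded ratio $\sup_{k}\widehat\lambda_k/(\widehat\lambda_k-\sigma_\eta\lambda_\ell)^2\lesssim\gamma_\ell^{-1}$ over high frequencies. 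This yields $\|(I-P)f_\ell\|_{L^2}\lesssim\gamma_\ell^{-1}\|r\|_{H^{-1}}$. When $\lambda_\ell$ is simple, $P$ has rank one and this is exactly $\|g\|$. Otherwise we choose $f_\ell\in E_\ell$ to be the population eigenfunction closest to $P$-aligned $\widehat f_\ell$, giving the existence statement.

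\textbf{Step 3 (upgrade to $H^1$).} Since $\langle\mathcal L u,u\rangle=\tfrac12|u|_{H^1}^2$, we have
\[
\tfrac12|\widehat f_\ell-f_\ell|_{H^1}^2=\langle\mathcal L(\widehat f_\ell-f_\ell),\widehat f_\ell-f_\ell\rangle=\langle\widehat\lambda_\ell\widehat f_\ell-\sigma_\eta\lambda_\ell f_\ell-r,\widehat f_\ell-f_\ell\rangle .
\]
We bound the right side by $|\widehat\lambda_\ell-\sigma_\eta\lambda_\ell|+C\|\widehat f_\ell-f_\ell\|_{L^2}+\|r\|_{H^{-1}}\,|\widehat f_\ell-f_\ell|_{H^1}$. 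The first two terms are $O(B_\ell\epsilon_+^2\log\epsilon_+^{-1})$ by Lemma~\ref{lem:preliminary_eigenvalue_convergence} and Step 2. Absorbing the last term via Young's inequality, and noting that the error terms enter linearly rather than squared, we obtain only $|\cdot|_{H^1}^2\lesssim\epsilon_+^2\log$, which is too weak.

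To get the sharper rate, rewrite $\widehat\lambda_\ell\widehat f_\ell-\sigma_\eta\lambda_\ell f_\ell=\widehat\lambda_\ell(\widehat f_\ell-f_\ell)+(\widehat\lambda_\ell-\sigma_\eta\lambda_\ell)f_\ell$. This gives
\[
|\cdot|_{H^1}^2\lesssim\|\widehat f_\ell-f_\ell\|_{L^2}^2+|\widehat\lambda_\ell-\sigma_\eta\lambda_\ell|\,\|\widehat f_\ell-f_\ell\|_{L^2}+\|r\|_{H^{-1}}|\cdot|_{H^1},
\]
and every term is quadratic in $\epsilon_+^2\log\epsilon_+^{-1}$, which gives the claim. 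A union bound over the events of the cited propositions and Lemma~\ref{lem:preliminary_eigenvalue_convergence} gives the stated probability.

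The main obstacle is Step 2. Controlling the resolvent in the weak $H^{-1}$ norm requires uniform separation of the entire discrete spectrum near $\sigma_\eta\lambda_\ell$, including spurious small eigenvalues created by between-manifold edges. I would handle this by first invoking Theorem~\ref{thm:early_eigenpair_convergence} for indices $1,\dots,\ell+1$ to pin the nearby spectrum, and then use the bound $\widehat\lambda_k/(\widehat\lambda_k-\sigma_\eta\lambda_\ell)^2\le C/\gamma_\ell$ for the tail of the spectrum.
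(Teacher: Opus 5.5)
Your argument is correct, but it is organized differently from the paper's.

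\textbf{The paper's route.} The paper never separates the $L^2$ and $H^1$ steps. It centers both functions, $f_\ell^\circ=f_\ell-\bar f_\ell$ and $h_\ell^\circ=\widehat f_\ell-f_\ell^\circ$. It then writes the error equation $(\mathcal L-\widehat\lambda_\ell)h_\ell^\circ=g_\ell^\circ-\sigma_\eta\lambda_\ell\bar f_\ell$ at the \emph{empirical} eigenvalue, with the eigenvalue error $(\widehat\lambda_\ell-\sigma_\eta\lambda_\ell)f_\ell^\circ$ placed inside the forcing $g_\ell^\circ$. Using the empirical gap $\widehat\gamma_\ell$ and a $\widehat\lambda_k$-weighted Cauchy--Schwarz, it gets $\widehat\gamma_\ell\|P_S^\perp h_\ell^\circ\|^2\lesssim\|g_\ell^\circ\|_{H^{-1}}|h_\ell^\circ|_{H^1}$. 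Substituting this into the energy identity $\tfrac12|h_\ell^\circ|_{H^1}^2=\widehat\lambda_\ell\|h_\ell^\circ\|^2+\langle g_\ell^\circ,h_\ell^\circ\rangle$ closes a single quadratic inequality for $|h_\ell^\circ|_{H^1}$, and the $L^2$ bound follows from it.

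\textbf{Your route.} You first prove a Davis--Kahan type $L^2$ bound with the resolvent taken at the \emph{population} value $\sigma_\eta\lambda_\ell$. Since $\langle r,\psi\rangle=\|(I-P)f_\ell\|^2$ and $|\psi|_{H^1}\lesssim\|(I-P)f_\ell\|$, you get $\|(I-P)f_\ell\|\lesssim\|r\|_{H^{-1}}$. This involves no eigenvalue error and no a priori control of the unknown error's $H^1$ norm. You then upgrade to $H^1$ through the splitting $\widehat\lambda_\ell e+(\widehat\lambda_\ell-\sigma_\eta\lambda_\ell)f_\ell$, where the preliminary eigenvalue rate enters only multiplied by $\|e\|_{L^2}$.

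\textbf{What each buys.} Your route is more transparent about where each ingredient is used. The gap outside the window is automatic from the definition of $P$, so spectral information is needed only to show $\operatorname{ran}P=\operatorname{span}\{\widehat f_\ell\}$. It also extends more naturally to repeated eigenvalues via the cluster projector. The paper's argument, built on $\widehat\gamma_\ell=\min_{k\neq\ell}|\widehat\lambda_k-\widehat\lambda_\ell|$, implicitly needs $\lambda_\ell$ simple. The paper's route gives a one-shot argument with explicit bookkeeping of the constant mode.

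\textbf{Details to tighten.}

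\emph{Constant mode.} If $H^{-1}$ is the dual of the seminorm on mean-zero functions, as in the paper's definition, then neither $\psi$ nor $e=\widehat f_\ell-f_\ell$ is mean zero. For instance, $\psi$ has the component $-\bar f_\ell/(\sigma_\eta\lambda_\ell)$ along the constant eigenvector. Split off the constant mode using $\langle r,\mathbf 1\rangle=-\sigma_\eta\lambda_\ell\bar f_\ell$ and a Bernstein bound $|\bar f_\ell|\lesssim\epsilon_+^2$. This is exactly what the paper's centering does, and it adds only lower-order terms.

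\emph{Spectral ratio.} The bound should read $\sup_k\widehat\lambda_k/(\widehat\lambda_k-\sigma_\eta\lambda_\ell)^2\le4(\lambda_\ell+\gamma_\ell/2)/(\sigma_\eta\gamma_\ell^2)$, over all $k$ outside the window, low frequencies as well as high. This gives $|\psi|_{H^1}^2\le C_{\ell,\gamma}\|(I-P)f_\ell\|^2$ directly. The phrase about ``$\|(I-P)f_\ell\|_{H^1}$-type quantities'' is unnecessary.

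\emph{Pinning neighbouring eigenvalues.} Theorem~\ref{thm:early_eigenpair_convergence} alone is not enough, since it only gives errors of order $\lambda_\ell$. Use Lemma~\ref{lem:preliminary_eigenvalue_convergence} at the indices adjacent to $\ell$, as you do in Step~2, together with the ordering of the spectrum.

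\emph{Bernstein threshold.} For $\bigl|\|f_\ell\|_{L^2(\mathcal X_n)}^2-1\bigr|$, take $t\asymp\sqrt n\,\epsilon_+^2$ rather than the level $(\log n/n)^{1/2}$. Then the failure probability $\exp(-cn\epsilon_+^4)$ fits the stated event.
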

\begin{proof}
For notational convenience, set
\(h_\ell:=\widehat f_\ell-f_\ell.\) Since \(\mathcal L1=0\), self-adjointness gives
\[\widehat\lambda_\ell
\langle\widehat f_\ell,1\rangle_{L^2(\mathcal X_n)}
=\langle\mathcal L\widehat f_\ell,1\rangle_{L^2(\mathcal X_n)}
=\langle\widehat f_\ell,\mathcal L1\rangle_{L^2(\mathcal X_n)}=0.\]
Thus \(\widehat f_\ell\) has empirical mean zero. For the population eigenfunction, define
\[\bar f_\ell:=\langle f_\ell,1\rangle_{L^2(\mathcal X_n)},\qquad
\delta_{\ell,n}:=\left|\|f_\ell\|_{L^2(\mathcal X_n)}^2-1\right|,\qquad f_\ell^\circ:=f_\ell-\bar f_\ell,\qquad
h_\ell^\circ:=\widehat f_\ell-f_\ell^\circ.\]
Then \(h_\ell^\circ\) has empirical mean zero and
\(h_\ell^\circ=h_\ell+\bar f_\ell\). Since the discrete \(H^1\)-seminorm is invariant under the addition of constants,
\(|h_\ell^\circ|_{H^1(\mathcal X_n)}=
|h_\ell|_{H^1(\mathcal X_n)}\). Moreover, by orthogonality to the constants,
\begin{equation}
\label{eq:demeaned-L2-decomposition}
\|h_\ell\|_{L^2(\mathcal X_n)}^2
=\|h_\ell^\circ\|_{L^2(\mathcal X_n)}^2+|\bar f_\ell|^2.
\end{equation}
Thus, the discrete \(H^1\)-norm is given by 
\[\|h_\ell\|_{H^1(\mathcal X_n)}^2=\|h_\ell\|_{L^2(\mathcal X_n)}^2+|h_\ell|_{H^1(\mathcal X_n)}^2=\|h_\ell^\circ\|_{L^2(\mathcal X_n)}^2+|\bar f_\ell|^2+|h_\ell|_{H^1(\mathcal X_n)}^2.\]
Since \(\Delta_{\mathcal M}f_\ell^\circ=\Delta_{\mathcal M}f_\ell=\lambda_\ell f_\ell
=\lambda_\ell f_\ell^\circ+
\lambda_\ell\bar f_\ell,\) the centered error satisfies
\begin{equation}
\label{eq:centered-discrete-eigenfunction-error-equation}
(\mathcal L-\widehat\lambda_\ell I)h_\ell^\circ=g_\ell^\circ-\sigma_\eta\lambda_\ell\bar f_\ell,
\end{equation}
where
\[g_\ell^\circ:=-(\mathcal L-\sigma_\eta\Delta_{\mathcal M})f_\ell^\circ
-(\sigma_\eta\lambda_\ell-\widehat\lambda_\ell)f_\ell^\circ.\]
The final term in
\eqref{eq:centered-discrete-eigenfunction-error-equation} is constant
and hence vanishes when paired with \(h_\ell^\circ\), as well as with every positive-eigenvalue eigenvector of \(\mathcal L\). Therefore,
\begin{equation}
\label{eq:centered-energy-identity}
\langle\mathcal Lh_\ell^\circ,h_\ell^\circ
\rangle_{L^2(\mathcal X_n)}-
\widehat\lambda_\ell\|h_\ell^\circ\|_{L^2(\mathcal X_n)}^2
=\langle g_\ell^\circ,h_\ell^\circ
\rangle_{L^2(\mathcal X_n)}. \end{equation}
By the symmetry of \(K_\varepsilon^E\),
\[\langle\mathcal Lh_\ell^\circ,h_\ell^\circ
\rangle_{L^2(\mathcal X_n)}
=\frac{\vartheta_\varepsilon}{2n^2}
\sum_{x,y\in\mathcal X_n}
K_\varepsilon^E(x,y)
\bigl(h_\ell^\circ(x)-h_\ell^\circ(y)\bigr)^2
=\frac12|h_\ell^\circ|_{H^1(\mathcal X_n)}^2.\]
Let
\(S:=\operatorname{span}\{\widehat f_\ell\}\) and let \(P_S^\perp\) denote the orthogonal projection onto \(S^\perp\). Let
\(\{(\widehat\lambda_k,\widehat f_k)\}_{k=1}^n\) be an
\(L^2(\mathcal X_n)\)-orthonormal eigenbasis of \(\mathcal L\), and
write
\[P_S^\perp h_\ell^\circ=
\sum_{k\neq\ell \mathrm{\ s.t.\ } \widehat\lambda_k>0}a_k\widehat f_k,\qquad a_k:=
\langle h_\ell^\circ,\widehat f_k
\rangle_{L^2(\mathcal X_n)},
\qquad b_k:=
\langle g_\ell^\circ,\widehat f_k
\rangle_{L^2(\mathcal X_n)}.\]
The zero mode does not appear because \(h_\ell^\circ\) is
empirically mean zero. Define the empirical eigengap by \(\widehat\gamma_\ell:=\min_{k\neq\ell}|\widehat\lambda_k-\widehat\lambda_\ell|\).
Taking the inner product of
\eqref{eq:centered-discrete-eigenfunction-error-equation} with
\(\widehat f_k\) gives
\[(\widehat\lambda_k-\widehat\lambda_\ell)a_k=
\langle g_\ell^\circ,\widehat f_k\rangle_{L^2(\mathcal X_n)}=b_k,
\qquad k\neq\ell,\quad \widehat\lambda_k>0.\]
The Parseval identity gives \(\|P_S^{\perp}h_\ell^{\circ}\|_{L^2(\mathcal X_n)}^2=\sum_{k\neq\ell}|a_k|^2\). By the Cauchy-Schwarz inequality, 
\begin{equation}\label{eq:projected-gap-duality}
\begin{aligned}
\widehat \gamma_\ell\|P_S^{\perp}h_\ell^\circ\|_{L^2(\mathcal X_n)}^2&=\sum_{k\neq\ell,\, \widehat\lambda_k>0}\widehat\gamma_\ell|a_k|^2\leq\sum_{k\neq\ell,\, \widehat\lambda_k>0}|\widehat \lambda_k-\widehat\lambda_\ell||a_k|^2=\sum_{k\neq\ell,\, \widehat\lambda_k>0}|b_k||a_k|\\&\leq\left(\sum_{k\neq\ell,\, \widehat\lambda_k>0}\frac{|b_k|^2}{\widehat\lambda_k}\right)^\frac{1}{2}\left(\sum_{k\neq\ell,\, \widehat\lambda_k>0}\widehat\lambda_k|a_k|^2\right)^{\frac{1}{2}}=C\|g_\ell^\circ\|_{H^{-1}(\mathcal X_n)}|h_\ell^\circ|_{H^1(\mathcal X_n)}.\end{aligned}\end{equation}
On the other hand, choose the sign of \(\widehat f_\ell\) such that \(\zeta_{\ell,n}:=\langle f_\ell,\widehat f_\ell\rangle_{L^2(\mathcal X_n)}=
\langle f_\ell^\circ,\widehat f_\ell
\rangle_{L^2(\mathcal X_n)}\geq0\). Since \(\|\widehat f_\ell\|_{L^2(\mathcal X_n)}=1\),
\[\|P_S^\perp h_\ell^\circ\|_{L^2(\mathcal X_n)}^2
=\|f_\ell^\circ\|_{L^2(\mathcal X_n)}^2-\zeta_{\ell,n}^2,\qquad \|h_\ell^\circ\|_{L^2(\mathcal X_n)}^2
=(1-\zeta_{\ell,n})^2+
\|P_S^\perp h_\ell^\circ\|_{L^2(\mathcal X_n)}^2.\]
Moreover, \(0\leq\zeta_{\ell,n}\leq\|f_\ell^\circ\|_{L^2(\mathcal X_n)}\), and hence
\[(1-\zeta_{\ell,n})^2
\leq2(1-\|f_\ell^\circ\|_{L^2(\mathcal X_n)})^2
+2(\|f_\ell^\circ\|_{L^2(\mathcal X_n)}-\zeta_{\ell,n})^2
\leq2(1-\|f_\ell^\circ\|_{L^2(\mathcal X_n)})^2
+2\|P_S^\perp h_\ell^\circ\|_{L^2(\mathcal X_n)}^2.\]
Since
\[|1-\|f_\ell^\circ\|_{L^2(\mathcal X_n)}^2|
\leq\delta_{\ell,n}+|\bar f_\ell|^2,
\qquad
|1-\|f_\ell^\circ\|_{L^2(\mathcal X_n)}|
\leq|1-\|f_\ell^\circ\|_{L^2(\mathcal X_n)}^2|,\]
it follows that
\begin{equation}
\label{eq:centered-L2-projection-bound}
\|h_\ell^\circ\|_{L^2(\mathcal X_n)}^2
\leq3\|P_S^\perp h_\ell^\circ\|_{L^2(\mathcal X_n)}^2
+C\delta_{\ell,n}^2+C|\bar f_\ell|^4.
\end{equation}
On the preliminary spectral-convergence event from Lemma~\ref{lem:preliminary_eigenvalue_convergence}, for \(n\) sufficiently large, the upper bound in \eqref{eq:prelimnimary-eigenvalue-convergence} converges to zero. Since $\gamma_\ell>0$ is fixed, we therefore have
\[|\widehat\lambda_k-\sigma_\eta\lambda_k|
\leq\frac{\sigma_\eta\gamma_\ell}{4}\] for every index $k$ entering the definition of
$\widehat\gamma_\ell$. Consequently,
\[|\widehat\lambda_k-\widehat\lambda_\ell|\geq\sigma_\eta|\lambda_k-\lambda_\ell|-|\widehat\lambda_k-\sigma_\eta\lambda_k|-|\widehat\lambda_\ell-\sigma_\eta\lambda_\ell|\geq\sigma_\eta\gamma_\ell-\frac{\sigma_\eta\gamma_\ell}{2}=\frac{\sigma_\eta\gamma_\ell}{2}.\]
Taking the minimum over $k\neq\ell$ and increasing $n$ further if necessary,
\[\widehat\gamma_\ell
\geq\frac{\sigma_\eta\gamma_\ell}{2}, \qquad \frac{\widehat\lambda_\ell}{\widehat\gamma_\ell}\leq\frac{\sigma_\eta\lambda_\ell
+
|\widehat\lambda_\ell-\sigma_\eta\lambda_\ell|}{\widehat\gamma_\ell}\leq\frac{2\sigma_\eta\lambda_\ell}{\frac{\sigma_\eta\gamma_\ell}{2}}
\leq4\frac{\lambda_\ell}{\gamma_\ell}.\]
Combining \eqref{eq:projected-gap-duality} and \eqref{eq:centered-L2-projection-bound} yields
\begin{align}
\label{eq:centered-shift-bound}
\widehat\lambda_\ell
\|h_\ell^\circ\|_{L^2(\mathcal X_n)}^2\leq
C\frac{\lambda_\ell}{\gamma_\ell}
\|g_\ell^\circ\|_{H^{-1}(\mathcal X_n)}|h_\ell^\circ|_{H^1(\mathcal X_n)}+C_\ell(\delta_{\ell,n}^2+|\bar f_\ell|^4).
\end{align}
Substituting \eqref{eq:centered-shift-bound} into
\eqref{eq:centered-energy-identity} gives
\[ \frac12|h_\ell^\circ|_{H^1(\mathcal X_n)}^2\leq
C\frac{\lambda_\ell}{\gamma_\ell}
\|g_\ell^\circ\|_{H^{-1}(\mathcal X_n)}
|h_\ell^\circ|_{H^1(\mathcal X_n)}+
C_\ell\bigl(\delta_{\ell,n}^2+|\bar f_\ell|^4\bigr).\] The elementary implication \(x^2\leq ax+b^2\Rightarrow x\leq a+b\)
therefore yields
\begin{equation}
\label{eq:centered-H1-seminorm-bound}
|h_\ell^\circ|_{H^1(\mathcal X_n)}
\leq C\frac{\lambda_\ell}{\gamma_\ell}
\|g_\ell^\circ\|_{H^{-1}(\mathcal X_n)}
+C_\ell\bigl(\delta_{\ell,n}+|\bar f_\ell|^2\bigr).\end{equation}
Equations \eqref{eq:projected-gap-duality},
\eqref{eq:centered-L2-projection-bound}, and
\eqref{eq:centered-H1-seminorm-bound} similarly imply
\[\|h_\ell^\circ\|_{L^2(\mathcal X_n)}
\leq C_{\ell,\gamma}
\|g_\ell^\circ\|_{H^{-1}(\mathcal X_n)}
+C_\ell\bigl(\delta_{\ell,n}+|\bar f_\ell|^2\bigr).\]
Together with \eqref{eq:demeaned-L2-decomposition} and
\(|h_\ell|_{H^1}=|h_\ell^\circ|_{H^1}\), this gives
\begin{equation}
\label{eq:full-H1-with-mean-remainder}
\|h_\ell\|_{H^1(\mathcal X_n)}
\leq C\frac{\lambda_\ell}{\gamma_\ell}
\|g_\ell^\circ\|_{H^{-1}(\mathcal X_n)}
+C_\ell\bigl(\delta_{\ell,n}+|\bar f_\ell|\bigr).\end{equation}
Since both \(\mathcal L\) and \(\Delta_{\mathcal M}\) annihilate
constants,
\((\mathcal L-\sigma_\eta \Delta_{\mathcal M})f_\ell^\circ
=(\mathcal L -\sigma_\eta\Delta_{\mathcal M})f_\ell.
\) Therefore,
\[\|g_\ell^\circ\|_{H^{-1}(\mathcal X_n)}\leq\|(\mathcal L-\sigma_\eta\Delta_{\mathcal M})f_\ell
\|_{H^{-1}(\mathcal X_n)}+
|\widehat\lambda_\ell-\sigma_\eta\lambda_\ell|
\|f_\ell^\circ\|_{H^{-1}(\mathcal X_n)}.\]
By the discrete Poincaré inequality and
\(\|f_\ell^\circ\|_{L^2(\mathcal X_n)}\leq1+\delta_{\ell,n}\),
\(\|f_\ell^\circ\|_{H^{-1}(\mathcal X_n)}\leq C.\) Hence \eqref{eq:prelimnimary-eigenvalue-convergence}
\[\|g_\ell^\circ\|_{H^{-1}(\mathcal X_n)}\leq C\left(\|(\mathcal L-\sigma_\eta\Delta_{\mathcal M})f_\ell
\|_{H^{-1}(\mathcal X_n)}+
|\widehat\lambda_\ell-\sigma_\eta\lambda_\ell|\right).\]
It remains to bound \(|\bar f_\ell|\) and \(\delta_{\ell,n}\).
Since population eigenfunction satisfies \(\int_{\mathcal M}f_\ell d\mu=0\), Bernstein's inequality gives, for any \(t>0\), with probability at least \(1-2\exp(-ct^2)\),
\[|\bar f_\ell|\leq C\left(\frac{t}{\sqrt{n}}+\frac{\|f_\ell\|_{L^\infty(\mathcal M)}t^2}{n}\right).\] Setting \(t=\sqrt{n}\epsilon_+^2\) gives, with probability at least \(1-2\exp(-cn\epsilon_+^4)\), 
\[|\bar f_\ell|\leq C_\ell(\epsilon_+^2+\epsilon_+^4)\leq C_\ell\epsilon_+^2.\]
Applying Bernstein's inequality once more gives, with probability at least \(1-2\exp(-ct^2)\), \[\delta_{\ell,n}=\left|\|f_\ell\|_{L^2(\mathcal X_n)}^2-1\right|\leq C\left(\frac{\|f_\ell\|_{L^\infty(\mathcal M)}t}{\sqrt{n}}+\frac{\|f_\ell\|_{L^\infty(\mathcal M)}^2t^2}{n}\right).\] Setting \(t=\sqrt{n}\epsilon_+^2\) gives, with probability at least \(1-2\exp(-cn\epsilon_+^4)\), \(\delta_{\ell,n}\leq C_\ell\epsilon_+^2\).
Therefore, both remainder terms can be absorbed into the eigenvalue convergence \eqref{eq:prelimnimary-eigenvalue-convergence}. 

To complete the proof, it suffices to control \(\|(\mathcal L
-\sigma_\eta\Delta_{\mathcal M})f_\ell
\|_{H^{-1}(\mathcal X_n)}\). Decompose it by 
\[\|(\mathcal L
-\sigma_\eta\Delta_{\mathcal M})f_\ell\|_{H^{-1}(\mathcal X_n)}\leq{}
\|(\mathcal L^{\mathrm{wth}}
-\sigma_\eta\Delta_{\mathcal M})f_\ell
\|_{H^{-1}(\mathcal X_n)}+
\|\mathcal L^{\mathrm{btw}}f_\ell
\|_{H^{-1}(\mathcal X_n)}.\] Proposition~\ref{prop:consistency of the within-manifold operator} and \ref{prop:between connections counts} together yield, with probability at least \(1-CNn\epsilon_+^{-d}\exp(-cn\epsilon_+^{d+4})\), 
\[\|(\mathcal L
-\sigma_\eta\Delta_{\mathcal M})f_\ell\|_{H^{-1}(\mathcal X_n)}\leq{}C_\ell B_\ell\epsilon_+^2\log\epsilon_+^{-1}.\]
Combining the eigenvalue estimates from Lemma~\ref{lem:preliminary_eigenvalue_convergence} completes the proof.
\end{proof}

\begin{remark}
We can now revisit the proof of Lemma~\ref{lem:preliminary_eigenvalue_convergence}. The second term is now well controlled without imposing the assumption of the eigenfunction convergence. By Lemma~\ref{lem:eigenfunction_convergence}, with probability at least \(1-CNn\epsilon_+^{-d}\exp(-cn\epsilon_+^{d+4})\),
\[\big|\langle\widehat f_{\ell}-f_{\ell}, \mathcal{L}^{\mathrm{btw}}f_{\ell}\rangle_{L^2(\mathcal{X}_n)}\big|\leq\|\widehat f_\ell-f_\ell\|_{H^1(\mathcal X_n)}\|\mathcal L^{\mathrm{btw}}f_\ell\|_{H^{-1}(\mathcal X_n)}\leq C_\ell B_\ell\epsilon_+^4\log\epsilon_+^{-1},\] which is strictly smaller in \(\epsilon_+\).
The rest of the proof remains unchanged. \end{remark}

\subsection{Improved Eigenpair Convergence}\label{ssec:imporved_convergence} In this section, we improve the eigenpair estimate from Section~\ref{ssec:preliminary_eigenpari_convergence}. The analysis takes the same procedure but with one more decomposition
\begin{equation}\label{eq:improved_decomposition}
\begin{aligned}|\hat\lambda_{\ell}-\sigma_{\eta}\lambda_{\ell}|&\leq C_\ell\big(\big|\langle\hat f_{\ell}-f_{\ell}, (\mathcal{L}^{\mathrm{wth}}-\sigma_{\eta}\Delta_{\mathcal{M}})f_{\ell}\rangle_{L^2(\mathcal{X}_n)}\big|+\big|\langle f_{\ell}, (\mathcal{L}^{\mathrm{wth}}-\sigma_{\eta}\Delta_{\mathcal{M}})f_{\ell}\rangle_{L^2(\mathcal{X}_n)}\big|\\&\quad+\big|\langle\hat f_{\ell}-f_{\ell}, \mathcal{L}^{\mathrm{btw}}f_{\ell}\rangle_{L^2(\mathcal{X}_n)}\big|+\big|\langle f_{\ell}, \mathcal{L}^{\mathrm{btw}}f_{\ell} \rangle_{L^2(\mathcal{X}_n)}\big|\big).\end{aligned} \end{equation}
All terms on the right except the second has been controlled in Section~\ref{ssec:preliminary_eigenpari_convergence}. We begin with bounding the discrepancy between the expected within-manifold graph energy and its continuum counterpart: 
\[\text{Bias}_{\mathrm{wth}}\coloneqq\left|\mathbb{E}\langle f_{\ell}, (\sigma_{\eta}\Delta_{\mathcal{M}}-\mathcal{L}^{\mathrm{wth}})f_{\ell}\rangle_{L^2(\mathcal{X}_n)}\right|=\left|\sigma_{\eta}\mathbb{E}\langle f_{\ell},\Delta_{\mathcal{M}}f_{\ell}\rangle_{L^2(\mathcal{X}_n)}-\mathbb{E}\langle f_{\ell},\mathcal{L}^{\mathrm{wth}}f_{\ell}\rangle_{L^2(\mathcal{X}_n)}\right|.\]
By the definition of $\Delta_{\mathcal{M}}$, the continuum term admits the decomposition:
{\small\begin{equation}
\begin{aligned}
\label{eq:laplace-beltrami decomposition}
\sigma_{\eta}\mathbb{E}\langle f_{\ell},\Delta_{\mathcal{M}}f_{\ell}\rangle_{L^2(\mathcal{X}_n)}&=-\sigma_{\eta}\sum_{j=1}^Nw_j\int_{\mathcal{M}_j}f_{\ell}(w_j\Delta_{\mathcal{M}_j}f_{\ell})\rho_j\,d\mathrm{vol}_g\\&\hspace{-1.35cm}=-\sigma_{\eta}\sum_{j=1}^Nw_j^2\int_{\mathcal{M}_j}f_{\ell}\langle\nabla_g\rho_j,\nabla_gf_{\ell}\rangle\rho_j\,d\mathrm{vol}_g-\frac{\sigma_{\eta}}{2}\sum_{j=1}^Nw_j^2\int_{\mathcal{M}_j}f_{\ell}\rho_j^2\Delta_{\mathcal{M}_j}f_{\ell}\,d\mathrm{vol}_g.\end{aligned}\end{equation}} 
We replace the ambient Euclidean kernel $K_{\epsilon}^E$ by its geodesic counterpart $K_{\epsilon}^g$. The discrepancy introduced by this replacement will be collected into a geometric remainder and estimated separately. For $x,y\in\mathcal X_{n,j}$, expand $f_{\ell}(y)$ along the geodesic from $x$ to $y$, \begin{equation}\label{eq:f_ell expansion}
f_{\ell}(y)-f_{\ell}(x)=\langle\nabla_g f_{\ell}(x),\log_x(y)\rangle_g+\frac{1}{2}\langle\nabla_g^2 f_{\ell}(x)\log_x(y),\log_x(y)\rangle_g+\mathcal R_f(x,y),
\end{equation} where 
\[\mathcal R_f(x,y)=\int_{0}^1\frac{(1-t)^2}{2}\nabla_g^3f_{\ell}(\gamma_{xy}(t))[\log_x(y), \log_x(y), \log_x(y)]\,dt\] denotes the exact third-order Taylor remainder satisfying $|\mathcal R_f(x,y)|\le C\,d_g(x,y)^3$. Here $\gamma_{xy}$ denotes the geodesic from $x$ to $y$. The quadratic form then admits the decomposition 
\begin{equation}\label{eq:within decomposition}
U_{\mathrm{wth}}=\langle f_{\ell}, \mathcal L^{\mathrm{wth}}f_{\ell}\rangle_{L^2(\mathcal X_n)}=U_{\mathrm{drift}}+U_{\mathrm{diff}}+U_{\mathrm{rem}}+U_{\mathrm{geo}}.\end{equation} The two leading contribution, and the two remainders are defined respectively as
\begin{equation}\label{eq:linear term}
U_{\mathrm{drift}}:=\big\langle f_\ell, \mathcal{L}^{\mathrm{drift}} f_\ell \big\rangle_{L^2(\mathcal X_n)}=\frac{\vartheta_\varepsilon}{n^2}\sum_{j=1}^N\sum_{x,y\in \mathcal{X}_{n,j}}K_{\epsilon}^g(x,y)f_\ell(x)\langle\nabla_g f_{\ell}(x),\log_x(y)\rangle_g.\end{equation}
\begin{equation}\label{eq:quadratic term}
U_{\mathrm{diff}}:=\big\langle f_\ell, \mathcal{L}^{\mathrm{diff}} f_\ell \big\rangle_{L^2(\mathcal X_n)}=\frac{\vartheta_\varepsilon}{2n^2}\sum_{j=1}^N\sum_{x,y\in \mathcal{X}_{n,j}}K_{\epsilon}^g(x,y)f_\ell(x)\langle \nabla_g^2 f_{\ell}(x)\log_x(y),\log_x(y)\rangle_g.\end{equation}
\begin{equation}\label{eq:Taylor remainder}
U_{\mathrm{rem}}:=\big\langle f_\ell, \mathcal{L}^{\mathrm{rem}} f_\ell \big\rangle_{L^2(\mathcal X_n)}=\frac{\vartheta_\varepsilon}{n^2}\sum_{j=1}^N\sum_{x,y\in \mathcal{X}_{n,j}}K_{\epsilon}^g(x,y)f_\ell(x)\mathcal{R}_f(x,y).\end{equation}
\begin{equation}\label{eq:geometric correction}
U_{\mathrm{geo}}:=\langle f_{\ell}, \mathcal L^{\mathrm{geo}}f_{\ell}\rangle_{L^2(\mathcal X_n)}=\frac{\vartheta_\varepsilon}{n^2}\sum_{j=1}^N\sum_{x,y\in\mathcal X_{n,j}}\big(K_{\epsilon}^E(x,y)-K_{\epsilon}^g(x,y)\big)f_{\ell}(x)(f_{\ell}(y)-f_{\ell}(x)).\end{equation}
The expected drift contribution with respect to the sampling measure is
\[\mathbb EU_{\mathrm{wth}}=\vartheta_\varepsilon\sum_{j=1}^Nw_j^2\int_{\mathcal M_j}\int_{\mathcal M_j}K_{\epsilon}^g(x,y)f_{\ell}(x)\langle\nabla_gf_{\ell}(x),\log_x(y)\rangle_g\rho_j(x)\rho_j(y)d\mathrm{vol}_g(y)d\mathrm{vol}_g(x).\] 
Passing to normal coordinates with $v\in B(0,1)\subset T_x\mathcal M_j$ and  $y=\exp_x(\epsilon_+v)$, one has 
{\small\begin{equation}\label{eq:expected drift}
\mathbb E\,U_{\mathrm{drift}}=\vartheta_\varepsilon\sum_{j=1}^Nw_j^2\int_{\mathcal M_j}f_{\ell}(x)\rho_j(x)\int_{B(0,1)}K_{\epsilon}^{\mathcal N}(v)\langle\nabla_g f_{\ell}(x),\epsilon_+v\rangle_g\rho_j(\exp_x(\epsilon_+v))J_x(\epsilon_+v)\epsilon_+^{d}dvd\mathrm{vol}_g.\end{equation}} 
The remaining terms have analogous normal-coordinate representations
{\small\begin{equation}\label{eq:expected diffusion}
\mathbb{E}\,U_{\mathrm{diff}}=\frac{\vartheta_\varepsilon}{2}\sum_{j=1}^Nw_j^2\int_{\mathcal M_j}f_{\ell}(x)\rho_j(x)\int_{B(0,1)}K_\epsilon^{\mathcal N}(v)\langle\nabla_g^2f_{\ell}(x)\,\epsilon_+v,\epsilon_+v\rangle_g\,\rho_j(\exp_x(\epsilon_+v))J_x(\epsilon_+v)\epsilon_+^{d}dvd\mathrm{vol}_g, \end{equation}}
{\small\begin{equation}\label{eq:expected Taylor remainder}\mathbb{E}\,U_{\mathrm{rem}}=\vartheta_\varepsilon \sum_{j=1}^Nw_j^2\int_{\mathcal M_j}f_{\ell}(x)\rho_j(x)\int_{B(0,1)}K_\epsilon^{\mathcal N}(v)\mathcal R_f(x,\exp_x(\epsilon_+v))\, \rho_j(\exp_x(\epsilon_+v))J_x(\epsilon_+v)\epsilon_+^{d}dvd\mathrm{vol}_g,\end{equation}} {\small\begin{equation}\begin{aligned}\label{eq:expected geometric correction} \mathbb{E}\,U_{\mathrm{rem}}=\vartheta_\varepsilon \sum_{j=1}^Nw_j^2\int_{\mathcal M_j}f_{\ell}(x)\rho_j(x)\times&\int_{B(0,1)}[K_\epsilon^{E}(x,\exp_x(\epsilon_+v))-K_\epsilon^{g}(x,\exp_x(\epsilon_+v))]\\&(f_{\ell}(\exp_x(\epsilon_+v))-f_{\ell}(x))\, \rho_j(\exp_x(\epsilon_+v))J_x(\epsilon_+v)\epsilon_+^dd\mathrm{vol}_g.\end{aligned}\end{equation}} Throughout these expressions, the density is expanded in normal coordinates according to \[\rho_j(\exp_x(\epsilon_+v))=\rho_j(x)+\epsilon_+\langle\nabla_g \rho_j(x),v\rangle_g+\mathcal R_{\rho}(x,v),\qquad |\mathcal R_{\rho}(x,v)|\le C\epsilon_+^2\|v\|^2.\] The Jacobian determinant of the exponential map is denoted by $J_x(\epsilon_+v)$ which satisfies  
\[
J_x(\epsilon_+v)=1+q_x(\epsilon_+v)+\mathcal R_{J}(x,v), \qquad |\mathcal R_{J}(x,v)|\le C\epsilon_+^3\|v\|^3,\] where $q_x(\cdot)$ is a quadratic polynomial whose coefficients depend smoothly on the local curvature. Multiplying the two expansions yields \begin{equation}\label{eq:rho_J_combined} \rho_j(\exp_x(\epsilon_+v))J_x(\epsilon_+v)=\rho_j(x)+\epsilon_+\langle\nabla_g\rho_j(x),v\rangle_g+Q_{\rho,J}(x,v;\epsilon_+)+T_{\rho,J}(x,v;\epsilon_+)+\mathcal O(\epsilon_+^4\|v\|^4),\end{equation}
 where \begin{equation}\label{eq:quadratic contribution}
Q_{\rho,J}(x,v;\epsilon_+):=\rho_j(x)q_x(\epsilon_+v)+ \mathcal R_{\rho}(x,v)\end{equation} collects quadratic-order contribution, and 
\begin{equation}\label{eq:cubic contribution}T_{\rho,J}(x,v;\epsilon_+)=\rho_j(x)\mathcal R_J(x,v)+\epsilon_+ \langle\nabla_g\rho_j(x),v\rangle_gq_x(\epsilon_+v)\end{equation} collects cubic-order contribution. Uniformly over admissible manifolds and density classes \begin{equation}\label{eq:Q_T_bound}
|Q_{\rho,J}(x,v;\epsilon_+)|\leq C\epsilon_+^2\|v\|^2, \qquad |T_{\rho,J}(x,v;\epsilon_+)|\leq C\epsilon_+^3\|v\|^3.
\end{equation}
The remainder of the proof identifies the leading-order contributions and controls all higher-order errors. The drift term \eqref{eq:expected drift} recovers the first continuum contribution in the Laplace-Beltrami decomposition~\eqref{eq:laplace-beltrami decomposition}, which is established in Lemma~\ref{lem:linear Taylor term}. Similarly, the diffusion term~\eqref{eq:expected diffusion} recovers the second continuum contribution in \eqref{eq:laplace-beltrami decomposition}, as shown in Lemma~\ref{lem:quadratic Taylor term}. The Taylor remainder \eqref{eq:expected Taylor remainder} and the geometric correction \eqref{eq:expected geometric correction} are shown to be of higher-order errors $\mathcal{O}(\epsilon_+^2)$ in Lemmas~\ref{lem:taylor remainder R} and~\ref{lem:geometric correction R_1}, respectively. Combining these estimates yield Proposition~\ref{prop:bias of the graph laplacian}, establishing the bias bound for the within-manifold graph Laplacian and its convergence to the tensorized Laplace--Beltrami operator.


\begin{lemma}[Drift contribution] \label{lem:linear Taylor term} The expected drift contribution in \eqref{eq:expected drift} satisfies \begin{align*}
\left|\mathbb{E}\big\langle f_\ell, \mathcal{L}^{\mathrm{drift}} f_\ell \big\rangle_{L^2(\mathcal{X}_n)}-\sigma_{\eta}\sum_{j=1}^Nw_j^2\int_{\mathcal{M}_j}f_{\ell}\rho_j\langle\nabla_g\rho_j,\nabla_g f_{\ell}\rangle d\mathrm{vol}_g\right|\leq C_\ell B_{\ell}^2\epsilon_+^2
\end{align*} where $B_{\ell}$ is defined in \eqref{eq:B_definition} and 
$C_\ell$ depends only on the geometric and density regularity, the index \(\ell\), the kernel profile $\eta$, the fixed annulus ratio $\epsilon_-/\epsilon_+$, and the mixture weights. \end{lemma}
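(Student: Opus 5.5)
Starting from the normal-coordinate representation \eqref{eq:expected drift}, I will substitute the combined expansion \eqref{eq:rho_J_combined} of $\rho_j(\exp_x(\epsilon_+v))J_x(\epsilon_+v)$ into the inner $v$-integral and sort the result by parity in $v$. The kernel $K_\epsilon^{\mathcal N}(v)=\eta_1(\|v\|)-\eta_{\epsilon_-/\epsilon_+}(\|v\|)$ is radial. Hence every monomial of odd degree in $v$ integrates to zero over $B(0,1)$.

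The integrand carries the factor $\langle\nabla_g f_\ell(x),\epsilon_+v\rangle_g$, which is linear in $v$. So the zeroth-order term $\rho_j(x)$ and the quadratic term $Q_{\rho,J}$ contribute nothing at leading order. For $Q_{\rho,J}$, the piece $\rho_j(x)q_x(\epsilon_+v)$ is an even quadratic form and vanishes exactly. The density remainder $\mathcal R_\rho$, however, is not purely even, so I will expand it one order further. This uses $\rho_j\in C^{2,\alpha}$: I write $\mathcal R_\rho=\tfrac12\epsilon_+^2\nabla_g^2\rho_j(x)[v,v]+O(\epsilon_+^{2+\alpha}\|v\|^{2+\alpha})$. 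The even Hessian part then cancels, and the leftover bound is at least as good as the cubic one.

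The main term is the first-order piece $\epsilon_+\langle\nabla_g\rho_j(x),v\rangle_g$. It yields
\[\vartheta_\varepsilon\epsilon_+^{d+2}\int_{B(0,1)}K_\epsilon^{\mathcal N}(v)\langle\nabla_gf_\ell(x),v\rangle_g\langle\nabla_g\rho_j(x),v\rangle_g\,dv.\]
By radial symmetry this equals $\vartheta_\varepsilon\epsilon_+^{d+2}\langle\nabla_g f_\ell,\nabla_g\rho_j\rangle_g\int K_\epsilon^{\mathcal N}(v)v_1^2\,dv$. Next I compute the moment exactly: $\int_{B(0,1)}(\eta_1-\eta_{r})(\|v\|)v_1^2\,dv=\sigma_\eta(1-r^{d+2})$, where $r=\epsilon_-/\epsilon_+$, using the scaling $\int\eta_r(\|v\|)v_1^2dv=r^{d+2}\sigma_\eta$. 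Since $\vartheta_\varepsilon\epsilon_+^{d+2}(1-r^{d+2})=1$ by definition, the normalization cancels exactly. Integrating against $w_j^2f_\ell\rho_j\,d\mathrm{vol}_g$ then recovers the target continuum term with no residual constant. For a component with $d_j<d$, the same computation runs with $d_j$ in place of $d$. I expect the normalization to be handled either componentwise or by absorbing the resulting constant. I will take it componentwise, consistent with the convention that $\sigma_\eta$ is read as $\sigma_{\eta,d_j}$.

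The remaining cubic-order contributions come from $T_{\rho,J}$ and the $O(\epsilon_+^4\|v\|^4)$ tail. Each is paired with the factor $\epsilon_+\|v\|\,|\nabla_gf_\ell|$, giving integrands of size $\lesssim\epsilon_+^{4}\|v\|^4 B_\ell$. After multiplying by $\vartheta_\varepsilon\epsilon_+^d\asymp\epsilon_+^{-2}$, where the constant depends on the fixed ratio $r<1$, these contribute $O(\epsilon_+^2)$. The outer integral $\int|f_\ell|\rho_j\,d\mathrm{vol}_g\le B_\ell$ supplies the second factor of $B_\ell$. The constants in $q_x$, $\mathcal R_J$ and $\nabla_g^2\rho_j$ are uniform over $\mathbf M$ and $\mathcal P^{(s)}_{\mathcal M}$. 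Summing over $j$ with weights $w_j^2\le1$ gives the claimed bound $C_\ell B_\ell^2\epsilon_+^2$.

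The step I expect to be the main obstacle is the bookkeeping for the non-even part of $\mathcal R_\rho$. Bounding it only by \eqref{eq:Q_T_bound} would leave an $O(\epsilon_+)$ error. The fix is the second-order Taylor expansion of $\rho_j$ with a Hölder remainder described above. If that remainder only gives $\epsilon_+^{1+\alpha}$, I would instead invoke $\rho_j\in C^s$ with $s\ge2$ together with a third derivative where available, or accept the slightly weaker exponent.
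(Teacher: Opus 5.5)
There is a genuine gap, at exactly the step you flagged. The lemma is meant for densities that are only $C^2$ (with an extra Hölder exponent $\alpha>0$ in Theorem~\ref{thm:spectral convergence upper bound}, and $\alpha<1$ is allowed). Your expansion $\mathcal R_\rho=\tfrac12\epsilon_+^2\nabla_g^2\rho_j(x)[v,v]+O(\epsilon_+^{2+\alpha}\|v\|^{2+\alpha})$ is correct. However, the Hölder remainder gets multiplied by $\langle\nabla_gf_\ell(x),\epsilon_+v\rangle_g$, by $\epsilon_+^{d}$, and by $\vartheta_\varepsilon\asymp\epsilon_+^{-(d+2)}$, so it contributes $O(B_\ell^2\epsilon_+^{1+\alpha})$, not $O(B_\ell^2\epsilon_+^2)$. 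Your sentence ``the leftover bound is at least as good as the cubic one'' is therefore false unless $\alpha=1$.

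Neither fallback closes this. No third derivative of $\rho_j$ is available under the hypotheses, and accepting $\epsilon_+^{1+\alpha}$ does not prove the stated lemma; it would also weaken Proposition~\ref{prop:bias of the graph laplacian} and Lemma~\ref{lem:imporved_eigenvalue_convergence}. More fundamentally, no estimate of the inner $v$-integral that is uniform in $x$ can do better. For $C^{2,\alpha}$ functions, the odd part of $v\mapsto\rho_j(\exp_x(\epsilon_+v))$ beyond the linear term is sharply of order $\epsilon_+^{2+\alpha}$. In one dimension, take $\rho_j''(y)=|y|^\alpha\operatorname{sgn}(y)$ near $x=0$.

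The missing idea is to get the extra factor of $\epsilon_+$ from the outer $x$-integral, moving the cancellation onto the smooth factor $f_\ell\rho_j\langle\nabla_gf_\ell,v\rangle_g$. The paper writes $\mathcal R_\rho$ in integral form, so the Hessian factor $\mathcal D_t(x,v)$ is evaluated at the intermediate point $\exp_x(t\epsilon_+v)$. It then symmetrizes each slice $\mathcal I_t$ with the involution $\Phi_t(x,v)=(\exp_x(2t\epsilon_+v),-d(\exp_x)_{2t\epsilon_+v}(v))$, i.e.\ geodesic flow followed by velocity reversal. This map preserves the Liouville measure $d\mathrm{vol}_g(x)\,dv$ on $\{\|v\|\le1\}$ and the radial kernel. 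Since $\exp_{\bar x}(t\epsilon_+\bar v)=\exp_x(t\epsilon_+v)$ with reversed velocity, it also leaves $\mathcal D_t$ invariant.

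After averaging the two representations, $\mathcal D_t$ multiplies $f_\ell(x)\rho_j(x)\langle\nabla_gf_\ell(x),v\rangle_g+f_\ell(\bar x)\rho_j(\bar x)\langle\nabla_gf_\ell(\bar x),\bar v\rangle_g$. This sum is $O(\epsilon_+\|v\|^2)$ using only bounds on $\nabla_g^2f_\ell$ and $\nabla_g\rho_j$. Hence $|\mathcal I_t|\le CB_\ell^2\epsilon_+^2$, with only $\|\nabla_g^2\rho_j\|_{L^\infty}$ entering.

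The rest of your proposal coincides with the paper's argument: the zeroth-order and $q_x$ parity cancellations, the exact identification of the first-order term via $\vartheta_\varepsilon\epsilon_+^{d+2}(1-(\epsilon_-/\epsilon_+)^{d+2})=1$, and the cubic and quartic bounds. If you were allowed $\rho_j\in C^{2,1}$ or $C^3$, your pointwise parity argument would be a valid and simpler substitute for the symmetrization.
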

\begin{proof} Substituting the expansion \eqref{eq:rho_J_combined} into \eqref{eq:expected drift}, we decompose the drift contribution according to the order in $\epsilon_+$. The zeroth-order term is \[\vartheta_\varepsilon\sum_{j=1}^Nw_j^2\int_{\mathcal M_j}f_{\ell}(x)\rho_j^2(x)\int_{B(0,1)} K_{\epsilon}^{\mathcal N}(v)\langle\nabla_g f_{\ell}(x),\epsilon_+v\rangle_g\,\epsilon_+^{d}dvd\mathrm{vol}_g(x).\]
Since $K_{\epsilon}^{\mathcal N}$ is radial, it is even in $v$. On the other hand, $\langle\nabla_g f_{\ell}(x),\epsilon_+v\rangle_g$ is odd in $v$. Therefore, the inner integral is odd and so vanishes over the symmetric ball $B(0,1)$. 

For the first-order term, the definition of $K_{\epsilon}^{\mathcal N}(v)$, $\sigma_{\eta}$, and the change of variable yield \begin{align*}
&\int_{B_0(1)}K_{\epsilon}^{\mathcal N}(v)\langle\nabla_g f_{\ell}(x),\epsilon_+v\rangle_g\langle\nabla_g\rho_j(x),\epsilon_+v\rangle_g\epsilon_+^{d}dv\\=\;\;&\epsilon_+^{d+2}\,\frac{\langle\nabla_gf_{\ell}(x),\nabla_g\rho_j(x)\rangle_g}{d}\left[\int_{B(0,1)}\eta_1(\|v\|)\|v\|^2dv-\int_{B(0,\frac{\epsilon_-}{\epsilon_+})}\eta_1\left(\frac{\epsilon_+}{\epsilon_-}\|v\|\right)\|v\|^2dv\right]\\=\;\;&\epsilon_+^{d+2}\,\frac{\langle\nabla_gf_{\ell}(x),\nabla_g\rho_j(x)\rangle_g}{d}\left[1-\left(\frac{\epsilon_-}{\epsilon_+}\right)^{d+2}\right]\int_{B(0,1)}\eta_1(\|v\|)\|v\|^2dv\\=\;\;&\sigma_{\eta}\langle\nabla_gf_{\ell}(x),\nabla_g\rho_j(x)\rangle_g(\epsilon_+^{d+2}-\epsilon_-^{d+2}). \end{align*} Therefore, using the definition of $\vartheta_\varepsilon$
\begin{align*}
\vartheta_\varepsilon\sum_{j=1}^N w_j^2\int_{\mathcal M_j}f_{\ell}(x)\rho_j(x)\int_{B(0,1)}&K_{\epsilon}^{\mathcal N}(v)\langle\nabla_g f_{\ell}(x),\epsilon_+v\rangle_g\langle\nabla_g\rho_j(x),\epsilon_+v\rangle_g\epsilon_+^ddvd\mathrm{vol}_g(x)\\&=\sigma_{\eta}\sum_{j=1}^Kw_j^2\int_{\mathcal M_j}f_{\ell}(x)\langle\nabla_g f_{\ell}(x), \nabla_g\rho_j(x)\rangle_g\rho_j(x)d\mathrm{vol}_g(x).\end{align*}
Hence, the first-order drift contribution exactly matches the first continuum term in \eqref{eq:laplace-beltrami decomposition}. 

It remains to control the higher-order contributions. We begin with the quadratic term $Q_{\rho,J}$. The part involving $q_x$ vanishes by symmetry: $q_x(\epsilon_+v)$ and $K_{\epsilon}^{\mathcal N}$ are even in $v$, while $\langle\nabla_gf_{\ell}(x), \epsilon_+v\rangle_g$ is odd. Hence the corresponding inner integral is zero. The remaining part of $Q_{\rho,J}$ involves the density Taylor remainder $\mathcal R_\rho$. Since we only assume \(\rho_j\in C^2\), we use the integral remainder formula
\[\mathcal R_{\rho}(x,v)=\epsilon_+^2
\int_0^1(1-t)\nabla_g^2\rho_j(\exp_x(t\epsilon_+v))
\left[d(\exp_x)_{t\epsilon_+v}(v),d(\exp_x)_{t\epsilon_+v}(v)\right]\,dt.\] For each $t\in[0,1]$, define the Hessian factor \[\mathcal D_t(x,v):=\nabla_g^2\rho_j (\exp_x(t\epsilon_+v))\left[d(\exp_x)_ {t\epsilon_+v}(v),d(\exp_x)_{t\epsilon_+v}(v)\right].\] The corresponding contribution is \begin{align*}
\mathcal I_t:=&\vartheta_\varepsilon
\sum_{j=1}^Nw_j^2\epsilon_+^{d+2}\int_{\mathcal M_j}f_\ell(x)\rho_j(x)
\int_{B(0,1)}K_\epsilon^{\mathcal N}(v)\langle \nabla_g f_\ell(x),\epsilon_+v\rangle_g\, \mathcal D_t(x,v)
\,dv\,d\mathrm{vol}_g(x).\end{align*}
A direct estimate would only give a $\mathcal O(\epsilon_+)$ bound. To recover the desired \(\mathcal O(\epsilon_+^2)\) rate, we use the geodesic symmetrization argument from Section \ref{sec:symmetrization argument}, which yields $|\mathcal I_t|\le CB_{\ell}^2\epsilon_+^2$ uniformly for all $t\in[0,1]$. Consequently, {\small\[
\left|\vartheta_\varepsilon \sum_{j=1}^Nw_j^2\int_{\mathcal M_j}f_{\ell}\rho_j\int_{B(0,1)}K_{\epsilon}^{\mathcal N}(v)\langle\nabla_g f_{\ell},\epsilon_+v\rangle_g\mathcal R_\rho(x,v)\epsilon_+^ddvd\mathrm{vol}_g\right|=\left|\int_{0}^1(1-t)\mathcal I_tdt\right|\leq C_\ell B_{\ell}^2\epsilon_+^2.\]} Combining this with the symmetry cancellation of the \(q_x\)-term shows that the full quadratic contribution \(Q_{\rho,J}\) contributes only an \(\mathcal O(B_{\ell}^2\epsilon_+^2)\) error.

Finally, we control the cubic contribution $T_{\rho,J}$. Since $|\mathcal R_J(x,v)|\leq C\epsilon_+^3\|v\|^3$, we have \begin{align*}
&\left|\vartheta_\varepsilon\sum_{j=1}^Nw_j^2\int_{\mathcal M_j}f_{\ell}(x)\rho_j(x)\int_{B(0,1)}K_{\epsilon}^{\mathcal N}(v)\langle\nabla_g f_{\ell}(x),\epsilon_+v\rangle_g\rho_j\mathcal R_J(x,v)\epsilon_+^{d}\,dv\,d\mathrm{vol}_g(x)\right|\\\;\;\le& C_\ell\vartheta_\varepsilon\epsilon_+^{d+4}\sum_{j=1}^Nw_j^2\int_{\mathcal M_j}|f_{\ell}(x)||\rho_j(x)|^2|\nabla_gf_{\ell}(x)|d\mathrm{vol}_g(x)\int_{B(0,1)}|K_{\epsilon}^{\mathcal N}(v)|\|v\|^4dv\\\;\;\le& C_\ell\vartheta_\varepsilon\epsilon_+^{d+4}\sum_{j=1}^Nw_j^2\|f_{\ell}\|_{L^2(\mathcal M_j)}\|\nabla_gf_{\ell}\|_{L^2(\mathcal M_j)}\leq C_\ell B_{\ell}^2\epsilon_+^2.\end{align*} 
Similarly, since $q_x(\epsilon_+v)=\mathcal O(\epsilon_+^2\|v\|^2)$, the mixed cubic term satisfies
\begin{align*}&\left|\vartheta_\varepsilon\sum_{j=1}^Nw_j^2\int_{\mathcal M_j}f_{\ell}(x)\rho_j(x)\int_{B(0,1)} K_{\epsilon}^{\mathcal N}(v)\langle\nabla_g f_{\ell}(x),\epsilon_+v\rangle_g\langle\nabla_g\rho_j,\epsilon_+v\rangle_gq_x(\epsilon_+v)\epsilon_+^{d}dvd\mathrm{vol}_g(x)\right|\\\le\;\;&C_\ell \vartheta_\varepsilon\epsilon_+^{d+4}\sum_{j=1}^Nw_j^2\int_{\mathcal M_j}|f_{\ell}(x)||\rho_j(x)||\nabla_gf_{\ell}(x)||\nabla_g\rho_j(x)|d\mathrm{vol}_g(x)\int_{B(0,1)}|K_{\epsilon}^\mathcal N(v)|\|v\|^4dv\\\le\;\;&C_\ell\vartheta_\varepsilon\epsilon_+^{d+4}\sum_{j=1}^Nw_j^2\|f_{\ell}\|_{L^2(\mathcal M_j)}\|\nabla_gf_{\ell}\|_{L^2(\mathcal M_j)}\leq C_\ell B_{\ell}^2\epsilon_+^2.\end{align*} In both estimates, the moment $\int_{B(0,1)}|K_\epsilon^{\mathcal N}(v)|\|v\|^4\,dv$
is bounded by a constant depending only on the kernel profile \(\eta\) and the fixed ratio $\epsilon_-/\epsilon_+$. The fourth-order product remainder in \eqref{eq:rho_J_combined} is even smaller. Hence all contributions beyond the first-order drift term are \(\mathcal O(B_{\ell}^2\epsilon_+^2)\). Combining the zeroth-order cancellation, the exact identification of the first-order contribution, and the quadratic and cubic-remainder estimate completes the proof.\end{proof}

\begin{lemma}[Diffusion contribution]
\label{lem:quadratic Taylor term}
The expected diffusion contribution  \eqref{eq:expected diffusion} satisfies \begin{align*}\left|\mathbb{E}\big \langle f_\ell,\mathcal{L}^{\mathrm{diff}} f_\ell \big\rangle_{L^2(\mathcal{X}_n)}-\frac{\sigma_{\eta}}{2}\sum_{j=1}^Nw_j^2\int_{\mathcal{M}_j}f_{\ell}(x)\rho_j^2(x)\Delta_{\mathcal{M}_j}f_{\ell}(x)\mathrm{d}\mathrm{vol}_g(x)\right|\leq C_\ell B^2_{\ell}\epsilon_+^2, 
\end{align*} where $B_{\ell}$ is defined in \eqref{eq:B_definition} and 
$C_\ell$ depends only on the geometric and density regularity, the index \(\ell\), the kernel profile $\eta$, the fixed annulus ratio $\epsilon_-/\epsilon_+$, and the mixture weights.
\end{lemma}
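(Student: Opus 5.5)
We will mirror the proof of Lemma~\ref{lem:linear Taylor term}. First substitute the expansion \eqref{eq:rho_J_combined} of $\rho_j(\exp_x(\epsilon_+v))J_x(\epsilon_+v)$ into the normal-coordinate representation \eqref{eq:expected diffusion}, and sort the resulting terms by their order in $\epsilon_+$. The integrand already carries the quadratic factor $\langle\nabla_g^2f_\ell(x)\epsilon_+v,\epsilon_+v\rangle_g$, which is even in $v$.

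\textbf{Leading term.} The zeroth-order density term $\rho_j(x)$ gives the main contribution. For a symmetric matrix $H$ and radial kernel $K_\epsilon^{\mathcal N}$, we have $\int K_\epsilon^{\mathcal N}(v)\langle Hv,v\rangle\,dv=\operatorname{tr}(H)\,\frac1d\int K_\epsilon^{\mathcal N}(v)\|v\|^2dv$. The same annulus computation as in the drift lemma gives $\frac1d\int K_\epsilon^{\mathcal N}\|v\|^2dv\cdot\epsilon_+^{d+2}=\sigma_\eta(\epsilon_+^{d+2}-\epsilon_-^{d+2})$. Multiplying by $\vartheta_\varepsilon$ cancels this factor exactly, and $\operatorname{tr}\nabla_g^2f_\ell$ is the (sign-convention) Laplacian. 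The leading term therefore equals $\frac{\sigma_\eta}{2}\sum_j w_j^2\int f_\ell\rho_j^2\,\operatorname{tr}\nabla_g^2 f_\ell$, which is the second continuum term of \eqref{eq:laplace-beltrami decomposition}. Here we must match the paper's identification of $\Delta_{\mathcal M_j}$ in that display, i.e.\ the unweighted Laplace--Beltrami part appearing after the drift term is split off. We will track the sign convention carefully so that the drift and diffusion pieces together reproduce $-\frac1{\rho_j}\operatorname{div}_g(\rho_j^2\nabla_g f)$ with the factor $\tfrac12$ as written.

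\textbf{Odd terms.} The first-order term $\epsilon_+\langle\nabla_g\rho_j(x),v\rangle_g$ multiplies an even quadratic form. The integrand is therefore odd in $v$, and the term vanishes over $B(0,1)$. The same parity argument kills the odd part of the cubic contribution $T_{\rho,J}$, namely $\epsilon_+\langle\nabla_g\rho_j,v\rangle q_x(\epsilon_+v)$.

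\textbf{Quadratic and remaining terms.} The quadratic contributions $Q_{\rho,J}=\rho_jq_x+\mathcal R_\rho$ are bounded using \eqref{eq:Q_T_bound}. Together with $|\langle\nabla_g^2 f_\ell\,\epsilon_+v,\epsilon_+v\rangle|\le B_\ell\epsilon_+^2\|v\|^2$, they give an integrand of size $\epsilon_+^{d+4}\|v\|^4$. After multiplication by $\vartheta_\varepsilon\asymp\epsilon_+^{-(d+2)}$ (the annulus ratio is fixed), this is $O(\epsilon_+^2)$. The prefactor is $\int|f_\ell|\rho_j\|\nabla_g^2f_\ell\|\le C B_\ell^2$ and the moment is $\int|K_\epsilon^{\mathcal N}|\|v\|^4<\infty$. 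Unlike the drift case, no geodesic symmetrization is needed here, because the extra $\epsilon_+^2$ from the diffusion factor already supplies the rate. The cubic remainder $\rho_j\mathcal R_J$ and the $O(\epsilon_+^4\|v\|^4)$ tail are handled the same way and are even smaller.

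\textbf{Main obstacle.} The only delicate step is the exact constant matching in the leading term. One must verify that the trace identity for radial kernels holds with $K_\epsilon^{\mathcal N}$ on the annulus, so that $\vartheta_\varepsilon$ cancels exactly. One must also verify consistency with how \eqref{eq:laplace-beltrami decomposition} splits $\Delta_{\mathcal M_j}$, including the sign and the factor $\tfrac12$. The error estimates themselves are routine.
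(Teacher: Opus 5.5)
Your proposal is correct and follows essentially the same route as the paper. The zeroth-order density term yields the continuum term by radial isotropy, with $\vartheta_\varepsilon$ exactly cancelling the annulus moment, and the first-order density term vanishes by parity. The quadratic contribution $Q_{\rho,J}$ is bounded directly through \eqref{eq:Q_T_bound}, with no symmetrization needed, and $T_{\rho,J}$ together with the tail is of higher order.

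On the obstacle you flagged: the paper's proof simply reads $\Delta_{\mathcal M_j}f_\ell$ in this lemma as $\operatorname{tr}\nabla_g^2 f_\ell$, not as the density-weighted operator of \eqref{eq:laplace-beltrami dfn}; with the weighted operator the stated identity would fail at order one. So your leading-term computation is exactly what the lemma needs, and no further reconciliation with \eqref{eq:laplace-beltrami decomposition} is required here.
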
 \begin{proof}
The proof follows the same decomposition strategy as in Lemma~\ref{lem:linear Taylor term}, but is substantially simpler. Unlike the drift contribution, the diffusion integrand already contains the quadratic factor $\langle\nabla_g^2f_{\ell}(x)\epsilon_+v,\epsilon_+v\rangle_g$. Thus, it suffices to track the terms through $Q_{\rho,J}$ in the expansion~\eqref{eq:rho_J_combined}; the cubic term $T_{\rho,J}$ and the remainders are of higher order $\mathcal O(\epsilon_+^3)$.

First, consider the zeroth-order term arising from $\rho_j(x)$. By isotropy of the radial kernel, 
\begin{align*}\frac{\vartheta_\varepsilon}{2}\sum_{j=1}^Nw_j^2\int_{\mathcal M_j}f_{\ell}(x)\rho_j(x)\int_{B(0,1)}K_\epsilon^{\mathcal N}(v)\langle\nabla_g^2&f_{\ell}(x)\,\epsilon_+v,\epsilon_+v\rangle_g\,\rho_j(x)\epsilon_+^{d}dvd\mathrm{vol}_g(x)\\&=\frac{\sigma_{\eta}}{2}\sum_{j=1}^Nw_j^2\int_{\mathcal{M}_j}f_{\ell}(x)\rho_j^2(x)\Delta_{\mathcal{M}_j}f_{\ell}(x)d\mathrm{vol}_g(x).\end{align*} Thus, the leading diffusion contribution exactly matches the second continuum term in \eqref{eq:laplace-beltrami decomposition}. 

Next, the integrand involving the first-order density term $\epsilon_+\langle\nabla_g\rho_j(x), v\rangle$ contains \[\langle\nabla_g^2f_{\ell}(x)\epsilon_+v,\epsilon_+v\rangle_g\langle\nabla_g\rho_j(x), \epsilon_+v\rangle_g\] which is cubic and odd in $v$. Since $K_{\epsilon}^{\mathcal N}$ is radial and even, the integral over $B(0,1)$ vanishes. 

It remains to estimate the quadratic remainder $Q_{\rho,J}$. Substituting the bound \eqref{eq:Q_T_bound} yields \begin{align*}\Bigg|\frac{\vartheta_\varepsilon}{2}\sum_{j=1}^Nw_j^2\int_{\mathcal M_j}&f_{\ell}(x)\rho_j(x)\int_{B(0,1)}K_\epsilon^{\mathcal N}(v)\langle\nabla_g^2f_{\ell}(x)\,\epsilon_+v,\epsilon_+v\rangle_g\,Q_{\rho,J}(x,v;\epsilon_+)\epsilon_+^{d}\,dv\,d\mathrm{vol}_g(x)\Big|\\&\le C_\ell\epsilon_+^{2}\sum_{j=1}^N w_j^2\int_{\mathcal M_j}|f_{\ell}(x)||\rho_j(x)|\|\nabla_g^2f_{\ell}(x)\|d\mathrm{vol}_g(x)\int_{B(0,1)}|K_{\epsilon}^{\mathcal N}(v)|\|v\|^4 dv\\&\le C_\ell\epsilon_+^2 \sum_{j=1}^Nw_j^2\|f_{\ell}\|_{L^2(\mathcal M_j)}\|\nabla_g^2f_{\ell}\|_{L^2(\mathcal M_j)}\leq C_\ell B_{\ell}^2\epsilon_+^2.\end{align*}  Combining the leading-order identification, the cancellation of the first-order term, and the estimate for the quadratic remainder completes the proof.\end{proof}

\begin{lemma}[Taylor remainder]
\label{lem:taylor remainder R}The higher-order Taylor remainder term \eqref{eq:expected geometric correction} satisfies \begin{align*}\Big|\mathbb{E}\big\langle f_\ell, \mathcal{L}^{\mathrm{rem}}f_\ell\big \rangle_{L^2(\mathcal{X}_n)}\Big|\le C_\ell B_{\ell}^2\epsilon_+^2,
\end{align*} where $B_{\ell}$ is defined in \eqref{eq:B_definition} and 
$C_\ell$ depends only on the geometric and density regularity, the kernel profile $\eta$, the index \(\ell\), the fixed annulus ratio $\epsilon_-/\epsilon_+$, and the mixture weights.\end{lemma}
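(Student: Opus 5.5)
The plan is to bound the expected Taylor remainder directly from its normal-coordinate representation \eqref{eq:expected Taylor remainder}. No cancellation is needed here, unlike in Lemma~\ref{lem:linear Taylor term}; crude size estimates suffice.

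\textbf{Step 1: pointwise bound on the remainder.} Write $y=\exp_x(\epsilon_+v)$ with $\|v\|\le1$. Then $\log_x(y)=\epsilon_+v$, and the integral form of $\mathcal R_f$ gives
\[
|\mathcal R_f(x,\exp_x(\epsilon_+v))|\le \tfrac16\|\nabla_g^3f_\ell\|_{L^\infty(\M_j)}\,\epsilon_+^3\|v\|^3\le C B_\ell\,\epsilon_+^3\|v\|^3 .
\]
Here the third covariant derivative is controlled by the definition \eqref{eq:B_definition} of $B_\ell$, which is finite under the $C^{2,\alpha}$ assumption.

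\textbf{Step 2: bound the remaining factors.} The density factor satisfies $\rho_j(\exp_x(\epsilon_+v))\le\rho_{\max}$. The Jacobian satisfies $J_x(\epsilon_+v)\le C$ uniformly over the geometry class $\mathbf M$, since $\epsilon_+$ is below the injectivity radius. The kernel $K_\epsilon^{\mathcal N}$ is bounded and supported in $B(0,1)$, and its moment $\int_{B(0,1)}|K_\epsilon^{\mathcal N}(v)|\|v\|^3\,dv$ depends only on $\eta$ and the fixed ratio $\epsilon_-/\epsilon_+$.

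\textbf{Step 3: collect powers of $\epsilon_+$.} The inner integral is $O(B_\ell\,\epsilon_+^{d+3})$. The normalization gives
\[
\vartheta_\varepsilon=\epsilon_+^{-(d+2)}\bigl(1-(\epsilon_-/\epsilon_+)^{d+2}\bigr)^{-1}\lesssim\epsilon_+^{-(d+2)},
\]
so the whole term is $O(B_\ell\,\epsilon_+)$ times $\sum_j w_j^2\int_{\M_j}|f_\ell|\rho_j\,d\mathrm{vol}_g\le C B_\ell$. This yields only $|\mathbb E U_{\mathrm{rem}}|\lesssim B_\ell^2\epsilon_+$, one power short of the stated $O(\epsilon_+^2)$.

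\textbf{Step 4: gain the extra power of $\epsilon_+$ (the main obstacle).} I would exploit the parity of the leading cubic part of $\mathcal R_f$. Expand
\[
\mathcal R_f(x,\exp_x(\epsilon_+v))=\tfrac16\nabla_g^3f_\ell(x)[\epsilon_+v,\epsilon_+v,\epsilon_+v]+E(x,v).
\]
The principal cubic term is odd in $v$, so against the even factors $K_\epsilon^{\mathcal N}(v)$, $\rho_j(x)$ and $1+q_x$ it integrates to zero. What survives is the odd cubic term paired with the odd part $\epsilon_+\langle\nabla_g\rho_j(x),v\rangle_g$ of the density. That product is of order $\epsilon_+^{4}\|v\|^4$ and so contributes $O(B_\ell^2\epsilon_+^2)$.

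\textbf{Step 5: control the correction $E$.} The correction $E$ requires continuity of $\nabla_g^3 f_\ell$ along the geodesic, giving $|E|\le o(1)\,\epsilon_+^3\|v\|^3$. Only $C^{2,\alpha}$ density regularity is available, with correspondingly $C^{3,\alpha}$ eigenfunctions by Schauder estimates, so I would write $E$ via $\nabla_g^3 f_\ell(\gamma_{xy}(t))-P_t\nabla_g^3 f_\ell(x)$, where $P_t$ denotes parallel transport along $\gamma_{xy}$. If only a H\"older modulus is available, I would fall back on the geodesic symmetrization argument of Section~\ref{sec:symmetrization argument}. As in the treatment of $\mathcal I_t$ in Lemma~\ref{lem:linear Taylor term}, this swaps the roles of $x$ and $y$ in the double integral and pairs the remainder with its reflection, so that the non-odd part is $O(\epsilon_+^4)$ after integrating in $x$. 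That upgrades the bound to $C_\ell B_\ell^2\epsilon_+^2$, with all constants uniform over $\mathbf M$, $\mathcal P^{(2)}_{\M}$, $\eta$, $\epsilon_-/\epsilon_+$ and the weights $w_j$.
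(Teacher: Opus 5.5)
Your skeleton matches the paper's. You expand $\rho_j(\exp_x(\epsilon_+v))J_x(\epsilon_+v)$ via \eqref{eq:rho_J_combined}, kill the zeroth-order density term by parity, bound the first-order term $\epsilon_+\langle\nabla_g\rho_j(x),v\rangle_g\,\mathcal R_f$ crudely by $\vartheta_\varepsilon\epsilon_+^{d+4}\lesssim\epsilon_+^2$, and absorb $Q_{\rho,J}$ and $T_{\rho,J}$. You are right, and more careful than the paper's written proof, that the difficulty sits in the zeroth-order term. The paper simply asserts that this integrand is odd in $v$. But only the leading piece $\tfrac{\epsilon_+^3}{6}\nabla_g^3f_\ell(x)[v,v,v]$ is odd, because $\nabla_g^3 f_\ell$ is evaluated along $\exp_x(t\epsilon_+v)$, and that point moves under $v\mapsto -v$.

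Step 5 does not close as written. The bounds $|E|=o(\epsilon_+^3)$ or $O(\epsilon_+^{3+\alpha})$ give only $o(\epsilon_+)$ or $O(\epsilon_+^{1+\alpha})$. A literal swap of $x$ and $y$ does not by itself supply the missing power either. The sum $\mathcal R_f(x,y)+\mathcal R_f(y,x)=-\int_0^1(s-\tfrac12)\tfrac{d^3}{ds^3}f_\ell(\gamma_{xy}(s))\,ds$ is a trapezoidal-rule error. It is $O(\epsilon_+^4)$ pointwise only if $\nabla_g^4 f_\ell$ is bounded, and $B_\ell$ does not control that.

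What works is the $t$-dependent reflection $\Phi_t$ of Section~\ref{sec:symmetrization argument}. Apply it, for each fixed $t$, to the \emph{entire} zeroth-order term written with $\mathcal R_f=\epsilon_+^3\int_0^1\tfrac{(1-t)^2}{2}\mathcal T_t\,dt$; splitting off $E$ is then unnecessary. Since $\exp_{\bar x}(t\epsilon_+\bar v)=\exp_x(t\epsilon_+v)$ with reversed velocity, $\mathcal T_t(\bar x,\bar v)=-\mathcal T_t(x,v)$ exactly. Meanwhile $K_\epsilon^{\mathcal N}$ and the Liouville measure $dm$ are invariant. Setting $G:=f_\ell\rho_j^2$,
\[
\int K_\epsilon^{\mathcal N}(v)\,G(x)\,\mathcal T_t(x,v)\,dm=\frac12\int K_\epsilon^{\mathcal N}(v)\,\bigl(G(x)-G(\bar x)\bigr)\,\mathcal T_t(x,v)\,dm .
\]
Use $d_g(x,\bar x)\le 2\epsilon_+\|v\|$, $\|\nabla_g G\|_{L^\infty}\lesssim B_\ell$ and $|\mathcal T_t|\le B_\ell\|v\|^3$. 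The zeroth-order term is then at most $C\vartheta_\varepsilon\epsilon_+^{d+4}B_\ell^2\lesssim B_\ell^2\epsilon_+^2$, using only three derivatives of $f_\ell$.

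With this made explicit your proof is complete, and the Schauder/H\"older detour can be dropped. The paper invokes the $\mathcal I_t$-style symmetrization for the first-order term $\mathcal{II}_t$, where the crude bound already suffices; it is the zeroth-order term that actually needs it.
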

\begin{proof}
We follow the same decomposition strategy as in the proofs of Lemmas~\ref{lem:linear Taylor term} and \ref{lem:quadratic Taylor term}. The key simplification here is that the Taylor remainder is already third order in $\epsilon_+$. It suffices to analyze the expansion \eqref{eq:rho_J_combined} only through the first-order term; $Q_{\rho, J}$, $T_{\rho, J}$, and the remaining higher-order terms automatically generate errors of order $\mathcal O(\epsilon_+^2)$ or smaller. 

Due to the symmetry of the unit ball, the resulting integrand involving the zeroth-order contribution $\rho_j(x)$ vanishes identically since it is odd in \(v\). The first nontrivial contribution arises from the first-order density term $\epsilon_+\langle\nabla_g\rho_j(x),v\rangle_g$. Substituting this term into the expected Taylor remainder gives \begin{align*}
\int_{0}^1\frac{(1-t)^2}{2}\mathcal{II}_tdt=\vartheta_\varepsilon \sum_{j=1}^N&w_j^2\int_{\mathcal M_j}f_{\ell}(x)\rho_j(x)\\&\times\int_{B(0,1)}K_{\epsilon}^{\mathcal N}(v)\mathcal R_f(x,\exp_x(\epsilon_+v))\langle\nabla_g \rho_j(x),\epsilon_+v\rangle_g\epsilon_+^{d}dvd\mathrm{vol}_g(x),
\end{align*} where \[\mathcal {II}_t:=\vartheta_\varepsilon\sum_{j=1}^Nw_j^2\epsilon_+^{d+4}\int_{\mathcal M_j}f_{\ell}(x)\rho_j(x)\int_{B(0,1)} K_{\epsilon}^{\mathcal N}(v)\langle\nabla_g\rho_j(x),v\rangle_g\mathcal T_t(x,v)dm(x,v),\] with \[\mathcal T_t(x,v):=\nabla_g^3f_{\ell}(\exp_x(t\epsilon_+v))[d(\exp_x)_{t\epsilon_+v}(v), d(\exp_x)_{t\epsilon_+v}(v), d(\exp_x)_{t\epsilon_+v}(v)].\] Arguing exactly as in the proof of the estimate for $\mathcal I_t$, we obtain \[|\mathcal {II}_t|\le C\vartheta_\varepsilon \epsilon_+^{d+4}\sum_{j=1}^Nw_j^2\left(
\|f_\ell\|_{L^2(\mathcal M_j)}\|\nabla_g^3f_\ell\|_{L^{\infty}(\mathcal M_j)}+\|\nabla_gf_\ell\|_{L^2(\mathcal M_j)}\|\nabla_g^3 f_\ell\|_{L^2(\mathcal M_j)}\right)\leq CB_{\ell}^2\epsilon_+^2,\] uniformly in $t\in[0,1]$. Consequently, \[\left|\int_{0}^1\frac{(1-t)^2}{2}\mathcal{II}_tdt\right|\leq C_\ell B_{\ell}^2\epsilon_+^2.\] 
Finally, the quadratic term $Q_{\rho,J}$ and cubic term $T_{\rho,J}$ in \eqref{eq:rho_J_combined} contribute strictly higher-order factors and are absorbed into the same $\mathcal O(B_{\ell}^2\epsilon_+^2)$ bound. \end{proof}

\begin{lemma}[Geometric correction] \label{lem:geometric correction R_1} The geometric correction term in \eqref{eq:expected geometric correction} satisfies \begin{equation}\label{eq:geometric_remainder}\left| \mathbb{E}\big\langle f_\ell, \mathcal{L}^{\mathrm{geo}} f_\ell \big\rangle_{L^2(\mathcal{X}_n)} \right| \leq C_\ell B_{\ell}^2 \epsilon_+^2,\end{equation} where $B_{\ell}$ is defined in \eqref{eq:B_definition} and 
$C_\ell$ depends only on the geometric and density regularity, the index \(\ell\), the kernel profile $\eta$, the fixed annulus ratio $\epsilon_-/\epsilon_+$, and the mixture weights.
\end{lemma}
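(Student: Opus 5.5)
The bound rests on one geometric fact: on each component $\mathcal M_j$ the Euclidean and geodesic distances agree to third order, \[d_g(x,y)-\|x-y\|\;\le\; C\,d_g(x,y)^3\qquad\text{for } d_g(x,y)\le \epsilon_+ ,\] with $C$ depending only on the second fundamental form, which Assumption~\ref{assump:geometric assumptions} bounds uniformly. Since $\eta=\mathbf 1_{[0,1]}$, the difference $K_\epsilon^E(x,y)-K_\epsilon^g(x,y)$ vanishes except when $d_g(x,y)$ lies in one of two thin shells of radial width $O(\epsilon_\pm^3)$ around $\epsilon_+$ and $\epsilon_-$. On these shells it equals $\pm1$. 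The Euclidean ball may also contain points of $\mathcal M_j$ that are far in geodesic distance, but the injectivity-radius and curvature bounds rule this out once $\epsilon_+$ is small.

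Step 1: I will pass to normal coordinates $y=\exp_x(\epsilon_+ v)$ as in \eqref{eq:expected geometric correction}. Write $\tau:=\epsilon_-/\epsilon_+$ for the fixed annulus ratio. The kernel difference is then supported in \[\{1\le\|v\|\le 1+C\epsilon_+^2\}\cup\{\tau\le\|v\|\le \tau+C\epsilon_+^2\},\] a region of $v$-volume $O(\epsilon_+^2)$.

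Step 2: I will expand \[f_\ell(\exp_x(\epsilon_+v))-f_\ell(x)=\epsilon_+\langle\nabla_g f_\ell(x),v\rangle_g+O(B_\ell\epsilon_+^2).\] The leading term is odd in $v$. The shells are not exactly radial, because the Euclidean distortion depends on the direction through the second fundamental form. However, the distortion $d_g-\|x-y\|$ is even under $v\mapsto -v$ to leading order, since $\|x-y\|^2=\epsilon_+^2\|v\|^2-\tfrac{1}{12}\epsilon_+^4\|\mathrm{II}_x(v,v)\|^2+O(\epsilon_+^5)$. So the shell at leading order is symmetric under $v\mapsto -v$. Its asymmetric part has width $O(\epsilon_+^3)$, so pairing it with the odd term costs only an additional factor $\epsilon_+$. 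The density-Jacobian factor $\rho_j(\exp_x(\epsilon_+v))J_x(\epsilon_+v)$ is likewise handled by \eqref{eq:rho_J_combined}.

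Step 3: I will collect the orders. The inner integral carries $\epsilon_+^d$. The shell volume contributes $\epsilon_+^2$. The function difference contributes $\epsilon_+$ times a symmetric cancellation factor $\epsilon_+$, i.e.\ $\epsilon_+^2B_\ell$ in total. Multiplying by the prefactor $\vartheta_\varepsilon\asymp\epsilon_+^{-(d+2)}$ (the ratio $\tau$ is fixed) gives an $O(B_\ell^2\epsilon_+^2)$ contribution from each shell. The outer integral against $f_\ell\rho_j$ is bounded by $B_\ell$ and the density constants.

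\textbf{Main obstacle.} A naive bound, without the odd/even cancellation, gives only $O(\epsilon_+)$, so the real difficulty is making the parity argument precise on the deformed shell. I would first try the exact description of the shell in polar coordinates $v=r\theta$: its outer boundary is $r=1+\epsilon_+^2\kappa_x(\theta)+O(\epsilon_+^3)$ with $\kappa_x(\theta)=\kappa_x(-\theta)$. I would then integrate in $r$ first, so that the $\theta$-integral of $\langle\nabla_g f_\ell,\theta\rangle\kappa_x(\theta)$ vanishes by oddness. If this fails, a fallback is the symmetrization device of Section~\ref{sec:symmetrization argument}, used in Lemma~\ref{lem:linear Taylor term} for $\mathcal I_t$: swap $x\leftrightarrow y$ in the double sum and use the symmetry of both kernels, replacing $f_\ell(x)(f_\ell(y)-f_\ell(x))$ by $-\tfrac12(f_\ell(y)-f_\ell(x))^2$. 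This is already $O(B_\ell^2\epsilon_+^2)$ pointwise, and the shell volume makes the whole term $O(\epsilon_+^2)$ with no parity argument needed. I regard this fallback as the cleanest route and would likely adopt it outright.
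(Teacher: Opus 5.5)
Your proof is correct, and the route you say you would adopt differs from the paper's.

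The paper's route is a parity argument. It linearizes the discrepancy as $K_\epsilon^E-K_\epsilon^g=\epsilon_+^2\mathcal S(x,v)+\mathcal O(\epsilon_+^3\|v\|^4)$ using the mean-value theorem and Lipschitz continuity of $\eta$. It then asserts $\mathcal S$ is even because ``both kernels are radial'' and kills the $\epsilon_+^3$ term $\epsilon_+^2\mathcal S\,\rho_j\,\epsilon_+\langle\nabla_g f_\ell,v\rangle_g$ by oddness. Finally it bounds the surviving second-order terms by a kernel moment, giving $\vartheta_\varepsilon\epsilon_+^{d+4}\asymp\epsilon_+^2$.

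Your Steps 1--3 share this parity-plus-order-counting structure but repair its two loose points. First, the indicator profile the paper actually uses is not Lipschitz, so the mean-value step is unavailable; your shell description (discrepancy $\pm1$ on $v$-sets of volume $\mathcal O(\epsilon_+^2)$) replaces it. Second, $K_\epsilon^E$ is not radial in normal coordinates, so evenness holds only to leading order; your expansion of $\|x-y\|^2$ through $\mathrm{II}_x$, with an $\mathcal O(\epsilon_+^3)$ asymmetric remainder, justifies exactly that.

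The exchange identity you ultimately prefer is a genuinely simpler argument:
$U_{\mathrm{geo}}=-\frac{\vartheta_\varepsilon}{2n^2}\sum_j\sum_{x,y\in\mathcal X_{n,j}}(K_\epsilon^E-K_\epsilon^g)(x,y)\,(f_\ell(x)-f_\ell(y))^2$.
It uses only the symmetry of the discrepancy, the bound $\int|K_\epsilon^E-K_\epsilon^g|(x,y)\,d\mathrm{vol}_g(y)\lesssim\epsilon_+^{d+2}$, $\rho_{\max}$, and $\|\nabla_g f_\ell\|_{L^\infty}\le B_\ell$. It needs no parity and no expansion of $f_\ell$ or of $\rho_jJ_x$, and it works for indicator and Lipschitz profiles alike. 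The paper's route produces nothing extra that is used later, so nothing is lost.

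Two small corrections. This exchange is the identity behind the Dirichlet form \eqref{eq:within dirichlet energy}, not the geodesic-flow map $\Phi_t$ of the symmetrization section. Also, Assumption~\ref{assump:geometric assumptions} is purely intrinsic, so neither $d_g(x,y)-\|x-y\|\lesssim d_g(x,y)^3$ nor the exclusion of Euclidean-close but geodesically-far points follows from it. Both need bounds on the second fundamental form and the reach, which the paper also uses implicitly through its distance-comparison citation.
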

\begin{proof} We begin by expanding the geometric kernel discrepancy \[K_{\epsilon}^E(x,y)-K_{\epsilon}^g(x,y)=\left[\eta_{\epsilon_+}(\|x-y\|)-\eta_{\epsilon_+}(d_g(x-y))\right]-\left[\eta_{\epsilon_-}(\|x-y\|)-\eta_{\epsilon_-}(d_g(x-y))\right].\] By the distance comparison estimate from \cite{Belkin2005}, \[\|x-y\|-d_g(x,y)=\mathcal O(d_g(x,y)^3)=\mathcal O(\epsilon_+^3\|v\|^3).\] By the mean-value theorem, together with the Lipschitz continuity of the kernel profile, \begin{equation}\label{eq:kernel_discrepancy_expansion}
K_{\epsilon}^E(x,\exp_x(\epsilon_+v))-K_{\epsilon}^g(x,\exp_x(\epsilon_+v))=\epsilon_+^2\mathcal S(x,v) + \mathcal O(\epsilon_+^3\|v\|^4),
\end{equation} where $|\mathcal S(x,v)|\leq C\|v\|^3$. Substituting this expansion together with \eqref{eq:rho_J_combined} and the Taylor expansion of $f_{\ell}$ \eqref{eq:f_ell expansion}, the geometric correction is governed by the products of $\epsilon_+^2\mathcal S(x,v)$ with \[[\rho_j(x)+\epsilon_+\langle\nabla_g\rho_j(x),v\rangle_g+Q_{\rho,J}(x,v;\epsilon_+)]\left[\epsilon_+\langle\nabla_gf_{\ell}(x),v\rangle_g+\frac{\epsilon_+^2}{2}\langle\nabla_g^2f_{\ell}(x)v,v\rangle_g+\mathcal O(\epsilon_+^3\|v\|^3).\right]\] Since the kernel discrepancy already contributes a factor of order $\mathcal O(\epsilon_+^2\|v\|^3)$, it suffices to retain only terms up to second order in the above product. We first consider the contribution \[\epsilon_+^2\mathcal S(x,v)\rho_j(x)\epsilon_+\langle\nabla_gf_{\ell}(x),v\rangle_g.\] Since both $K_{\epsilon}^E$ and $K_{\epsilon}^g$ are radial kernels, $\mathcal S(x,v)$ is even in $v$, whereas $\langle\nabla_gf_{\ell}(x),v\rangle_g$ is odd. Thus, the integrand is odd and its integral over $B(0,1)$ vanishes. 

The first nontrivial term arises from \[\epsilon_+^2\bigg(\langle\nabla_gf_{\ell}(x),v\rangle_g\langle\nabla_g\rho_j(x),v\rangle_g+\frac{1}{2}\rho_j(x)\langle\nabla_g^2f_{\ell}(x)v,v\rangle_g\bigg).\] The boundedness of the kernel moment $\int_{B(0,1)}|\mathcal S(x,v)|\|v\|^2dv$ gives \[|\mathbb E\langle f_{\ell},\mathcal L^{\mathrm{geo}}f_{\ell}\rangle_{L^2(\mathcal X_n)}|\leq C\vartheta_\varepsilon\epsilon_+^{d+4}\sum_{j=1}^Nw_j^2\int_{\mathcal M_j}\left(|f_{\ell}(x)||\nabla_gf_{\ell}(x)|+|f_{\ell}(x)||\nabla_g^2f_{\ell}(x)|\right)\rho_jd\mathrm{vol}_g(x).\] Applying Cauchy--Schwarz and the definition of $B_{\ell}$ yields \eqref{eq:geometric_remainder}. All remaining terms contain at least one additional factor of $\epsilon_+$ and therefore contribute only $\mathcal O(B_{\ell}^2\epsilon_+^3)$.\end{proof}

Combining Lemmas~\ref{lem:linear Taylor term} through  \ref{lem:geometric correction R_1} yields the bias estimate of the graph Laplacian.
\begin{proposition}[Bias of the graph Laplacian]
\label{prop:bias of the graph laplacian} The discrepancy between the expected within-manifold graph energy and its continuum counterpart
\begin{align}\label{eq:graph laplacian bias}
\left|\mathbb{E}\langle f_{\ell},(\sigma_{\eta}\Delta_{\mathcal{M}}-\mathcal{L}^{\mathrm{wth}})f_{\ell}\rangle_{L^2(\mathcal{X}_n)}\right|\le C_\ell B_{\ell}^2\epsilon_+^2, 
\end{align} where $B_{\ell}$ is defined in \eqref{eq:B_definition} and 
$C$ depends only on the geometric and density regularity, the kernel profile $\eta$, the index \(\ell\), the fixed annulus ratio $\epsilon_-/\epsilon_+$, and the mixture weights.
\end{proposition}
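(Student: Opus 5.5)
The bias bound follows by assembling the four lemmas just proved through the within-manifold decomposition \eqref{eq:within decomposition} and comparing with the continuum decomposition \eqref{eq:laplace-beltrami decomposition}. By linearity of expectation,
\[
\mathbb E\,U_{\mathrm{wth}}=\mathbb E\,U_{\mathrm{drift}}+\mathbb E\,U_{\mathrm{diff}}+\mathbb E\,U_{\mathrm{rem}}+\mathbb E\,U_{\mathrm{geo}}.
\]
The Taylor expansion \eqref{eq:f_ell expansion} and the splitting $K_\epsilon^E=K_\epsilon^g+(K_\epsilon^E-K_\epsilon^g)$ hold exactly for every pair $x,y$ in the same component. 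Hence this identity is exact, not asymptotic, once each quadratic form is written as in \eqref{eq:linear term}--\eqref{eq:geometric correction}.

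We then subtract $\sigma_\eta\mathbb E\langle f_\ell,\Delta_{\mathcal M}f_\ell\rangle_{L^2(\mathcal X_n)}$, written via \eqref{eq:laplace-beltrami decomposition} as the sum of a drift-type continuum term $T_1$ and a diffusion-type continuum term $T_2$. The difference splits into four pieces: $\mathbb E U_{\mathrm{drift}}-T_1$, $\mathbb E U_{\mathrm{diff}}-T_2$, $\mathbb E U_{\mathrm{rem}}$ and $\mathbb E U_{\mathrm{geo}}$. Lemma~\ref{lem:linear Taylor term} bounds the first by $C_\ell B_\ell^2\epsilon_+^2$, and Lemma~\ref{lem:quadratic Taylor term} does the same for the second. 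Lemma~\ref{lem:taylor remainder R} controls the third and Lemma~\ref{lem:geometric correction R_1} the fourth, each at the same order. The triangle inequality then gives \eqref{eq:graph laplacian bias}, with $C_\ell$ the sum of the four constants. Those constants depend only on the listed quantities: geometry, density regularity, $\eta$, $\ell$, the annulus ratio and the weights.

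Before summing, we must check that the signs and normalizations line up. First, the diagonal terms $x=y$ in the double sums contribute nothing: $K_\epsilon^g(x,x)=K_\epsilon^E(x,x)=0$ because of the annulus. This makes the off-diagonal U-statistic expectation reduce to the $w_j^2\iint\rho_j\rho_j$ form used in \eqref{eq:expected drift}. The off-diagonal pairs carry a factor $n(n-1)/n^2$ rather than $1$, and the discrepancy of $O(1/n)$ is absorbed since it is $\lesssim\epsilon_+^2$ in the regime $n\epsilon_+^{d+4}\to\infty$. Second, the quadratic form $U_{\mathrm{wth}}$ is defined with $f(x)-f(y)$, while the Taylor terms use $f(y)-f(x)$. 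The sign convention in \eqref{eq:laplace-beltrami decomposition} (the minus signs) must therefore match the lemmas' identification, so that the leading terms cancel rather than double.

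The main obstacle is this bookkeeping. The one substantive concern is that Lemma~\ref{lem:linear Taylor term} relies on the geodesic symmetrization of Section~\ref{sec:symmetrization argument} to upgrade an $O(\epsilon_+)$ term to $O(\epsilon_+^2)$. We take that lemma as given and do not re-derive it.
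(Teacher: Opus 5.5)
Your route is the same as the paper's: the paper also obtains the proposition only by combining Lemmas~\ref{lem:linear Taylor term}--\ref{lem:geometric correction R_1} through \eqref{eq:within decomposition} and \eqref{eq:laplace-beltrami decomposition}. The problem is that the bookkeeping you defer is exactly where the argument fails.

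\emph{The sign of your identity is wrong.} We have $U_{\mathrm{wth}}=\frac{\vartheta_\varepsilon}{n^2}\sum K_\epsilon^E(x,y)f_\ell(x)\bigl(f_\ell(x)-f_\ell(y)\bigr)$, while all four pieces are built from $f_\ell(y)-f_\ell(x)$. So the exact identity is $U_{\mathrm{wth}}=-(U_{\mathrm{drift}}+U_{\mathrm{diff}}+U_{\mathrm{rem}}+U_{\mathrm{geo}})$, not the one you call exact (nor \eqref{eq:within decomposition}).

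\emph{The continuum side is off by a factor of two.} Let
$T_1:=\sigma_\eta\sum_j w_j^2\int_{\M_j} f_\ell\rho_j\langle\nabla_g\rho_j,\nabla_g f_\ell\rangle_g\,d\mathrm{vol}_g$ and
$T_2:=\frac{\sigma_\eta}{2}\sum_j w_j^2\int_{\M_j} f_\ell\rho_j^2\operatorname{tr}(\nabla_g^2 f_\ell)\,d\mathrm{vol}_g$
be the targets of Lemmas~\ref{lem:linear Taylor term} and~\ref{lem:quadratic Taylor term}. Isotropy of the kernel produces the unweighted Hessian trace here, not $\Delta_{\M_j}$. Then
$T_1+T_2=\frac{\sigma_\eta}{2}\sum_j w_j^2\int_{\M_j} f_\ell\operatorname{div}_g(\rho_j^2\nabla_g f_\ell)\,d\mathrm{vol}_g=-\frac{\sigma_\eta}{2}\lambda_\ell$,
by \eqref{eq:energy-identity}, $\lambda_{\ell,j}=\lambda_\ell/w_j$ and $\|f_\ell\|_{L^2(\mu)}=1$. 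With $\Delta_{\M_j}$ as in \eqref{eq:laplace-beltrami dfn}, however, $\sigma_\eta\mathbb E\langle f_\ell,\Delta_\M f_\ell\rangle_{L^2(\mathcal X_n)}=\sigma_\eta\lambda_\ell=-2(T_1+T_2)$, not $-(T_1+T_2)$ as \eqref{eq:laplace-beltrami decomposition} asserts. Feeding in the four lemmas therefore gives
\[
\mathbb E\langle f_\ell,(\sigma_\eta\Delta_\M-\mathcal L^{\mathrm{wth}})f_\ell\rangle_{L^2(\mathcal X_n)}=-2(T_1+T_2)+(T_1+T_2)+O(B_\ell^2\epsilon_+^2)=\frac{\sigma_\eta}{2}\lambda_\ell+O(B_\ell^2\epsilon_+^2).
\]
This is not small for any nontrivial eigenpair: the leading terms neither cancel nor double, they leave half. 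As a sanity check, on $\mathbb R/\mathbb Z$ with $\rho\equiv1$ one has $\mathbb E\,\mathcal Lf(x)=\vartheta_\varepsilon\int_{\epsilon_-<|z|\le\epsilon_+}\bigl(f(x)-f(x+z)\bigr)\,dz\to-\frac13 f''(x)=\frac{\sigma_\eta}{2}\Delta_\M f(x)$, because $\sigma_\eta=\frac23$.

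So what is missing is a consistent normalization. With the operators exactly as defined in Section~\ref{sec:spectral upper bound}, the statement is off by a factor of two. There are two ways to fix it.

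\emph{Factor $2$ in the graph Laplacian.} Take $\mathcal Lf(x)=\frac{2\vartheta_\varepsilon}{n}\sum_{y}K_\epsilon^E(x,y)\bigl(f(x)-f(y)\bigr)$, as in $\mathcal L_{n,\varepsilon}$ in the introduction. Then $U_{\mathrm{wth}}=-2(U_{\mathrm{drift}}+U_{\mathrm{diff}}+U_{\mathrm{rem}}+U_{\mathrm{geo}})$, and the bias becomes $2\bigl[(\mathbb EU_{\mathrm{drift}}-T_1)+(\mathbb EU_{\mathrm{diff}}-T_2)+\mathbb EU_{\mathrm{rem}}+\mathbb EU_{\mathrm{geo}}\bigr]$ up to your $O(1/n)$ term.

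\emph{Factor $\frac12$ in $\Delta_{\M_j}$.} This is what the experiments in Section~\ref{sec:experiments} use. Then the last expression in \eqref{eq:laplace-beltrami decomposition} is correct, with its $\Delta_{\M_j}$ read as $\operatorname{tr}\nabla_g^2$. Once the sign in \eqref{eq:within decomposition} is fixed, your four differences are exactly the right ones.

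With either fix your assembly goes through. Your remarks on the vanishing diagonal and the $(n-1)/n$ factor are correct refinements that the paper skips. The same normalization must then be carried into Theorem~\ref{thm:spectral convergence upper bound}, where $\widehat\lambda_\ell$ is compared with $\sigma_\eta\lambda_\ell$.

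Incidentally, working with the symmetric form $U_{\mathrm{wth}}=\frac{\vartheta_\varepsilon}{2n^2}\sum K_\epsilon^E(x,y)\bigl(f_\ell(x)-f_\ell(y)\bigr)^2$ makes both the constant $\frac{\sigma_\eta}{2}$ and the $O(\epsilon_+^2)$ bias transparent. Every $\epsilon_+^3$ term in that expansion is odd in $v$, so it avoids the geodesic symmetrization you flagged.
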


\begin{proposition}[Concentration of the within-manifold graph Laplacian]
\label{prop:concentration of the graph laplacian}
Let $B_{\ell}$ be defined in \eqref{eq:B_definition}. For some constant $C_\ell$ depending only on the geometric and density regularity, the kernel profile $\eta$, the fixed annulus ratio $\epsilon_-/\epsilon_+$, and the mixture weights,
\begin{equation}\label{eq:concentration_result}
\mathbb{P}\Big[\Big|\langle f_{\ell},(\sigma_{\eta}\Delta_{\mathcal{M}}-\mathcal{L}^{\mathrm{wth}})f_{\ell}\rangle_{L^2(\mathcal{X}_n)}\Big|\ge C_\ell B_{\ell}^2\epsilon_+^2\Big]\leq CN\exp\left(-cn \epsilon_+^{d+4}\right)\end{equation}\end{proposition}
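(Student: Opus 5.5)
We need to prove a concentration bound for the within-manifold quadratic form $U_{\mathrm{wth}}=\langle f_\ell,\mathcal L^{\mathrm{wth}}f_\ell\rangle_{L^2(\mathcal X_n)}$ around its mean. The continuum term $\sigma_\eta\langle f_\ell,\Delta_{\mathcal M}f_\ell\rangle_{L^2(\mathcal X_n)}$ is a plain empirical average of a bounded function. Combined with Proposition~\ref{prop:bias of the graph laplacian}, it suffices to show that both random quantities deviate from their expectations by at most $C_\ell B_\ell^2\epsilon_+^2$ with the stated probability. The triangle inequality then gives \eqref{eq:concentration_result} after enlarging $C_\ell$.

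\textbf{Continuum term.} The function $x\mapsto f_\ell(x)\,\Delta_{\mathcal M}f_\ell(x)$ is bounded by $C B_\ell^2$, because $B_\ell$ controls derivatives of $f_\ell$ up to order three and $\rho_j\in C^{2,\alpha}$ is bounded with bounded gradient. Hoeffding's inequality with deviation $t=B_\ell^2\epsilon_+^2$ gives failure probability $2\exp(-cn\epsilon_+^4)$. This is dominated by $\exp(-cn\epsilon_+^{d+4})$.

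\textbf{Graph term.} We treat $U_{\mathrm{wth}}$ componentwise. For each $j$, condition on $\widehat n_j$; a Chernoff bound gives $\widehat n_j\asymp n_j$ with probability $1-2\exp(-cn)$. Up to the diagonal terms, which vanish since the summand contains $f_\ell(x)-f_\ell(y)$, the $j$-th piece is then a second-order U-statistic
\[
U_j=\frac{\vartheta_\varepsilon}{2n^2}\sum_{x\neq y\in\mathcal X_{n,j}}k(x,y),\qquad k(x,y):=K_\epsilon^E(x,y)\bigl(f_\ell(x)-f_\ell(y)\bigr)^2,
\]
with a symmetric kernel. Normalized by the prefactor $\vartheta_\varepsilon\asymp\epsilon_+^{-(d+2)}$, the kernel satisfies the following bounds:
\begin{itemize}
\item $\|\vartheta_\varepsilon k\|_\infty\lesssim B_\ell^2\epsilon_+^{-d}$, using $|f_\ell(x)-f_\ell(y)|\le B_\ell\epsilon_+$ on the support;
\item $\mathbb E[(\vartheta_\varepsilon k)^2]\lesssim B_\ell^4\epsilon_+^{-d}$, since the support has measure $\asymp\epsilon_+^d$;
\item the linear (Hájek) projection $x\mapsto\mathbb E[\vartheta_\varepsilon k(x,Y)]$ is bounded by $CB_\ell^2$ uniformly in $x$.
\end{itemize}
Bernstein's inequality for U-statistics (Hoeffding's decoupling into $\lfloor n_j/2\rfloor$ independent blocks, followed by scalar Bernstein) with deviation $t=B_\ell^2\epsilon_+^2$ yields the tail
\[
\exp\Bigl(-\frac{cn t^2}{B_\ell^4\epsilon_+^{-d}+B_\ell^2\epsilon_+^{-d}t}\Bigr)=\exp(-cn\epsilon_+^{d+4}).
\]
A union bound over the $N$ components produces the factor $N$.

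\textbf{Remaining details.} Two points need care. First, we must replace the Euclidean kernel $K^E_\epsilon$ by $K^g_\epsilon$ where needed; the bounds above only use the support and boundedness of the kernel, so they hold for either. Second, cross-component pairs belong to $\mathcal L^{\mathrm{btw}}$ and are excluded. The main obstacle is the variance bound. The crude Hoeffding inequality for U-statistics uses only the sup norm $B_\ell^2\epsilon_+^{-d}$ and would give $\exp(-cn\epsilon_+^{2d+4})$. Matching the claimed exponent therefore requires the Bernstein-type variance $\mathbb E[(\vartheta_\varepsilon k)^2]\lesssim B_\ell^4\epsilon_+^{-d}$, which exploits the small $\epsilon_+^d$ support of the annulus kernel.
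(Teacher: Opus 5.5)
\paragraph{Verdict.} Your argument is correct and reaches the same exponent $n\epsilon_+^{d+4}$, but by a different route from the paper. One centering step needs an extra line.

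\paragraph{Comparison with the paper.} The paper reuses the bias decomposition $U_{\mathrm{wth}}=U_{\mathrm{drift}}+U_{\mathrm{diff}}+U_{\mathrm{rem}}+U_{\mathrm{geo}}$. It bounds the parameters $B_\Psi$ and $C_\Psi$ of Lemma~\ref{lem:bernstein} term by term, using the geodesic symmetrization of Section~\ref{sec:symmetrization argument} to get cancellation in the drift kernel. Even so, the drift sup norm is only $C_\Psi\lesssim B_\ell^2\epsilon_+^{-1}$. The paper then applies that lemma with $t\asymp B_\ell^2\epsilon_+^2$.

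You instead work directly with the symmetric, nonnegative Dirichlet-form kernel $K_\epsilon^E(x,y)(f_\ell(x)-f_\ell(y))^2$ from \eqref{eq:within dirichlet energy}. Its size, $\lesssim B_\ell^2\epsilon_+^2$ on a support of measure $\asymp\epsilon_+^d$, is visible without any Taylor expansion. You then use the classical Hoeffding-decoupling Bernstein bound, which needs only $\|\vartheta_\varepsilon k\|_\infty$ and $\mathbb E[(\vartheta_\varepsilon k)^2]$.

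Your route is more elementary. It uses only the Lipschitz bound on $f_\ell$: no third derivatives, no symmetrization, and no Euclidean-to-geodesic kernel replacement. Your H\'ajek-projection bullet is not actually needed for the decoupling bound. You also explicitly control the fluctuation of the empirical continuum term $\sigma_\eta\langle f_\ell,\Delta_{\mathcal M}f_\ell\rangle_{L^2(\mathcal X_n)}$ by Hoeffding, which the paper's proof passes over. The paper's decomposition only buys term-by-term alignment with the bias lemmas; it is not needed for the fluctuation bound itself.

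\paragraph{The centering step.} After conditioning on $\widehat n_j$, your Bernstein bound centers $U_j$ at
\[
\mathbb E[U_j\mid\widehat n_j]=\frac{\widehat n_j(\widehat n_j-1)}{2n^2}\,\mathbb E[\vartheta_\varepsilon k].
\]
Proposition~\ref{prop:bias of the graph laplacian}, however, refers to the unconditional mean, which carries weights $w_j^2$. Since $\mathbb E[\vartheta_\varepsilon k]$ is of order one (up to $B_\ell^2$), the gap between the two centerings is of order $B_\ell^2\,|\widehat n_j/n-w_j|$.

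So $\widehat n_j\asymp n_j$ is not enough. You need $|\widehat n_j/n-w_j|\lesssim\epsilon_+^2$, which Hoeffding gives with probability $1-2\exp(-cn\epsilon_+^4)$; this is absorbed into the stated bound.

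Alternatively, avoid conditioning altogether. Treat $U_{\mathrm{wth}}$ as a single U-statistic over the labeled i.i.d.\ sample $(X_i,Z_i)$, where $Z_i$ is the component index, with kernel $\mathbf 1\{Z_i=Z_{i'}\}\,\vartheta_\varepsilon k(X_i,X_{i'})$. This kernel obeys the same sup and second-moment bounds, and Bernstein then centers directly at the unconditional mean.
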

\begin{proof}
Given Proposition~\ref{prop:bias of the graph laplacian}, it suffices to control the fluctuation of the within-manifold graph Laplacian around its expectation and the desired concentration estimate then follows from Bernstein's inequality for $U$-statistic stated in Lemma~\ref{lem:bernstein}. To this end, we consider the decomposition introduced in \eqref{eq:within decomposition} and estimate the second-moment bound
$B_{\Psi}$ and the uniform bound $C_{\Psi}$ in \eqref{eq:bernstein second moment bound} and \eqref{eq:bernstein uniform bound}. We treat each contribution separately and combine the resulting bounds via a union bound.

Recall from Section~\ref{sec:symmetrization argument} that the symmetrized representation of $U_{\mathrm{drift}}$ is \[U_{\mathrm{drift}}=\frac{1}{n^2}\sum_{j=1}^N\sum_{x,y\in\mathcal X_{n,j}}K_{\epsilon}^g(x,y)\left(\frac{\vartheta_\varepsilon}{2}\left[f_{\ell}(x)\langle\nabla_gf_{\ell}(x),\log_x(y)\rangle_g+f_{\ell}(y)\langle\nabla_gf_{\ell}(y),\log_y(x)\rangle_g\right]\right).\] Let $(\bar x, \bar v)$ be the geodesic-flow counterpart of $(x,v)$ and define \[\Psi_{\mathrm{drift}}(x,\epsilon_+v):=\frac{\vartheta_\varepsilon\epsilon_+^{d+1}}{2}  [f_{\ell}(x)\langle\nabla_gf_{\ell}(x),v\rangle_g+f_{\ell}(\bar x)\langle\nabla_gf_{\ell}(\bar x), \bar v\rangle_g].\] The cancellation estimate in Section~\ref{sec:symmetrization argument} has \[|f_{\ell}(x)\langle\nabla_gf_{\ell}(x),v\rangle_g+f_{\ell}(\bar x)\langle\nabla_gf_{\ell}(\bar x), \bar v\rangle_g|\leq CB_{\ell}^2\|v\|^2\epsilon_+.\] Substituting this estimate into the definition of $\Psi_{\mathrm{drift}}$, we obtain \[\Big|\int_{B(0,1)}K_{\epsilon}^{\mathcal N}(v)\Psi_{\mathrm{drift}}(x,\exp_x(\epsilon_+v))dv\Big|\leq C_\ell\vartheta_\varepsilon\epsilon_+^{d+2}B_{\ell}^2\int_{B(0,1)}\|v\|^2dv\leq C_\ell B_{\ell}^2,\] where we used $\vartheta_\varepsilon\asymp\epsilon_+^{-(d+2)}$ and the finiteness of the kernel moment $\int_{B(0,1)}\|v\|^2dv$. Consequently, the second-moment bound satisfies \begin{align*}
B_{\Psi_{\mathrm{drift}}}:=\max_{j\in[N]}\int_{\mathcal M_j}\left(\int_{B(0,1)}K_{\epsilon}^{\mathcal N}(v)\Psi_{\mathrm{drift}}(x, \exp_x(\epsilon_+v))dv\right)^2d\mathrm{vol}_g(x)\leq C_\ell B_{\ell}^4.\end{align*} For the uniform bound, using $|\log_x(y)|\leq\epsilon_+$ and the fixed mixture weights, we obtain \[C_{\Psi_{\mathrm{drift}}}:=\max_{j\in[N]}\sup_{x\in\mathcal M_j}\sup_{\|v\|\leq1}\Big|\Psi_{\mathrm{drift}}(x,\exp_x(\epsilon_+v))\Big|\leq\max_{j\in[N]}\frac{C_\ell}{\epsilon_+}\|f_{\ell}\|_{L^\infty(\mathcal M_j)}\|\nabla_gf_{\ell}\|_{L^\infty(\mathcal M_j)}\leq C_\ell B_{\ell}^2\epsilon_+^{-1}.\] 

We consider the quadratic contribution and the Taylor remainder simultaneously. Define \[[\Psi_{\mathrm{diff}}+\Psi_{\mathrm{rem}}](x,\exp_x(\epsilon_+v)):=\vartheta_\varepsilon\epsilon_+^{d+2}f_{\ell}(x)\left[\frac{1}{2}\langle\nabla_g^2f_{\ell}(x)v,v\rangle_g+\epsilon_+\langle\nabla_g^3f_{\ell}(x)v,v,v\rangle_g\right].\] Using the finiteness of the kernel moment and the definition of $B_{\ell}$, 
\[\Big|\int_{B(0,1)}K_{\epsilon}^{\mathcal N}(v)[\Psi_{\mathrm{diff}}+\Psi_{\mathrm{rem}}](x,\exp_x(\epsilon_+v))dv\Big|\leq C\vartheta_\varepsilon\epsilon_+^{d+2}B_{\ell}^2\int_{B(0,1)}\|v\|^2+\epsilon_+\|v\|^3dv\leq CB_{\ell}^2(1+\epsilon_+).\] Using the inequality $(a+b)^2 \leq 2a^2 + 2b^2$, we obtain the second-moment bound as \[B_{\Psi_{\mathrm{diff}}+\Psi_{\mathrm{rem}}}:=\max_{j\in[N]}\int_{\mathcal M_j}\left(\int_{B(0,1)}K_{\epsilon}^{\mathcal N}(v)[\Psi_{\mathrm{diff}}+\Psi_{\mathrm{rem}}](x, \exp_x(\epsilon_+v))dv\right)^2d\mathrm{vol}_g(x)\leq CB_{\ell}^4(1+\epsilon_+^2).\] Analogous to the estimate of the drift uniform bound, we have \[C_{\Psi_{\mathrm{diff}}+\Psi_{\mathrm{rem}}}\leq\max_{j\in[N]}C\left(\|f_{\ell}\|_{L^\infty(\mathcal M_j)}\|\nabla_g^2f_{\ell}\|_{L^\infty(\mathcal M_j)}+\epsilon_+\|f_{\ell}\|_{L^\infty(\mathcal M_j)}\|\nabla_g^3f_{\ell}\|_{L^\infty(\mathcal M_j)}\right)\leq CB_{\ell}^2(1+\epsilon_+).\]
Finally, for the geometric correction, recall that the kernel discrepancy admits the expansion \eqref{eq:kernel_discrepancy_expansion}, which implicitly contributes a factor of order $\epsilon_+^2$. Consequently, we define \[\Psi_{\mathrm{geo}}(x,\exp_x(\epsilon_+v)):=\vartheta_\varepsilon\epsilon_+^{d+2}f_{\ell}(x)[\epsilon_+\langle\nabla_gf_{\ell}(x),v\rangle_g+\mathcal O(\epsilon_+^2v^2)].\] Retaining the leading order term is sufficient to hit the target order. Proceeding exactly as above, the second-moment and the uniform bound satisfy \[B_{\Psi_{\mathrm{geo}}}\leq CB_{\ell}^4\epsilon_+^2, \qquad C_{\Psi_{\mathrm{geo}}}\leq CB_{\ell}^2\epsilon_+.\] All together and using $\Psi=\Psi_{\mathrm{drift}}+\Psi_{\mathrm{diff}}+\Psi_{\mathrm{rem}}+\Psi_{\mathrm{geo}}$ yields \[B_{\Psi}\leq C_\ell B_{\ell}^4, \quad \text{and} \quad C_{\Psi}\leq C_\ell B_{\ell}^2(\epsilon_+^{-1}+1+\epsilon_+).\] Bernstein's inequality in Lemma~\ref{lem:bernstein} with $t=c\epsilon_+^2B_{\ell}^2$ and $U=\langle f_{\ell}, \mathcal L^{\mathrm{wth}}f_{\ell}\rangle_{L^2(\mathcal M_j)}$ gives \[\mathbb P[|U-\mathbb E[U]|>C_\ell\epsilon_+^2B_{\ell}^2]\leq C\exp\left(-\frac{n\epsilon_+^d(\epsilon_+^2B_{\ell}^2)^2}{B_{\ell}^4+\epsilon_+^2B_{\ell}^2B_{\ell}^2(\epsilon_+^{-1}+1+\epsilon_+)}\right)\leq C\exp\left(-cn\epsilon_+^{d+4}\right).\] Taking a union bound over all $N$ sub-manifolds yields the desired probability bound.\end{proof}

\begin{lemma}[Improved eigenvalue convergence]
\label{lem:imporved_eigenvalue_convergence} Let $B_{\ell}$ be the eigenfunction regularity defined in \eqref{eq:B_definition}. For some constant $C_\ell$ depending on the geometric and density regularity, the index \(\ell\), the eigengap \(\gamma_\ell\), the kernel profile $\eta$, the fixed annulus ratio $\epsilon_-/\epsilon_+$, and the mixture weights, with probability at least \(1-CNn\epsilon_+^{-d}\exp(-cn\epsilon_+^{d+4})\), one has
\begin{equation}\label{eq:improved-eigenvalue-convergence}
|\widehat \lambda_\ell-\sigma_\eta\lambda_\ell|\leq C_\ell B_\ell\epsilon_+^2,
\end{equation}
\end{lemma}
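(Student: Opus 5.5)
We bound each of the four terms in the refined decomposition \eqref{eq:improved_decomposition}; this sharpens Lemma~\ref{lem:preliminary_eigenvalue_convergence} by removing the logarithmic factor. All bounds are taken on the intersection of the high-probability events from Propositions~\ref{prop:consistency of the within-manifold operator}--\ref{prop:between-manifold-energy-bound}, Proposition~\ref{prop:concentration of the graph laplacian}, Theorem~\ref{thm:early_eigenpair_convergence} (which gives $\langle\widehat f_\ell,f_\ell\rangle_{L^2(\mathcal X_n)}\ge c_0$) and Lemma~\ref{lem:eigenfunction_convergence}. A union bound over these events gives probability at least $1-CNn\epsilon_+^{-d}\exp(-cn\epsilon_+^{d+4})$.

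\textbf{Diagonal terms.} The second term, $\langle f_\ell,(\mathcal L^{\mathrm{wth}}-\sigma_\eta\Delta_{\mathcal M})f_\ell\rangle_{L^2(\mathcal X_n)}$, is the new one. We split it into bias and fluctuation. The bias is at most $C_\ell B_\ell^2\epsilon_+^2$ by Proposition~\ref{prop:bias of the graph laplacian}. This uses the Taylor decomposition \eqref{eq:within decomposition} and Lemmas~\ref{lem:linear Taylor term}--\ref{lem:geometric correction R_1}, where odd-symmetry cancellations and geodesic symmetrization avoid the $\mathcal O(\epsilon_+)$ loss. The fluctuation is at most $C_\ell B_\ell^2\epsilon_+^2$ with probability at least $1-CN\exp(-cn\epsilon_+^{d+4})$ by Proposition~\ref{prop:concentration of the graph laplacian}. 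The fourth term, $\langle f_\ell,\mathcal L^{\mathrm{btw}}f_\ell\rangle$, is at most $C_\ell B_\ell^2\epsilon_+^4$ by Proposition~\ref{prop:between-manifold-energy-bound}.

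\textbf{Cross terms.} For the first term, use the duality $|\langle \widehat f_\ell-f_\ell, g\rangle|\le \|\widehat f_\ell-f_\ell\|_{H^1(\mathcal X_n)}\|g\|_{H^{-1}(\mathcal X_n)}$. Lemma~\ref{lem:eigenfunction_convergence} gives $\|\widehat f_\ell-f_\ell\|_{H^1}\lesssim B_\ell\epsilon_+^2\log\epsilon_+^{-1}$. Proposition~\ref{prop:consistency of the within-manifold operator} gives the $H^{-1}$ bound $\|(\mathcal L^{\mathrm{wth}}-\sigma_\eta\Delta_{\mathcal M})f_\ell\|_{H^{-1}}\lesssim B_\ell\epsilon_+^2\log\epsilon_+^{-1}$. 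The product is $\mathcal O(B_\ell^2\epsilon_+^4\log^2\epsilon_+^{-1})$, which is $o(\epsilon_+^2)$. The third term is handled the same way, as in the Remark after Lemma~\ref{lem:eigenfunction_convergence}, and is $\mathcal O(B_\ell\epsilon_+^4\log\epsilon_+^{-1})$.

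\textbf{Combining.} Summing the four bounds and dividing by $c_0$ yields $|\widehat\lambda_\ell-\sigma_\eta\lambda_\ell|\le C_\ell(B_\ell+B_\ell^2)\epsilon_+^2$. Any power of $B_\ell$ beyond the first is absorbed into $C_\ell$, since $B_\ell$ is a fixed finite constant for fixed $\ell$. This gives \eqref{eq:improved-eigenvalue-convergence}.

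\textbf{Main obstacle.} The key point is that the cross terms must be quadratically small, so the $\log$ factors attached to the $H^{-1}$ estimates must not leak into the final rate. This works only if the eigenfunction error in Lemma~\ref{lem:eigenfunction_convergence} is already of order $\epsilon_+^2$ up to logarithms, independently of the improved eigenvalue bound. Lemma~\ref{lem:eigenfunction_convergence} relies only on the preliminary Lemma~\ref{lem:preliminary_eigenvalue_convergence}, so there is no circularity. The remaining care is to check that the $\epsilon_+^{d+4}$ concentration exponent dominates every event used.
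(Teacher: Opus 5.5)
Your proposal is correct and follows essentially the same route as the paper's proof: the four-term split \eqref{eq:improved_decomposition}, where the within-manifold cross term is bounded by $H^1$--$H^{-1}$ duality via Lemma~\ref{lem:eigenfunction_convergence} and Proposition~\ref{prop:consistency of the within-manifold operator} (order $\epsilon_+^4\log^2\epsilon_+^{-1}$), the diagonal within-manifold term is bounded without a logarithm by Proposition~\ref{prop:concentration of the graph laplacian}, and the two between-manifold terms are handled as in Lemma~\ref{lem:preliminary_eigenvalue_convergence} and the Remark after Lemma~\ref{lem:eigenfunction_convergence}. As in the paper, the extra power of $B_\ell$ is absorbed into $C_\ell$, and the stated probability only holds conditionally on the connectivity events of Theorems~\ref{thm:intense_within_manifold} and~\ref{thm:sparse_between_manifolds}: with $r=\epsilon_+^3$, the term $C_{\mathrm{wth}}(n)$ behind Theorem~\ref{thm:early_eigenpair_convergence} decays only like $\exp(-cn\epsilon_+^{5d-2})$ up to polynomial factors, so for $d\ge2$ your closing check that the $\epsilon_+^{d+4}$ exponent dominates every event used fails for that event.
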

\begin{proof}
The proof proceeds by controlling each contribution in the decomposition in \eqref{eq:improved_decomposition}. By Lemma~\ref{lem:eigenfunction_convergence} and Proposition~\ref{prop:consistency of the within-manifold operator}, with probability at least $1-CNn\epsilon_+^{-d}\exp(-cn\epsilon_+^{d+4})$,
\[\big|\langle\hat f_{\ell}-f_{\ell}, (\mathcal{L}^{\mathrm{wth}}-\sigma_{\eta}\Delta_{\mathcal{M}})f_{\ell}\rangle_{L^2(\mathcal{X}_n)}\big|\leq C_\ell B_\ell^2\epsilon_+^4(\log\epsilon_+^{-1})^2.\]
By Proposition~\ref{prop:concentration of the graph laplacian}, with probability at least \(1-CN\exp(-cn\epsilon_+^{d+4})\),
\[\Big|\langle f_{\ell},(\sigma_{\eta}\Delta_{\mathcal{M}}-\mathcal{L}^{\mathrm{wth}})f_{\ell}\rangle_{L^2(\mathcal{X}_n)}\Big|\le C_\ell B_{\ell}^2\epsilon_+^2.\]
The other two terms are bounded similarly as in the proof of Lemma~\ref{lem:preliminary_eigenvalue_convergence}. Hence, the second and third term dominates the convergence, which concludes the proof.
\end{proof}


\subsection{Proof of Theorem~\ref{thm:spectral convergence upper bound}}
\label{ssec:proof_theorem_7}
The proof follows directly from Lemmas~\ref{lem:eigenfunction_convergence} and \ref{lem:imporved_eigenvalue_convergence}. Specifically, with probability at least \(1-CNn\epsilon_+^{-d}\exp(-cn\epsilon_+^{d+4})\),
\[|\widehat \lambda_\ell-\sigma_\eta\lambda_\ell|\leq C_\ell B_\ell\epsilon_+^2, \qquad \|\widehat f_\ell-f_\ell\|_{H^1(\mathcal X_n)}\leq C_\ell B_\ell
\epsilon_+^2\log\epsilon_+^{-1}.\] This high-probability event is conditioned on Theorems~\ref{thm:intense_within_manifold} and \ref{thm:sparse_between_manifolds}. Under the choice of \(r=\epsilon_+^3\) in our construction, the two theorems together require \(\epsilon_+\gg C(\log n /n)^{\frac{1}{5d-2}}\) so that both theorems hold simultaneously with probability at least \(1-Cn^{-2}\). Under this choice, for every \(d\geq2\),
\(Nn\epsilon_+^{-d}\exp(-cn\epsilon_+^{d+4})=o(n^{-2})\). Substituting this back to the eigenpair estimate yields, for all \(d\geq2\), with probability at least \(1-Cn^{-2}\), 
\[|\widehat \lambda_\ell-\sigma_\eta\lambda_\ell|\leq C_\ell B_\ell  {\Big(\frac{\log n}{n}\Big)}^{\frac{1}{2.5d-1}}, \qquad \|\widehat f_\ell-f_\ell\|_{H^1(\mathcal X_n)}\leq C_\ell B_\ell {\Big(\frac{\log n}{n}\Big)}^{\frac{1}{2.5d-1}}\log n.\]
This completes the proof.


\section{Experiments}
\label{sec:experiments}

To numerically verify Theorems~\ref{thm:spectral convergence lower bound}, \ref{thm:spectral convergence (unrealistic) upper bound}, and \ref{thm:spectral convergence upper bound}, we construct synthetic data on two connected, unit-volume flat tori for each $d \in \{1, 2, 3\}$, parameterized by $\mathbb{T}^d := \mathbb{R}^d / \mathbb{Z}^d$ and identified with the periodic coordinate domain $[0, 1)^d$. Define the unit-speed, unit-length circle 
\(\gamma(t) := \frac{1}{2\pi} \bigl(\cos(2\pi t) - 1,\ \sin(2\pi t)\bigr).\) For $d = 1$, the embeddings into $\mathbb{R}^4$ are given by
\[F_1^{(1)}(u)=(\gamma(u),0_2),\qquad F_2^{(1)}(u)=(0_2,\gamma(u)).\]
For $d \in \{2, 3\}$, let $\Gamma_d(u) := (\gamma(u_2), \dots, \gamma(u_d))$, and define the embeddings into $\mathbb{R}^{4d-2}$ by
\[F_1^{(d)}(u) = \bigl(\gamma(u_1), \Gamma_d(u), 0_{2(d-1)}\bigr), \qquad
F_2^{(d)}(u) = \bigl(\gamma(u_1), 0_{2(d-1)}, \Gamma_d(u)\bigr).\]
Setting $\mathcal{M}_j^{(d)} := F_j^{(d)}(\mathbb{T}^d)$, these embeddings induce the standard flat metric satisfying
\[(DF_j^{(d)})^{\mathsf{T}} DF_j^{(d)} = I_d, \qquad \operatorname{vol}_g(\mathcal{M}_j^{(d)}) = 1.\]
The two components intersect at the origin when $d = 1$, whereas for $d \in \{2, 3\}$, they intersect along the common unit-length circle
\[\mathcal{M}_1^{(d)} \cap \mathcal{M}_2^{(d)} = \bigl\{(\gamma(t), 0_{2(d-1)}, 0_{2(d-1)}) : t \in \mathbb{T}^1\bigr\},\]
with their transverse tangent directions mutually orthogonal.

Our nonuniform periodic sampling density on each component is given by
\[\rho_j^{(d)}\bigl(F_j^{(d)}(u)\bigr) = 1 + \sum_{r=1}^{8} a_{jr} \, \varphi_r\!\left(2\pi \boldsymbol{k}_{jr}^{(d)} \cdot u\right), \qquad j \in \{1, 2\},\]
where $\varphi_r(\theta)$ equals $\cos\theta$ for $r \in \{1, 2, 3, 5, 6, 7\}$ and $\sin\theta$ for $r \in \{4, 8\}$. The amplitudes are shared across all \(d\), while the frequencies are specified in Table~\ref{tab:numerical-density-design}. For $d = 1$, we set
\[\boldsymbol{k}_{jr}^{(1)} = \boldsymbol{k}_{jr,1}^{(2)} + 4\boldsymbol{k}_{jr,2}^{(2)},\]
implying that the one-dimensional chart density is precisely the restriction of the two-dimensional chart density to $u \mapsto (u, 4u \bmod 1)$. All frequencies are nonzero, so the densities are properly normalized. 
Moreover,
\(\sum_{r=1}^{8}|a_{1r}|\approx0.552370\) and
\(\sum_{r=1}^{8}|a_{2r}|\approx0.666109\),
which imply the dimension-independent bounds
\(0.3338\leq\rho_j^{(d)}\leq1.6662.\)
The densities are \(C^\infty\) and hence naturally satisfy the
density smoothness assumptions with \(s=2\). See Figure~\ref{fig:density} for a visualization for this density construction.  

\begin{table}[t]
\centering
\small
\setlength{\tabcolsep}{4pt}
\begin{tabular}{cccccccc}
\toprule
\(r\) & \(\varphi_r\) & \(a_{1r}\) & \(a_{2r}\)
& \(\boldsymbol k_{1r}^{(2)}\)
& \(\boldsymbol k_{2r}^{(2)}\)
& \(\boldsymbol k_{1r}^{(3)}\)
& \(\boldsymbol k_{2r}^{(3)}\)\\
\midrule
1 & \(\cos\) & 0.19429357 & 0.21530560
  & \((2,0)\) & \((2,0)\)
  & \((2,0,0)\) & \((2,0,0)\)\\
2 & \(\cos\) & 0.07146951 & 0.16111551
  & \((0,2)\) & \((0,2)\)
  & \((0,2,0)\) & \((0,2,0)\)\\
3 & \(\cos\) & 0.035 & 0.040
  & \((1,1)\) & \((1,-1)\)
  & \((1,1,0)\) & \((1,-1,0)\)\\
4 & \(\sin\) & 0.025 & 0.020
  & \((2,1)\) & \((1,2)\)
  & \((2,1,0)\) & \((1,2,0)\)\\
5 & \(\cos\) & 0.11797347 & 0.07914921
  & \((4,0)\) & \((4,0)\)
  & \((0,0,2)\) & \((0,0,2)\)\\
6 & \(\cos\) & 0.05728538 & 0.10603882
  & \((0,4)\) & \((0,4)\)
  & \((0,4,0)\) & \((0,4,0)\)\\
7 & \(\cos\) & 0.03634769 & 0.02649998
  & \((2,2)\) & \((2,-2)\)
  & \((1,1,1)\) & \((1,1,1)\)\\
8 & \(\sin\) & 0.015 & 0.018
  & \((4,1)\) & \((1,4)\)
  & \((1,1,2)\) & \((1,1,2)\)\\
\bottomrule
\end{tabular}
\caption{Density amplitudes and frequencies used in the
numerical experiments. Coefficients are displayed to eight
decimal places; the computations use their unrounded values.}
\label{tab:numerical-density-design}
\end{table}

\begin{figure}[htbp]
    \centering
    \includegraphics[width=0.8\textwidth]{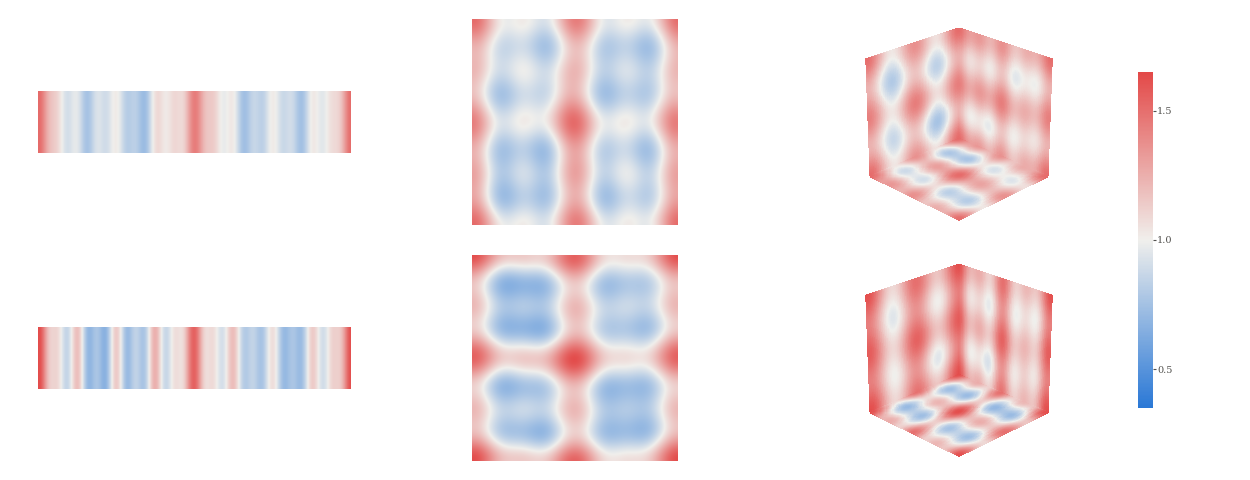}
    \caption{The nonuniform sampling densities $\rho_j^{(d)}$ on the unit-volume flat tori for $d \in \{1,2,3\}$ and $j \in \{1,2\}$. Each column displays the densities for the two components at dimension $d$.}
    \label{fig:density}
\end{figure}

Assigning equal weights to both components, we sample from the mixture distribution
\[
\mu^{(d)} = \frac{1}{2} \mu_1^{(d)} + \frac{1}{2} \mu_2^{(d)}, \qquad d\mu_j^{(d)} = \rho_j^{(d)} \, d\operatorname{vol}_g.
\]
Specifically, for each observation, we independently draw a component index $Z_i \in \{1, 2\}$ with $\mathbb{P}(Z_i = j) = 1/2$, sample the periodic coordinate $U_i$ according to the chart density $\rho_{Z_i}^{(d)} \circ F_{Z_i}^{(d)}$, and set $X_i = F_{Z_i}^{(d)}(U_i)$. 
For the PBC coordinates, samples are generated via rejection sampling from the uniform distribution on $[0, 1)^d$, utilizing the envelope constant $B_j = 1 + \sum_{r=1}^{8} |a_{jr}|$ and the acceptance probability $\rho_j^{(d)}\bigl(F_j^{(d)}(u)\bigr) / B_j$. 
The oracle KDE and PBC experiments share all identical component manifolds, densities, and mixture weights.

We first verify Theorems~\ref{thm:spectral convergence lower bound} and \ref{thm:spectral convergence (unrealistic) upper bound} using oracle KDE with periodic Gaussian kernel
\[\widetilde\rho_{j,h}(x)=\frac1{n_j}\sum_{i:Z_i=j}K_h(x-X_i), \qquad 
K_h(x)=\sum_{z\in\mathbb Z^2}\frac1{2\pi h^2}\exp\!\left(-\frac{|x+z|^2}{2h^2}\right).\]
The component labels and intrinsic coordinates are known a priori. Our experiment uses its equivalent discrete Fourier form with prescribed bandwidth
\[K_{h_n}(x)=\sum_{k\in\mathbb Z^2}e^{-2\pi^2h_{n}^2|\boldsymbol{\xi}|^2}e^{2\pi i \boldsymbol{\xi}\cdot x}, \qquad h_n=0.2\,(n/2)^{-1/{(d+2s)}}.\] Let $N=256$. The samples are first binned onto an
$N\times N$ grid of the torus. Let $C_j$ denote the grid counts of component
$j$ and $n_j$ its sample size. The smoothed density is computed in the Fourier
domain as
\[\widetilde\rho_j=\frac{N^{2}}{n_j}\,\operatorname{IFFT}\!\Bigl[\operatorname{FFT}(C_j)(\boldsymbol{\xi})\,
e^{-2\pi^{2}h_n^{2}|\boldsymbol{\xi}|^{2}}\,\mathbf 1_{\{|\boldsymbol{\xi}|_\infty\le 2m_n\}}\Bigr],\qquad m_n=\Bigl\lceil \frac{2.5}{2\pi h_n}\Bigr\rceil+3 ,\]
where $\boldsymbol{\xi}\in\mathbb Z^{2}$, the factor $e^{-2\pi^{2}h_n^{2}|\boldsymbol{\xi}|^{2}}$ is the Fourier multiplier of a Gaussian kernel of bandwidth $h_n$, and the indicator
truncates the spectrum at $|\boldsymbol{\xi}|_\infty\le 2m_n$.
To keep the estimate positive and normalized on the grid, we set
\[\widehat\rho_{j,ab}=\frac{\max\{\widetilde\rho_{j,ab},\,0.05\}}
{N^{-2}\sum_{c,d}\max\{\widetilde\rho_{j,cd},\,0.05\}},
\qquad a,b=1,\dots,N,\]
so $\widehat\rho_{j}$ has unit mean on the grid. In all experiments,
the lower bound $0.05$ was never active. 
\begin{figure}[H]
    \centering
    \begin{subfigure}[b]{0.33\textwidth}
        \centering
        \includegraphics[width=\textwidth]{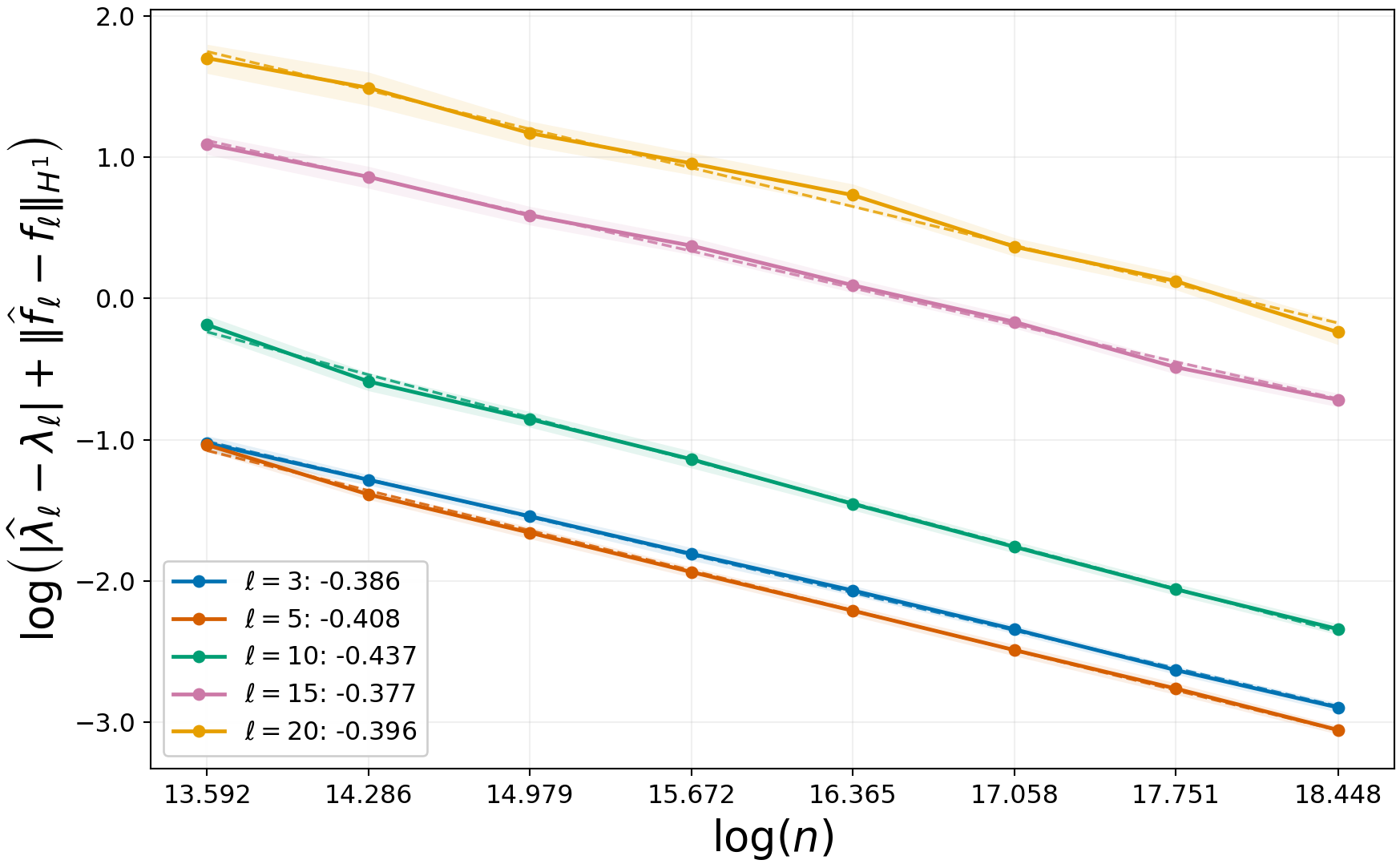}
        \label{fig:1d_KDE}
    \end{subfigure}
    \hfill
    \begin{subfigure}[b]{0.33\textwidth}
        \centering
        \includegraphics[width=\textwidth]{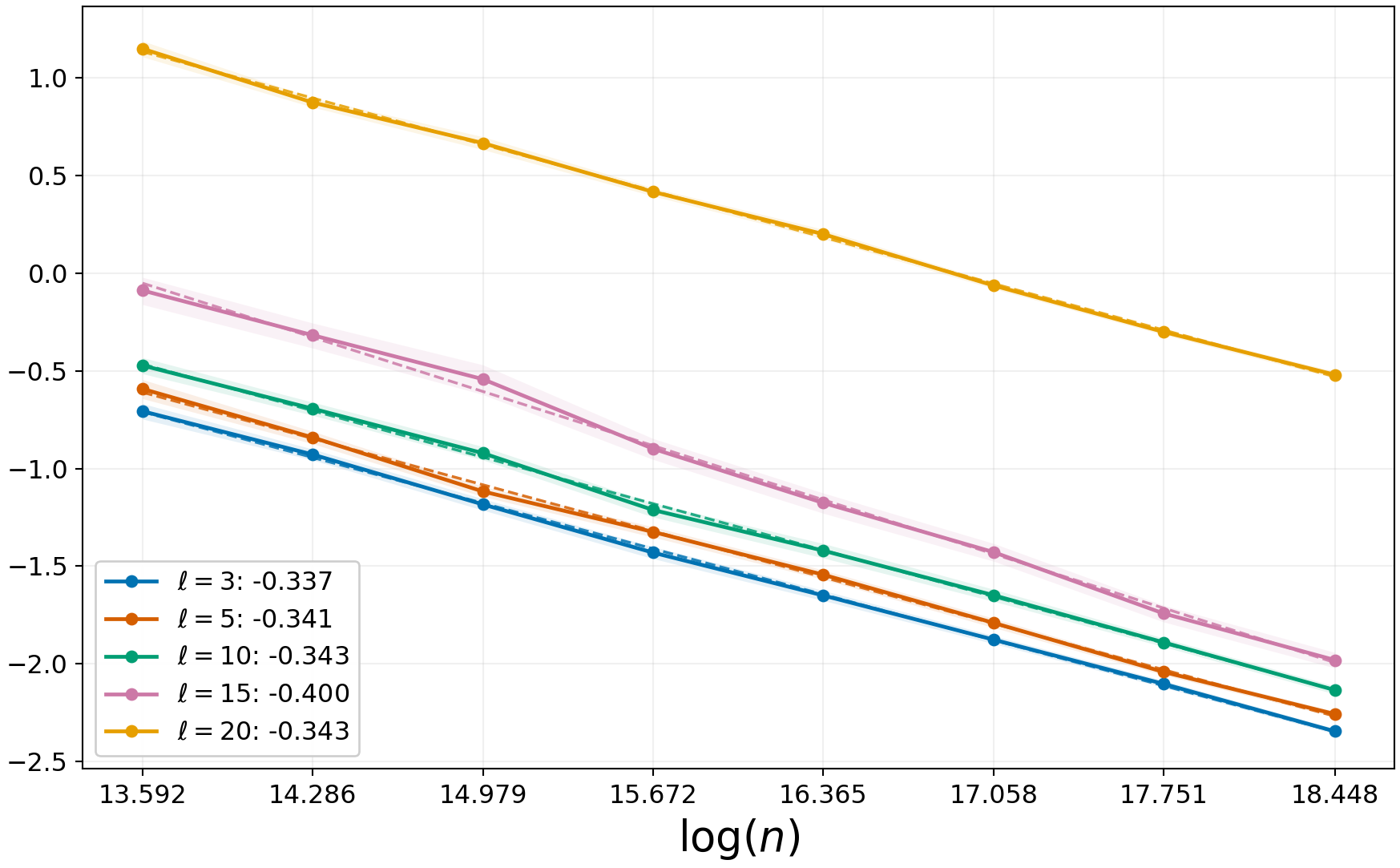}
        \label{fig:2d_KDE}
    \end{subfigure}%
    \hfill
    \begin{subfigure}[b]{0.33\textwidth}
        \centering
        \includegraphics[width=\textwidth]{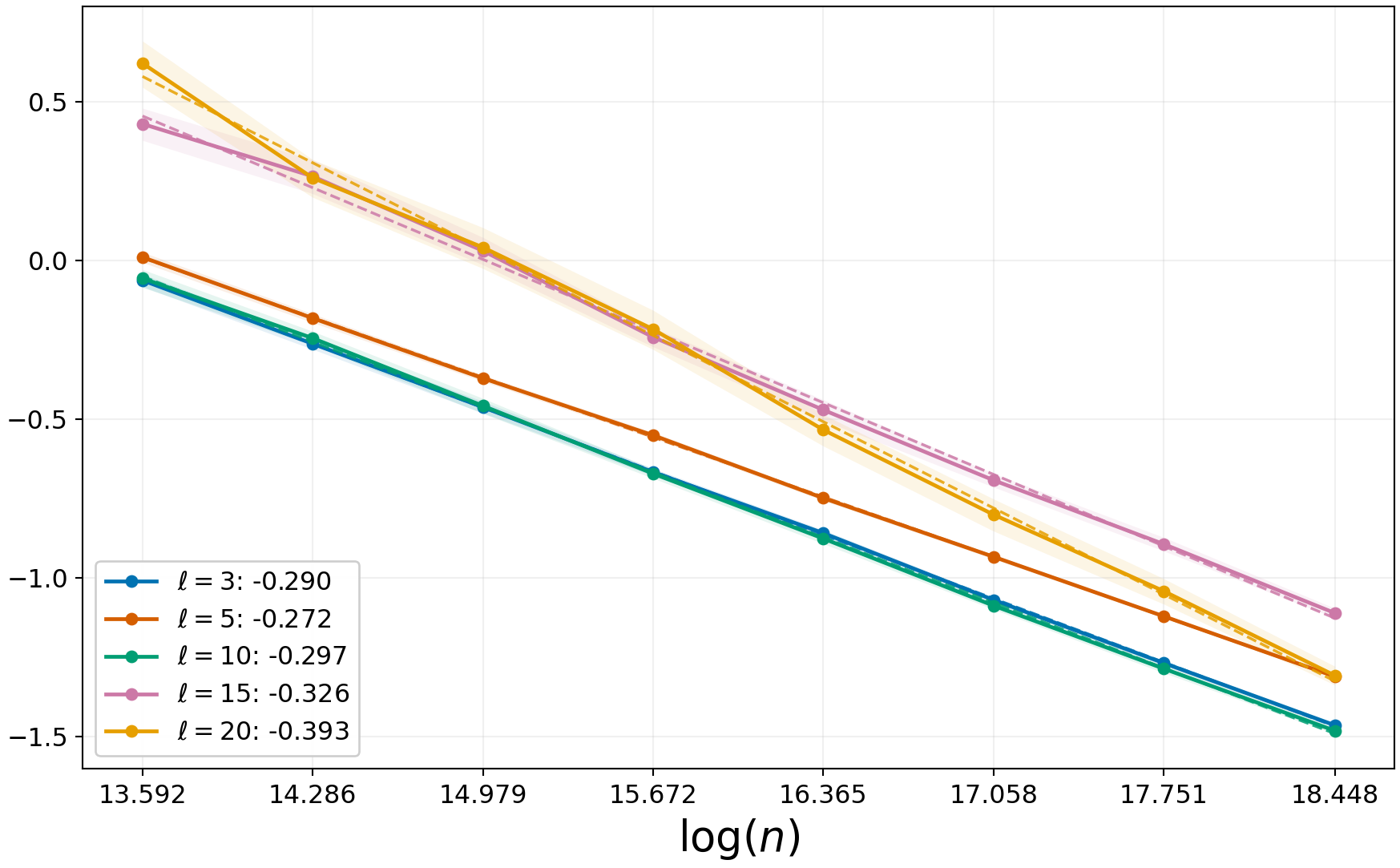}
        \label{fig:3d_KDE}
    \end{subfigure}   
    \caption{Oracle KDE spectral convergence rate for \(d\in\{1,2,3\}\).}
    \label{fig:KDE_convergence}
\end{figure}
The leading eigenpairs are therefore obtained
from the problems
\[-\frac{1}{2\widehat\rho_{j,h}}\operatorname{div}\!\bigl(\widehat\rho_{j,h}^{\,2}
\nabla\widehat f_{\ell,j}\bigr)=\widehat\lambda_\ell\,\widehat f_{\ell,j},\qquad j=1,2,\]
whose weak form is
\[\frac12\int_{\mathbb T^{2}}\widehat\rho_{j,h}^{\,2}\,
\nabla\widehat f_{\ell,j}\cdot\nabla\overline v\,dx
=\widehat\lambda_\ell\int_{\mathbb T^{2}}\widehat\rho_{j,h}\,
\widehat f_{\ell,j}\,\overline v\,dx
\quad\text{for all } v\in H^{1}(\mathbb T^{2}).\]
The problem is solved by a Fourier--Galerkin method in
\(V_{m_n}=\operatorname{span}\bigl\{e^{2\pi i\,\boldsymbol{\xi}\cdot x}:\ |\boldsymbol{\xi}|_\infty\le m_n\bigr\}.\)
The spectra of the two components are then merged and sorted in increasing order
to give $\widehat\lambda_1\le\widehat\lambda_2\le\cdots$. The reference spectrum
$\lambda_1\le\lambda_2\le\cdots$ is computed in the same way from the true
densities $\rho_j$, using a Fourier truncation at $|\boldsymbol{\xi}|_\infty\le 18$. Both the population and the estimated eigenfunctions are normalized using corresponding densities $\rho_j$ and $\widehat\rho_{j,h}$. Experimental results for \(d\in\{1,2,3\}\), \(\ell\in\{3,5,10,15,20\}\) and \(
n=\{0.8,1.6,3.2,6.4,12.8,25.6,51.2,102.8\}\times10^6,
\) are given in the log-log plot in Figure~\ref{fig:KDE_convergence}. The dashed lines are least-squares fits, and the shaded bands show the standard deviation. All the errors are averaged over $50$ independently generated data sets. 

For $s=2$, Theorems~\ref{thm:spectral convergence lower bound} and
\ref{thm:spectral convergence (unrealistic) upper bound} give the rate $n^{-s/(2s+d)}$, that is, $n^{-0.400}$, $n^{-0.333}$ and $n^{-0.286}$ for $d=1,2,3$. The fitted rates are in close agreement with these exponents: $13$ of the $15$ pairs $(d,\ell)$ lie within $0.05$ of
the theoretical exponent, and averaging over $\ell$ gives $0.401$, $0.353$ and
$0.316$ for $d=1,2,3$, against $0.400$, $0.333$ and $0.286$. In particular, the convergence slows down as the dimension grows, as the theory demonstrates.

Two further features are worth noting. First, the largest deviations occur for
large $\ell$, namely $\ell=15$ for $d=2$ ($0.400$) and $\ell=20$ for $d=3$
($0.393$). There the error curves are less regular at small $n$ and the fitted
slopes are steeper than the theoretical one, which suggests that the higher
eigenpairs are resolved only once $n$ is sufficiently large. Second, a slope
slightly steeper than the theoretical exponent does not contradict the lower
bound: the minimax rate describes the worst case over the smoothness class,
whereas the experiments use a single $C^\infty$ density. Finally, the error level
generally increases with $\ell$ while the slopes remain close to each other,
so $\ell$ appears to affect the constant rather than the rate.

\begin{figure}[H]
    \centering
    \begin{subfigure}[b]{0.33\textwidth}
        \centering
        \includegraphics[width=\textwidth]{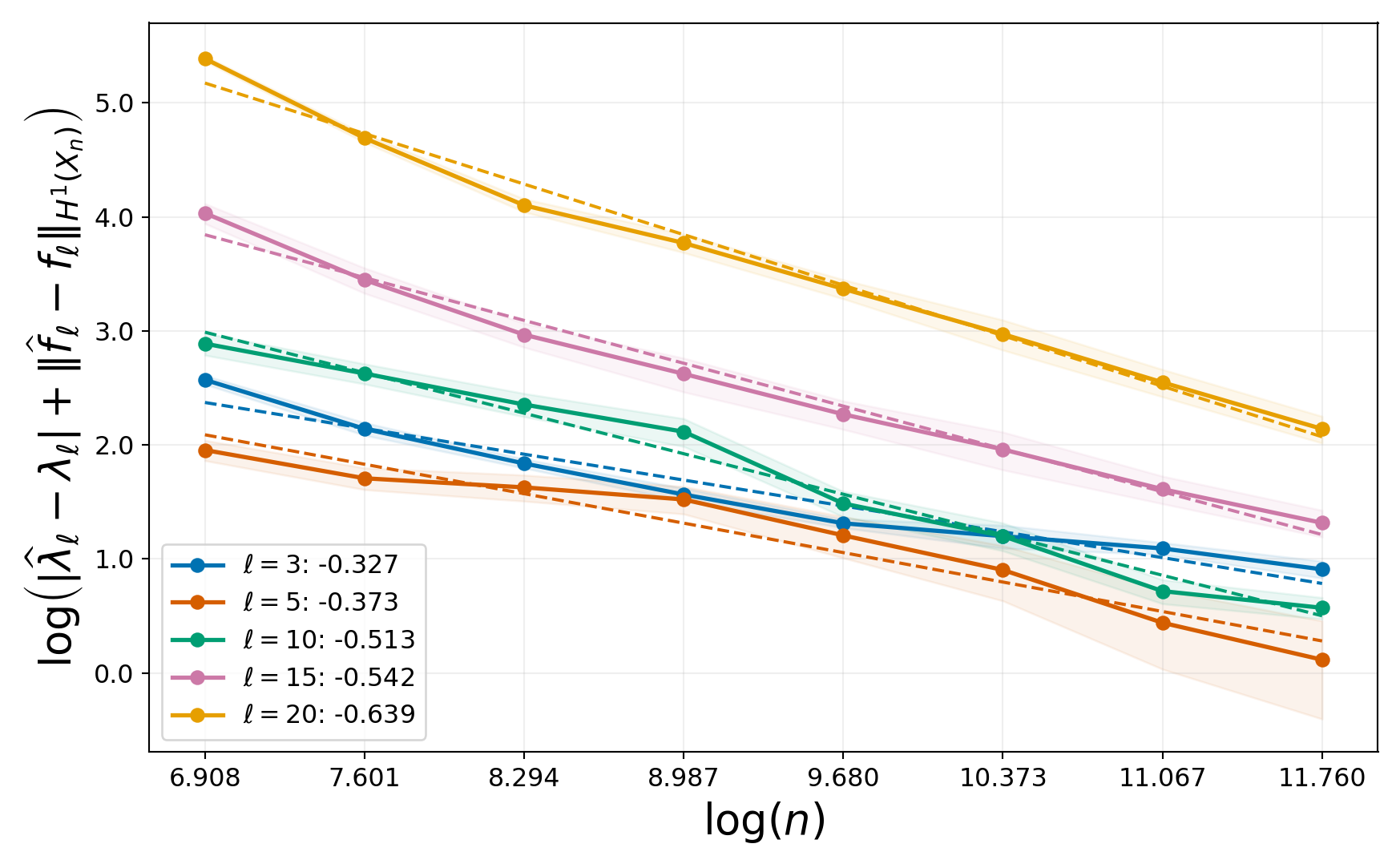}
        \label{fig:1d_pbc}
    \end{subfigure}
    \hfill
    \begin{subfigure}[b]{0.33\textwidth}
        \centering
        \includegraphics[width=\textwidth]{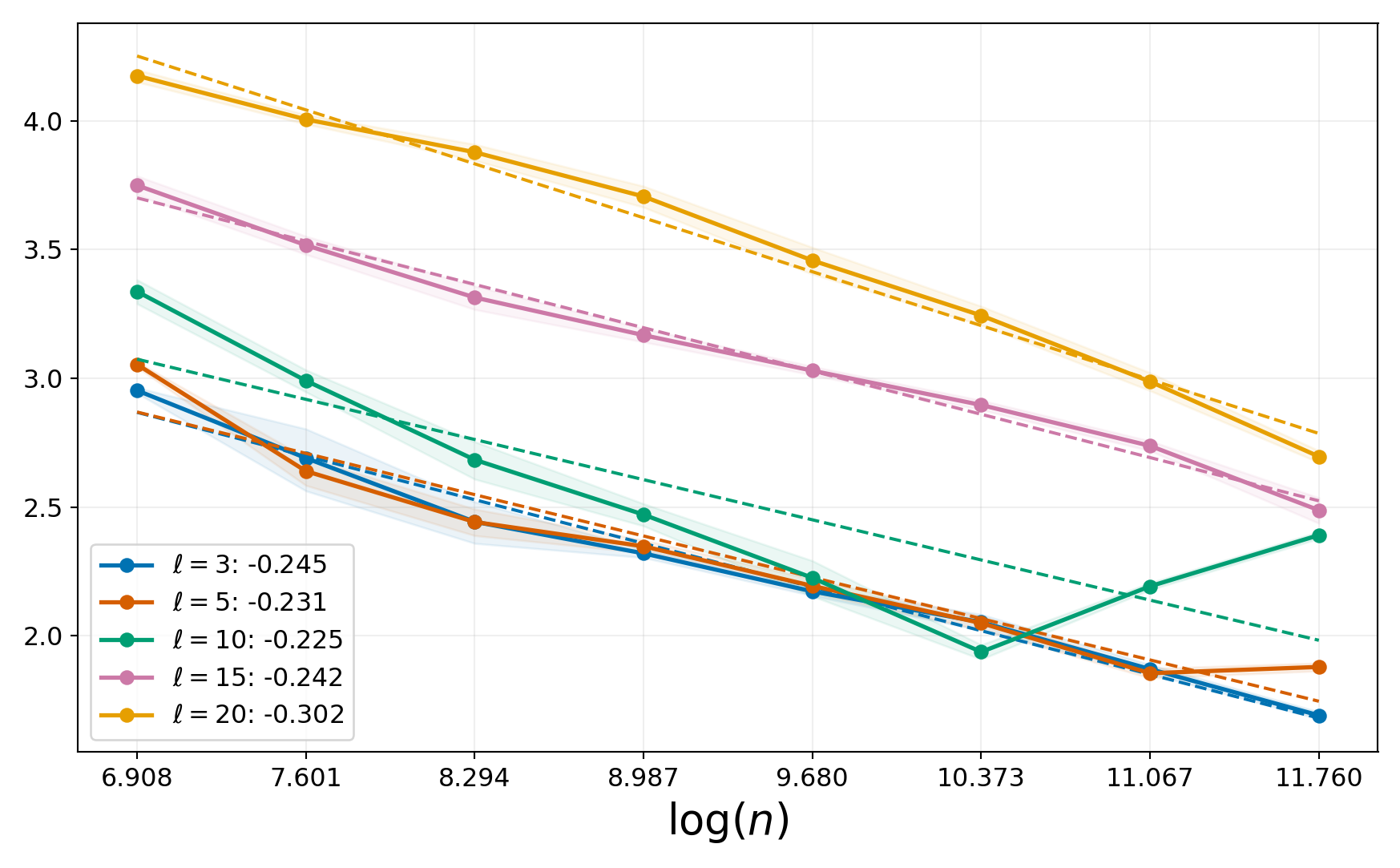}
        \label{fig:2d_pbc}
    \end{subfigure}%
    \hfill
    \begin{subfigure}[b]{0.33\textwidth}
        \centering
        \includegraphics[width=\textwidth]{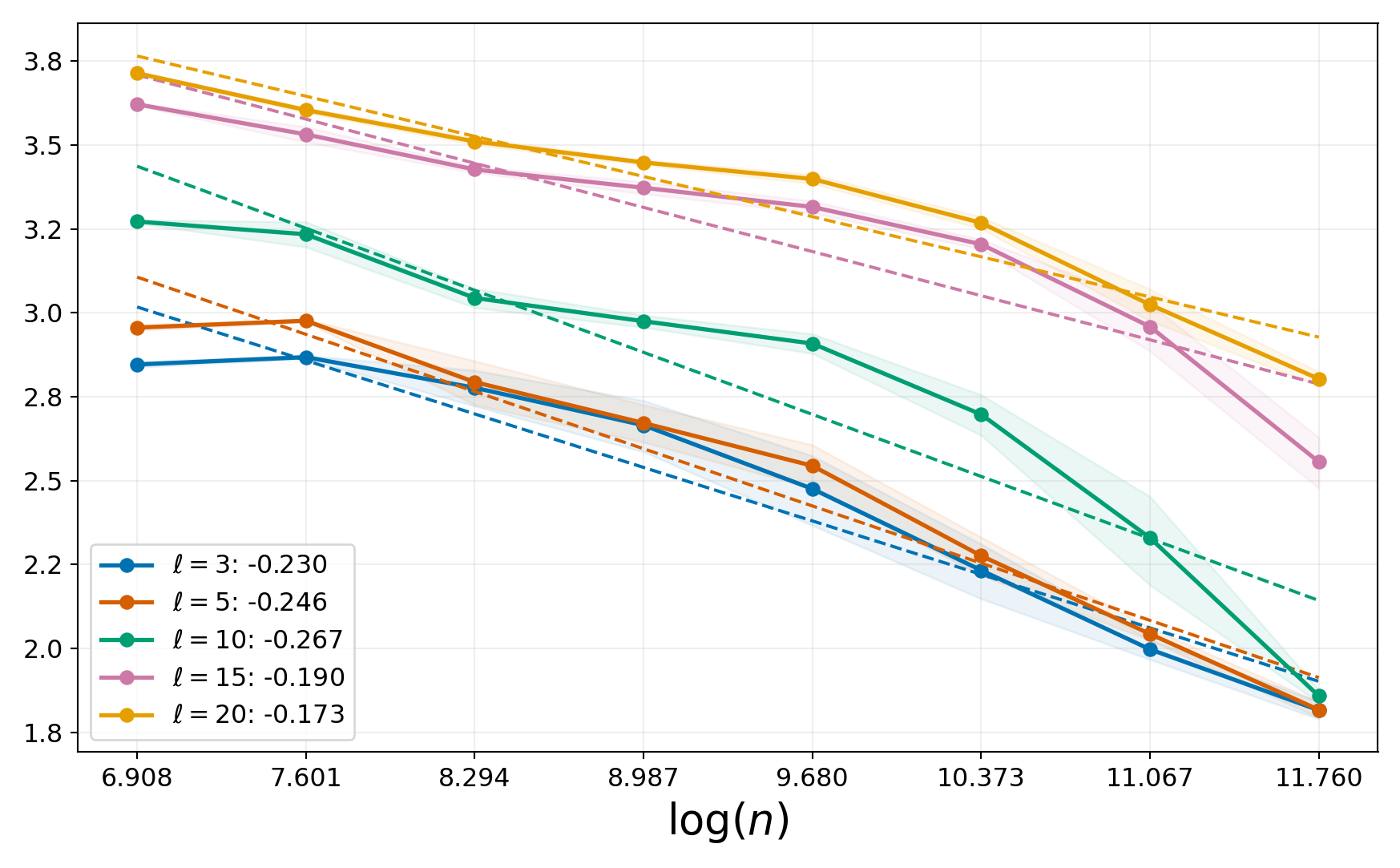}
        \label{fig:3d_pbc}
    \end{subfigure}   
    \caption{Realistic spectral convergence rate for \(d\in\{1,2,3\}\).}
    \label{fig:realistic_convergence}
\end{figure}



Next, we test the convergence rate in Theorem~\ref{thm:spectral convergence upper bound} where a practical graph based method is applied; the latent component labels are used only to generate the samples and to evaluate the results. We restrict to $n\in\{0.1,0.2,0.4,0.8,1.6,3.2,6.4,12.8\}\times10^{4}$. Given the sample size $n$, the graph scales are set by the schedule
\begin{equation}\label{eq:PBC_schedule}
t_n=\left[\frac{(\log n)/n}{(\log n_0)/n_0}\right]^{\beta},\qquad
\epsilon_+=C_\epsilon\,t_n,\qquad
\epsilon_-=c_\epsilon\,\epsilon_+,\qquad
r=C_r\,C_\epsilon\,t_n^{\,p},
\end{equation}
where $n_0$ is a reference sample size, with $n_0=2000$ for $d=1,2$ and $n_0=4000$
for $d=3$. The values used for each $d$ are listed in Table~\ref{tab:PBC_params}.
\begin{table}[t]\centering\small
\begin{tabular}{ccccccc}
\toprule
$d$ & $C_\epsilon$ & $\beta$ & $c_\epsilon$ & $C_r$ & $p$ & $\alpha$\\
\midrule
1 & 0.028 & $1/3$  & 0.6 & 0.50  & 3 & 3.5$\epsilon_+$ \\
2 & 0.11  & $1/8$  & 0.78 & 0.60 & 3 & 3.5$\epsilon_+$ \\
3 & 0.18  & $1/13$ & 0.8 & 0.72 & 3 & 3.8$\epsilon_+$ \\
\bottomrule
\end{tabular}
\caption{Parameters used in the PBC experiments.}
\label{tab:PBC_params}
\end{table}
Figure~\ref{fig:realistic_convergence} shows the experimental results for a practical spectral convergence rate on a log--log scale; as before, all errors are averaged over 50 independently generated data sets. The fitted rates are $n^{-0.479}$ for $d=1$, $n^{-0.249}$ for $d=2$, and $n^{-0.222}$ for $d=3$. Compared with Figure~\ref{fig:KDE_convergence}, the rates for $d=2$ and $d=3$ are slower than those of the oracle KDE, whereas the rate for $d=1$ is faster.
Recall that Theorem~\ref{thm:spectral convergence upper bound} guarantees, for $d\geq 2$, a convergence rate at least as fast as $n^{-1/(2.5d-1)}$, that is, $n^{-1/4}$ for $d=2$ and $n^{-2/13}\approx n^{-0.154}$ for $d=3$. The experimental results are consistent with this bound: the observed exponent for $d=2$ essentially matches the predicted one, and that for $d=3$ slightly exceeds it. This suggests that the rate established in Theorem~\ref{thm:spectral convergence upper bound} may not be sharp and could potentially be improved. 
Moreover, the theory makes no statement for $d=1$, where we observe a rate of $n^{-0.479}$. We view both observations as directions for future work: tightening the upper bound and extending the analysis to $d=1$.

The errors in Figure~\ref{fig:realistic_convergence} fluctuate considerably, which may be due to two factors. First, because of the limited scalability of the implementation, the sample sizes are relatively small compared with those used for the oracle KDE. Second, there is no general guidance for tuning the parameters. We fix $\beta$ and $p$ according to the theory so that the parameters have the scale prescribed by the graph construction, but the results still depend on the prefactors $C_\epsilon$, $c_\epsilon$, $C_r$ and \(\alpha\). A more reliable way to identify the critical local scale of the approximation graph would likely yield more accurate results. Overall, practical methods still trail the theory, and principled parameter-selection rules are needed.

\section{Conclusion \& Future Direction}
\label{sec:conclusion}
In this work, we establish the minimax lower bound for the spectral estimation of the tensorized Laplace--Beltrami operator in multi-manifold clustering. We further show that this lower bound is sharp by constructing a matching oracle estimator when the manifold components are known. Beyond this information-theoretic characterization, we establish an improved spectral convergence guarantee for the graph Laplacian to the tensorized Laplace--Beltrami operator in the more practical setting where manifold components are unknown. Our analysis yields sharper convergence rates for both eigenvalues and eigenfunctions than previously established results, advancing the theoretical understanding of spectral estimation in multi-manifold learning.

Several important questions remain open. A central challenge is to close the gap between the minimax rate achieved by the oracle estimator and the convergence rate established for the practical graph Laplacian. Whether this gap can be eliminated through sharper analysis or improved graph constructions, or instead reflects an intrinsic statistical cost of unknown manifold membership, remains to be investigated. Another important direction is to extend the minimax analysis beyond simple eigenpairs supported on individual manifold components to more general eigenspaces spanning multiple components, particularly when the manifolds intersect. Such extensions would provide a more complete characterization of the statistical limits of multi-manifold spectral estimation and further clarify the relationship between information-theoretic optimality and achievable algorithmic guarantees.




\clearpage
\bibliography{refs}
\clearpage
\appendix
\section{Auxiliary Statistical Tools}
\label{app:statistical tools}

\subsection{Fano's Method}
\label{sec:fano's method}

We briefly review Fano's method, adapting it to our specific notation and problem setting; see \cite{Wainwright2019} for a comprehensive treatment. 
\begin{proposition}[Fano's method; cf.\ \cite{Wainwright2019}]
\label{prop:fano}
Let $\mathcal P_M=\{\rho^{(1)},\dots,\rho^{(M)}\}\subset\mathcal P^{(s)}$, and let $\mu^{(a)}$ be the sampling measure induced by $\rho^{(a)}$. Suppose that, for some $\alpha,\beta>0$,
\begin{align}
d_{\mathrm{eig}}(\theta(\rho^{(a)}),\theta(\rho^{(b)})) &\ge 2\alpha \qquad \text{for all } a\neq b,
\label{eq:fano_sep}\\
\mathrm{KL}(\mu^{(a)}||\mu^{(b)}) &\le \beta.
\qquad \text{for all } a\neq b.
\label{eq:fano_kl}\end{align}
Then the minimax risk satisfies
\[\mathfrak M_n\bigl(\theta(\mathcal P_M); d_{\mathrm{eig}}\bigr) \ge\
\alpha\left(1-\frac{n\beta+\log 2}{\log M}\right).\]
\end{proposition}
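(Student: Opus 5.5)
The plan is the standard two-step argument: reduce estimation to an $M$-ary hypothesis test, then lower bound the testing error with Fano's inequality and a mutual-information bound derived from \eqref{eq:fano_kl}. Throughout I use that $d_{\mathrm{eig}}$ is a metric on $\mathbb R\times H^1(\mathcal M)$, being the sum of an absolute value and a norm, so the triangle inequality is available. I also read $\beta$ in \eqref{eq:fano_kl} as a single-observation bound, so that it enters as $n\beta$ after tensorization.

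\emph{Step 1 (reduction to testing).} Fix any measurable estimator $\widehat\theta$ and write $\theta_a:=\theta(\rho^{(a)})$. Define the minimum-distance test
\[
\psi(\mathcal X_n):=\operatorname*{arg\,min}_{a\in[M]} d_{\mathrm{eig}}(\widehat\theta(\mathcal X_n),\theta_a),
\]
breaking ties by the smallest index; this is measurable because it is a minimum over finitely many measurable functions. Suppose $\psi\neq a$. Then $d_{\mathrm{eig}}(\widehat\theta,\theta_\psi)\le d_{\mathrm{eig}}(\widehat\theta,\theta_a)$. The triangle inequality and \eqref{eq:fano_sep} give
\[
2\alpha\le d_{\mathrm{eig}}(\theta_a,\theta_\psi)\le 2\,d_{\mathrm{eig}}(\widehat\theta,\theta_a),
\]
so $d_{\mathrm{eig}}(\widehat\theta,\theta_a)\ge\alpha$. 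Markov's inequality then yields
\[
\mathbb E_a\, d_{\mathrm{eig}}(\widehat\theta,\theta_a)\ge\alpha\,\mathbb P_a(\psi\neq a).
\]
Taking the maximum over $a$, which dominates the uniform average, gives
\[
\max_a \mathbb E_a d_{\mathrm{eig}}\ \ge\ \alpha\cdot\frac1M\sum_a\mathbb P_a(\psi\ne a).
\]

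\emph{Step 2 (Fano's inequality).} Let $V$ be uniform on $[M]$ and, conditionally on $V=a$, let $\mathcal X_n\sim(\mu^{(a)})^{\otimes n}$. Fano's inequality for the Markov chain $V\to\mathcal X_n\to\psi$ states
\[
\mathbb P(\psi\neq V)\ge 1-\frac{I(V;\mathcal X_n)+\log 2}{\log M}.
\]
I will quote this standard information-theoretic fact; it follows from the data-processing inequality and the bound $H(V\mid\psi)\le\log 2+\mathbb P(\psi\ne V)\log(M-1)$.

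\emph{Step 3 (mutual information bound).} This is the only step with content specific to the hypotheses. Write $\bar\mu:=\frac1M\sum_b(\mu^{(b)})^{\otimes n}$ for the mixture. Then
\[
I(V;\mathcal X_n)=\frac1M\sum_a \mathrm{KL}\big((\mu^{(a)})^{\otimes n}\,\|\,\bar\mu\big).
\]
Convexity of $\mathrm{KL}$ in its second argument bounds this by $\frac1{M^2}\sum_{a,b}\mathrm{KL}((\mu^{(a)})^{\otimes n}\|(\mu^{(b)})^{\otimes n})$. Tensorization gives $\mathrm{KL}(\mu^{\otimes n}\|\nu^{\otimes n})=n\,\mathrm{KL}(\mu\|\nu)$, and combining with \eqref{eq:fano_kl} yields $I(V;\mathcal X_n)\le n\beta$.

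\emph{Conclusion.} Chaining Steps 1--3 gives
\[
\sup_{a}\mathbb E_a d_{\mathrm{eig}}(\widehat\theta,\theta_a)\ge\alpha\Big(1-\frac{n\beta+\log2}{\log M}\Big).
\]
The right-hand side does not depend on $\widehat\theta$, so taking the infimum over estimators proves the claim. I expect no real obstacle. The only point needing care is bookkeeping: $\beta$ must be the single-sample KL bound so that tensorization produces exactly $n\beta$. In Section~\ref{sec:conclude minimax lower bound} the $n$-sample bound of Lemma~\ref{lem:kl_bound} plays the role of $n\beta$ directly.
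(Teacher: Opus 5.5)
Your proof is correct and is the standard argument the paper defers to via \cite{Wainwright2019}. It reduces estimation to an $M$-ary test through the minimum-distance test and the triangle inequality, applies Fano's inequality, and bounds $I(V;\mathcal X_n)\le \frac{1}{M^2}\sum_{a,b}\mathrm{KL}\bigl((\mu^{(a)})^{\otimes n}\,\|\,(\mu^{(b)})^{\otimes n}\bigr)\le n\beta$ by convexity and tensorization. Your reading of $\beta$ as a single-sample bound matches the statement, and you are right that the paper's application plugs in the $n$-sample bound of Lemma~\ref{lem:kl_bound} directly.
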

Here $\mathrm{KL}(\cdot\,\|\,\cdot)$ denotes the Kullback--Leibler (KL) divergence between probability measures 
\[\mathrm{KL}(\mu^{(a)}||\mu^{(b)})=
\int_{\mathcal M} \log\!\left(\frac{d\mu^{(a)}}{d\mu^{(b)}}(x)\right)\,d\mu^{(a)}(x),\] provided $\mu^{(a)}$ is absolutely continuous with respect to $\mu^{(b)}$.
For $n$ i.i.d.\ samples,
\[\mathrm{KL}((\mu^{(a)})^{\otimes n}\|(\mu^{(b)})^{\otimes n})
= n\,\mathrm{KL}(\mu^{(a)}\|\mu^{(b)}).\]

\subsection{Kernel Density Estimation on Manifolds}
\label{sec:KDE on manifolds}
Kernel Density Estimation (KDE) is a non-parametric statistical method that applies kernel smoothing for probability density estimation. For a fixed point $x$, classical KDE in a $d$-dimensional Euclidean space has its kernel density estimator \[\hat\rho_h(x)=\frac{1}{nh^d}\sum_{i=1}^n K\left(\frac{x-x_i}{h}\right),\] with $K(\cdot)$ the kernel function and $h>0$ the smoothing bandwidth. The following result is a higher-order extension of
\citet[Theorem~3.1]{pelletier2005}, which establishes the case $s=2$.

\begin{lemma}[KDE upper bound on a known manifold]
\label{lem:KDE manifold}
Let $\M$ be a known compact $d$-dimensional Riemannian manifold without boundary satisfying Assumption~\ref{assump:geometric assumptions}, and
let $X_1,\dots,X_n$ be independently sampled from
$\rho\,d\mathrm{vol}_g$, where $\rho\in\mathcal P_{\M}^{(s)}$ for an integer $s\ge2$. Then, for all sufficiently small $h>0$, there exists a KDE estimator $\widehat\rho_h\in\mathcal P_{\M}^{(s)}$ such that
\[\mathbb E\|\widehat\rho_h-\rho\|_{L^2(\M)}^2\le
C\left(\frac{1}{nh^d}+h^{2s}\right).\]
Consequently, choosing $h\asymp n^{-1/(2s+d)}$ gives
\[\mathbb E\|\widehat\rho_h-\rho\|_{L^2(\M)}\le
C n^{-s/(2s+d)}.\]
Here $C$ depends only on the geometric and density-class parameters, and the kernel.\end{lemma}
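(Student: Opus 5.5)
We prove Lemma~\ref{lem:KDE manifold} by the usual bias--variance split for a kernel density estimator, carried out in normal coordinates, followed by a projection step that puts the estimator into $\mathcal P_{\M}^{(s)}$.

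\textbf{Construction.} Fix a compactly supported radial kernel $K:\R^d\to\R$ of order $s$: $\int K=1$, and $\int u^\alpha K(u)\,du=0$ for $1\le|\alpha|\le s-1$. For $s\ge3$ this forces $K$ to take negative values. Following \citet{pelletier2005}, define
\[\tilde\rho_h(x):=\frac1{nh^d}\sum_{i=1}^n\frac{1}{\theta_{X_i}(x)}K\!\Big(\frac{\exp_x^{-1}(X_i)}{h}\Big),\]
where $\theta_y(x)$ is the volume density function, that is, the Jacobian of $\exp_y$ at $\exp_y^{-1}(x)$. This is well defined once $h$ is below the injectivity radius, which Assumption~\ref{assump:geometric assumptions} bounds uniformly from below.

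\textbf{Variance.} For fixed $x$, the summands are i.i.d. and bounded by $Ch^{-d}$. Their second moment is $\lesssim h^{-d}\rho_{\max}$, because the kernel support has volume $\asymp h^d$. Hence $\operatorname{Var}\tilde\rho_h(x)\lesssim (nh^d)^{-1}$ uniformly in $x$. Integrating over $\M$, whose volume is bounded, gives $\mathbb E\|\tilde\rho_h-\mathbb E\tilde\rho_h\|_{L^2}^2\lesssim (nh^d)^{-1}$.

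\textbf{Bias.} Change variables $y=\exp_x(hu)$. The factor $1/\theta$ exactly cancels the Jacobian, up to the swap $\theta_y(x)=\theta_x(y)$, which holds by symmetry of the volume density. This gives
\[\mathbb E\tilde\rho_h(x)=\int K(u)\,\rho(\exp_x(hu))\,du.\]
Taylor-expand $u\mapsto\rho(\exp_x(hu))$ to order $s$ with integral remainder. Its $k$-th derivatives at $0$ are polynomials in $u$, with coefficients built from $\nabla_g^k\rho$ and curvature terms; all are uniformly bounded by Assumptions~\ref{assump:geometric assumptions} and~\ref{assump:density class}. The moment conditions kill orders $1,\dots,s-1$. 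The remainder is $\lesssim h^s\|\nabla_g^s\rho\|_\infty$, up to bounded geometric factors. Therefore $\|\mathbb E\tilde\rho_h-\rho\|_\infty\lesssim h^s$, and the squared $L^2$ bias is $\lesssim h^{2s}$.

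\textbf{Main obstacle: membership in $\mathcal P_{\M}^{(s)}$.} The claim is that $\widehat\rho_h$ itself lies in the class, and $\tilde\rho_h$ need not: it can be negative or fail to integrate to one, and its $C^s$ norms blow up like $h^{-s}$. Only positivity and normalization are needed for the downstream plug-in use, via Proposition~\ref{prop:eigenpair under perturbation}, and the loss is only $L^2$. So we take $\widehat\rho_h$ to be a nearest element, or a near-minimizer, of $\tilde\rho_h$ in $L^2$ over the closed convex set $\mathcal P_{\M}^{(s)}$ (with the constants defining the class).

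If the class is convex and closed in $L^2$, a true projection is non-expansive. Since $\rho$ lies in the class, $\|\widehat\rho_h-\rho\|_{L^2}\le\|\tilde\rho_h-\rho\|_{L^2}$ holds pointwise in the sample, and the rate transfers. Convexity holds because the constraints are upper and lower bounds, a linear normalization, and sup-norm bounds on derivatives. Closedness in $L^2$ follows from Arzelà--Ascoli together with the uniform derivative bounds, which also give existence of the minimizer.

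If one prefers to avoid existence issues, take any $\widehat\rho_h$ in the class within $n^{-1}$ of the infimum distance. The triangle inequality then gives $\|\widehat\rho_h-\rho\|\le 2\|\tilde\rho_h-\rho\|+n^{-1}$.

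Finally, combine the variance and bias bounds, choose $h\asymp n^{-1/(2s+d)}$, and apply Jensen to pass from the squared risk to $\mathbb E\|\cdot\|_{L^2}$. This yields the stated rate $n^{-s/(2s+d)}$.
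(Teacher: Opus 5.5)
Your proposal is correct and follows essentially the paper's route: a volume-corrected radial order-$s$ kernel (your weight $1/\theta_{X_i}(x)$ coincides with the paper's $1/J_x(\log_x X_i)$ by the symmetry $\theta_y(x)=\theta_x(y)$ you invoke), exact Jacobian cancellation giving $\mathbb E\tilde\rho_h(x)=\int K(u)\rho(\exp_x(hu))\,du$, a Taylor expansion along the rays $t\mapsto\exp_x(thu)$ for an $O(h^s)$ sup-norm bias, an $O((nh^d)^{-1})$ variance bound, an $n^{-1}$-near-minimizing $L^2$ projection onto $\mathcal P_{\M}^{(s)}$ combined with the triangle inequality, and Jensen. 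The one inaccurate claim is that $\mathcal P_{\M}^{(s)}$ is closed in $L^2$: Arzel\`a--Ascoli only gives limits in $C^{s-1,1}$, not $C^s$, so an exact projection need not land in the class, but your near-minimizer fallback, which is exactly what the paper uses, makes this irrelevant.
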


\begin{proof}
Choose a compactly supported radial order-$s$ kernel
$\mathsf K_s:\mathbb R^d\to\mathbb R$ satisfying
\[\int_{\mathbb R^d}\mathsf K_s(z)\,dz=1, \qquad
\int_{\mathbb R^d}z^\alpha\mathsf K_s(z)\,dz=0,
\quad 1\le|\alpha|\le s-1,\]
and
\[\int_{\mathbb R^d}\|z\|^s|\mathsf K_s(z)|\,dz<\infty,
\qquad\int_{\mathbb R^d}\mathsf K_s(z)^2\,dz<\infty.\]
The radiality makes the kernel independent of the choice of orthonormal coordinates on each tangent space. For $x\in\M$, let $J_x(u)$ denote the normal-coordinate volume density \[d\mathrm{vol}_g(y)=d\mathrm{vol}_g(\exp_xu)=J_x(u)\,du,\qquad u\in T_x\M.\]
Choose $h_0>0$ below the injectivity radius under the geometric assumptions so that the exponential map
is a diffeomorphism on $B(0,h_0)\subset T_x\M$, uniformly in $x$, \[0<c_J\le J_x(u)\le C_J\qquad\text{for }x\in\M,\quad \|u\|\le h_0.\]
For $0<h\le h_0$, define the volume-corrected kernel
\[\mathsf K_{s,h}(x,y):=\begin{cases}\displaystyle
\frac{1}{h^dJ_x(\log_x y)}\mathsf K_s\!\left(\frac{\log_x y}{h}\right),& d_g(x,y)<h,\\[8pt]0,&d_g(x,y)\ge h,\end{cases}\]
and the preliminary estimator
\(\widetilde\rho_h(x):=\frac1n\sum_{i=1}^n\mathsf K_{s,h}(x,X_i).\)
We proceed by making the bias--variance decomposition \[\mathbb E\|\widetilde\rho_h-\rho\|_{L^2(\mathcal M)}^2=\int_{\mathcal M}
\left(\mathbb E\widetilde\rho_h(x)-\rho(x)\right)^2\,d\mathrm{vol}_g(x)+\int_{\mathcal M}\operatorname{Var}\bigl(\widetilde \rho_h(x)\bigr)\,d\mathrm{vol}_g(x).\]
We first control its bias. Change of variables with $u=\log_x(y)=hz$ for $z\in B(0,1)$ implies 
\[\mathbb E\widetilde\rho_h(x)=\int_{B_g(x,h)}\frac{\mathsf K_s\!\left(h^{-1}\log_x y\right)}{h^dJ_x(\log_x y)}
\rho(y)\,d\mathrm{vol}_g(y).\]
Using the normal-coordinate change of variables, the volume correction cancels the Jacobian:
\[\mathbb E\widetilde\rho_h(x)=\int_{B(0,h)}
\frac1{h^d}\mathsf K_s\!\left(\frac{u}{h}\right)\rho(\exp_xu)\,du=\int_{B(0,1)}
\mathsf K_s(z)\rho(\exp_x(hz))\,dz.\]
The covariant Taylor expansion of $\rho$ along \(t\mapsto\exp_x(thz)\) gives
\[\rho(\exp_x(hz))=\rho(x)+\sum_{\nu=1}^{s-1}\frac{h^\nu}{\nu!}(\nabla_g^\nu\rho)_x[z^{\otimes\nu}]+R_{s,x}(hz),\]
where the density-class assumption implies
\(|R_{s,x}(hz)|\le C h^s\|z\|^s\) uniformly in $x$.
The order-$s$ moment conditions eliminate all terms
of order one through $s-1$. Therefore,
\[\mathbb E\widetilde\rho_h(x)-\rho(x)=\int_{B(0,1)}\mathsf K_s(z)R_{s,x}(hz)\,dz,\] and hence
$\sup_{x\in\M}\left|\mathbb E\widetilde\rho_h(x)-\rho(x)
\right|\le Ch^s$. Since $\M$ has uniformly bounded volume,
\begin{equation}\label{eq:KDE-integrated-bias}
\int_{\M}\left|\mathbb E\widetilde\rho_h(x)-\rho(x)
\right|^2d\mathrm{vol}_g(x)\le Ch^{2s}.\end{equation}

We next estimate the variance. Independence gives
\[\operatorname{Var}\bigl(\widetilde\rho_h(x)\bigr)=
\frac1n\operatorname{Var}\bigl(\mathsf K_{s,h}(x,X_1)\bigr)
\le\frac1n\mathbb E\bigl[\mathsf K_{s,h}(x,X_1)^2\bigr].\]
Applying the same normal-coordinate change of variables,
\begin{align*}
\operatorname{Var}\bigl(\widetilde\rho_h(x)\bigr)
\le\frac{1}{nh^{2d}}\int_{B(0,h)}\mathsf K_s\!\left(\frac{u}{h}\right)^2\frac{\rho(\exp_xu)}{J_x(u)}\,du=\frac{1}{nh^d}
\int_{B(0,1)}\mathsf K_s(z)^2\frac{\rho(\exp_x(hz))}{J_x(hz)}\,dz\le\frac{C}{nh^d},\end{align*}
where the last inequality uses the uniform bounds on $\rho$, $J_x$, $\mathsf K_s$. Integrating over $\M$ yields
\begin{equation}\label{eq:KDE-integrated-variance}
\int_{\M}\operatorname{Var}\bigl(\widetilde\rho_h(x)\bigr)
\,d\mathrm{vol}_g(x)\le\frac{C}{nh^d}.\end{equation}

Combining \eqref{eq:KDE-integrated-bias} and
\eqref{eq:KDE-integrated-variance} with the bias--variance
decomposition gives
\[\mathbb E\|\widetilde\rho_h-\rho\|_{L^2(\M)}^2\le
C\left(h^{2s}+\frac{1}{nh^d}\right).\]
For $s>2$, the order-$s$ kernel may be signed, so
$\widetilde\rho_h$ need not itself belong to the admissible density class. Choose an approximate $L^2$-projection
$\widehat\rho_h\in\mathcal P_{\M}^{(s)}$ satisfying
\[\|\widehat\rho_h-\widetilde\rho_h\|_{L^2(\M)}\le
\inf_{\varrho\in\mathcal P_{\M}^{(s)}}\|\varrho-\widetilde\rho_h\|_{L^2(\M)}+n^{-1}.\]
Since $\rho\in\mathcal P_{\M}^{(s)}$,
\(\|\widehat\rho_h-\rho\|_{L^2(\M)}\le2\|\widetilde\rho_h-\rho\|_{L^2(\M)}+n^{-1}.\)
It follows that
\[\mathbb E\|\widehat\rho_h-\rho\|_{L^2(\M)}^2
\le C\left(h^{2s}+\frac{1}{nh^d}\right).\]
Balancing the two terms gives
$h\asymp n^{-1/(2s+d)}$, which leads to the first claimed identity. Jensen's inequality then yields the second.
\end{proof}

\subsection{Bernstein Concentration Inequality for the U-statistic}\label{sec:bernstein u-statistic}
We first recall the classic Bernstein's inequality for i.i.d. real-valued random variables. 
\begin{lemma}[Bernstein's inequality]
Let $X_1,X_2,\dots,X_n$ be independent, identically distributed (i.i.d.) real-valued random variables for which $|X_i|\leq M$ and $\var(X_i)\leq\sigma^2$. Then 
\[\mathbb{P}\left[\left|\frac{1}{n}\sum_{i=1}^n X_i-\mathbb{E}\left[\frac{1}{n}\sum_{i=1}^nX_i\right]\right|\geq t\right]\leq2\exp\left(-\frac{cnt^2}{\sigma^2+Mt}\right),\quad \forall t> 0. \]
\end{lemma}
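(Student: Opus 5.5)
The statement to prove is the classical Bernstein inequality for bounded i.i.d. variables. I would use the Chernoff (exponential moment) method applied to centered variables, and then symmetrize to get the two-sided bound.

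\emph{Centering and exponential moments.} Set $Y_i:=X_i-\mathbb E X_i$. Then $|Y_i|\le 2M$, $\mathbb E Y_i=0$ and $\mathbb E Y_i^2\le\sigma^2$. For $0<\lambda<3/(2M)$, expand $e^{\lambda Y_i}$ in its power series and use $\mathbb E|Y_i|^k\le \sigma^2(2M)^{k-2}$ together with $k!\ge 2\cdot 3^{k-2}$. This gives
\[\mathbb E e^{\lambda Y_i}\le 1+\frac{\lambda^2\sigma^2}{2}\sum_{k\ge2}\Big(\frac{2M\lambda}{3}\Big)^{k-2}=1+\frac{\lambda^2\sigma^2}{2(1-2M\lambda/3)}\le \exp\Big(\frac{\lambda^2\sigma^2}{2(1-2M\lambda/3)}\Big).\]

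\emph{Chernoff bound.} By independence and Markov's inequality,
\[\mathbb P\Big[\tfrac1n\textstyle\sum_i Y_i\ge t\Big]\le \exp\Big(-n\lambda t+\frac{n\lambda^2\sigma^2}{2(1-2M\lambda/3)}\Big).\]

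\emph{Choice of $\lambda$.} Take $\lambda=t/(\sigma^2+2Mt/3)$. This choice automatically satisfies $\lambda<3/(2M)$. With it, $1-2M\lambda/3=\sigma^2/(\sigma^2+2Mt/3)$, and the exponent simplifies to
\[-\frac{nt^2}{2(\sigma^2+2Mt/3)}\le -\frac{cnt^2}{\sigma^2+Mt},\]
for instance with $c=1/2$.

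\emph{Lower tail.} Apply the same argument to $-Y_i$, which satisfies the same bounds. A union bound over the two tails yields the factor $2$.

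There is no real obstacle here. The only point needing care is the moment bound for the series, specifically the factorial comparison $k!\ge 2\cdot3^{k-2}$, and keeping track of the factor $2M$ that comes from centering. That factor is absorbed into the unspecified constant $c$.
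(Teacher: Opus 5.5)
Your proof is correct: the moment bound $\mathbb E|Y_i|^k\le\sigma^2(2M)^{k-2}$, the factorial comparison $k!\ge 2\cdot 3^{k-2}$, the choice $\lambda=t/(\sigma^2+2Mt/3)$ and the resulting exponent $-nt^2/\bigl(2(\sigma^2+2Mt/3)\bigr)\le -nt^2/\bigl(2(\sigma^2+Mt)\bigr)$ all check out, so $c=1/2$ works. The paper gives no proof of this lemma; it only recalls it as the classical Bernstein inequality, and your Chernoff argument is the standard derivation it implicitly relies on.

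One small caveat: if $\sigma^2=0$, your $\lambda$ equals $3/(2M)$ rather than being strictly smaller. In that case, however, $Y_i=0$ almost surely and the bound holds trivially.
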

There is a Bernstein-type concentration inequality for the U (unbiased)-statistic 
\[U=\frac{1}{n}\sum_{j=1}^Nw_j\sum_{y\in\mathcal{M}_j}\eta_{\epsilon_+}(d(x,y))\mathcal{K}(x,y)\]
for a kernel function $\mathcal{K}:\mathcal{M}_j\times\mathcal{M}_j\rightarrow\mathbb{R}$, which we state next lemma.  
\begin{lemma}[Bernstein's inequality for the U-statistic]
\label{lem:bernstein}
Given a function $\Psi:\mathcal{M}_j\times\mathcal{M}_j\rightarrow\mathbb{R}$ which satisfies
\begin{equation}\label{eq:bernstein second moment bound}
\max_{j\in[N]}\int_{\mathcal{M}_j}\left(\int_{B_1(0)\subseteq T_x\mathcal{M}_j}\eta(\|v\|)\Psi(x,\exp_x(\epsilon_+v))\mathrm{d}v\right)^2d\mathrm{vol}_g(x)\leq B_{\Psi}<\infty,
\end{equation} and \begin{equation}\label{eq:bernstein uniform bound}\max_{j\in[N]}\sup_{x\in\mathcal{M}_j}\sup_{v\in B_1(0)\subseteq T_x\mathcal{M}_j}|\Psi(x,\exp_x(\epsilon_+v))|\leq C_{\Psi}<\infty.
\end{equation} Then the corresponding U-statistic has the Bernstein's concentration bound
\[\mathbb{P}\left[|U-\mathbb{E}(U)|>t\right]\leq2\exp\left(-\frac{cn\epsilon_+^dt^2}{B_{\Psi}+tC_{\Psi}}\right).\]
\end{lemma}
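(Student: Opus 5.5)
The plan is to split the $U$-statistic by Hoeffding's decomposition. Its linear (H\'ajek) part carries the second-moment quantity $B_\Psi$ of \eqref{eq:bernstein second moment bound}. Its canonical (degenerate) part is of lower order and is controlled through the sup bound $C_\Psi$ of \eqref{eq:bernstein uniform bound} together with the localisation factor $\epsilon_+^{d}$. Concretely, I would write $U$ as $\frac{1}{n^2}\sum_{x\neq y}H(x,y)$, with the diagonal $x=y$ contributing a deterministic $O(C_\Psi/n)$ term absorbed into $t$. Here $H$ is the symmetrised kernel
\[
H(x,y):=\tfrac12\bigl[\eta_{\epsilon_+}(d_g(x,y))\Psi(x,y)+\eta_{\epsilon_+}(d_g(y,x))\Psi(y,x)\bigr]\mathbf 1\{x,y\text{ in the same }\M_j\}.
\]
I would keep the normalisation matching the way the lemma is applied in Proposition~\ref{prop:concentration of the graph laplacian}, where the factor $\vartheta_\varepsilon\epsilon_+^{d+2}\asymp 1$ is already built into $\Psi$. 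Since the sample is i.i.d.\ from the mixture $\mu$, I would treat the labels as part of the sample point, so that the indicator simply enters $H$, and then apply a union bound over $j\in[N]$ at the end. The constant $c$ may depend on $N$ and $w_{\min}$.

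Next I would write $H(x,y)=\mathbb E H+g(x)+g(y)+\widetilde H(x,y)$ with $g(x):=\mathbb E[H(x,Y)]-\mathbb E H$. The projection $g$ is computed by passing to normal coordinates $y=\exp_x(\epsilon_+v)$:
\[
\mathbb E[H(x,Y)]=w_j\,\epsilon_+^{d}\int_{B(0,1)}\eta(\|v\|)\Psi(x,\exp_x(\epsilon_+v))\,\rho_j(\exp_x(\epsilon_+v))J_x(\epsilon_+v)\,dv .
\]
The symmetric half of $H$ is handled the same way, using the change of variables along the geodesic flow from Section~\ref{sec:symmetrization argument}. The density and Jacobian factors are uniformly bounded above and below, but I would not pull them out of the integral, since the signed integrand may cancel. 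Instead I would expand them as $\rho_j(x)+O(\epsilon_+)$, so that the main term is exactly the inner integral in \eqref{eq:bernstein second moment bound} and the correction is bounded by $C_\Psi\epsilon_+$. This yields
\[
\mathbb E g(X)^2\lesssim \epsilon_+^{2d}\bigl(B_\Psi+C_\Psi^2\epsilon_+^2\bigr),\qquad \|g\|_\infty\lesssim \epsilon_+^{d}C_\Psi .
\]
The linear part $\frac{2}{n}\sum_i g(X_i)$ is an i.i.d.\ average, so the classical Bernstein inequality stated above gives a tail of order
\[
\exp\!\left(-\frac{c\,n t^2}{\epsilon_+^{2d}B_\Psi+\epsilon_+^{d}C_\Psi t}\right).
\]
After the $\epsilon_+^{d}$ rescaling of $t$ implicit in the normalisation, this is at least as strong as the claimed bound $\exp(-cn\epsilon_+^{d}t^2/(B_\Psi+tC_\Psi))$. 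The $C_\Psi^2\epsilon_+^2$ correction is absorbed into $tC_\Psi$ in the regime of interest, or else folded into $B_\Psi$.

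The canonical remainder $\frac{1}{n^2}\sum_{x\neq y}\widetilde H$ is the step I expect to be the main obstacle. My first attempt would be the exponential inequality for degenerate $U$-statistics of Arcones--Gin\'e type (equivalently Gin\'e--Lata\l a--Zinn). It needs $\|\widetilde H\|_\infty\lesssim C_\Psi$ and $\mathbb E\widetilde H^2\lesssim C_\Psi^2\epsilon_+^{d}$, the latter because the kernel is supported on geodesic balls of volume $\asymp\epsilon_+^{d}$. The resulting tail is of the form
\[
\exp\!\left(-c\min\Bigl\{\frac{nt}{C_\Psi\epsilon_+^{d/2}},\;\Bigl(\frac{n t}{C_\Psi}\Bigr)^{1/2}\Bigr\}\right)
\]
(the second entry schematic, to be checked against the precise form of the inequality), and I would check that it is dominated by the target bound whenever the latter is nontrivial, i.e.\ when $n\epsilon_+^{d}t^2\gtrsim B_\Psi+tC_\Psi$. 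If this bookkeeping does not close, the fallback is Hoeffding's averaging trick: write $U$ as an average over permutations of means of $\lfloor n/2\rfloor$ i.i.d.\ pairs, and apply Bernstein's inequality to each pair mean through the convexity of the moment generating function. That route gives $\exp(-cn\epsilon_+^{d}t^2/(C_\Psi^2+tC_\Psi))$, which coincides with the claim whenever $B_\Psi\gtrsim C_\Psi^2$, as happens in every application in Proposition~\ref{prop:concentration of the graph laplacian}. Combining the two parts with a union bound and adjusting $c$ gives the lemma.
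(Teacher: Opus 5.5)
The paper states this lemma without proof (it is recorded in Appendix~\ref{sec:bernstein u-statistic} as an imported tool), so your proposal has to stand on its own. The Hoeffding-decomposition architecture is the natural one, but there is a genuine gap at the H\'ajek projection.

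Your symmetrised kernel projects onto $x$ through both $g_1(x)=\mathbb E[\eta_{\epsilon_+}(d_g(x,Y))\Psi(x,Y)]$ and the transposed $g_2(x)=\mathbb E[\eta_{\epsilon_+}(d_g(x,Y))\Psi(Y,x)]$. However, \eqref{eq:bernstein second moment bound} controls only (an unweighted version of) $\epsilon_+^{-d}g_1$ in $L^2$. A change of variables on $\mathcal M_j\times\mathcal M_j$, or the measure-preserving geodesic flow of Section~\ref{sec:symmetrization argument}, can only identify $\mathbb E g_1(X)$ with $\mathbb E g_2(X)$. It moves base points, so it says nothing about $\mathbb E g_2(X)^2$, whose fibre integral $\int_{B(0,1)}\eta(\|v\|)\Psi(\exp_x(\epsilon_+v),x)\,dv$ is not the one in $B_\Psi$.

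No argument can close this, because the lemma is false for nonsymmetric $\Psi$. Work on a flat torus with uniform density and take $U=n^{-2}\sum_{i\neq k}\epsilon_+^{-d}\eta_{\epsilon_+}(d_g(X_i,X_k))\Psi(X_i,X_k)$, as in Proposition~\ref{prop:concentration of the graph laplacian}, with $\eta=\mathbf 1_{[0,1]}$. Let $\Psi(x,y)=\varphi(y)-\bar\varphi(x)$, where $\bar\varphi(x)$ is the average of $\varphi=\cos(2\pi Mx_1)$ over $B(x,\epsilon_+)$ and $M\epsilon_+\gg1$. Then $B_\Psi=0$ exactly and $C_\Psi\le2$. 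Yet $\epsilon_+^{-d}g_2\approx|B(0,1)|\,\varphi$ is of order one. Since the canonical part is $o(n^{-1/2})$, $\mathbb P(|U-\mathbb EU|>Kn^{-1/2})$ stays bounded below. The claimed bound at $t=Kn^{-1/2}$ is $2\exp(-\tfrac c2K\sqrt n\,\epsilon_+^{d})$, which tends to $0$ whenever $n\epsilon_+^{2d}\to\infty$.

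So you must either assume $\Psi$ symmetric, as $\Psi_{\mathrm{drift}}$ is by construction, or put both orderings into $B_\Psi$. The latter is harmless for $\Psi_{\mathrm{diff}},\Psi_{\mathrm{rem}},\Psi_{\mathrm{geo}}$, whose $B$-bounds come from pointwise estimates. Likewise, $B_\Psi$ omits the weight $\rho_j(\exp_x(\epsilon_+v))J_x(\epsilon_+v)$. Your expansion therefore proves the bound with $B_\Psi+\epsilon_+^2C_\Psi^2$ in place of $B_\Psi$ (in the $\epsilon_+^{-d}$ normalisation). Absorbing the extra term into $tC_\Psi$ works only for $t$ above a threshold. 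The modified statement is still harmless in the application, where $\epsilon_+^2C_\Psi^2\lesssim B_\ell^4$.

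Two smaller points. First, the canonical part does close, but not with the exponent you wrote. Write $x=n^2s$ for an average deviation $s$ in your normalisation. Then every entry of the Arcones--Gin\'e or Gin\'e--Lata{\l}a--Zinn exponent carries a full factor $n$, e.g.\ $(x/\|\widetilde H\|_\infty)^{1/2}\gtrsim n(s/C_\Psi)^{1/2}$. On the range $C_\Psi/n\lesssim s\lesssim C_\Psi$, where the target is nontrivial, each entry is $\gtrsim ns/C_\Psi$. That dominates the target exponent, which is at most $ns/C_\Psi$ there. Your entry $(ns/C_\Psi)^{1/2}$ drops a factor $n^{1/2}$ and would not be dominated.

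Second, the fallback proves only the weaker inequality with $C_\Psi^2$ in place of $B_\Psi$, and $B_\Psi\lesssim C_\Psi^2$ always holds. Your remark that $B_\Psi\gtrsim C_\Psi^2$ in the application does not match the bounds recorded in Proposition~\ref{prop:concentration of the graph laplacian}, namely $B_\Psi\lesssim B_\ell^4$ and $C_\Psi\lesssim B_\ell^2\epsilon_+^{-1}$. With these, averaging gives only $\exp(-cn\epsilon_+^{d+6})$ at $t\asymp\epsilon_+^2B_\ell^2$. It does suffice once you sharpen $C_\Psi$ to $O(B_\ell^2)$. The cancellation $|f_\ell(x)\langle\nabla_gf_\ell(x),v\rangle_g+f_\ell(\bar x)\langle\nabla_gf_\ell(\bar x),\bar v\rangle_g|\lesssim B_\ell^2\epsilon_+\|v\|^2$ used for $B_{\Psi_{\mathrm{drift}}}$ is pointwise, so it yields exactly that. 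Since Hoeffding averaging needs no projection, this gives a symmetry-free route adequate for the paper's purposes.
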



\section{Technical Proofs of Section \ref{sec:minimax lower bound}} 
\subsection{Construction of Perturbation Templates}\label{sec:construction of perturbation templates}

We verify the support, normalization, and derivative properties stated in \eqref{eq:a_i properties}. Since
$\phi_i\in C_c^\infty(B(0,1/4))$, the rescaled profile
$\phi_i(m\,\exp_{p_i}^{-1}(x))$ vanishes near
$\partial B_g(p_i,r/4)$. Its extension by zero is therefore regular up to the order permitted by $\rho_j$. Moreover,
\[\operatorname{supp}(\widetilde a_i)\subset B_g(p_i,r/4)
\subset B_g(p_i,r/2)\subset\Omega_i.\]
Because the cells $\{\Omega_i\}_{i=1}^K$ are pairwise disjoint up to their boundaries, the raw templates $\{\widetilde a_i\}_{i=1}^K$ have pairwise disjoint supports. To establish the derivative scaling, the chain rule in normal coordinates gives
\begin{equation}\label{eq:raw-template-derivative-bound}
\|\nabla_g^\nu\widetilde a_i\|_{L^\infty(\M_j)}
\le Cm^\nu,\qquad\nu=0,\dots,s,\end{equation}
where the factor $\rho_j^{-1}$ from differentiation is uniformly bounded by Assumption~\ref{assump:density class}.

It remains to impose the exact weighted mean zero condition. For \(u=\exp_{p_i}^{-1}(x)\in B(0,r/4)\), write
$J_i(u)=\sqrt{\det g_i(u)}$ for the normal-coordinate volume density. Then
\[\int_{\M_j}\widetilde a_i\,d\mu_j=\int_{B(0,r/4)} \phi_i(mu)J_i(u)\,du=\int_{B(0,r/4)}\phi_i(mu)\bigl(J_i(u)-1\bigr)\,du.\]
The second equality follows from $\int_{\mathbb R^{d_j}}\phi_i=0$. Since $|J_i(u)-1|\le C|u|^2$ uniformly in $i$, we can bound the error by applying the change of variables $v = mu$. This yields $du = m^{-d_j}dv$ and $|u|^2 = m^{-2}|v|^2$, which gives
\begin{equation}
\label{eq:tilde_ai_mean_estimate}
\left|\int_{\M_j}\widetilde a_i\,d\mu_j\right|\le C\int_{\mathbb{R}^{d_j}}|\phi_i(mu)|\,|u|^2\,du = Cm^{-(d_j+2)}\int_{B(0,1/4)} |\phi_i(v)|\,|v|^2\,dv \le C'm^{-(d_j+2)}.
\end{equation}
Choose a nonnegative function $\chi_i\in C_c^\infty\!\left(B_g(p_i,r/2)\setminus\overline{B_g(p_i,r/4)}
\right)$ such that
\[\int_{\M_j}\chi_i\,d\mu_j\asymp m^{-d_j},\qquad
\|\nabla_g^\nu\chi_i\|_{L^\infty(\M_j)}\le Cm^\nu, \quad
\nu=0,\dots,s.\] Such functions are obtained by rescaling a fixed annular bump in normal coordinates. Define
\[b_i:=\frac{\int_{\M_j}\widetilde a_i\,d\mu_j}{  \int_{\M_j}\chi_i\,d\mu_j},\quad a_i:=\widetilde a_i-b_i\chi_i \quad \implies \quad  \int_{\M_j}a_i\,d\mu_j=0,\qquad|b_i|\le Cm^{-2}.\] Consequently,
\(\|\nabla_g^\nu(b_i\chi_i)\|_{L^\infty(\M_j)}\le Cm^{\nu-2}\) for \(\nu=0,\dots,s\).
Combining with
\eqref{eq:raw-template-derivative-bound} proves
\[\operatorname{supp}(a_i)\subset B_g(p_i,r/2)\subset
\Omega_i,\qquad\|\nabla_g^\nu a_i\|_{L^\infty(\M_j)}
\le Cm^\nu,\quad\nu=0,\dots,s.\]

\subsection{Eigenpair stability under small density 
perturbation} 
We introduce ODE-like quantities analogous to the static ones in the main text for analysis. Let $\rho(t)$ be a $C^1$ curve in the admissible density class on a compact manifold $\M$, and define
\[d\mu_t=\rho(t)\,d\mathrm{vol}_g,\qquad\Delta_{\rho(t)}=-\frac{1}{\rho(t)}\operatorname{div}_g\!\left(
\rho(t)^2\nabla_g(\cdot)\right).\]
Suppose that $\lambda_\ell(t)$ is a simple and isolated eigenvalue with eigengap
\[\gamma_\ell:=\inf_{|t|<t_*}\min_{k\neq\ell}|\lambda_k(t)-\lambda_\ell(t)|>0.\]
By standard perturbation theory for self-adjoint operators with compact resolvent \cite{Kato1995}, the eigenpair admits a $C^1$ continuation, normalized by
\(\|f_\ell(t)\|_{L^2(\mu_t)}=1.\) We write
\[\rho=\rho(0),\quad \dot\rho=\partial_t\rho(t)|_{t=0},\quad
\lambda_\ell=\lambda_\ell(0),\quad
\dot\lambda_\ell=\partial_t\lambda_\ell(t)|_{t=0},
\quad f_\ell=f_\ell(0),\quad\dot f_\ell=\partial_tf_\ell(t)|_{t=0}.\]

\begin{proposition}[Infinitesimal stability of a simple eigenpair]\label{prop:eigenpair under perturbation} There exists a constant $C_\ell<\infty$, depending only on the fixed
geometric and density-class parameters, the eigenvalue index $\ell$, and the eigengap $\gamma_\ell$, such that \begin{equation}\label{eq:lambda_dot_bound}
|\dot \lambda_{\ell}|\leq C_{\ell}\, \lambda_{\ell}\,\|\dot\rho\|_{L^2(\mathcal M)}, \end{equation} and \begin{equation}\label{eq:fdot_H1_bound}
\|\dot f_{\ell}\|_{H^1(\mathcal M)}= \|\dot f_{\ell}\|_{L^2(\mathcal M)}+\|\nabla_g\dot f_{\ell}\|_{L^2(\mathcal M)}
\ \le\ C_{\ell} \frac{\lambda_{\ell}}{\gamma_\ell}\,\|\dot\rho\|_{L^2(\mathcal M)}.
\end{equation}\end{proposition}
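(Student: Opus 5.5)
Our plan is to differentiate the weak eigenvalue problem along the density curve and use the eigengap to invert on the orthogonal complement. Write the weak form as
\[a_t(f,v):=\int_{\M}\rho(t)^2\langle\nabla_g f,\nabla_g v\rangle_g\,d\mathrm{vol}_g=\lambda_\ell(t)\int_{\M}\rho(t)\,f v\,d\mathrm{vol}_g=:\lambda_\ell(t)\,b_t(f,v),\]
for all $v\in H^1(\M)$. The $C^1$ dependence of $(\lambda_\ell(t),f_\ell(t))$ is supplied by Kato's theory, so we may differentiate at $t=0$. This gives
\[a_0(\dot f_\ell,v)-\lambda_\ell b_0(\dot f_\ell,v)=-\int 2\rho\dot\rho\langle\nabla_g f_\ell,\nabla_g v\rangle_g\,d\mathrm{vol}_g+\dot\lambda_\ell\,b_0(f_\ell,v)+\lambda_\ell\int\dot\rho f_\ell v\,d\mathrm{vol}_g.\]

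\textbf{Eigenvalue bound.} Take $v=f_\ell$. The left side vanishes by the eigen-equation, and $b_0(f_\ell,f_\ell)=1$, so
\[\dot\lambda_\ell=\int 2\rho\dot\rho\|\nabla_g f_\ell\|_g^2\,d\mathrm{vol}_g-\lambda_\ell\int\dot\rho f_\ell^2\,d\mathrm{vol}_g.\]
By Cauchy--Schwarz,
\[|\dot\lambda_\ell|\lesssim\|\dot\rho\|_{L^2}\left(\|\nabla_g f_\ell\|_{L^4}^2+\lambda_\ell\|f_\ell\|_{L^4}^2\right).\]
We bound the $L^4$ (indeed $L^\infty$) norms by the $C^1$ regularity \eqref{eq:L-infty-regularity}. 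The constants there scale like $\sqrt{\lambda_\ell}$ for the gradient, via the energy identity \eqref{eq:energy-identity} together with elliptic/Sogge-type bounds depending on $\ell$; the $\ell$-dependence is absorbed into $C_\ell$. This yields \eqref{eq:lambda_dot_bound}.

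\textbf{Eigenfunction bound.} First fix the normalization component. Differentiating $b_t(f_\ell,f_\ell)=1$ gives
\[2b_0(f_\ell,\dot f_\ell)=-\int\dot\rho f_\ell^2\,d\mathrm{vol}_g,\]
so the component of $\dot f_\ell$ along $f_\ell$ is $O(\|\dot\rho\|_{L^2})$. Next split $\dot f_\ell=\alpha f_\ell+g$ with $g\perp_{b_0}f_\ell$. On this complement the form $a_0-\lambda_\ell b_0$ is invertible with gap $\gamma_\ell$. Expanding $g$ in the $b_0$-orthonormal eigenbasis gives the weighted bound
\[\sum_{k\neq\ell}\frac{\lambda_k+1}{(\lambda_k-\lambda_\ell)^2}\,|F(f_k)|^2\lesssim\frac{1+\lambda_\ell}{\gamma_\ell^2}\,\|F\|_{H^{-1}}^2,\]
where $F$ is the right-hand side functional. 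This uses $(\lambda_k+1)/|\lambda_k-\lambda_\ell|\le C(1+\lambda_\ell/\gamma_\ell)$.

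\textbf{Bounding $\|F\|_{H^{-1}}$.} This step is the main obstacle. The term $\int\rho\dot\rho\langle\nabla_g f_\ell,\nabla_g v\rangle_g$ pairs $\dot\rho\in L^2$ with $\nabla_g v\in L^2$. It is therefore bounded only when $\nabla_g f_\ell\in L^\infty$, giving $\|F\|_{H^{-1}}\lesssim\|\dot\rho\|_{L^2}\bigl(\|\nabla_g f_\ell\|_{L^\infty}+\lambda_\ell\|f_\ell\|_{L^\infty}\bigr)$; the $\dot\lambda_\ell$ term is controlled by the previous step. We will invoke \eqref{eq:L-infty-regularity}, accepting $\ell$-dependent constants. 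Combining the pieces then gives
\[\|\dot f_\ell\|_{H^1}\le C_\ell(\lambda_\ell/\gamma_\ell)\|\dot\rho\|_{L^2},\]
after absorbing $1+\lambda_\ell\lesssim_\ell\lambda_\ell$, since $\lambda_\ell>0$ for nontrivial $\ell$.
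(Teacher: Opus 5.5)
Your proposal is correct and follows essentially the same route as the paper: you differentiate the weak form, test with $f_\ell$ to obtain $\dot\lambda_\ell$, fix the $f_\ell$-component of $\dot f_\ell$ from the normalization, bound $\|F\|_{H^{-1}}$ via the uniform $L^\infty$ bounds on $f_\ell$ and $\nabla_g f_\ell$, and invert $a_0-\lambda_\ell b_0$ on the $b_0$-orthogonal complement using the gap. Your eigenbasis expansion makes explicit the resolvent estimate the paper only asserts; note that your own justification $(\lambda_k+1)/|\lambda_k-\lambda_\ell|\le C(1+\lambda_\ell/\gamma_\ell)$ gives a factor of order $(1+\lambda_\ell/\gamma_\ell)^2$, not $(1+\lambda_\ell)/\gamma_\ell^2$, in the displayed bound — harmless, since $C_\ell$ may depend on $\ell$ and $\gamma_\ell$, and $\gamma_\ell\le\lambda_\ell$ for a nontrivial index.
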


\begin{proof} 
Introduce the parameter-dependent bilinear forms
\[a_t(u,v):=\int_{\M}\rho(t)^2 \langle\nabla_gu, \nabla_gv\rangle_g\,d\mathrm{vol}_g,\qquad b_t(u,v):=
\int_{\M}\rho(t)uv\,d\mathrm{vol}_g.\]
The eigenvalue equation and normalization condition then take the weak form
\begin{equation}\label{eq:weak-eigenvalue-equation}a_t(f_\ell(t),v)=\lambda_\ell(t)b_t(f_\ell(t),v),
\qquad b_t(f_\ell(t),f_\ell(t))=1.
\end{equation}
Differentiating the first identity with respect to $t$ at $t=0$ and testing with $v=f_\ell$ yields
\[\dot\lambda_\ell=2\int_{\M}\rho\dot\rho \|\nabla_gf_\ell\|_g^2\,d\mathrm{vol}_g-
\lambda_\ell\int_{\M}\dot\rho f_\ell^2\,d\mathrm{vol}_g.\]
where the terms involving $\dot f_\ell$ cancel out due to the unperturbed eigenvalue equation $a_0(\dot f_\ell, f_\ell) = \lambda_\ell b_0(\dot f_\ell, f_\ell)$. Appplying the eigenfunction bounds from~\eqref{eq:L-infty-regularity}, Hölder's inequality, and the finite volume of $\M$, we obtain
\[|\dot \lambda_\ell|\le C_\ell\Bigl(\|\nabla_g f_\ell\|_{L^\infty(\mathcal M)}^2
+\lambda_\ell\|f_\ell\|_{L^\infty(\mathcal M)}^2\Bigr)\|\dot\rho\|_{L^2(\mathcal M)}\le C_\ell\lambda_\ell\|\dot\rho\|_{L^2(\mathcal M)}.\]

Now set the unperturbed bilinear form
\(B(u,v):=a_0(u,v)-\lambda_\ell b_0(u,v).\)
Differentiating the weak eigenvalue equation \eqref{eq:weak-eigenvalue-equation} at $t=0$ for an arbitrary $v\in H^1(\M)$ yields the variational equation $B(\dot f_\ell, v) = F(v)$, where the forcing functional $F\in H^{-1}(\M)$ is defined by
\[F(v):=-2\int_{\M}\rho\dot\rho
\langle\nabla_gf_\ell,\nabla_gv\rangle_g\,d\mathrm{vol}_g
+\dot\lambda_\ell b_0(f_\ell,v)+\lambda_\ell
\int_{\M}\dot\rho f_\ell v\,d\mathrm{vol}_g.
\label{eq:linearized-weak-eigenproblem}\]
Applying Hölder's inequality alongside the uniform bounds for $f_\ell$ and the estimate \eqref{eq:lambda_dot_bound}, we readily bound the operator norm of $F$:
\begin{equation}\label{eq:F_Hminus1}
\|F\|_{H^{-1}(\M)} = \sup_{\|v\|_{H^1}=1} |F(v)| \le C_\ell\lambda_\ell\|\dot\rho\|_{L^2(\M)}.
\end{equation}

To estimate $\dot f_\ell$, we introduce the $b_0$-orthogonal decomposition
\[\dot f_\ell=\alpha f_\ell+\dot f_{\ell,\perp}, \qquad \text{where}\quad b_0(\dot f_{\ell,\perp}, f_\ell)=0.\]
Differentiating the normalization identity \eqref{eq:weak-eigenvalue-equation} at $t=0$ fixes the projection coefficient $\alpha$:
\[2b_0(\dot f_\ell, f_\ell)+\int_{\M}\dot\rho f_\ell^2\,d\mathrm{vol}_g=0 \implies 2\alpha+\int_{\M}\dot\rho f_\ell^2\,d\mathrm{vol}_g=0,\]
which immediately implies $|\alpha|\le C_\ell\|\dot\rho\|_{L^2(\M)}$. Since $f_\ell$ lies in the kernel of $B$, we have $B(\dot f_{\ell,\perp}, v) = B(\dot f_\ell, v) - \alpha B(f_\ell, v) = F(v)$ for all $v$. We now restrict this problem to the invariant subspace $V = \{u\in H^1(\M): b_0(u,f_\ell)=0\}$. Since $\lambda_\ell$ is simple, the spectral gap condition, combined with standard elliptic regularity, ensures that $B$ satisfies the resolvent stability estimate on $V$. Consequently, 
\[\|\dot f_{\ell,\perp}\|_{H^1(\M)}\le C_\ell\gamma_\ell^{-1}\|F\|_{H^{-1}(\M)}\le C_\ell\frac{\lambda_\ell}{\gamma_\ell}\|\dot\rho\|_{L^2(\M)}.\]
Finally, summing the orthogonal components yields \eqref{eq:fdot_H1_bound}:
\[\|\dot f_\ell\|_{H^1(\M)}\le |\alpha|\|f_\ell\|_{H^1(\M)}+\|\dot f_{\ell,\perp}\|_{H^1(\M)}\le C_\ell\frac{\lambda_\ell}{\gamma_\ell}\|\dot\rho\|_{L^2(\M)}.\]

\end{proof}

\subsection{Local Coercivity and Geometric Estimates}
\begin{proposition}[Uniform Jacobi-field comparison]\label{eq:jacobi-field-comparison}
Let $e_i(x) \in T_x\mathcal M_j$ denote the parallel transport of a unit vector $e_i(p_i) \in T_{p_i}\mathcal M_j$ along the radial geodesic $\gamma(t) := \exp_{p_i}(tu)$.
Then there exist constants $C_{\exp}>0$, independent of the patch index $i$, such that for \(u\in T_{p_i}\mathcal M_j\) with \(|u|\leq r/4\), the differential of the exponential map satisfies  
\[\left\| d(\exp_{p_i})_u \bigl(e_i(p_i)\bigr) - e_i(x) \right\|_g \leq C_{\exp}|u|^2.\]
\end{proposition}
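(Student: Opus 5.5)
The statement is a comparison between the differential of the exponential map along a radial geodesic and parallel transport. I would prove it with the Jacobi-field description of $d\exp_{p_i}$. Fix $u\in T_{p_i}\M_j$ with $|u|\le r/4$ and set $\gamma(t)=\exp_{p_i}(tu)$ for $t\in[0,1]$, so $x=\gamma(1)$. Let $J(t)$ be the Jacobi field along $\gamma$ with $J(0)=0$ and $\nabla_{\dot\gamma}J(0)=e_i(p_i)$. The standard identity $d(\exp_{p_i})_{tu}(t\,e_i(p_i))=J(t)$ gives $d(\exp_{p_i})_u(e_i(p_i))=J(1)$. Let $E(t)$ be the parallel transport of $e_i(p_i)$ along $\gamma$, so $E(1)=e_i(x)$. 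The claim then reduces to $\|J(1)-E(1)\|_g\le C_{\exp}|u|^2$.

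Consider $W(t):=J(t)-tE(t)$. Since $E$ is parallel, $\nabla_{\dot\gamma}^2(tE)=0$, and therefore
\[\nabla_{\dot\gamma}^2W=-R(J,\dot\gamma)\dot\gamma,\qquad W(0)=0,\quad \nabla_{\dot\gamma}W(0)=0.\]
To work with scalar ODEs, I would express $W$ in a parallel orthonormal frame along $\gamma$. Uniform sectional curvature bounds (Assumption~\ref{assump:geometric assumptions}) control the full curvature tensor up to a dimensional constant, so $\|R(J,\dot\gamma)\dot\gamma\|_g\le \Lambda\,|u|^2\|J\|_g$. Next I need an a priori bound on $J$. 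Since $\|J(t)\|_g\le t+\|W(t)\|_g$, a Gronwall argument for the second-order system applies: write $\phi(t)=\|W(t)\|_g$ and integrate twice to get $\phi(t)\le \Lambda|u|^2\int_0^t(t-s)(s+\phi(s))\,ds$. Because $\Lambda|u|^2\le \Lambda r^2/16$ is small for $m$ large, this yields $\sup_{t\le 1}\phi(t)\le C\Lambda|u|^2$. One could instead invoke Rauch comparison to get $\|J(t)\|\le C t$ directly. Evaluating at $t=1$ gives $\|J(1)-e_i(x)\|_g=\|W(1)\|_g\le C_{\exp}|u|^2$, with $C_{\exp}$ depending only on the curvature bound and the injectivity radius lower bound. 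This makes the constant independent of $i$ and of $p_i$.

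The step that needs the most care is the closing of the Gronwall estimate. I want the bound to be uniform, and I want the smallness of $|u|\le r/4$ below the injectivity radius to guarantee that $\gamma$ is minimizing and free of conjugate points. That guarantee is what justifies identifying $d\exp$ with the Jacobi field without degeneracy. Both facts follow from Assumption~\ref{assump:geometric assumptions} once $m\ge m_0$. The rest is the routine Jacobi-field identity.
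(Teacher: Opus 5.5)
Your proposal is correct and follows the paper's proof essentially step for step: the same Jacobi field $J$ with $J(0)=0$ and $\nabla_{\dot\gamma}J(0)=e_i(p_i)$, the same error field $J(t)-tE(t)$ with vanishing initial data, the curvature bound $\|R(J,\dot\gamma)\dot\gamma\|_g\lesssim |u|^2\|J\|_g$, and a double integration in $t$. The only difference is that you close the a priori bound on $\|J(t)\|_g$ with an explicit Gronwall/bootstrap argument using the smallness of $|u|$, whereas the paper simply cites standard Jacobi-field estimates; as a minor aside, the identity $d(\exp_{p_i})_u(e_i(p_i))=J(1)$ holds along any geodesic, so you do not need the absence of conjugate points to justify it.
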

\begin{proof}
Consider the variation through radial geodesics and its associated variation vector field along \(\gamma\),
\[\Gamma(s,t):=\exp_{p_i}\bigl(t(u+s e_i(p_i))\bigr), \qquad \text{with} \qquad J(t):=\left.\frac{\partial}{\partial s}\Gamma(s,t)\right|_{s=0}.\] By definition, \(J(t)\) is a Jacobi field satisfying the initial conditions
\(J(0)=0\), \(\nabla_{\dot\gamma(0)}J(0)=e_i(p_i)\). Moreover, evaluating at \(t=1
\) yields \(J(1) = d(\exp_{p_i})_u \bigl(e_i(p_i)\bigr)\).
Along the geodesic \(\gamma(t)\), the Jacobi field satisfies the second-order linear ODE:
\[\nabla_{\dot\gamma(t)}^2 J(t) + R(J(t),\dot\gamma(t))\dot\gamma(t) = 0.\] The uniform curvature bound implies that there exists a constant $C_0$ such that 
\[\left\|R\bigl(J(t),\dot\gamma(t)\bigr)\dot\gamma(t) \right\|_g \leq C_0 \|J(t)\|_g |u|^2.\]
Let $E(t)$ be the parallel vector field along $\gamma$ determined by $E(0)=e_i(p_i)$, so that $E(1)=e_i(x)$. The flat comparison field $tE(t)$ shares the same initial conditions as $J(t)$, namely \[tE(t)\big|_{t=0}=0, \qquad \nabla_{\dot\gamma(0)}(tE(t)) = e_i(p_i).\]
Define the error field \(K(t):=J(t)-tE(t)\). Since $E(t)$ is parallel ($\nabla_{\dot\gamma}E \equiv 0$), we have $\nabla_{\dot\gamma(t)}^2 K(t) = \nabla_{\dot\gamma(t)}^2 J(t)$. The Jacobi equation then yields
\[\nabla_{\dot\gamma(t)}^2K(t) = -R\bigl(J(t),\dot\gamma(t)\bigr)\dot\gamma(t),\]
with initial conditions $K(0)=0$ and $\nabla_{\dot\gamma(0)}K(0)=0$. Integrating this second derivative twice with respect to $t \in [0,1]$ yields
\[K(1) = \int_0^1 \int_0^\tau \nabla_{\dot\gamma(s)}^2 K(s) \, ds \, d\tau.\]
By standard Jacobi field estimates, $\|J(t)\|_g$ is uniformly bounded for $t\in[0,1]$. Taking the norm of the double integral gives, for some uniform constant $C_{\exp}$,
\[\|K(1)\|_g \leq \int_0^1 \int_0^\tau C_0 \|J(s)\|_g |u|^2 \, ds \, d\tau \leq C_{\exp} |u|^2.\]
Since \(K(1) = J(1) - E(1) = d(\exp_{p_i})_u \bigl(e_i(p_i)\bigr) - e_i(x)\), the claim is proved.
\end{proof}

\begin{lemma}[Local $L^1$ coercivity]
\label{lem:local L1 coercivity}
Fix an active patch $i\in\mathcal D$ and let
$h\in C^2(\mathcal N_i^\star)$,
$\mathfrak r\in L^1(\mathcal N_i^\star)$, and $A>0$. Suppose that
\begin{equation}
\label{eq:patch_PDE_no_source}
\operatorname{div}_g(\rho_j^2\nabla_gh)+V(x)h+\mathfrak r\ge A\,m^{1-s}\qquad\text{on }\mathcal N_i^\star, \end{equation}
where \(0\le V(x)\le\Lambda\) on \(\mathcal N_i^\star. \)
Then there exist constants $c,C>0$, depending only on the density-class and geometric bounds and on $\tau_0$, such that
\begin{equation}\label{eq:patch_conclusion_no_source}
\frac{\Lambda}{m}\int_{\mathcal N_i^\star}|h|\,d\mathrm{vol}_g
+\int_{\mathcal N_i^\star} \|\nabla_gh\|_g\,d\mathrm{vol}_g
\ge cA\,m^{-(d_j+s)}-\frac{C}{m}
\int_{\mathcal N_i^\star}|\mathfrak r|\,d\mathrm{vol}_g.
\end{equation}\end{lemma}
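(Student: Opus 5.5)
The idea is to test the differential inequality \eqref{eq:patch_PDE_no_source} against a nonnegative cutoff adapted to the witness region and integrate by parts. Recall that $\mathcal N_i^\star=\exp_{p_i}(U_i^\star)$ with $U_i^\star=B(\pm u_0/m,\tau_0/m)$. We therefore take a fixed bump $\psi\in C_c^\infty(B(0,1))$ with $0\le\psi\le1$ and $\psi\equiv1$ on $B(0,1/2)$, and set
\[
\psi_i(x):=\psi\bigl(m(\exp_{p_i}^{-1}(x)\mp u_0/m)/\tau_0\bigr).
\]
Then $\psi_i$ is compactly supported in $\mathcal N_i^\star$ and satisfies $\|\psi_i\|_{L^\infty}\le1$ and $\|\nabla_g\psi_i\|_{L^\infty}\le Cm/\tau_0$, by the uniform invertibility of $d\exp_{p_i}$ on $B(0,r/4)$. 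In addition, $\int\psi_i\,d\mathrm{vol}_g\ge c\,m^{-d_j}$, by the normal-coordinate volume comparison \eqref{eq:witness-volume}.

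Next we multiply \eqref{eq:patch_PDE_no_source} by $\psi_i\ge0$ and integrate over $\mathcal N_i^\star$. The right-hand side gives at least $A m^{1-s}\cdot c\,m^{-d_j}=cA\,m^{1-s-d_j}$. On the left-hand side, since $\psi_i$ vanishes near $\partial\mathcal N_i^\star$, integration by parts produces no boundary term and gives
\[
\int\psi_i\operatorname{div}_g(\rho_j^2\nabla_gh)=-\int\rho_j^2\langle\nabla_g\psi_i,\nabla_gh\rangle_g\le C\rho_{\max}^2\,m\int_{\mathcal N_i^\star}\|\nabla_gh\|_g.
\]
The potential term is bounded by $\Lambda\int|h|$, using $0\le V\le\Lambda$ and $0\le\psi_i\le1$, and the source term is bounded by $\int|\mathfrak r|$. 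Combining these estimates gives
\[
Cm\int\|\nabla_gh\|_g+\Lambda\int|h|+\int|\mathfrak r|\ \ge\ cA\,m^{1-s-d_j}.
\]

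Finally we divide by $m$ and absorb the constants, which yields
\[
\frac{\Lambda}{m}\int|h|+\int\|\nabla_gh\|_g\ge cA\,m^{-(d_j+s)}-\frac{C}{m}\int|\mathfrak r|,
\]
and this is \eqref{eq:patch_conclusion_no_source}.

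The only delicate step is the choice of cutoff. If we integrated against the indicator of $\mathcal N_i^\star$ itself, the divergence theorem would produce a boundary flux $\int_{\partial\mathcal N_i^\star}\rho_j^2\partial_\nu h$, and this cannot be controlled by interior $L^1$ norms. Using a smooth cutoff supported strictly inside the ball trades that boundary flux for the factor $m$ from $\|\nabla_g\psi_i\|_{L^\infty}$, and this factor is exactly the scaling that appears in the statement. All the constants depend only on $\tau_0$, the density bounds, and the uniform normal-coordinate distortion, as claimed.
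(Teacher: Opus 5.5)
Your proposal is correct and follows essentially the same route as the paper. The paper also tests the inequality against a fixed bump rescaled by $m$ in normal coordinates around the center of $\mathcal N_i^\star$, with $0\le\chi\le1$, $\|\nabla_g\chi\|_g\le Cm$ and $\int\chi\,d\mathrm{vol}_g\ge c_0m^{-d_j}$; it then integrates by parts to turn the divergence term into $Cm\int\|\nabla_gh\|_g$, bounds the potential and remainder terms by $\Lambda\int|h|$ and $\int|\mathfrak r|$, and divides by $m$.
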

\begin{proof}
Let \(z_i^*\in\{\pm u_0/m\}\) denote the center of \(\mathcal N_i^*\). Choose a fixed cut-off function $\chi_0\in C_c^\infty(B(0,\tau_0))$ satisfying
\(0\le\chi_0\le1\) and \(\chi_0\ge\frac12\) on \(B(0,\tau_0/2)\).
Define $\chi \in C_c^\infty(\mathcal N_i^\star)$ by
\[\chi(x):=\chi_0\!\left(m\bigl(\psi_i^{-1}(x)-z_i^\star\bigr)\right), \qquad x\in\mathcal N_i^\star.\]
By construction, $\chi$ satisfies $0\le\chi\le1$, $\|\nabla_g\chi\|_g\le Cm$, and is supported in $\mathcal N_i^\star$ with $\chi \ge \frac12$ on $\psi_i\left(B\left(z_i^\star, \frac{\tau_0}{2m}\right)\right)$. The uniform bi-Lipschitz property of normal coordinates at scale $m^{-1}$ implies
\[\int_{\mathcal N_i^\star}\chi\,d\mathrm{vol}_g \ge \frac{1}{2}\operatorname{vol}_g\left(\psi_i\left(B\left(z_i^\star, \tfrac{\tau_0}{2m}\right)\right)\right) \ge c_0 m^{-d_j}.\]
Multiplying \eqref{eq:patch_PDE_no_source} by $\chi$ and integrating over $\mathcal N_i^\star$ yields
\begin{align}\label{eq:local-coercivity-lower}
\int_{\mathcal N_i^\star}\chi\left[
\operatorname{div}_g(\rho_j^2\nabla_gh)+Vh+\mathfrak r
\right]\,d\mathrm{vol}_g &\ge A\,m^{1-s}
\int_{\mathcal N_i^\star}\chi\,d\mathrm{vol}_g\ge
c_0A\,m^{-(d_j+s-1)}.\end{align}
On the other hand, since $0\le V\le\Lambda$ and $0\le\chi\le1$,
{\small\begin{align}\label{eq:local-coercivity-upper}
\int_{\mathcal N_i^\star}\chi
\left[\operatorname{div}_g(\rho_j^2\nabla_gh)
+Vh+\mathfrak r\right]\,d\mathrm{vol}_g\le
\int_{\mathcal N_i^\star} \chi\operatorname{div}_g(\rho_j^2\nabla_gh)
\,d\mathrm{vol}_g+\Lambda
\int_{\mathcal N_i^\star}|h|\,d\mathrm{vol}_g
+\int_{\mathcal N_i^\star}|\mathfrak r|\,d\mathrm{vol}_g.
\end{align}}
Because $\chi$ is compactly supported in $\mathcal N_i^\star$, integration by parts gives
\[\int_{\mathcal N_i^\star} \chi\operatorname{div}_g(\rho_j^2\nabla_gh)\,d\mathrm{vol}_g=-\int_{\mathcal N_i^\star}\rho_j^2
\langle\nabla_g\chi,\nabla_gh\rangle_g\,d\mathrm{vol}_g.
\]
Using $\rho_j\le\rho_{\max}$ and $\|\nabla_g\chi\|_g\le Cm$, we obtain
\[\left|\int_{\mathcal N_i^\star} \chi\operatorname{div}_g(\rho_j^2\nabla_gh)
\,d\mathrm{vol}_g\right|\le Cm \int_{\mathcal N_i^\star}
\|\nabla_gh\|_g\,d\mathrm{vol}_g.\]
Combining \eqref{eq:local-coercivity-lower} and
\eqref{eq:local-coercivity-upper} therefore gives
\begin{align*}
c_0A\,m^{-(d_j+s-1)}\le Cm\int_{\mathcal N_i^\star}
\|\nabla_gh\|_g\,d\mathrm{vol}_g+\Lambda
\int_{\mathcal N_i^\star}|h|\,d\mathrm{vol}_g
+\int_{\mathcal N_i^\star}|\mathfrak r|\,d\mathrm{vol}_g.
\end{align*}Dividing by $m$ and adjusting the constants proves \eqref{eq:patch_conclusion_no_source}.\end{proof}

\section{Technical Proofs of Section \ref{sec:spectral upper bound}} 

\subsection{Annular proximity graph with angle constraints}
\label{ssec:annular_proximity_graph}
To make the paper self-contained, we give a high-level summary of the annular proximity graph with angle constraints introduced in \cite{Li2022} and the related theorems. 
\begin{definition}\label{def:annular_proximity_graph}
Let \(\alpha\in(0,\pi/2)\) and \(r>0\). We say \(x_{i_1}\) and \(x_{i_m}\) is \((\alpha, r)\)-connected if there exists a path between the two points such that for all \(j=1,2\dots, m-1\), 
\[\angle(x_{i_m}-x_{i_1}, x_{i_{j+1}}-x_{i_j})<\alpha, \qquad \|x_{i_{j+1}}-x_{i_j}\|<r.\] Given \(0<\epsilon_-<C\epsilon_+\), construct a graph with weights \(w(x,y)=1\) only if \(\epsilon_-<\|x-y\|<\epsilon_+\) and \(x,y\) is \((\alpha,r)\)-connected; otherwise \(w(x,y)=0\). Here \(C>0\) is a constant strictly less than \(1\).  The resulting graph is called the annular proximity graph with angle constraints.\end{definition}
Several theorems are established following this graph construction. The first concerns strong within-manifold connectivity, while the second justifies the sparsity of between-manifold connections. Together, these yield the eigenpair convergence claimed in that work. Finally, a supplementary lemma characterizes the exact conditions for pairwise path connectivity. 
\begin{theorem}[Theorem 14; \cite{Li2022}]\label{thm:intense_within_manifold} Let \(\alpha\in[0,\pi/4)\) and \(r\leq C\epsilon_+\). Then for \(j\in[N]\), there exists \(C_j>0\), depending on the  geometric parameters of \(\mathcal M_j\), such that any points \(x,y\in\mathcal X_{n,j}\) satisfying \(\|x-y\|<\epsilon_+\) is \((\alpha,r)\)-connected with probability at least 
\[1-C_{\mathrm{wth}}(n), \qquad \text{with} \qquad C_{\mathrm{wth}}(n):=\frac{C_jn^2\epsilon_+}{r}\exp\left(-C_jn_jr^{d_j}(\tan\alpha)^{{d_j}-1}\right).\]
\end{theorem}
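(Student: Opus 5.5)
We prove the statement by a covering argument combined with a chaining construction inside thin cones. Fix $j\in[N]$ and a pair $x,y\in\mathcal X_{n,j}$ with $\|x-y\|<\epsilon_+$. We construct an explicit path from $x$ to $y$ whose steps are shorter than $r$ and make angle less than $\alpha$ with $y-x$.

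\emph{Tube along the chord.} Since $\epsilon_+$ is below the reach and the injectivity radius (Assumption~\ref{assump:geometric assumptions}), the geodesic from $x$ to $y$ stays within distance $O(\epsilon_+^2)$ of the chord $[x,y]$. Its tangent deviates from $y-x$ by an angle $O(\epsilon_+)$, which is much smaller than $\alpha$ for small $\epsilon_+$.

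\emph{Waypoint cells.} Partition the geodesic into $L\asymp \epsilon_+/r$ consecutive waypoints $z_0=x,\dots,z_L=y$ with spacing about $r/2$. Around each interior waypoint, place a small cell $A_k\subset\mathcal M_j$: the intersection of $\mathcal M_j$ with a cone of axis $y-x$ and half-angle about $\alpha/2$, truncated at geodesic radius about $r/4$.

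\emph{Path validity.} Suppose every cell contains a sample point $x_{i_k}$. By the triangle inequality, consecutive points then satisfy $\|x_{i_{k+1}}-x_{i_k}\|<r$. The displacement $x_{i_{k+1}}-x_{i_k}$ is the waypoint step, which is parallel to $y-x$ up to $O(\epsilon_+)$, plus a transverse error. We choose the cell's cross-section so that this transverse error is at most $\tan(\alpha/2)\cdot(r/2)$. The angle to $y-x$ is then below $\alpha$, and $\alpha<\pi/4$ keeps the tangent-of-sum estimates harmless. This is a deterministic geometric lemma, and it should be proved first.

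\emph{Volume of a cell.} Each cell is a manifold neighborhood of length about $r$ along the direction of $y-x$ and width about $r\tan\alpha$ in the other $d_j-1$ tangent directions. Volume comparison gives $\mu_j(A_k)\gtrsim \rho_{\min} r^{d_j}(\tan\alpha)^{d_j-1}$. Because each sample lies on $\mathcal M_j$ with probability $w_j$, we have
\[\mathbb P(A_k\cap\mathcal X_{n,j}=\varnothing)\le (1-w_j\mu_j(A_k))^n\le \exp\!\big(-C_j n_j r^{d_j}(\tan\alpha)^{d_j-1}\big).\]

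\emph{Union bound.} The cells depend on the random pair $(x,y)$. To remove this dependence, either condition on $(x,y)$ and use the remaining $n-2$ points, or discretize: fix a deterministic family of cells at a net of scale $r$, so that every admissible pair's cells contain a net cell. A union bound then runs over at most $n^2$ pairs and $L\lesssim\epsilon_+/r$ cells per pair. The resulting failure probability is
\[\frac{C_j n^2\epsilon_+}{r}\exp\!\big(-C_j n_j r^{d_j}(\tan\alpha)^{d_j-1}\big)=C_{\mathrm{wth}}(n).\]

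\emph{Main obstacle.} The main difficulty is the geometric step: all curvature corrections (chord versus geodesic, tangent-plane tilt along the segment) must be absorbed into the angle budget $\alpha/2$. This requires $r\le C\epsilon_+$ and $\epsilon_+$ small relative to $\alpha$ and the curvature bound. I would handle it in normal coordinates at $x$, using the $O(\epsilon_+)$ tilt estimate from the Jacobi-field comparison of Proposition~\ref{eq:jacobi-field-comparison}.
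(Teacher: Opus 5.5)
Your construction has the right structure, and it matches the form of $C_{\mathrm{wth}}(n)$ exactly: $n^2$ index pairs, $L\lesssim\epsilon_+/r$ cells per pair, and each cell of $\mu_j$-mass $\gtrsim r^{d_j}(\tan\alpha)^{d_j-1}$. The paper itself gives no proof; it imports the result from \cite{Li2022}.

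The flat part of your geometry checks out. With apexes $r/2$ apart and forward cones of half-angle $\alpha/2$ and length $r/4$, a step has transverse-to-longitudinal ratio at most $(t+t')\tan(\alpha/2)/(r/2+t'-t)\le\tan(\alpha/2)$, because $t\le r/4$. That leaves a margin of $\alpha/2$. Use exact spacing $\|x-y\|/L$ with cone length half the spacing, so that the last cone cannot overshoot $y$.

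For the probability, use the conditioning route: fix indices $i\neq i'$, condition on $(X_i,X_{i'})$, and use the other $n-2$ points. That is what gives the prefactor $n^2\epsilon_+/r$. With a deterministic net, the union bound runs over net cells rather than pairs, and the prefactor changes.

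\textbf{The genuine gap.} The gap is in the step you file under ``main obstacle.'' The requirement that $\epsilon_+$ be small relative to $\alpha$ cannot be absorbed into constants. It is a hypothesis missing from the statement, and the statement is false without it.

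Take $\mathcal M_j$ to be a circle of radius $R$, with $r<\|x-y\|<\epsilon_+$. Every admissible first step $x\to p$ with $\|x-p\|<r$ makes an angle with $y-x$ of at least
\[
\arcsin(\|x-y\|/2R)-\arcsin(r/2R)\ge(\|x-y\|-r)/(2R).
\]
So for smaller $\alpha$ no $(\alpha,r)$-path exists, deterministically. This includes $\alpha=0$, which the statement allows. Yet for $d_j=1$ the bound $C_{\mathrm{wth}}(n)=C_jn^2(\epsilon_+/r)e^{-C_jn_jr}$ does not involve $\alpha$ at all and tends to $0$ as $n\to\infty$.

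The obstruction holds in any dimension. Let $\mathcal C_\alpha$ be the open cone of half-angle $\alpha$ about $y-x$; it is convex, so every path point lies in the spindle $(x+\mathcal C_\alpha)\cap(y-\mathcal C_\alpha)$. That spindle has width $\asymp\epsilon_+\tan\alpha$, while a curved $\mathcal M_j$ bows away from the chord by $\asymp\epsilon_+^2$. A round sphere gives the same failure.

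You therefore need to state $\alpha\ge C_j\epsilon_+$ explicitly, with $C_j$ large relative to the second-fundamental-form bound. Under that hypothesis your proof goes through. The chord/geodesic discrepancy, the tilt of $T\mathcal M_j$ along the segment, and the $O(r^2)$ bending at cell scale each cost an angle $O(\epsilon_+)$, and these fit in the $\alpha/2$ margin.

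This hypothesis is incompatible with the regime $\sin\alpha\lesssim r/\epsilon_+=\epsilon_+^2$, in which the paper combines the result with Theorem~\ref{thm:sparse_between_manifolds}. The experiments instead use $\alpha=3.5\epsilon_+$ on circles of curvature $2\pi$, just above the chord--tangent angle $\approx\pi\epsilon_+$.

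\textbf{A smaller correction.} Your $O(\epsilon_+)$ angle estimates are extrinsic. They come from a bound on the second fundamental form, that is, from the reach. They do not follow from Proposition~\ref{eq:jacobi-field-comparison}, which only compares $d\exp$ with parallel transport inside $\mathcal M_j$. Nor do they follow from Assumption~\ref{assump:geometric assumptions}, which contains no reach bound; for $d_j=1$ it constrains only the length. The reach must therefore be among the geometric parameters on which $C_j$ depends.
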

\begin{theorem}[Theorem 15; \cite{Li2022}]\label{thm:sparse_between_manifolds} Suppose \(\alpha<\arcsin(C_{j,k}r/\epsilon_+)\). Choose \(\epsilon_-\geq c\epsilon_+\) for some \(0<c<1\). Then with probability at least 
\[1-C_{j,k}n\exp\left(-C_{j,k}n\max\{q^{d_j-d_{jk}},q^{d_k-d_{jk}},\epsilon_+^{d_j},\epsilon_+^{d_k}\}\right),\] the number of connections \(\mathcal N_{jk}\) between \(\mathcal M_j\) and \(\mathcal M_k\) satisfies
\[\mathcal N_{jk}\leq C_{j,k}n^2\max\{q^{d_j-d_{jk}}\epsilon_+^{d_k},q^{d_k-d_{jk}}\epsilon_+^{d_j}\}, \qquad q:=\left(\frac{r^2}{\epsilon_+}+r\epsilon_+\right).\]  
\end{theorem}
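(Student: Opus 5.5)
The proof is by a deterministic geometric localization followed by a counting argument with Chernoff/Bernstein bounds. Fix $j\neq k$ with $\M_{jk}\neq\varnothing$; if the intersection is empty, the two components are at positive distance, and for small $\epsilon_+$ no edge can join them. A cross edge is a pair $x\in\mathcal X_{n,j}$, $y\in\mathcal X_{n,k}$ such that $\epsilon_-<\|x-y\|<\epsilon_+$ and $x,y$ are $(\alpha,r)$-connected. Write $\mathrm{Tub}_j(q):=\{z\in\M_j:\operatorname{dist}(z,\M_{jk})\le Cq\}$, and define $\mathrm{Tub}_k(q)$ analogously.

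\textbf{Step 1 (geometric lemma, the main obstacle).} I would first show the following: if $x\in\M_j$ and $y\in\M_k$ are $(\alpha,r)$-connected with $\epsilon_-<\|x-y\|<\epsilon_+$ and $\alpha<\arcsin(C_{j,k}r/\epsilon_+)$, then $x\in\mathrm{Tub}_j(q)$ and $y\in\mathrm{Tub}_k(q)$, with $q=r^2/\epsilon_+ + r\epsilon_+$.

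\emph{(a) The path lies in a thin tube.} Each step $x_{i_{l+1}}-x_{i_l}$ makes an angle less than $\alpha$ with $v:=(y-x)/\|y-x\|$. So the projections of the steps onto $v^\perp$ have total length at most $\tan\alpha\sum_l\langle x_{i_{l+1}}-x_{i_l},v\rangle\lesssim \epsilon_+\tan\alpha\lesssim r$. Hence every vertex of the path lies within $O(r)$ of the segment $[x,y]$.

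\emph{(b) The jump between components happens near $\M_{jk}$.} The path starts on $\M_j$ and ends on $\M_k$. So some step joins a point $z\in\M_j$ to a point $z'\in\M_k$ with $\|z-z'\|<r$. By the transversality in Assumption~\ref{assump:component intersections} (angle bounded away from $0$ by $\pi/2-\beta$), together with the bounded curvature and injectivity radius, this forces $\operatorname{dist}(z,\M_{jk})\lesssim r$.

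\emph{(c) Transfer the localization to the endpoints.} Because the path hugs the segment $[x,y]$, the portion from $x$ to $z$ is a chain of $\M_j$-points staying within $O(r)$ of a straight line of length at most $\epsilon_+$. Two second-order facts then apply. First, curvature makes $\M_j$ deviate from its secant lines by $O(\epsilon_+^2)$. Second, a nearly straight path on $\M_j$ approaching $\M_{jk}$ must be nearly tangent to $\M_j$. Combining these with transversality shows that the normal distance $\operatorname{dist}(x,\M_{jk})$ can only grow like (tube width)$\times$(length)$/\epsilon_+$ plus a curvature term. This gives $\operatorname{dist}(x,\M_{jk})\lesssim r^2/\epsilon_+ + r\epsilon_+=q$, and symmetrically for $y$. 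I expect step (c) to be the delicate part. I would first try to do it in normal coordinates adapted to $\M_{jk}$, splitting each vector into its $T\M_{jk}$ component and its $T\M_{jk}^{\perp,j}$ component.

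\textbf{Step 2 (counting).} By Step 1, every cross edge has $x\in\mathrm{Tub}_j(q)\cap\mathcal X_{n,j}$ and $y\in B(x,\epsilon_+)\cap\mathcal X_{n,k}$, or the symmetric configuration. Therefore
\[\mathcal N_{jk}\le \#\bigl(\mathcal X_n\cap\mathrm{Tub}_j(q)\bigr)\cdot\max_{x}\#\bigl(\mathcal X_{n,k}\cap B(x,\epsilon_+)\bigr)+(j\leftrightarrow k).\]
The tube has volume $\mu_j(\mathrm{Tub}_j(q))\asymp q^{d_j-d_{jk}}$, since $\M_{jk}$ is a compact $d_{jk}$-dimensional submanifold. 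The Euclidean ball satisfies $\mu_k(B(x,\epsilon_+)\cap\M_k)\lesssim\epsilon_+^{d_k}$.

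\textbf{Step 3 (concentration).} Apply Bernstein's inequality to the tube count. Then apply it to each ball count, centered at sample points, conditioning on the centre and taking a union bound over the at most $n$ centres. This shows the counts are at most $Cnq^{d_j-d_{jk}}$ and $Cn\epsilon_+^{d_k}$ (and symmetrically) except on an event of probability at most
\[Cn\exp\bigl(-cn\max\{q^{d_j-d_{jk}},q^{d_k-d_{jk}},\epsilon_+^{d_j},\epsilon_+^{d_k}\}\bigr).\]
If an expected count is smaller than the relevant maximum, I would bound it by that larger scale, using Bernstein with deviation level $t$ equal to that scale. Multiplying the bounds yields $\mathcal N_{jk}\le C_{j,k}n^2\max\{q^{d_j-d_{jk}}\epsilon_+^{d_k},q^{d_k-d_{jk}}\epsilon_+^{d_j}\}$. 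The hypothesis $\epsilon_-\ge c\epsilon_+$ is used only in Step 1, to ensure $\|y-x\|\asymp\epsilon_+$, so that the angle bound converts into the $O(r)$ tube width.
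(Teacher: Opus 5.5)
Your Step 1 fails, and so does the weaker claim your Step 2 actually relies on, namely that at least one endpoint lies in a $q$-tube.

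\emph{The strong version.} Near a crossing point ($d_{jk}=0$), let the components be two orthogonal straight lines. Take $x\in\M_j$ at distance $\delta\le c\,\alpha r$ from $\M_k$, and $y\in\M_k$ at distance $\asymp\epsilon_+$ from the crossing. The path can jump to $\M_k$ in its first step, at angle $\approx 2\delta/r<\alpha$ to $v$, and then run straight along $\M_k$ to $y$. So $y\notin\mathrm{Tub}_k(q)$. The flaw in (c) is that a chain of $\M_k$-points moving along $T\M_{jk}^{\perp,k}$ is perfectly straight, so nothing keeps its distance to $\M_{jk}$ small.

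\emph{The weak version.} Let $d_{jk}\ge1$, and near a straight piece of $\M_{jk}$ let the components coincide with the planes $\{z=0\}$ and $\{y=0\}$ of $\R^3$. Take $\alpha\asymp r/\epsilon_+$ (allowed by the hypothesis), $x=(0,\delta,0)$ and $y=(L,0,\delta)$, with $L\in(\epsilon_-,\epsilon_+)$ and $\delta=c'\alpha\epsilon_+\asymp r$. Consider the broken path $x\to(L/2,0,0)\to y$, discretized by sample points near its two legs. It runs first in $\M_j$ and then in $\M_k$, and each leg makes angle $\approx\sqrt2\,\delta/L<\alpha$ with $v$ for small $c'$. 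Yet both endpoints are at distance $\asymp r\gg q$ from $\M_{jk}$; for instance $q\asymp\epsilon_+^4$ against $r=\epsilon_+^3$.

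\emph{What the angle constraint actually controls} is the jump step. Its direction is within $\alpha$ of $v$, and $v$ is nearly tangent to the receiving component. Hence its transverse displacement is $\lesssim r(\alpha+\epsilon_+)\lesssim q$. This pins an endpoint to scale $q$ only when the jump occurs at the first or last vertex. If the path takes steps on both components, $v$ must be nearly tangent to both, hence nearly parallel to $T\M_{jk}$. Then the endpoints are only $O(r)$-close to the other component. When $d_{jk}=0$ this is impossible, which is why the weak version does hold in that case.

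\emph{The correct geometric input} is the dichotomy the paper records as Lemma~\ref{lem:four_regions}: no edge exists if $\operatorname{dist}(x,\M_k)>Cq$ and $\operatorname{dist}(y,\M_j)>r$. Applied in both orders, it leaves the three configurations the paper enumerates in Table~\ref{tab:global-between-squared-count}:
\begin{enumerate}
\item $x$ lies within $Cq$ of $\M_k$;
\item $y$ lies within $Cq$ of $\M_j$;
\item both endpoints lie within $r$ of the other component.
\end{enumerate}
Your product count covers the first two. The third needs its own count, namely tube points times points of $\M_k\cap B(x,\epsilon_+)$ within $r$ of $\M_j$, which is $\lesssim n r^{d_j-d_{jk}}\cdot n\,\epsilon_+^{d_{jk}}r^{d_k-d_{jk}}$. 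Since $q\ge r^2/\epsilon_+$, this is at most $(r/\epsilon_+)^{|d_k-d_j|}$ times the term of the stated maximum that carries $\epsilon_+^{\max\{d_j,d_k\}}$. So the edge bound survives, but only once this case is added.

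\emph{Step 3 also does not work as described.} Raising the Bernstein deviation level of a count with mean $nq^{d_j-d_{jk}}$ to the largest scale only bounds that count by $n\max\{q^{d_j-d_{jk}},q^{d_k-d_{jk}},\epsilon_+^{d_j},\epsilon_+^{d_k}\}$. This can be far larger than $nq^{d_j-d_{jk}}$, so the product no longer gives the stated bound on $\mathcal N_{jk}$. Running Bernstein at each count's own scale gives a failure probability with the smallest of the four scales in the exponent. You should not expect to do better: the tube count alone exceeds any fixed multiple of its mean with probability at least of order $\exp(-Cnq^{d_j-d_{jk}})$.
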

\begin{theorem}[Theorem 10; \cite{Li2022}]\label{thm:early_eigenpair_convergence} Suppose the conditions for Theorems~\ref{thm:intense_within_manifold} and \ref{thm:sparse_between_manifolds} hold. Then with probability at least \(1-Cn\exp(-cn\epsilon_+^d)-C_{\mathrm{wth}}(n)\),
\[|\widehat\lambda_\ell-\sigma_\eta\lambda_\ell|\leq C_\ell\lambda_\ell, \qquad \|\widehat f_\ell-f_\ell\|_{L^2(\mathcal X_n)}\leq C_\ell.\]\end{theorem}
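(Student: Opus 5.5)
This is Theorem 10 of \cite{Li2022}, a preliminary statement: $\widehat\lambda_\ell$ stays within a constant multiple of $\lambda_\ell$ and $\widehat f_\ell$ has $O(1)$ distance to $f_\ell$. My plan is a variational (Courant--Fischer) comparison between the graph Dirichlet form and the continuum form $\sigma_\eta\sum_j w_j^2\int\rho_j^2\|\nabla_g f\|^2$. I work on the intersection of three events: the within-manifold connectivity event of Theorem~\ref{thm:intense_within_manifold}, the between-manifold sparsity event of Theorem~\ref{thm:sparse_between_manifolds}, and the event that every geodesic ball of radius comparable to $\epsilon_+$ on each $\mathcal M_j$ contains $\asymp n\epsilon_+^{d_j}$ samples. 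The last event holds with probability $1-Cn\exp(-cn\epsilon_+^d)$ by Bernstein and a union bound over an $\epsilon_+$-net. This accounts for the stated failure probability.

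\textbf{Upper bound on $\widehat\lambda_\ell$.} Restrict the first $\ell$ population eigenfunctions to $\mathcal X_n$. Under connectivity, every pair $x,y\in\mathcal X_{n,j}$ with $\epsilon_-<\|x-y\|<\epsilon_+$ is an edge. Hence the within part of the energy is a standard annular $\epsilon$-graph energy, and Lipschitz bounds together with the sample-count event bound it by $C(\lambda_\ell+1)$. For the between part I use Theorem~\ref{thm:sparse_between_manifolds}: there are at most $n^2 q^{d-d_{jk}}\epsilon_+^{d}$ cross edges, each weighted by $\vartheta_\varepsilon n^{-2}\asymp \epsilon_+^{-(d+2)}n^{-2}$. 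Each contributes $(f(x)-f(y))^2\le 4\|f\|_\infty^2$, so the cross energy is $\lesssim q^{d-d_{jk}}\epsilon_+^{-2}$. With $r=\epsilon_+^3$ this is $o(1)$. Empirical near-orthonormality of the restricted eigenfunctions (again by Bernstein) lets min--max give $\widehat\lambda_\ell\le C_\ell(\lambda_\ell+1)$, which becomes $\le C_\ell\lambda_\ell$ for $\ell>N$.

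\textbf{Lower bound and eigenfunction control.} Discarding between edges only decreases the form, so the graph form dominates the componentwise within form. For each component I build an interpolation map $\mathcal I:L^2(\mathcal X_{n,j})\to H^1(\mathcal M_j)$ via a transport map/Voronoi cells followed by kernel smoothing at scale $\epsilon_+$. The standard estimate of Burago--Ivanov--Kurylev / García Trillos type gives $\|\nabla\mathcal I u\|^2\lesssim$ graph energy, with $\|\mathcal I u\|_{L^2}$ close to $\|u\|_{L^2(\mathcal X_n)}$. Min--max then shows that the $k$-th graph eigenvalue is $\ge c\,\sigma_\eta\lambda_k-o(1)$, so the spectra are comparable up to constants. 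For the eigenfunction statement, normalization already gives $\|\widehat f_\ell-f_\ell\|_{L^2(\mathcal X_n)}\le \|\widehat f_\ell\|+\|f_\ell\|_{L^2(\mathcal X_n)}\le 1+(1+\delta_{\ell,n})^{1/2}\le C$ on the Bernstein event. This trivial bound suffices as stated; the nontrivial alignment $\langle\widehat f_\ell,f_\ell\rangle\ge c_0$ used later would follow from a Davis--Kahan argument with the comparable spectra and the gap $\gamma_\ell$.

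\textbf{Main obstacle.} The delicate step is the lower bound near intersections, where the annulus removes short edges and the angle constraint could delete within-manifold edges. The connectivity event of Theorem~\ref{thm:intense_within_manifold} handles this: it guarantees that all within-manifold annular edges survive, so the annular kernel's effective energy remains comparable to the full-ball energy. I would first try a direct comparison of the annular kernel $K^E_\epsilon$ with $\eta_{\epsilon_+}-\eta_{\epsilon_-}$ on the interpolated function, using $\epsilon_-\le c\epsilon_+$ with $c<1$ fixed.
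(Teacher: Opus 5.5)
The paper gives no proof of this statement. It quotes Theorem~10 of \cite{Li2022} and uses it as a black box, so your argument is a genuinely independent, self-contained route. For the statement as written, and for $\ell>N$, it is correct, but it does far more than is needed.

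Because $C_\ell$ is an arbitrary constant, the lower side of the eigenvalue bound is free: once $C_\ell\ge\sigma_\eta$, you have $\widehat\lambda_\ell\ge0\ge(\sigma_\eta-C_\ell)\lambda_\ell$. Likewise $\|\widehat f_\ell-f_\ell\|_{L^2(\mathcal X_n)}\le 1+\|f_\ell\|_{L^\infty(\mathcal M)}$ holds deterministically, with no Bernstein step. So the whole statement reduces to your Courant--Fischer step $\widehat\lambda_\ell\le C_\ell$, using the restricted $f_1,\dots,f_\ell$ as test functions.

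Two pieces of your plan are therefore never used. The first is the interpolation-map lower bound, which you single out as the main obstacle. The second is the connectivity event of Theorem~\ref{thm:intense_within_manifold}: the angle-constrained graph is a subgraph of the plain annular graph, so the within energy of your test functions can only drop when edges are removed.

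Your handling of the cross energy is the right one. You bound each retained cross edge by $4\|f\|_\infty^2$ rather than by a Lipschitz estimate, which fails across an intersection where $f_{\ell,j}\neq f_{\ell,k}$. Combined with Theorem~\ref{thm:sparse_between_manifolds} and $q\asymp\epsilon_+^4$, this makes the cross energy $o(1)$.

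Two points need correcting.

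\textbf{The case $\ell\le N$.} Here the right-hand side $C_\ell\lambda_\ell$ vanishes, so the statement would force $\widehat\lambda_1=\dots=\widehat\lambda_N=0$, i.e.\ at least $N$ connected components. Theorem~\ref{thm:sparse_between_manifolds} does not give this; it only bounds the number of cross edges. You should state explicitly that your bound covers only $\ell>N$. For $\ell\le N$, the most one gets, by testing with the component indicators, is $\widehat\lambda_\ell=O(q\,\epsilon_+^{-2})=O(\epsilon_+^{2})$.

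\textbf{The Davis--Kahan remark.} Comparability $c\,\sigma_\eta\lambda_k-o(1)\le\widehat\lambda_k\le C\lambda_k$ up to constants does not preserve the gap $\gamma_\ell$. Davis--Kahan therefore yields no alignment $\langle\widehat f_\ell,f_\ell\rangle_{L^2(\mathcal X_n)}\ge c_0$ from it. That alignment needs eigenvalue errors below a fraction of $\sigma_\eta\gamma_\ell$ for the neighbouring indices, i.e.\ two-sided bounds with the sharp constant $\sigma_\eta(1\pm o(1))$, together with a consistency estimate for $f_\ell$. Your crude Lipschitz upper bound cannot supply this. The point matters because the paper invokes this theorem precisely to obtain $\langle\widehat f_\ell,f_\ell\rangle\ge c_0$, which the version stated here does not imply.
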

\begin{lemma}[Lemma 20; \cite{Li2022}]\label{lem:four_regions}
Suppose \(x\in\mathcal X_{n,j}\) and \(y\in\mathcal X_{n,k}\) satisfy \(\epsilon_-<\|x-y\|<\epsilon_+\) with \(\epsilon_-\geq c\epsilon_+\) for some \(0<c<1\). Let \(\alpha>0\) satisfy 
\[\sin(\alpha)< Cr/\epsilon_+.\] Assume, in addition, that \(\mathrm{dist}(x,\mathcal M_k)>Cq\) and \(\mathrm{dist}(y,\mathcal M_j)>r\). Then there does not exist an \((\alpha,r)\)-constrained path connecting \(x\) and \(y\).     
\end{lemma}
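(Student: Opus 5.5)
We argue by contradiction. Suppose $x=x_{i_1},x_{i_2},\dots,x_{i_m}=y$ is an $(\alpha,r)$-constrained path, so every step has length $<r$ and makes angle $<\alpha$ with $u:=(y-x)/\|y-x\|$. We will show this forces $\operatorname{dist}(x,\mathcal M_k)\lesssim q$, contradicting the hypothesis $\operatorname{dist}(x,\mathcal M_k)>Cq$.

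\emph{Step 1 (tube confinement).} Decompose each increment $x_{i_{t+1}}-x_{i_t}$ into its component along $u$ and its component orthogonal to $u$. The angle constraint gives two facts. First, the component along $u$ is positive, so the path is monotone in the $u$-direction. Second, the orthogonal component is at most $\tan\alpha$ times the parallel component. Summing over steps, every path point lies in a slab of $u$-length $\|y-x\|<\epsilon_+$ and within orthogonal distance $\epsilon_+\tan\alpha\lesssim r$ of the segment $[x,y]$; here we use $\sin\alpha<Cr/\epsilon_+$. Monotonicity also gives a crude count of the steps: at least $\|y-x\|/r\gtrsim\epsilon_+/r$ of them.

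\emph{Step 2 (locating the transition).} All path points are sample points, so each lies on some component. Let $t^\ast$ be the last index with $x_{i_{t}}\in\mathcal M_j$. The endpoint $y$ satisfies $\operatorname{dist}(y,\mathcal M_j)>r$, so $t^\ast<m$ and $w:=x_{i_{t^\ast}}$ and $w':=x_{i_{t^\ast+1}}$ exist. Since $w'\notin\mathcal M_j$ but $\|w-w'\|<r$, both points lie within $O(r)$ of $\mathcal M_j\cap\mathcal M_{k'}$ for some $k'\neq j$. This uses the uniform geometry of Assumption~\ref{assump:geometric assumptions} together with the transversality bound $\beta<\pi/2$ of Assumption~\ref{assump:component intersections}, which turns "close to both manifolds" into "close to their intersection". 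For simplicity treat $k'=k$; the general case follows by iterating the same argument along the components the path visits.

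\emph{Step 3 (geometric squeeze).} The points $x$ and $w$ both lie on $\mathcal M_j$, with $\|x-w\|\le\epsilon_+$. By Step 1 the chord $w-x$ makes angle $\lesssim r/\epsilon_+$ with $u$; this holds once $\|w-x\|\gtrsim\epsilon_+$, and the complementary case is handled separately as described below. A chord of length $\le\epsilon_+$ on a manifold with bounded curvature makes angle $O(\epsilon_+)$ with $T_w\mathcal M_j$. Hence $u$ lies within angle $O(r/\epsilon_+ + \epsilon_+)$ of $T_w\mathcal M_j$.

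The same reasoning applied to $w'$ and $y$ on $\mathcal M_k$ shows that $u$ lies within angle $O(r/\epsilon_+ + \epsilon_+)$ of $T_{w'}\mathcal M_k$ as well. The angle condition of Assumption~\ref{assump:component intersections} says the normal directions of $\mathcal M_j$ and $\mathcal M_k$ to the intersection stay uniformly away from parallel. So a direction nearly tangent to both manifolds must be nearly tangent to $\mathcal M_{jk}$, with angular error $O(r/\epsilon_+ + \epsilon_+)$.

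Transport this to $x$: move from $w$, which is within $O(r)$ of $\mathcal M_{jk}$, a distance $\le\epsilon_+$ along the direction $u$. Since $u$ deviates from $T\mathcal M_{jk}$ by an angle $O(r/\epsilon_+ + \epsilon_+)$, the point $x$ ends up within
\[
O\!\left(\epsilon_+\cdot\Bigl(\tfrac{r}{\epsilon_+}+\epsilon_+\Bigr)\right)+O(r)
\]
of $\mathcal M_{jk}\subset\mathcal M_k$.

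\emph{Main obstacle.} The bound above gives only $O(r)$, not the sharper $O(q)=O(r^2/\epsilon_+ + r\epsilon_+)$. The crude tube estimate loses a factor, because it allows the lateral drift to accumulate across the $\gtrsim\epsilon_+/r$ steps of Step 1. The refinement I would try is to measure the drift \emph{relative to $T\mathcal M_{jk}$} rather than relative to the straight segment. Along a stretch of the path lying on $\mathcal M_j$ near the intersection, each step's component transverse to $\mathcal M_{jk}$ is controlled by the angle $\alpha\lesssim r/\epsilon_+$ times the step length plus a curvature error of size $r\cdot\epsilon_+$. If only the final $O(r)$-length stretch near $w$ matters, this produces exactly the two terms $r\cdot(r/\epsilon_+)$ and $r\epsilon_+$ that make up $q$. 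Making this bookkeeping rigorous, and handling the case where $w$ is close to $x$ (which falls under Step 2 directly, since then $x$ itself is within $O(r)$ of the intersection), is where I expect the real work to lie. With the constant $C$ chosen large enough, the resulting bound contradicts $\operatorname{dist}(x,\mathcal M_k)>Cq$.
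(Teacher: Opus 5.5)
The paper does not prove this lemma; it is quoted from \cite{Li2022}. So I judge your argument on its own. As you say yourself, it stops short, and the missing step is not bookkeeping.

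\textbf{Step~3 and the wrong mechanism.} Step~3 is weaker than you state: the transported error is $O(\epsilon_+(r/\epsilon_++\epsilon_+))+O(r)=O(r+\epsilon_+^2)$, and for $r=\epsilon_+^3$ the $\epsilon_+^2$ term dominates. (Your caveat $\|w-x\|\gtrsim\epsilon_+$ is unnecessary. The open cone of half-angle $\alpha<\pi/2$ about $u$ is convex, so every partial sum of steps, such as $w-x$ or $y-w'$, makes angle $<\alpha$ with $u$.)

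More fundamentally, transporting from $w$ back to $x$ along the $\mathcal M_j$-stretch costs (length of the stretch)$\times\angle(u,T\mathcal M_k)$. That angle may legitimately be of order $\alpha$, so a stretch of length $\asymp\epsilon_+$ yields only $O(r)$. The shape $q=r\cdot(r/\epsilon_+)+r\cdot\epsilon_+$ is instead the signature of a single step of length $<r$ with angular error $O(\alpha+\epsilon_+)$, namely the jump $w\to w'$. If $w'\in\mathcal M_k$, then $y-w'$ is a chord of $\mathcal M_k$ of length $\lesssim\epsilon_+$ inside the $\alpha$-cone. Hence $\angle(u,T_{w'}\mathcal M_k)\le\alpha+O(\epsilon_+)$, and $\operatorname{dist}(w,\mathcal M_k)\le r\sin(2\alpha+O(\epsilon_+))+O(r^2)\lesssim q$.

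\textbf{What is actually missing.} You must show $w=x$, i.e.\ that no admissible step stays inside $\mathcal M_j$. This is exactly where $\operatorname{dist}(y,\mathcal M_j)>r$ has to be used quantitatively; you use it only to ensure the path eventually leaves $\mathcal M_j$. If $\mathcal M_j$ is flat near $x$, this takes one line: $\sin\angle(u,T\mathcal M_j)=\operatorname{dist}(y,\mathcal M_j)/\|y-x\|>r/\epsilon_+\ge\sin\alpha$ for $C\le1$. So every $\mathcal M_j$-to-$\mathcal M_j$ step violates the angle constraint. Your reduction to $k'=k$ is also unsupported: if $w'$ lies on a third component, the jump estimate controls $\operatorname{dist}(w,\mathcal M_{k'})$, not $\operatorname{dist}(w,\mathcal M_k)$.

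\textbf{The curved case is a genuine obstruction.} There the claim ``no step stays in $\mathcal M_j$'' does not follow from curvature bounds. Passing from $\operatorname{dist}(y,\mathcal M_j)>r$ to a lower bound on $\angle(u,T\mathcal M_j)$ loses the $O(\epsilon_+^2)$ by which $\mathcal M_j$ leaves its tangent plane over distance $\epsilon_+$, and this exceeds $r=\epsilon_+^3$. Here is a concrete configuration. In $\mathbb R^3$ with coordinates $(a,b,c)$, let $\mathcal M_k\supset\{c=0\}$ and $\mathcal M_j\supset\{a=h(b)\}$, with $h\equiv0$ for $b\le0$ and $h''\asymp1$ for $b>0$, smoothed at a scale $\ll r$. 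This has bounded second fundamental form, zero sectional curvature, and an orthogonal intersection. Take $0<\tau<2\alpha$ with $\alpha$ near its allowed maximum, $L\asymp\epsilon_+$, $x=(0,-L\cos\tau,-L\sin\tau)$ and $y=(0,L,0)$, so $u=(0,\cos(\tau/2),\sin(\tau/2))$. Place sample points along a path that runs straight along $\mathcal M_j$ to the intersection and then along the $b$-axis in $\mathcal M_k$. Every step lies within angle $\tau/2<\alpha$ of $u$. Yet $\operatorname{dist}(y,\mathcal M_j)\asymp\epsilon_+^2>r$ and $\operatorname{dist}(x,\mathcal M_k)=\|y-x\|\sin(\tau/2)\asymp\epsilon_+\alpha\asymp r\gg q$.

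So $O(r)$ is the best bound possible at the level of generality you work in. Closing the gap needs extra input, for example higher-order regularity of the components at scale $\epsilon_+$, or $r\gtrsim\epsilon_+^2$. Neither the statement nor your proposal supplies it.
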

Our construction throughout the section chooses \(r=\epsilon_+^3\) so that \(q\asymp\epsilon_+^4\).

\subsection{Within-manifold and Between-manifold Laplacian Estimates}\label{ssec:within_between_estimates}
The analysis in this section relies on the multiscale Poincaré
inequality of \cite{trillos2025}. For completeness, we record the relevant quantities in our notation.

For each $j$, fix a cover of $\mathcal M_j$ by $N_{\nu,j}$
radius-$2\nu$ balls and choose the multiscale depth $m_j$ so that $3^{-m_j}\asymp\epsilon_+$. For $p\in[m_j]$, define the level-$p$ cell size by $s_{p,j}:=3^{p-m_j}$. Thus
$s_{1,j}\asymp\epsilon_+$, $s_{m_j,j}=1$, and
$m_j\asymp\log(\epsilon_+^{-1})$. For each chart $a$ and level $p$, let $\Upsilon_{p,a,j}^{m_j}$ denote the cell-index set,
$\square_{p,a,j,\upsilon}^{m_j}\subset\mathcal M_j$ the geometric cell indexed by $\upsilon$, and $Q_{p,a,j,\upsilon}^{m_j}:=\square_{p,a,j,\upsilon}^{m_j}\cap\mathcal X_{n,j}$ its sample set. On the relevant high-probability occupancy event, uniformly over $a,p,\upsilon$,
\[|\Upsilon_{p,a,j}^{m_j}|\asymp s_{p,j}^{-d_j}, \qquad 
|Q_{p,a,j,\upsilon}^{m_j}|\asymp n_j s_{p,j}^{d_j}.\]
For $\xi:\mathcal X_{n,j}\to\mathbb R$, define its root-mean-square
cell average at chart $a$ and level $p$ by
\begin{equation}\label{eq:multiscale_average}
T_{p,a,j}(\xi)
:=
\left(
\frac{1}{|\Upsilon_{p,a,j}^{m_j}|}
\sum_{\upsilon\in\Upsilon_{p,a,j}^{m_j}}
\left|
\frac{1}{|Q_{p,a,j,\upsilon}^{m_j}|}
\sum_{x\in Q_{p,a,j,\upsilon}^{m_j}}\xi(x)
\right|^2
\right)^{1/2}.
\end{equation}
See \cite{trillos2025} for the precise multiscale construction.

Applying Proposition~3.6 of \cite{trillos2025} componentwise and
weighting the resulting estimates, the global negative Sobolev seminorm satisfies
\begin{equation}\label{eq:between-multiscale-bound}
\|h\|_{H^{-1}(\mathcal X_n)}
\leq C\sum_{j=1}^Nw_j\left[\epsilon_+\|h_j\|_{L^2(\mathcal X_{n,j})}+\sum_{a=1}^{N_{\nu,j}}\sum_{p=1}^{m_j}
s_{p,j}T_{p,a,j}(h_j)\right]
\end{equation} with probability at least
$1-C\epsilon_+^{-d}\exp(-cn\epsilon_+^d)$. Our first result in this subsection controls the within-manifold consistency error in this weighted negative Sobolev seminorm.

\begin{proposition}[Consistency of the within-manifold operator]
\label{prop:consistency of the within-manifold operator}
There exists some constant \(C_\ell>0\), depending only on the index \(\ell\), the geometric and density class parameters, and the kernel profile \(\eta\), such that with probability at least $1-CNn\epsilon_+^{-d}\exp(-cn\epsilon_+^{d+4})$, \[\|(\mathcal{L}^{\mathrm{wth}}-\sigma_{\eta}\Delta_{\mathcal{M}})f_{\ell}\|_{H^{-1}(\mathcal{X}_n)}\leq C_\ell(B_{\ell}+1)\epsilon_+^2\log\epsilon_+^{-1}.\] Here $B_{\ell}$ denotes the regularity constant defined in \eqref{eq:B_definition}.
\end{proposition}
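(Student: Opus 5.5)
The plan is to apply the multiscale bound \eqref{eq:between-multiscale-bound} to the residual $h:=(\mathcal L^{\mathrm{wth}}-\sigma_\eta\Delta_{\mathcal M})f_\ell$, restricted componentwise to $h_j$ on $\mathcal X_{n,j}$. It then suffices to control two things: the $L^2(\mathcal X_{n,j})$ size of $h_j$, weighted by $\epsilon_+$, and the root-mean-square cell averages $T_{p,a,j}(h_j)$ at every scale $s_{p,j}\in[\epsilon_+,1]$. Since $m_j\asymp\log\epsilon_+^{-1}$ and $N_{\nu,j}$ is bounded, the $\log\epsilon_+^{-1}$ factor in the claim is produced by summing over levels. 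The target is therefore a bound $s_{p,j}T_{p,a,j}(h_j)\lesssim(B_\ell+1)\epsilon_+^2$ uniformly in $p$, together with $\epsilon_+\|h_j\|_{L^2(\mathcal X_{n,j})}\lesssim(B_\ell+1)\epsilon_+^2$.

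\emph{Step 1 (pointwise bias and $L^2$ size).} For each $x\in\mathcal X_{n,j}$, first replace $K^E_\epsilon$ by $K^g_\epsilon$. By the distance comparison used in Lemma~\ref{lem:geometric correction R_1}, this changes the kernel only on a shell of relative width $O(\epsilon_+^2)$ near $\|x-y\|\in\{\epsilon_-,\epsilon_+\}$. Then split $\mathcal L^{\mathrm{wth}}f_\ell(x)=\mathbb E_y[\cdot]+(\text{fluctuation})$. Expanding the conditional expectation exactly as in Lemmas~\ref{lem:linear Taylor term}--\ref{lem:taylor remainder R}, with the same zeroth- and first-order identification, gives $\sigma_\eta\Delta_{\mathcal M}f_\ell(x)+O(B_\ell\epsilon_+)$ pointwise. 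The $O(\epsilon_+)$ terms are odd first-order density/Jacobian corrections. These cancel only after spatial averaging, via the geodesic symmetrization argument, and not pointwise.

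For the fluctuation, a Bernstein bound conditional on $x$ gives $O(B_\ell\,(n\epsilon_+^{d+2})^{-1/2}\sqrt{\log n})$, which is $\lesssim B_\ell\epsilon_+$ once $n\epsilon_+^{d+4}\gtrsim\log n$. This holds on the stated event, with a union bound over the $n$ points producing the factor $n$ in the failure probability. Hence $\|h_j\|_{L^2(\mathcal X_{n,j})}\lesssim B_\ell\epsilon_+$, which handles the first term.

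\emph{Step 2 (cell averages).} For a cell $Q$ of size $s=s_{p,j}$, write the average of $h_j$ over $Q$ as a $U$-statistic-type sum. Its expectation is the average over $\square$ of the smoothed bias. Because summing over $x$ in a cell acts like testing against the cell indicator, the odd $O(\epsilon_+)$ term now becomes a divergence-form term, and it can be moved by the symmetrization $x\leftrightarrow y$ onto the cell boundary. Only pairs with $x$ within $\epsilon_+$ of $\partial\square$ contribute, a fraction $\epsilon_+/s$ of the cell. This gives an expected cell average of $O(B_\ell(\epsilon_+^2+\epsilon_+\cdot\epsilon_+/s))$, so that $s\,T\lesssim B_\ell\epsilon_+^2$.

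The random part of the cell average is a sum over $\asymp n_js^{d}$ points with per-point variance $\lesssim B_\ell^2\epsilon_+^{-2}$, after the drift's $1/\epsilon_+$ cancels as in Proposition~\ref{prop:concentration of the graph laplacian}. This yields a deviation of order $B_\ell\epsilon_+^{-1}(n s^d)^{-1/2}$. Multiplied by $s\ge\epsilon_+$, this is $\lesssim B_\ell\epsilon_+^2$ under the same sample-size condition. A union bound over the $\lesssim\epsilon_+^{-d}$ cells and the levels gives the factor $\epsilon_+^{-d}$ in the probability. Summing over $p$ and $a$, then over $j$ with weights $w_j$, concludes the proof.

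The main obstacle is the boundary-cancellation argument in Step 2. One must show that the nonvanishing pointwise $O(\epsilon_+)$ bias, coming from $\nabla\rho_j$ and Jacobian asymmetry under the annular kernel, is an exact discrete divergence up to $O(\epsilon_+^2)$, so that it contributes only through a boundary layer of cell thickness $\epsilon_+$. I would try to establish this first by reusing the pairing $(x,v)\mapsto(\bar x,\bar v)$ of Section~\ref{sec:symmetrization argument} restricted to cells.
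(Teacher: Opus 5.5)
Your architecture matches the paper's. You feed $h_j$ into \eqref{eq:between-multiscale-bound}, prove $\epsilon_+\|h_j\|_{L^2(\mathcal X_{n,j})}\lesssim\epsilon_+^2$ and $s_{p,j}T_{p,a,j}(h_j)\lesssim(B_\ell+1)\epsilon_+^2$ uniformly in $p$, and collect $\log\epsilon_+^{-1}$ from the $m_j$ levels. The paper proves neither input itself: the $L^2$ bound is Lemma~8(3) of \cite{li2025}, and the cellwise bound (normalized cell indicator tested against $h_j$ is $\lesssim(B_\ell+1)\epsilon_+^2s_{p,j}^{-1}$) is Proposition~3.10 of \cite{trillos2025}. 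Your Step~1 is a sound elementary substitute for the former: conditionally on $x$, Bernstein with $\sigma^2\asymp B_\ell^2\epsilon_+^{-d-2}$, $M\asymp B_\ell\epsilon_+^{-d-1}$ and $t\asymp B_\ell\epsilon_+$ gives exactly $e^{-cn\epsilon_+^{d+4}}$. However, its commentary is wrong in a way that misdirects Step~2. The first-order $\nabla\rho_j$ and Jacobian corrections are odd in $v$ against the radial kernel $K_\epsilon^{\mathcal N}$, so they vanish at every $x$, not only after averaging. What keeps $h_j$ at size $\epsilon_+$ pointwise is the $y$-fluctuation, controlled on the stated event only at level $B_\ell\epsilon_+$, plus an $o(\epsilon_+)$ bias remainder from the limited smoothness of $\rho_j$. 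So the object your ``main obstacle'' targets, a nonvanishing $O(\epsilon_+)$ pointwise bias, is not there. What needs divergence structure is the whole tested residual, and that is a flux statement.

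The genuine gap is the bias half of Step~2, which carries the whole $\epsilon_+^2$ gain and which your boundary-layer count presupposes rather than proves. After summation by parts only straddling pairs survive, but they carry the \emph{entire} flux: each point of the $\epsilon_+$-layer contributes of order $\epsilon_+^{-1}$, for a total $\asymp s^{d-1}$. You must show this nonlocal flux matches the continuum flux through $\partial\square$ to \emph{relative} order $\epsilon_+^2$. Relative order $\epsilon_+$ gives only $\epsilon_+/s$ per cell and a final bound $\epsilon_+\log\epsilon_+^{-1}$. A clean way, in flat notation, is to write $\psi(x)-\psi(y)=-\int_0^1\nabla\psi(x+t(y-x))\cdot(y-x)\,dt$ and substitute $z=x+t(y-x)$, $u=y-x$. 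This turns the expected tested quantity exactly into $\int\nabla\psi\cdot\mathcal J_\epsilon\,dz$, with
\[\mathcal J_\epsilon(z):=\frac{\vartheta_\varepsilon}{2}\int_0^1\!\int K(u)\,u\,\bigl(f_\ell(z+(1-t)u)-f_\ell(z-tu)\bigr)\rho_j(z-tu)\rho_j(z+(1-t)u)\,du\,dt.\]
Expand about $z$. The relative-$O(\epsilon_+)$ terms carry the factor $1-2t$ (this is your $x\leftrightarrow y$ symmetry, $t\mapsto1-t$, $u\mapsto-u$) and are odd in $u$, so they vanish. Hence $\mathcal J_\epsilon$ equals the continuum flux, a fixed multiple of $\sigma_\eta\rho_j^2\nabla f_\ell$, up to $O(B_\ell\epsilon_+^2)$ uniformly, using only $f_\ell\in C^3$ and $\rho_j\in C^2$. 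The distributional gradient of the normalized cell indicator has total mass $\asymp s^{-1}$, so the expected cell average is $O(B_\ell\epsilon_+^2/s)$. No edge or corner bookkeeping is needed, because $\mathcal J_\epsilon$ uses the full $(t,u)$-range at every $z$. On $\mathcal M_j$ the same argument runs along geodesics; the change-of-variables Jacobian and the $K^E_\epsilon$-versus-$K^g_\epsilon$ shell each contribute only $O(\epsilon_+^2)$. Finally, your fluctuation count charges every point an $\epsilon_+^{-2}$ variance, but only the layer of relative volume $\epsilon_+/s$ carries $\epsilon_+^{-1}$-size projections. With that refinement the stated probability holds for all $d$; your cruder version needs $d\ge2$, since it misses at $s\asymp1$ when $d=1$.
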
 
\begin{proof} 
Set $h:=(\mathcal L^{\mathrm{wth}}-\sigma_{\eta}\Delta_{\mathcal M})f_{\ell}$. It follows that \(h_j=h|_{\mathcal X_{n,j}}=(\mathcal L_j^{\mathrm{wth}}-\sigma_{\eta}\Delta_{\mathcal M_j})f_{\ell,j}\) on \(\mathcal X_{n,j}.\) Applying Lemma 8(3) of \cite{li2025} componentwise with the rescaling $t\mapsto t/\sqrt{\epsilon_+}$ gives, with probability at least
\(1-C\epsilon_+^{-d_j}
\exp(-cn_j\epsilon_+^{d_j}t^2)\), \[\epsilon_+\|h_j\|_{L^2(\mathcal X_{n,j})}\leq C_{\ell,j} (\epsilon_+^2+t),\] for some constant \(C_{\ell,j}\) depending only on the index \(\ell\) and the geometric and density-class parameters. Choosing \(t\asymp\epsilon_+^2\), weighting the resulting estimates, and taking a union bound over the \(N\) fixed number of components yields, with probability at least
\(1-CN\epsilon_+^{-d}\exp(-cn\epsilon_+^{d+4})\),
\[\sum_{j=1}^Nw_j\epsilon_+\|h_j\|_{L^2(\mathcal X_{n,j})}
\leq C_\ell\epsilon_+^2.\]
For the multiscale cell-average, fix
$j,a,p$, $\upsilon\in\Upsilon_{p,a,j}^{m_j}$, and define the normalized cell indicator
\[\psi_{p,a,j,\upsilon}(x):=\frac{\mathbf 1_{\square_{p,a,j,\upsilon}^{m_j}}(x)}
{\mu_j(\square_{p,a,j,\upsilon}^{m_j})}.\]
The corresponding componentwise form of Proposition~3.10 of
\cite{trillos2025} gives
\[\Bigg|\frac{1}{n_j}\sum_{x\in\mathcal X_{n,j}}
\psi_{p,a,j,\upsilon}(x)h_j(x)\Bigg|\leq
C_{\ell,j}(B_\ell+1)\epsilon_+^2s_{p,j}^{-1}\]
for each fixed cell, except on an event of probability at most
\(Cn_j\exp(-cn_j\epsilon_+^{d_j+4})\). Here \(C_{\ell,j}\) depends on the geometric and density class parameters, and the kernel profile \(\eta\).
Moreover,
\[\frac{1}{|Q_{p,a,j,\upsilon}^{m_j}|}
\sum_{x\in Q_{p,a,j,\upsilon}^{m_j}}h_j(x)={}
\frac{n_j\mu_j(\square_{p,a,j,\upsilon}^{m_j})}
{|Q_{p,a,j,\upsilon}^{m_j}|}\times\frac{1}{n_j}
\sum_{x\in\mathcal X_{n,j}}\psi_{p,a,j,\upsilon}(x)h_j(x).\]
 On the occupancy event, \(|Q_{p,a,j,\upsilon}^{m_j}|\asymp n_j\mu_j (\square_{p,a,j,\upsilon}^{m_j}),\) and hence
\[\Bigg|\frac{1}{|Q_{p,a,j,\upsilon}^{m_j}|}
\sum_{x\in Q_{p,a,j,\upsilon}^{m_j}}h_j(x)\Bigg|
\leq C_{\ell,j}(B_\ell+1)\epsilon_+^2s_{p,j}^{-1}.\] It follows from \eqref{eq:multiscale_average} that
\[s_{p,j}T_{p,a,j}(h_j)\leq s_{p,j}\Bigg(\frac{1}{|\Upsilon_{p,a,j}^{m_j}|}\sum_{\upsilon\in\Upsilon_{p,a,j}^{m_j}}C_{\ell,j}(B_\ell+1)^2\epsilon_+^4s_{p,j}^{-2}
\Bigg)^{1/2}\leq C_{\ell,j}(B_\ell+1)\epsilon_+^2.\]
Since \(m_j\asymp\log(\epsilon_+^{-1})\) and the number of charts is uniformly bounded, summing over all components, charts, and scales gives, with probability at \(1-Cn\exp(-cn\epsilon_+^{d+4})\),
\[\sum_{j=1}^Nw_j\sum_{a=1}^{N_{\nu,j}}\sum_{p=1}^{m_j}
s_{p,j}T_{p,a,j}(h_j)\leq C_\ell(B_\ell+1)\epsilon_+^2
\log\epsilon_+^{-1}.\]
Combining the preceding two estimates with the multiscale Poincaré inequality proves the asserted seminorm bound on the simultaneous high-probability event.

\end{proof}

We next control the contribution from the between-manifold edges. Fix two intersecting components
$\mathcal M_j$ and $\mathcal M_k$, both of maximal intrinsic dimension $d_j=d_k=d$. 
Let \[d_{jk}:=\dim(\mathcal M_j\cap\mathcal M_k)\leq d-1, \qquad c_{jk}:=d-d_{jk}\geq1,\]
so that $c_{jk}$ is the codimension of the intersection in either component. The argument uses the same tubular-neighborhood counting twice: globally for the $L^2$ estimate and cellwise in the multiscale Poincaré inequality for the \(H^1\)-estimate.

\begin{proposition}[Between-manifold Laplacian bound]
\label{prop:between connections counts} There exists some constant \(C_\ell>0\), depending only on the geometric, the density-class, and the intersection parameters such that for all \(d\geq1\), with probability at least
$1-Cn\epsilon_+^{-d}\exp(-cn\epsilon_+^{d+4})$, 
\begin{equation}\label{eq:between-L2-norm}
\|\mathcal L^{\mathrm{btw}}f_\ell\|_{L^2(\mathcal X_n)}
\leq C_{\ell}B_\ell\epsilon_+
\end{equation}
and
\begin{equation}\label{eq:between-H-1-norm}
\|\mathcal L^{\mathrm{btw}}f_\ell\|_
{H^{-1}(\mathcal X_n)}
\leq C_{\ell}B_\ell\epsilon_+^2.
\end{equation}
Here $B_{\ell}$ denotes the regularity constant defined in \eqref{eq:B_definition}.
\end{proposition}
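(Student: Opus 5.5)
The plan is to reduce both bounds to counting surviving cross-component edges, using the geometry of the annular angle-constrained graph. Write $g:=\mathcal L^{\mathrm{btw}}f_\ell$. For $x\in\mathcal X_{n,j}$, each term $K_\epsilon^E(x,y)(f_\ell(x)-f_\ell(y))$ with $y\in\mathcal X_{n,k}$ is bounded by $2\|f_\ell\|_{L^\infty}\le 2B_\ell$. Hence
\[
|g(x)|\le 2B_\ell\,\frac{\vartheta_\varepsilon}{n}\,\deg_{\mathrm{btw}}(x),
\]
where $\deg_{\mathrm{btw}}(x)$ is the number of between-manifold edges at $x$.

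The first step is to localize these edges. By Lemma~\ref{lem:four_regions}, with $\sin\alpha<Cr/\epsilon_+$, $r=\epsilon_+^3$ and $q\asymp\epsilon_+^4$, an edge between $x\in\mathcal M_j$ and $y\in\mathcal M_k$ can survive only if $\operatorname{dist}(x,\mathcal M_k)\le Cq$ or $\operatorname{dist}(y,\mathcal M_j)\le r$. Using the transversality in Assumption~\ref{assump:component intersections}, either condition places $x$ in a tubular neighborhood $T$ of $\mathcal M_{jk}$ of width $\lesssim \epsilon_+$ (in the second case) or $\lesssim q$ (in the first). For $x\in T$, the number of candidate $y$ is at most the number of sample points in $B(x,\epsilon_+)\cap\mathcal M_k$, which is $\lesssim n\epsilon_+^d$ on the usual occupancy event; this event is controlled by Bernstein's inequality and a union bound over an $\epsilon_+$-net, with failure probability $Cn\epsilon_+^{-d}\exp(-cn\epsilon_+^{d+4})$. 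A more refined count is supplied by Theorem~\ref{thm:sparse_between_manifolds}.

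For the $L^2$ bound \eqref{eq:between-L2-norm}, note that $\vartheta_\varepsilon\asymp\epsilon_+^{-(d+2)}$, so on $T$ we have $|g(x)|\lesssim B_\ell\epsilon_+^{-2}$. Moreover, $\#(\mathcal X_n\cap T)/n\lesssim \operatorname{vol}(T)$, which gives
\[
\|g\|_{L^2(\mathcal X_n)}^2\lesssim B_\ell^2\epsilon_+^{-4}\operatorname{vol}(T).
\]
The crude estimate $\operatorname{vol}(T)\asymp\epsilon_+^{c_{jk}}$ is not enough, because $\epsilon_+^{c_{jk}-4}$ fails to reach $\epsilon_+^2$. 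Two refinements are needed. First, exploit the cancellation $f_\ell(x)-f_\ell(y)=O(B_\ell\epsilon_+)$ whenever $f_\ell$ is continuous across the intersection; this holds when $f_\ell$ restricted to both components is smooth in a neighborhood. Second, use the sparsity from Theorem~\ref{thm:sparse_between_manifolds}: edges whose endpoints lie outside the $q$-tube require $\operatorname{dist}(y,\mathcal M_j)\le r=\epsilon_+^3$, which confines $y$ to a region of width $\epsilon_+^3$ transverse to $\mathcal M_j$. Combining these should cut the effective tube to width $\epsilon_+^{3}$ and yield $\|g\|_{L^2}^2\lesssim B_\ell^2\epsilon_+^2$. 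I expect this bookkeeping, deciding which of the four regions from Lemma~\ref{lem:four_regions} each edge falls into and checking the exponents, to be the main obstacle. My first attempt would be to bound the total edge count by $n^2\epsilon_+^d\,q^{c_{jk}}$ and then apply Cauchy--Schwarz to $\sum_x\deg_{\mathrm{btw}}(x)^2$.

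For the $H^{-1}$ bound \eqref{eq:between-H-1-norm}, I would apply the multiscale Poincaré inequality \eqref{eq:between-multiscale-bound}. The term $\epsilon_+\|g_j\|_{L^2}$ is at most $C B_\ell\epsilon_+^2$ by the first part. For the cell averages, a cell of size $s=s_{p,j}$ meets $T$ in volume $\lesssim s^{d-c_{jk}}\cdot(\text{width})^{c_{jk}}$ and contains $\asymp n s^d$ points. Consequently its average of $|g|$ is $\lesssim B_\ell\epsilon_+^{-2}(\text{width}/s)^{c_{jk}}$, and only $\asymp s^{-(d-c_{jk})}$ of the $\asymp s^{-d}$ cells meet $T$. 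The RMS over cells is therefore
\[
T_{p,a,j}(g_j)\lesssim B_\ell\epsilon_+^{-2}(\text{width}/s)^{c_{jk}}\, s^{c_{jk}/2}.
\]
Multiplying by $s$ and summing the geometric series over $p$ should be dominated by the smallest scale $s\asymp\epsilon_+$, which gives $B_\ell\epsilon_+^2$ once the effective width from the $L^2$ step is inserted; no logarithm appears because the series is geometric. Finally, union-bound the occupancy and edge-count events over components and cells to obtain the stated probability.
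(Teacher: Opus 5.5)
Your overall route is the paper's: restrict the retained cross edges with Lemma~\ref{lem:four_regions}, count them through tubular-neighbourhood volumes on a Bernstein event, and feed cellwise counts into the multiscale Poincar\'e inequality \eqref{eq:between-multiscale-bound} with active-cell fraction $s^{c_{jk}}$. The step that actually produces both exponents is the compatibility bound \eqref{eq:compatibility}, $|f_{\ell,j}(x)-f_{\ell,k}(y)|\le C_\ell B_\ell\epsilon_+$ on every retained edge. You make it conditional, justify it by the wrong condition, and then drop it in the $H^{-1}$ step.

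It is not an optional refinement; it is the whole gain. With only $|f_\ell(x)-f_\ell(y)|\le 2B_\ell$, consider the endpoints in the $q$-tube alone: about $nq^{c_{jk}}$ points, each of degree up to $n\epsilon_+^d$. Your count gives them a contribution $B_\ell^2\epsilon_+^{-4}q^{c_{jk}}=B_\ell^2\epsilon_+^{4c_{jk}-4}$ to $\|g\|_{L^2(\mathcal X_n)}^2$. For $c_{jk}=1$ this yields only $\|g\|_{L^2(\mathcal X_n)}\lesssim B_\ell$, and the first term of \eqref{eq:between-multiscale-bound} is then only $O(B_\ell\epsilon_+)$.

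The paper obtains \eqref{eq:compatibility} by invoking ``global Lipschitz continuity of $f_\ell$'' and uses it in both parts. Smoothness of each restriction $f_{\ell,j}$, $f_{\ell,k}$ does not give this. What is needed is agreement of traces, $f_{\ell,j}=f_{\ell,k}$ on $\mathcal M_{jk}$, after which transversality yields the bound. Your caution is justified, since componentwise eigenfunctions need not agree there (an eigenfunction supported on $\mathcal M_j$ has $f_{\ell,k}\equiv0$ while $f_{\ell,j}$ need not vanish on $\mathcal M_{jk}$). Still, to reproduce the paper's argument you must adopt \eqref{eq:compatibility} explicitly and use it throughout.

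Even granting \eqref{eq:compatibility}, two computations do not close.

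\emph{The $L^2$ step.} Confining $y$ to the $r$-tube does not shrink the set of $x$'s to an ``effective width $\epsilon_+^3$''. It lowers the retained degree of the $x$'s outside the $q$-tube by a factor $(r/\epsilon_+)^{c_{jk}}$, and degrees enter squared. So a single effective width is the wrong bookkeeping: $\sup|g|\lesssim B_\ell\epsilon_+^{-1}$ on a set of measure $\epsilon_+^{3}$ gives only $B_\ell^2\epsilon_+$. Splitting the two populations gives $B_\ell^2\epsilon_+^{4c_{jk}-2}$ from the $q$-tube and $B_\ell^2\epsilon_+^{5c_{jk}-2}$ from the rest. Alternatively, your fallback $\sum_x\deg(x)^2\le\max_x\deg(x)\cdot\#\{\text{edges}\}\lesssim n\epsilon_+^d\cdot n^2\epsilon_+^dq^{c_{jk}}$ also works. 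The paper does this bookkeeping in the three configurations of Table~\ref{tab:global-between-squared-count}, applying Lemma~\ref{lem:four_regions} in both directions. It also linearizes tube volumes via $\alpha^{c_{jk}}\le\epsilon_+^{c_{jk}-1}\alpha$ so that every Bernstein threshold is at least $n\epsilon_+^{d+3}$; counts at the level $nq^{c_{jk}}$ would not concentrate at the stated rate for large $c_{jk}$.

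\emph{The $H^{-1}$ step.} You revert to $|g|\lesssim B_\ell\epsilon_+^{-2}$ and assert the sum over scales is dominated by $s\asymp\epsilon_+$ with no logarithm. Since $s\,T_{p,a,j}\propto s^{1-c_{jk}/2}$, the sum is dominated by $s\asymp1$ when $c_{jk}=1$, is of order $\log\epsilon_+^{-1}$ when $c_{jk}=2$, and is dominated by $s\asymp\epsilon_+$ only when $c_{jk}\ge3$; this is exactly the paper's case split. With your numbers the $c_{jk}=1$ term is $B_\ell\epsilon_+$, not $B_\ell\epsilon_+^2$. Restoring the factor $\epsilon_+$ and the degree split makes it $O(B_\ell\epsilon_+^2)$. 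The paper's two-sided count gives cell averages $C_\ell B_\ell\epsilon_+^{c_{jk}+2}s^{-c_{jk}}$ and a multiscale term $O(B_\ell\epsilon_+^3)$, so its $H^{-1}$ bound is driven by $\epsilon_+\|g\|_{L^2(\mathcal X_n)}$.
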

\begin{proof} 
Since the number of components is fixed, it suffices to work with one ordered pair $j\neq k$ and then sum the resulting estimates. 
For tube widths $\alpha,\beta\leq\epsilon_+$, define
\[\mathsf N_{jk}(\alpha):=\sum_{x\in\mathcal X_{n,j}}\mathbf 1_{\{\operatorname{dist}(x,\mathcal M_k)\leq\alpha\}}, \qquad \mathsf D_{jk}(x;\beta):=\sum_{y\in\mathcal X_{n,k}}
\mathbf 1_{\{K_\epsilon^E(x,y)\neq0,\,\operatorname{dist}(y,\mathcal M_j)\leq\beta\}}.\]
Thus, $\mathsf N_{jk}(\alpha)$ counts candidate
$j$-endpoints in the $\alpha$-tube of $\mathcal M_{jk}:=\mathcal M_j\cap\mathcal M_k$, whereas $\mathsf D_{jk}(x;\beta)$ counts the admissible $k$-neighbors lying within the ambient $\epsilon_+$-neighborhood of a fixed $x$ and the $\beta$-tube of $\mathcal M_{jk}$. Tubular coordinates around
$\mathcal M_{jk}$ give
\[\operatorname{vol}_{\mathcal M_j}\{x\in\mathcal X_{n,j}:\operatorname{dist}(x,\mathcal M_k)\leq\alpha\}\leq C_{jk}\alpha^{c_{jk}},\] and \[\operatorname{vol}_{\mathcal M_k}
\{y\in\mathcal X_{n,k}: \|x-y\|\leq\epsilon_+, \;\operatorname{dist}(y,\mathcal M_j)\leq\beta\}
\leq C_{jk}\epsilon_+^{d-c_{jk}}\beta^{c_{jk}}.\]
Furthermore, for a level-$p$ cell of side length $s:=s_{p,j}$, the corresponding local estimate is
\[\operatorname{vol}_{\mathcal M_j}\bigl(
\square_{p,a,j,\upsilon}^{m_j}\cap\{x\in\mathcal X_{n,j}:\operatorname{dist}(x,\mathcal M_k)\leq\alpha\}\bigr)\leq C_{jk}s^{d-c_{jk}}\alpha^{c_{jk}}.\]
Since $\alpha,\beta\leq\epsilon_+$,
\[\alpha^{c_{jk}}\leq\epsilon_+^{c_{jk}-1}\alpha,
\qquad
\epsilon_+^{d-c_{jk}}\beta^{c_{jk}}\leq\epsilon_+^{d-1}\beta.\]
Bernstein's inequality and a union bound over the sample points and the
$O(\epsilon_+^{-d})$ multiscale cells therefore give an event
$\mathcal E_{\mathrm{btw}}$ satisfying
\[\mathbb P(\mathcal E_{\mathrm{btw}})
\geq1-Cn\epsilon_+^{-d}\exp(-cn\epsilon_+^{d+3})
\geq1-Cn\epsilon_+^{-d}\exp(-cn\epsilon_+^{d+4}),\]
on which, simultaneously for the relevant choices
$\alpha,\beta\in\{q,r,\epsilon_+\}$,
\begin{equation}\label{eq:uniform-between-counts}
\mathsf N_{jk}(\alpha)\leq C_{jk}n\epsilon_+^{c_{jk}-1}\alpha,
\qquad\mathsf D_{jk}(x;\beta)\leq C_{jk}n\epsilon_+^{d-1}\beta.
\end{equation}
We work on this event throughout the remainder of the proof.

We first prove \eqref{eq:between-L2-norm}. By the kernel support and global $C_\ell B_\ell$-Lipschitz
continuity of \(f_\ell\),
\begin{equation}\label{eq:compatibility}
|f_{\ell,j}(x)-f_{\ell,k}(y)|\leq C_\ell B_\ell\epsilon_+
\qquad\text{whenever }K_\epsilon^E(x,y)\neq0.
\end{equation}
Using $w_j/n_j=1/n$, the boundedness of the kernel profile, and
Cauchy-Schwarz inequality,
\begin{equation}\label{eq:between-L2-count-reduction}
\|\mathcal L^{\mathrm{btw}}f_\ell\|_{L^2(\mathcal X_n)}^2
\leq C_\ell\frac{\vartheta_\varepsilon^2B_\ell^2\epsilon_+^2}{n^3}
\sum_{j=1}^N\sum_{k\neq j} \sum_{x\in\mathcal X_{n,j}}
\mathsf D_{jk}(x;\epsilon_+)^2.\end{equation}
For upper tube widths $(\alpha,\beta)$, define the pairwise
squared-neighbor count by
\[\mathfrak C_{jk}^{(2)}(\alpha,\beta):=\sum_{x\in\mathcal X_{n,j},\,\operatorname{dist}(x,\mathcal M_k)\leq\alpha}
\mathsf D_{jk}(x;\beta)^2\leq\mathsf N_{jk}(\alpha)
\sup_x\mathsf D_{jk}(x;\beta)^2.\]
Lemma~\ref{lem:four_regions} restricts every retained $j$--$k$ edge to one of the three configurations summarized in Table~\ref{tab:global-between-squared-count}. 
\begin{table}[htbp]
\centering
\small
\begin{tabular}{c|cc|cc|c}
\toprule
Case
& $\operatorname{dist}(x,\mathcal M_k)$
& $\operatorname{dist}(y,\mathcal M_j)$
& $\mathsf N_{jk}(\alpha)$
& $\sup_x\mathsf D_{jk}(x;\beta)$
& $\mathfrak C_{jk}^{(2)}(\alpha,\beta)$ \\
\midrule
(1)
& $(r,\epsilon_+]$
& $(0,q]$
& $C_{jk}n\epsilon_+^{c_{jk}}$
& $C_{jk}n\epsilon_+^{d+3}$
& $C_{jk}n^3\epsilon_+^{2d+c+6}\leq C_{jk}n^3\epsilon_+^{2d+7}$ \\
(2)
& $(q,r]$
& $(0,r]$
& $C_{jk}n\epsilon_+^{c_{jk}+2}$
& $C_{jk}n\epsilon_+^{d+2}$
& $C_{jk}n^3\epsilon_+^{2d+c+6}\leq C_{jk}n^3\epsilon_+^{2d+7}$ \\
(3)
& $(0,q]$
& $(0,\epsilon_+]$
& $C_{jk}n\epsilon_+^{c_{jk}+3}$
& $C_{jk}n\epsilon_+^d$
& $C_{jk}n^3\epsilon_+^{2d+c_{jk}+3}\leq C_{jk}n^3\epsilon_+^{2d+4}$ \\
\bottomrule
\end{tabular}
\caption{Global squared-neighbor counts for an ordered pair $(j,k)$. The last column counts $j$-endpoints weighted by the square of their admissible $k$-neighbor counts.}
\label{tab:global-between-squared-count}
\end{table}
Under the choice \(r=\epsilon_+^3\), Case~(3) dominates and so 
\eqref{eq:between-L2-count-reduction} yields \eqref{eq:between-L2-norm}.

We next prove \eqref{eq:between-H-1-norm}. Set
$h:=\mathcal L^{\mathrm{btw}}f_\ell$ and define the ordered-pair contribution by
\[h_{jk}(x):=\frac{\vartheta_\varepsilon}{n}\sum_{k\neq j,\,y\in\mathcal X_{n,k}}K_\epsilon^E(x,y)\bigl(f_{\ell,j}(x)-f_{\ell,k}(y)\bigr),\qquad h_j=h|_{\mathcal X_{n,j}}=\sum_{k\neq j}h_{jk}.\] Recall the componentwise multiscale Poincaré estimate \eqref{eq:between-multiscale-bound}.
By Cauchy--Schwarz and \eqref{eq:between-L2-norm},
\begin{equation}\label{eq:multiscale_first_term}
\sum_{j=1}^Nw_j\epsilon_+\|h_j\|_{L^2(\mathcal X_{n,j})}
\leq\epsilon_+\|h\|_{L^2(\mathcal X_n)}\leq C_\ell B_\ell\epsilon_+^2.
\end{equation}
It remains to control the multiscale term. Fix $a,p,\upsilon$, and abbreviate the cell side length by $s:=s_{p,j}$ and 
$Q:=Q_{p,a,j,\upsilon}^{m_j}$. Define the local endpoint count by
\[\mathsf N_{jk,Q}(\alpha):=\sum_{x\in Q}\mathbf 1_{\{\operatorname{dist}(x,\mathcal M_k)\leq\alpha\}}.\] The local tubular estimate and the same concentration event give
\[\mathsf N_{jk,Q}(\alpha)\leq C_{jk}ns^{d-c_{jk}}\epsilon_+^{c_{jk}-1}\alpha.
\]
In parallel with the global squared count, define the cellwise ordinary connection count by
\[\mathfrak C_{jk,Q}^{(1)}(\alpha,\beta):=\sum_{\substack{x\in Q,\,\operatorname{dist}(x,\mathcal M_k)\leq\alpha}}\mathsf D_{jk}(x;\beta).\]
The resulting estimates are summarized in
Table~\ref{tab:cell-between-edge-count}.
\begin{table}[htbp]
\centering
\small
\begin{tabular}{c|cc|cc|c}
\toprule
Case
& $\operatorname{dist}(x,\mathcal M_k)$
& $\operatorname{dist}(y,\mathcal M_j)$
& $\mathsf N_{jk,Q}(\alpha)$
& $\sup_x\mathsf D_{jk}(x;\beta)$
& $\mathfrak C_{jk,Q}^{(1)}(\alpha,\beta)$ \\
\midrule
(1)
& $(r,\epsilon_+]$
& $(0,q]$
& $C_{jk}ns^{d-c_{jk}}\epsilon_+^{c_{jk}}$
& $C_{jk}n\epsilon_+^{d+3}$
& $C_{jk}n^2s^{d-{c_{jk}}}\epsilon_+^{d+{c_{jk}}+3}$ \\
(2)
& $(q,r]$
& $(0,r]$
& $C_{jk}ns^{d-{c_{jk}}}\epsilon_+^{{c_{jk}}+2}$
& $C_{jk}n\epsilon_+^{d+2}$
& $C_{jk}n^2s^{d-{c_{jk}}}\epsilon_+^{d+{c_{jk}}+4}$ \\
(3)
& $(0,q]$
& $(0,\epsilon_+]$
& $C_{jk}ns^{d-{c_{jk}}}\epsilon_+^{{c_{jk}}+3}$
& $C_{jk}n\epsilon_+^d$
& $C_{jk}n^2s^{d-{c_{jk}}}\epsilon_+^{d+{c_{jk}}+3}$ \\
\bottomrule
\end{tabular}
\caption{Cellwise ordinary connection counts for an ordered pair
$(j,k)$. The last column counts ordered retained edges $(x,y)$ with $x\in Q$.}
\label{tab:cell-between-edge-count}
\end{table}
Cases~(1) and~(3) dominate, so the number of retained $j$--$k$ edges with their $j$-endpoint in $Q$ is at most \(C_{jk}n^2s^{d-{c_{jk}}}\epsilon_+^{d+{c_{jk}}+3}\). Since $|Q|\asymp n_js^d\asymp ns^d$, the Lipschitz continuity \eqref{eq:compatibility} and
$\vartheta_\varepsilon\asymp\epsilon_+^{-d-2}$ imply
\begin{equation}\label{eq:between-cell-average}
\begin{aligned}\left|\frac{1}{|Q|}\sum_{x\in Q}h_{jk}(x)
\right|&\leq\frac{\vartheta_\varepsilon}{n|Q|}\sum_{x\in Q}\sum_{y\in\mathcal X_{n,k}}\left|K_\epsilon^E(x,y)\right|
\left|f_{\ell,j}(x)-f_{\ell,k}(y)\right|\leq
\frac{C_\ell B_\ell\vartheta_\varepsilon\epsilon_+}{n|Q|}
\mathfrak C_{jk,Q}^{(1)}\\&\leq\frac{C_\ell B_\ell\epsilon_+^{-d-2} \epsilon_+}{n(ns^d)}\left(C_{jk}n^2s^{d-c_{jk}}\epsilon_+^{d+c_{jk}+3}\right)=
C_\ell B_\ell\epsilon_+^{c_{jk}+2}s^{-c_{jk}}.\end{aligned}\end{equation}
Define the active-cell index set for $(j,k)$ at level $(p,a)$ by
\[\mathcal A_{p,a,jk}:=\left\{\upsilon\in\Upsilon_{p,a,j}^{m_j}:
Q_{p,a,j,\upsilon}^{m_j}\text{ contains the $\mathcal M_j$-endpoint of a retained $j$--$k$ edge}\right\}.\]
Only cells intersecting the $O(\epsilon_+)$-tube around
$\mathcal M_{jk}$ are active. Since $s\gtrsim\epsilon_+$,
\[|\mathcal A_{p,a,jk}|\leq C_{jk}s^{-(d-c_{jk})},\qquad
\frac{|\mathcal A_{p,a,jk}|}{|\Upsilon_{p,a,j}^{m_j}|}
\leq C_{jk}s^{c_{jk}}.\]
Combining this active-cell proportion with
\eqref{eq:between-cell-average} and the triangle inequality gives
\[T_{p,a,j}(h_j)\leq\sum_{k\neq j}T_{p,a,j}(h_{jk})\leq C_\ell B_\ell\epsilon_+^{c_{jk}+2}s^{-c_{jk}/2}.\]
Since $s=3^{p-m_j}$ and $3^{-m_j}\asymp\epsilon_+$, a geometric series gives the codimension-$c_{jk}$ scale factor
\[\sum_{p=1}^{m_j}s^{1-c_{jk}/2}\leq C\begin{cases}1, & c_{jk}=1,\\
\log(\epsilon_+^{-1}), & c_{jk}=2,\\\epsilon_+^{1-c_{jk}/2}, & c_{jk}>2.
\end{cases}\]
It follows that the multiscale contribution of each ordered pair is bounded by
\[\sum_{p=1}^{m_j}
sT_{p,a,j}(h_j)=C_\ell B_\ell\epsilon_+^{c_{jk}+2}\sum_{p=1}^{m_j}s^{1-c_{jk}/2}\leq
C_\ell B_\ell\begin{cases}\epsilon_+^3, & c_{jk}=1,\\
\epsilon_+^4\log(\epsilon_+^{-1}), & c_{jk}=2,\\
\epsilon_+^{c_{jk}/2+3}, & c_{jk}>2.
\end{cases}\]
Because the numbers of charts and components are fixed,
summing over $a,j,k$ yields
\[\sum_{j=1}^Nw_j\sum_{a=1}^{N_{\nu,j}}\sum_{p=1}^{m_j}
sT_{p,a,j}(h_j)\leq C_\ell B_\ell\epsilon_+^3,\]
for all \(d\geq1\). Combining this estimate with \eqref{eq:multiscale_first_term}
proves \eqref{eq:between-H-1-norm}.
\end{proof}

\begin{proposition}[Between-manifold Dirichlet energy bound]
\label{prop:between-manifold-energy-bound} Choose \(\epsilon_+\) such that $n\epsilon_+^{5d-2}\to\infty$. Then there exists some constant \(C_\ell>0\), depending only on the geometric, the density-class, and the intersection parameters such that for all \(d\geq1\), 
\[\big|\langle f_l,\mathcal{L}^\mathrm{btw}f_l\rangle_{L^2(\mathcal{X}_n)}\big|\leq CB_{\ell}^2 \epsilon_+^4,\] 
with probability at least \(1-Cn\exp\bigl(-cn\epsilon_+^{d+4}\bigr).\) $B_{\ell}$ denotes the regularity constant from \eqref{eq:B_definition}.
 
\end{proposition}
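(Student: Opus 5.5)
We would prove Proposition~\ref{prop:between-manifold-energy-bound} by reducing the between-manifold energy to a weighted count of retained cross-component edges, reusing the tubular-neighbourhood counts already established in the proof of Proposition~\ref{prop:between connections counts}. By the symmetric representation \eqref{eq:between dirichlet energy},
\[
\big|\langle f_\ell,\mathcal L^{\mathrm{btw}}f_\ell\rangle_{L^2(\mathcal X_n)}\big|
\le\frac{\vartheta_\varepsilon}{2n^2}\sum_{j}\sum_{k\neq j}\sum_{x\in\mathcal X_{n,j}}\sum_{y\in\mathcal X_{n,k}}|K_\epsilon^E(x,y)|\,\bigl(f_{\ell,j}(x)-f_{\ell,k}(y)\bigr)^2 .
\]
On every retained edge, the compatibility bound \eqref{eq:compatibility} gives $(f_{\ell,j}(x)-f_{\ell,k}(y))^2\le C_\ell B_\ell^2\epsilon_+^2$. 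It then suffices to bound the total number of retained $j$--$k$ edges, $\sum_{x}\mathsf D_{jk}(x;\cdot)$, summed over the three admissible configurations that Lemma~\ref{lem:four_regions} leaves. This is the global analogue of the cellwise count $\mathfrak C^{(1)}_{jk,Q}$.

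The key steps, in order, are as follows.

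\textbf{Step 1 (good event).} We work on the event $\mathcal E_{\mathrm{btw}}$ on which \eqref{eq:uniform-between-counts} holds for $\alpha,\beta\in\{q,r,\epsilon_+\}$. Taking $\mathsf N_{jk}$ alone, with no cell localisation or union bound over cells, only a union bound over $n$ points is needed. This gives probability $1-Cn\exp(-cn\epsilon_+^{d+4})$. The assumption $n\epsilon_+^{5d-2}\to\infty$ guarantees that $n\epsilon_+^{d+4}\to\infty$ for $d\ge2$, and that the connectivity hypotheses of Theorems~\ref{thm:intense_within_manifold} and \ref{thm:sparse_between_manifolds} hold.

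\textbf{Step 2 (edge counts).} Using $r=\epsilon_+^3$ and $q\asymp\epsilon_+^4$, multiplying the $\mathsf N_{jk}$ and $\sup_x\mathsf D_{jk}$ columns of Table~\ref{tab:global-between-squared-count} gives an edge count of at most $C_{jk}n^2\epsilon_+^{d+c_{jk}+3}$ in every case. Since $c_{jk}\ge1$, this is at most $C_{jk}n^2\epsilon_+^{d+4}$.

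\textbf{Step 3 (assemble).} The two previous steps combine to
\[
\big|\langle f_\ell,\mathcal L^{\mathrm{btw}}f_\ell\rangle\big|\le \frac{C\epsilon_+^{-d-2}}{n^2}\,B_\ell^2\epsilon_+^2\, n^2\epsilon_+^{d+4}=CB_\ell^2\epsilon_+^4 .
\]
The bound is then summed over the finitely many ordered pairs.

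The main obstacle is Step~2, specifically Case~(3). There $x$ lies within $q$ of $\mathcal M_k$ and $y$ may be anywhere in the $\epsilon_+$-ball, so no gain comes from $\mathrm{dist}(y,\mathcal M_j)$. The gain must come entirely from the endpoint tube count $\mathsf N_{jk}(q)\lesssim n\epsilon_+^{c_{jk}-1}q$.

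One point needs care. The Lipschitz bound \eqref{eq:compatibility} is only $O(\epsilon_+)$ for a cross-component pair, and it is valid only because points near $\mathcal M_{jk}$ are close to a common point of the intersection. We would justify it by writing
\[
|f_{\ell,j}(x)-f_{\ell,k}(y)|\le |f_{\ell,j}(x)-f_{\ell,j}(z)|+|f_{\ell,k}(z)-f_{\ell,k}(y)|+|f_{\ell,j}(z)-f_{\ell,k}(z)|,
\]
with $z\in\mathcal M_{jk}$ nearby. The last term would be handled by restricting to eigenfunctions continuous across intersections, as implicitly used in \eqref{eq:compatibility}. Otherwise we would bound the squared difference by $4B_\ell^2$. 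With that cruder bound, Case~(3) yields only $\epsilon_+^{c_{jk}+1}$, and we would then need the sharper Case~(3) count $\sup_x\mathsf D_{jk}(x;\epsilon_+)$ restricted to $x$ within distance $q$ of $\mathcal M_{jk}$. We would try the compatibility route first.
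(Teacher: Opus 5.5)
Your Steps 1--3 follow the paper's argument. Each retained cross edge carries a squared jump of at most $C_\ell B_\ell^2\epsilon_+^2$, there are at most $Cn^2\epsilon_+^{d+4}$ such edges, and $\vartheta_\varepsilon\asymp\epsilon_+^{-d-2}$. The only difference is where the edge count comes from. The paper reads it off Theorem~\ref{thm:sparse_between_manifolds}, whose bound with $q\asymp\epsilon_+^4$ is $n^2q^{c_{jk}}\epsilon_+^{d}$. You instead multiply the columns of Table~\ref{tab:global-between-squared-count}, which needs only the deterministic Lemma~\ref{lem:four_regions} and a union bound over sample points. That part is sound.

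The step you flag, however, is a genuine gap, and the paper's proof has the same one: it simply asserts $C_\ell B_\ell$-Lipschitz continuity of $f_\ell$ across components. Since $f_\ell$ lives on the disjoint union, nothing forces $f_{\ell,j}=f_{\ell,k}$ on $\mathcal M_{jk}$. Two examples show this:
\begin{itemize}
\item Non-constant elements of the zero eigenspace $\operatorname{span}\{\mathbf 1_{\mathcal M_j}\}$ take different constant values on two components, so they jump by an amount of order one across the intersection.
\item A simple nontrivial eigenpair supported on $\mathcal M_j$ has $f_{\ell,k}\equiv0$, while $f_{\ell,j}$ generically does not vanish on $\mathcal M_{jk}$.
\end{itemize}
Restricting to intersection-continuous eigenfunctions is therefore an added hypothesis, not a justification. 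Under that hypothesis, your triangle inequality through a nearest point $z\in\mathcal M_{jk}$ does give \eqref{eq:compatibility}, and the argument closes.

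Your fallback cannot recover $\epsilon_+^4$. With $(f_{\ell,j}(x)-f_{\ell,k}(y))^2\le4B_\ell^2$ you get $B_\ell^2\epsilon_+^{c_{jk}+1}$. For $c_{jk}=1$ no sharper Case~(3) count exists, for two reasons:
\begin{itemize}
\item $\mathsf N_{jk}(q)\asymp nq$ is then the true tube count.
\item Every annulus neighbour $y\in\mathcal M_k$ of $x\in\mathcal M_j$ satisfies $\operatorname{dist}(y,\mathcal M_j)\le\|x-y\|<\epsilon_+$, so $\mathsf D_{jk}(x;\epsilon_+)$ is the full annulus count, of order $n\epsilon_+^d$.
\end{itemize}
The only escape would be to show that the angle constraint discards all but an $\epsilon_+^2$-fraction of these pairs, and Lemma~\ref{lem:four_regions} says nothing about Case~(3). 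Without that, these tools give only $CB_\ell^2\epsilon_+^2$ for codimension-one intersections. For $c_{jk}\ge2$ the crude bound does reach $\epsilon_+^4$ once you use the unloosened tube volume $q^{c_{jk}}$ instead of $\epsilon_+^{c_{jk}-1}q$, so the obstruction is specific to codimension one. An $\epsilon_+^2$ bound would still match the bias order in Lemma~\ref{lem:imporved_eigenvalue_convergence}, but it is not the proposition as stated.
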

\begin{proof}
By the definition of the between-manifold Dirichlet energy in \eqref{eq:between dirichlet energy} \[\langle f_l,\mathcal{L}^\mathrm{btw}f_l\rangle_{L^2(\mathcal{X}_n)}=\frac{\vartheta_\varepsilon}{2n^2}\sum_{j=1}^N\sum_{x\in\mathcal X_{n,j}}\sum_{k\neq j,\, y\in\mathcal X_{n,k}}K_{\epsilon}^E(x,y)(f_{\ell}(x)-f_{\ell}(y))^2. \]
By the $C_\ell B_\ell$-Lipschitz continuity of $f_\ell$, \(\bigl(f_\ell(x)-f_\ell(y)\bigr)^2\leq C_\ell B_\ell^2\epsilon_+^2\) whenever \(K_\epsilon^E(x,y)\neq0\). 
It follows that
\begin{equation}\label{eq:between-energy-edge-count}
\big|\langle f_l, \mathcal{L}^\mathrm{btw}f_l\rangle_{L^2(\mathcal{X}_n)}\big|\leq C\frac{\vartheta_\varepsilon B_{\ell}^2\epsilon_+^2}{n^2}\sum_{j=1}^N\sum_{x\in\mathcal X_{n,j}}\sum_{k\neq j,\, y\in\mathcal X_{n,k}}1_{\{\|x-y\|\leq\epsilon_+\}},
\end{equation}
By Theorem~\eqref{thm:sparse_between_manifolds}, with probability at least
\(1-Cn\exp(-cn\epsilon_+^{d+4}),\)
the number of retained edges between $\mathcal M_j$ and
$\mathcal M_k$ is at most 
\(Cn^2\epsilon_+^{d+4}\). 
Substituting into
\eqref{eq:between-energy-edge-count}  yields
\[\big|\langle f_l, \mathcal{L}^\mathrm{btw}f_l\rangle_{L^2(\mathcal{X}_n)}\big|\leq
CB_\ell^2\epsilon_+^{-d-2}\epsilon_+^2\epsilon_+^{d+4}=
CB_\ell^2\epsilon_+^4.\]
Finally, the scale condition required by Theorem~\eqref{thm:sparse_between_manifolds} becomes
\(r^{2d-1}/\epsilon_+^{d-1}\asymp\epsilon_+^{5d-2}\).
Thus, it is enough to require $n\epsilon_+^{5d-2}\to\infty$, completing the proof.
\end{proof}

\subsection{Geodesic Symmetrization for the Density Remainder}\label{sec:symmetrization argument}

Fix \(t\in[0,1]\). We define the geodesic-flow symmetrization map $\Phi_t: T\mathcal{M}_j \to T\mathcal{M}_j$ by \[\Phi_t(x,v) = (\bar x,\bar v) = \bigl(\exp_x(2t\epsilon_+ v),\; -\,d(\exp_x)_{2t\epsilon_+ v}(v)\bigr).\]
Geometrically, $\Phi_t$ pairs each tangent vector \((x,v)\) with its time-reversed counterpart along the same geodesic. Since $\Phi_t$ is a reparametrization of the geodesic flow, it preserves the canonical volume measure on the tangent bundle. Moreover, $\exp_{\bar{x}}(t\epsilon_+\bar{v})=\exp_x(t\epsilon_+v)$ so the Hessian factor is invariant under the pairing \(\mathcal D_t(\bar x,\bar v)=\mathcal D_t(x,v)\): \[\mathcal D_t(x,v)=\nabla_g^2\rho_j (\exp_x(t\epsilon_+v))\left[d(\exp_x)_ {t\epsilon_+v}(v),d(\exp_x)_{t\epsilon_+v}(v)\right], \qquad |\mathcal D_t(x,v)|\le C_{\rho}^{(2)}\|v\|^2.\]  The measure-preserving property of \(\Phi_t\) symmetrizes the density remainder term as {\small\[\mathcal I_t=\frac{\vartheta_\varepsilon
\epsilon_+^{d+3}}{2}\sum_{j=1}^Nw_j^2\int_{T\mathcal M_j}
K_\epsilon^{\mathcal N}(v)\mathcal D_t(x,v)\Big[f_\ell(x) \rho_j(x)\langle\nabla_g f_\ell(x),v\rangle_g+
f_\ell(\bar x)\rho_j(\bar x)\langle\nabla_g f_\ell(\bar x),\bar v\rangle_g\Big]\,dm(x,v),\]} where \(dm\) denotes the canonical measure on the tangent bundle, restricted to \(\|v\|\le1\). The advantage of this representation is that the Hessian factor has become common to both terms, allowing the cancellation to occur entirely inside the bracket.

We now estimate the bracket. Adding and subtracting $f_\ell(\bar x)\rho_j(\bar x)\langle\nabla_g f_\ell(x), v\rangle_g$ yields \begin{align*}&|f_{\ell}(x)\rho_j(x)\langle\nabla_gf_{\ell}(x),v\rangle_g+f_{\ell}(\bar x)\rho_j(\bar x)\langle\nabla_gf_{\ell}(\bar x),\bar v\rangle_g|\\\le\;\;& |f_{\ell}(\bar x)\rho_j(\bar x)||\langle\nabla_gf_{\ell}(x),v\rangle_g+\langle\nabla_gf_{\ell}(\bar x),\bar v\rangle_g|+|\langle\nabla_gf_{\ell}(x),v\rangle_g||f_{\ell}(x)\rho_j(x)-f_{\ell}(\bar x)\rho_j(\bar x)|. \end{align*}
The first term corresponds to the cancellation of the directional derivatives. Since $(x,v)$ and $(\bar x, \bar v)$ lie on the same geodesic and $\bar v$ is the transported negative of $v$, we invoke the cancellation estimate (to be established below) \[|\langle\nabla_gf_{\ell}(x),v\rangle_g+\langle\nabla_gf_{\ell}(\bar x),\bar v\rangle_g|\le C\epsilon_+\|v\|^2\|\nabla_g^2f_{\ell}\|_{L^\infty(B_g(x,2\epsilon_+))}.\] The second term measures the variation of the product \(f_\ell\rho_j\) between \(x\) and \(\bar x\). Since $d_g(x,\bar x)\leq C\epsilon_+\|v\|$, the mean value theorem implies \[|f_{\ell}(x)\rho_j(x)-f_\ell(\bar x)\rho_j(\bar x)|
\le C\epsilon_+\|v\|\left(\rho_{\max}\|\nabla_g f_\ell\|+
C_\rho^{(1)}\|f_\ell\|\right),\] where the derivatives are evaluated along the geodesic segment from $x$ to $\bar x$. 

Combining these estimates and applying Cauchy--Schwarz, we obtain \begin{align*}
|\mathcal I_t|&\le C\sum_{j=1}^Nw_j^2 \vartheta_\varepsilon\epsilon_+^{d+4}\int_{\mathcal M_j}\left(|f_{\ell}(x)|\|\nabla_g^2f_{\ell}\|_{L^\infty{(B_g(x,2\epsilon_+))}}+|\nabla_gf_{\ell}(x)|^2+|f_{\ell}(x)||\nabla_gf_{\ell}(x)|\right)d\mu_j\\&\le C\epsilon_+^{2}\sum_{j=1}^Nw_j^2 \left(\|f_{\ell}\|_{L^2(\mathcal M_j)}\|\nabla_g^2f_{\ell}\|_{L^\infty{(\mathcal M_j)}}+\|\nabla_gf_{\ell}\|_{L^2(\mathcal M_j)}^2+\|f_{\ell}\|_{L^2(\mathcal M_j)}\|\nabla_gf_{\ell}\|_{L^2(\mathcal M_j)}\right).\end{align*}
The eigenfunction's regularity allows the bracket to be absorbed into the constant. Since the number of manifold components and the mixture weights are fixed, \(|\mathcal I_t|\leq C_\ell B_{\ell}^2\epsilon_+^{2}\).

It remains to quantify the cancellation in the symmetrized integrand, we compare the directional derivatives at the paired configuration $(x,v)$ and $(\bar{x},\bar{v})$, where $\bar{x} \coloneqq \exp_x(\epsilon_+ v)$ and $\bar{v}:= \log_{\bar{x}}(x)/\epsilon_+$. Let $\gamma:[0,1]\rightarrow\mathcal{M}_j$ be the constant-speed minimizing geodesic from $x$ to $\bar{x}$, so that $\dot\gamma(0)=\log_x(\bar{x})=\epsilon_+ v$. Since the tangent vector at its endpoint points opposite to the log map back to its start, $\dot\gamma(1)=-\log_{\bar{x}}(x)=-\epsilon_+ \bar{v}$. Define $\tilde f(s):=f_{\ell}(\gamma(s))$. Then 
\[
\tilde f'(s)=\langle\nabla_g f_{\ell}(\gamma(s)),\dot\gamma(s)\rangle_g, \qquad \tilde f''(s)=\langle \nabla_g^2 f_{\ell}(\gamma(s))[\dot \gamma(s)],\dot \gamma(s)\rangle_g.
\] 
Applying the fundamental theorem of calculus $\tilde f'(1) - \tilde f'(0) = \int_0^1\tilde f''(s)\mathrm{d}s$ yields
\[
-\epsilon_+\left( \langle\nabla_g f_{\ell}(\bar{x}),\bar{v}\rangle_g + \langle\nabla_g f_{\ell}(x),v\rangle_g \right) = \int_0^1\langle\nabla_g^2 f_{\ell}(\gamma(s))[\dot\gamma(s)],\dot\gamma(s) \rangle_g\mathrm{d}s.
\]
Since $\|\dot\gamma(s)\|_g = d_g(x,\bar{x}) \leq C\epsilon_+\|v\|$ for all $s\in[0,1]$, taking the absolute value and bounding the integral gives
\[
\epsilon_+\left| \langle\nabla_g f_{\ell}(\bar{x}),\bar{v}\rangle_g + \langle\nabla_g f_{\ell}(x),v\rangle_g \right| \leq C\epsilon_+^2\|v\|^2\|\nabla_g^2 f_{\ell}\|_{L^{\infty}(B_g(x,\epsilon_+))}.\]
Dividing both sides by $\epsilon_+$ yields the desired estimate.

\end{document}